\begin{document}


\RUNAUTHOR{Xu and Xie}

\RUNTITLE{RAPTOR-GEN for Bayesian Learning in Biomanufacturing}

\TITLE{RAPTOR-GEN: \textbf{RA}pid \textbf{P}os\textbf{T}eri\textbf{OR} \textbf{GEN}erator for Bayesian Learning in Biomanufacturing}

\ARTICLEAUTHORS{%
\AUTHOR{Wandi Xu, Wei Xie}
\AFF{Department of Mechanical and Industrial Engineering,
Northeastern University, 
\EMAIL{xu.wand@northeastern.edu}, 
\EMAIL{w.xie@northeastern.edu}}
} 

\ABSTRACT{
Biopharmaceutical manufacturing is vital to public health but lacks the agility for rapid, on-demand production of biotherapeutics due to the complexity and variability of bioprocesses. To overcome this, we introduce RApid PosTeriOR GENerator (RAPTOR-GEN), a mechanism-informed Bayesian learning framework designed to accelerate intelligent digital twin development from sparse and heterogeneous experimental data. This framework is built on a multi-scale probabilistic knowledge graph (pKG), formulated as a stochastic differential equation (SDE)-based foundational model that captures the nonlinear dynamics of bioprocesses. RAPTOR-GEN consists of two ingredients: (i) an interpretable metamodel integrating linear noise approximation (LNA) that exploits the structural information of bioprocessing mechanisms and a sequential learning strategy to fuse heterogeneous data, enabling inference of latent state variables and explicit approximation of the intractable likelihood function; and (ii) an efficient Bayesian posterior sampling method that utilizes Langevin diffusion (LD) to accelerate posterior exploration by exploiting the gradients of the derived likelihood. It generalizes the LNA approach to circumvent the challenge of step size selection, facilitating robust learning of mechanistic parameters with provable finite-sample performance guarantees. We develop a fast and robust RAPTOR-GEN algorithm with controllable error. Numerical experiments demonstrate its effectiveness in uncovering the underlying regulatory mechanisms of biomanufacturing processes.
}

\FUNDING{
This research was supported by National Science Foundation Grant CAREER CMMI-2442970 and National Institute of Standards and Technology Grant 70NANB21H086.}



\KEYWORDS{Probabilistic knowledge graph, Stochastic differential equations, Sequential Bayesian learning, Digital twin platform, Biomanufacturing systems} 

\maketitle


\section{Introduction}

Although the biopharmaceutical industry has developed various innovative biopharmaceuticals for severe diseases such as cancer and COVID-19, the analytical tests required by biopharmaceuticals of complex molecular structure are time-consuming and expensive. Also, as new biotherapeutics (e.g., cell and gene therapies) become more and more personalized, tailoring decisions to cells and molecules is critical to mitigating stresses and supporting a successful biomanufacturing process. However, low productivity, low flexibility, high variability, and human errors, which alone account for 80\% of out-of-specification variations \citep{Ci15}, make rapid production of new and existing biotherapeutics on demand an ever-increasing challenge for current biomanufacturing systems. It is urgent to accelerate the development of flexible, automatic, and robust biomanufacturing systems.

Unlike chemically synthesized small-molecule drugs, biopharmaceuticals are manufactured using living organisms (e.g., cells) as production factories. Fundamentally, their manufacturing processes constitute biological systems-of-systems (Bio-SoS), characterized by dynamic interactions among hundreds of biological, physical, and chemical factors across molecular, cellular, and macroscopic scales. These multi-scale interactions collectively influence product yield and critical quality attributes (CQAs). To model the complexity and nonlinear dynamics of Bio-SoS, we introduce a general multi-scale probabilistic knowledge graph (pKG) as a mechanistic framework. The pKG is formulated using continuous-time stochastic differential equations (SDEs), enabling the integration of sparse and heterogeneous data from diverse sources that range from single-cell measurements of gene/protein expression to bulk concentrations of nutrients and metabolites in bioreactors. Based on observations from partially observed states, the pKG can function as a soft sensor to track critical latent variables, support optimal control strategies, and facilitate real-time product release. Its modular design organizes multi-scale models into reusable components, allowing flexible assembly of digital twins tailored to specific manufacturing systems. For example, \cite{zheng2024stochastic} demonstrates its application to monolayer and aggregate cultures of induced pluripotent stem cells (iPSCs). This modularity also supports system integration across manufacturing platforms and provides a foundation for scalable biofoundry development \citep{hillson2019building}.

This paper focuses on the foundational building block of multi-scale mechanistic models in bioprocesses, i.e., the stochastic reaction network (SRN). An SRN describes the interactions among molecular species through chemical reactions, capturing the causal interdependencies that link molecular regulatory mechanisms to macroscopic biomanufacturing system dynamics and variability. More specifically, SRNs model bioprocess dynamics using a stochastic mechanistic framework that defines how reaction rates depend on random states such as species concentrations and environmental conditions. For instance, environmental factors such as pH, temperature, and ion concentrations influence biomolecular structure and function (e.g., enzyme activity), as well as molecular interactions (e.g., collisions and binding events), thereby shaping the regulatory behavior of the SRN. The functional form of the reaction rates, which encodes the bioprocess regulatory mechanisms, is assumed to be known and grounded in established domain knowledge. Common examples include Michaelis-Menten kinetics and empirical Monod-type models, which describe the logic of molecular interactions and enzymatic reaction rates \citep{kyriakopoulos2018kinetic}. However, the mechanistic parameters that govern these regulatory processes are typically unknown and must be inferred.

\textit{In this paper, we introduce an innovative sequential Bayesian inference on the pKG mechanistic model in SDE form to quickly fuse heterogeneous data and generate posterior samples of unknown parameters quantifying model uncertainty.} This provides a coherent way to pool information from the structure of bioprocessing mechanisms and sparse heterogeneous data, advancing scientific understanding of the underlying regulatory mechanisms and supporting the development of intelligent digital twins to facilitate flexible, automatic, and robust biomanufacturing.

Due to the limited number of experimental runs in biomanufacturing \citep{wang2024biomanufacturing}, combined with its high intrinsic stochasticity, the model uncertainty is typically substantial. Since frequentist approaches rely on asymptotic approximations, they may be less reliable under such conditions. Consequently, Bayesian inference, particularly posterior sampling methods, is often preferred for mechanistic model inference, as it better accommodates uncertainty and limited data \citep{beaumont2002approximate,toni2009approximate,csillery2010approximate,ross2017using}. These studies can be broadly categorized into two branches, i.e., likelihood-based and likelihood-free approaches. Likelihood-based posterior sampling methods rely on the closed-form expression of the likelihood function to construct stochastic inference procedures, typically involving Markov chains in the parameter space. Examples include Markov chain Monte Carlo (MCMC) \citep{hastings1970monte, geman1984stochastic}, sequential importance sampling, and sequential Monte Carlo (SMC) methods \citep{del2006sequential}. In contrast, likelihood-free methods bypass explicit likelihood evaluation by leveraging Monte Carlo simulations of SRNs. These methods either embed simulation within inference algorithms, such as particle MCMC \citep{andrieu2010particle}, or generate sample paths to assess the similarity between the simulated data and the experimental observations; notable examples include approximate Bayesian computation (ABC) \citep{blum2010approximate, sisson2018handbook} and its variants, such as ABC-MCMC \citep{marjoram2003markov} and ABC-SMC \citep{sisson2007sequential, beaumont2009adaptive, xie2022sequential}.

The complicated nonlinear structure and high stochasticity of SRNs make simulations generating a large amount of sample paths computationally expensive, resulting in a prohibitive computational burden of likelihood-free posterior sampling methods. Likelihood-based posterior sampling methods can benefit from incorporating the structural information of SRNs provided in the closed form of approximate likelihood. However, SRN has various key features that make it difficult to obtain the explicit likelihood approximation. First, the continuous-time state transition model is highly nonlinear. At any time, the reaction rates that characterize the regulatory mechanisms of SRNs are functions of the random state. This means that the state transition model has double stochasticity, making it analytically intractable to get the closed-form likelihood. Second, since the state is often partially observed, we need to integrate out the unobserved state variables to get the likelihood. Third, data collected from biomanufacturing processes are heterogeneous and subject to measurement errors. In addition, the model posterior inference space is large, as biopharmaceutical manufacturing processes have high complexity, high stochasticity, and limited data \citep{xie2022interpretable}. The effectiveness of Metropolis-based MCMC approaches and sequential importance sampling depends heavily on the selection of good proposal distributions, which is challenging due to complex dependent effects of mechanistic parameters on bioprocess dynamics and variations.

To enable explicit likelihood approximation and improve the efficiency of likelihood-based posterior sampling, we propose a novel sequential Bayesian learning framework built on the pKG, named {RA}pid {P}os{T}eri{OR} {GEN}erator (RAPTOR-GEN). This framework is designed to rapidly integrate sparse and heterogeneous data, infer mechanistic parameters (or bioprocess regulatory mechanisms), and track latent states. It consists of two main components and follows a general learning scheme for SDEs, leveraging both linear noise approximation (LNA) and sequential learning to improve scalability and inference accuracy.

The first component of the framework constructs an interpretable dynamic metamodel, termed pKG-LNA, by applying LNA to the SDE-based pKG. This metamodel serves as a surrogate for the state transition densities of the underlying SDEs and enables inference of heterogeneous latent states, thereby allowing closed-form approximation of otherwise intractable likelihoods. To mitigate the approximation error inherent in naive LNA, which tends to accumulate over time relative to state transitions, we enhance the metamodel through sequential learning and dynamic knowledge updates on the pKG. Specifically, we perform Bayesian updates of inferred state variables at each observation time point using partially observed and noisy data (see Section~\ref{sec:general_LNA} for details). Compared to existing black-box metamodels such as Gaussian processes (GPs), the sequential pKG-LNA metamodel retains the advantage of tractable conditional distributions. That is, it provides explicit expressions for the conditional distributions of latent state variables at any time, capturing the interdependencies among state components. Unlike GPs, however, pKG-LNA fully leverages the structural information encoded in the Bio-SoS mechanistic model in SDE form. This allows for improved sample efficiency, enhanced interpretability of predictions, and seamless integration of data across diverse biomanufacturing processes.

The second component of the proposed framework is a Bayesian posterior sampling method that fully exploits the derived likelihood. This method is inspired by Langevin diffusion (LD), a concept from physics that models the stochastic evolution of particles or molecules under the influence of a potential function representing a force field, combined with a Brownian motion term to account for thermodynamic fluctuations. In the context of Bayesian inference, the potential function is replaced by the target posterior distribution. A key property of LD is that, under appropriate conditions, it admits the target posterior as its unique stationary distribution. This allows the gradient of the derived likelihood, reflecting the structure of bioprocess regulatory mechanisms and the causal interdependence of the mechanistic parameters, to guide the posterior search toward regions of high likelihood, thereby accelerating convergence in MCMC sampling. However, numerical discretization of the LD-based SDE introduces bias in the posterior sampling process. A large step size facilitates rapid exploration of the parameter space but increases bias, while a small step size reduces bias but requires many iterations, especially problematic in high-dimensional settings. Even with a Metropolis-Hastings correction at each step, selecting an appropriate step size remains challenging \citep{biron2024automala}. To address this, we generalize the concept of LNA to LD in the limit, enabling approximation of the stationary distribution without discretizing the LD. This approach eliminates the need for step size tuning and improves computational efficiency. LNA, based on the Taylor expansion of SDEs, also allows us to quantify the approximation error as a function of the data size (see Section~\ref{sec:lna} for details).
 
This study fundamentally combines science, uncertainty, and general intelligence in a new unified statistical learning framework specific to Bio-SoS. We summarize the key contributions as follows.
\begin{itemize}
    \item We propose a computationally efficient Bayesian learning framework for SRN, termed RAPTOR-GEN, which consists of two main components: (i) Bayesian updating pKG-LNA, an interpretable metamodel built on an SDE-based mechanistic model, combined with sequential learning, to fuse partially observed and noisy heterogeneous measurements. This enables a closed-form approximation of otherwise intractable likelihoods. (ii) LD-LNA posterior sampling, a novel Bayesian sampling method that leverages gradient information from the derived likelihood. Inspired by LD and LNA, it efficiently learns mechanistic parameters and quantifies model uncertainty without requiring discretization.
    \item We show that, under appropriate conditions, the probability distribution of random samples generated by LD-LNA converges to the target posterior distribution. 
    Moreover, we derive a quantitative bound—formulated as a function of both data size and the sparsity of real-world observations—that characterizes the finite-sample performance of the posterior approximation obtained via LD-LNA.
    As the data size increases toward infinity, the approximate posterior satisfies the well-known Bernstein–von Mises (BvM) theorem, establishing a formal connection between Bayesian inference and frequentist large-sample theory.
    \item To ensure the applicability of RAPTOR-GEN in challenging scenarios such as sparse data collection and high-dimensional parameter spaces, we develop a dedicated algorithm for fast and robust posterior sampling of inferred parameters. Unlike traditional MCMC-type algorithms, which often suffer from convergence uncertainty during sampling, the proposed algorithm is deterministic, offering more stable, reliable, and reproducible performance. We further analyze the algorithm's convergence properties and show that its approximation error is controllable, providing theoretical guarantees for its accuracy and robustness.
\end{itemize}

The remainder of the paper is structured as follows. Section~\ref{sec:background} reviews the related studies in the literature. Section~\ref{sec:SRN} introduces the problem formulation, including the SDE-based pKG mechanistic model for SRN. Sections~\ref{sec:general_LNA} and \ref{sec:lna} present the core components of the proposed Bayesian learning framework, RAPTOR-GEN. Specifically, Section~\ref{sec:general_LNA} develops the LNA-based metamodel for pKG, enhanced through sequential learning, and Section~\ref{sec:lna} introduces the LD-LNA Bayesian inference method and analyzes its finite-sample and asymptotic performance. Section~\ref{sec:iterative_algorithm} describes the implementation procedure for RAPTOR-GEN and investigates its convergence behavior. Section~\ref{sec:numerical} presents empirical results that demonstrate the effectiveness of the proposed framework. Section~\ref{sec:conclusion} concludes the paper and outlines future research directions.

\section{Literature Review} \label{sec:background}

\textbf{Operations Research and Management (OR/OM) in Biomanufacturing.} Existing OR/OM approaches \citep{peroni2005optimal,liu2013modelling,martagan2018managing,zheng2021personalized,wang2024biomanufacturing} typically ignore the fundamental mechanisms of biomanufacturing processes, which limits their sample efficiency, performance, and interpretability. 
\cite{zheng2023policy} introduced a model-based reinforcement learning (RL) framework built upon a dynamic Bayesian network-based probabilistic Knowledge Gradient (pKG) model, which incorporates both inherent stochasticity and model uncertainty in biomanufacturing systems. However, the pKG formulation in this framework assumes a simplified linear state transition model, which limits its ability to accurately represent the underlying mechanisms of biomanufacturing processes. In particular, it fails to capture critical features such as nonlinear and nonstationary dynamics, partially observed states, and double stochasticity.

The Bayesian learning framework proposed in this paper on the SDE-based pKG mechanistic model with a modular design, characterizing the fundamental mechanisms of biomanufacturing systems, especially molecular interactions, addresses the key limitation of existing OR/OM studies. This enables us to fuse sparse and heterogeneous data collected from different production processes, support sample-efficient and interpretable learning, facilitate the integration of manufacturing systems, and accelerate the development of flexible, automatic, and robust biomanufacturing systems.

\vspace{0.05in}

\noindent \textbf{Linear Noise Approximation (LNA).} LNA was originally proposed to approximate the solution of chemical master equations, which are forward Kolmogorov equations represented by a system of coupled SDEs that describe the time evolution of the probability distribution of molecular counts \citep{mcquarrie1967stochastic, gillespie1992rigorous}. LNA simplifies the solution of these complex SDEs by applying a Taylor expansion, reducing the problem to solving a set of relatively simple ordinary differential equations (ODEs). Beyond its application in biochemical systems, LNA is closely related to diffusion approximation techniques used in steady-state analysis of various stochastic systems, such as call centers \citep{mandelbaum2009staffing} and hospital operations \citep{feng2018steady}. These methods approximate the stationary distributions of Markov chains using expansions of the Markov chain generator, incorporating first-, second-, and higher-order terms \citep{braverman2024high}. In this broader context, LNA can be viewed as a special case of diffusion approximations, tailored specifically to bioprocessing and biomanufacturing systems. In particular, within Bayesian inference for SRNs, LNA has been employed to address the challenge of intractable likelihood functions, enabling efficient inference \citep{ruttor2009efficient, fearnhead2014inference}.

This paper is the first generalization of LNA to Bayesian sequential learning on the pKG, marking a significant and novel contribution to the learning of Bio-SoS mechanisms. We enhance LNA through sequential learning on the pKG to reduce the likelihood approximation error, and further generalize LNA to approximate the LD-based posterior sampling process with controllable approximation error. This generalization is nontrivial. The classical LNA assumes the presence of a size parameter in biomanufacturing systems, such as bioreactor volume, under which the stationary distribution of molecular concentrations converges to the LNA as the size parameter tends to infinity \citep{grunberg2023stein}. Analogously, when generalizing LNA to Bayesian posterior sampling, we observe that as the data size increases, fluctuations in parameter estimates diminish, similar to how molecular fluctuations vanish in large-volume systems. However, unlike the explicit relationship between molecular concentrations and system volume, the connection between parameter estimation and data size is implicitly encoded in the posterior distribution. This implicitness poses a challenge in deriving quantitative bounds that explicitly characterize the dependence on data size, which are essential for assessing the finite-sample performance of the LNA approximation in posterior sampling. To address this, we derive a theoretical bound quantifying the impact of data size, sparsity, and parameter dimensionality on the convergence rate of the LNA approximation. This result provides new insights into the scalability and accuracy of LNA-based methods in Bayesian inference for complex stochastic systems.

\vspace{0.05in}

\noindent \textbf{Langevin Diffusion (LD) and Its Role in Generative AI.} LD is closely connected to the rapidly evolving field of generative AI, particularly diffusion-based models \citep{ho2020denoising, bunne2024build}. These models operate by progressively corrupting training data with increasing levels of noise during a forward diffusion process, and then learning to reverse this corruption to generate structured outputs, such as images or text, via a reverse process that emulates physical diffusion dynamics. LD plays a central role in the reverse sampling procedure of many generative models; notable examples include denoising diffusion probability models (DDPM) \citep{ho2020denoising}, scoring matching with Langevin dynamics (SMLD) \citep{song2019generative}, and scoring-based generation models with SDEs \citep{song2020score}. In contrast to these existing approaches, our work diverges from the standard generative AI paradigm. Specifically, we do not train surrogate models, such as denoising neural networks, to predict noise, nor do we directly apply discretized LD for sampling. Instead, we leverage the structural information embedded in the pKG mechanistic foundational model for Bio-SoS to develop a novel Bayesian inference framework. This approach also bypasses the need for discretization of LD in a posterior search, thereby accelerating the inference process while maintaining theoretical rigor.

\section{Problem Statement} \label{sec:SRN}

Section~\ref{subsec:process_modeling} formulates the SDE-based pKG mechanistic model of SRN. Section~\ref{subsec:likelihood} introduces likelihood-based Bayesian inference and highlights key challenges that arise from sparse and heterogeneous data. After that, Section~\ref{subsec:framework} briefly describes the proposed Bayesian sequential learning framework.

\subsection{SDE-based pKG Mechanistic Model of SRN} \label{subsec:process_modeling}

For a multi-scale bioprocess mechanistic model with a modular design characterizing the causal interdependency from the molecular to macroscopic scale, the state transition dynamics and inherent stochasticity are induced by random intermolecular interactions (such as collisions, binds, and enzymatic reactions). It accounts for molecular structure-function dynamics by using \textit{Langevin dynamics}, which balances deterministic regulation from the potential energy gradient with stochastic fluctuations due to thermal Brownian motion \citep{zheng2023structure,zheng2024stochastic,wang2024metabolic}. Specifically, at any time $t$, let $\pmb{s}_{t} = (s_{t}^1,s_{t}^2,\ldots,s_{t}^{N_s})^\top$ represent the state of the bioprocess, where $s_{t}^j$ denotes the concentration of species $j$ for $j=1,2,\ldots,N_s$ and $N_s$ is the number of species in the system. These species may include environmental factors such as pH, defined by the concentration of free hydrogen ions, i.e., $\mbox{pH} = -\log([\mbox{H}^{+}])$, and metabolites/ions that influence the structure-function relationships and interactions of biomolecules (e.g., proteins, DNA, RNA).

\noindent \textbf{(a) Discrete-time Formulation of SRN.} As this paper considers Bayesian sequential learning of a multi-scale mechanistic foundational model constructed from SRN modules, we begin by reviewing a general SRN composed of $N_s$ species, denoted by $\pmb{X}=(X_1, X_2, \ldots, X_{N_s})^\top$, interacting with each other through $N_r$ reactions. For each $k$-th reaction given by
\begin{equation*}
    p_{k1} X_1 + p_{k2} X_2 + \cdots + p_{kN_s} X_{N_s} \xrightarrow{v_k} q_{k1} X_1 + q_{k2} X_2 + \cdots + q_{kN_s} X_{N_s},
\end{equation*}
we assume that the reaction vector $\pmb{C}_k = (q_{k1}-p_{k1},q_{k2}-p_{k2},\ldots,q_{kN_s}-p_{kN_s})^\top \in \mathbb{Z}^{N_s}$ is known for each $k$-th reaction. This vector encodes the net change in molecular counts of the $N_s$ species resulting from a single occurrence of the $k$-th reaction. Collectively, these vectors form the \textit{stoichiometry matrix}, defined as $\pmb{C} = \left(\pmb{C}_1, \pmb{C}_2, \ldots, \pmb{C}_{N_r}\right) \in \mathbb{Z}^{N_s \times N_r}$, which characterizes the structural relationships between all reactions in the SRN. The entry $C_{jk}$ represents the net change in the count of species $X_j$ due to the $k$-th reaction; a negative value indicates consumption, while a positive value indicates production. And $v_k$ denotes the reaction rate associated with the $k$-th reaction.

At any time $t$, bioprocess regulatory mechanisms are characterized by reaction rates that depend on the system state $\pmb{s}_t$. These rates are represented by the vector-valued function $\pmb{v}(\pmb{s}_t; \pmb{\theta}) = (  v_1(\pmb{s}_t;\pmb{\theta}_1), v_2(\pmb{s}_t;\pmb{\theta}_2), \ldots, v_{N_r}(\pmb{s}_t;\pmb{\theta}_{N_r}) )^\top$, specified by unknown mechanistic parameters $\pmb{\theta}$ that define the scope of our Bayesian learning framework. Drawing on domain knowledge, the kinetic models and reaction regulatory structures encode the logical representation of the bioprocessing mechanisms and molecular interactions; refer to \cite{kyriakopoulos2018kinetic} for further details.

\begin{example}[Enzyme Kinetics] \label{ex:MM}
To facilitate a clearer understanding of SRN and the scientific interpretation of the mechanistic parameters $\pmb{\theta}$, we consider a simple representative example. This example illustrates a general enzymatic reaction involving four biochemical species, i.e., Enzyme, Substrate, Complex, and Product. The catalytic conversion of the substrate into the product via enzymatic action is described by the following sequence of chemical reactions,
\begin{flalign*}
    && & \begin{array}{l}
        \text{Reaction } 1: \ {\rm Enzyme} + {\rm Substrate} \xrightarrow{v_1} {\rm Complex}, \\
        \text{Reaction } 2: \ {\rm Complex} \xrightarrow{v_2} {\rm Enzyme} + {\rm Substrate}, \\
        \text{Reaction } 3: \ {\rm Complex} \xrightarrow{v_3} {\rm Enzyme} + {\rm Product},
    \end{array}
    &\mbox{ with stoichiometry matrix } \pmb{C} = 
    \begin{pmatrix}
        -1 & 1 & 1 \\
        -1 & 1 & 0 \\
        1 & -1 & -1 \\
        0 & 0 & 1 
    \end{pmatrix}. &&&&
\end{flalign*}
The system state is defined as $\pmb{s}_t= (s_t^1, s_t^2, s_t^3, s_t^4)^\top$, representing the molecular concentrations of Enzyme, Substrate, Complex, and Product at time $t$. We consider two common formulations for modeling enzymatic reaction rates: (i) Mass-action kinetics \citep{rao2003stochastic}, i.e., $\pmb{v}(\pmb{s}_t;\pmb{\theta}) = (k_F s_t^1 s_t^2, k_R s_t^3, k_{cat} s_t^3)^\top$; and (ii) Michaelis-Menten kinetics, i.e., $\pmb{v}(\pmb{s}_t;\pmb{\theta}) = (V_{\max,1} \frac{s_t^1}{K_{m,1}+s_t^1} \frac{s_t^2}{K_{m,2}+s_t^2}, V_{\max,2} \frac{s_t^3}{K_{m,3}+s_t^3}, V_{\max,3} \frac{s_t^3}{K_{m,3}+s_t^3})^\top$. In the mass-action model, $\pmb{\theta}=(k_F, k_R, k_{cat})$ captures the binding, unbinding, and catalytic rates. In the Michaelis-Menten model, $\pmb{\theta}$ includes the maximal reaction rates $V_{\max,k}$ for $k=1,2,3$ and the half-saturation constants $K_{m,j}$ for species $j=1,2,3,4$.
\end{example}

The dynamics of the SRN are highly nonlinear and nonstationary. To establish theoretical coherence, we begin by examining the transitions in molecular counts. Molecular reactions occur when one molecule collides, binds, and reacts with another while molecules move around randomly driven by stochastic thermodynamics of Brownian motion. Each reaction alters the molecular counts of species by discrete integer amounts. Consequently, the evolution of molecular counts is naturally modeled as a continuous-time Markov jump process \citep{anderson2011continuous}.

Specifically, let $\pmb{x}_{t} = (x_{t}^1,x_{t}^2,\ldots,x_{t}^{N_s})^\top$ where $x_{t}^j$ denotes the molecular count of species $j$ for $j=1,2,\ldots,N_s$. And the propensity function associated with the molecular counts, denoted by $\omega_k(\pmb{x}_t;\pmb{\theta}_k)$ for $k=1,2,\ldots,N_r$, describes the probability with which the $k$-th reaction occurs per time unit. Then, letting $t_h$ be the $h$-th time point for $h=1,2,\ldots,H+1$, the transition of molecular counts can be modeled as $\pmb{x}_{t_{h+1}} = \pmb{x}_{t_h} + \pmb{C}\cdot \Delta \pmb{R}_{t_h}$, where $\Delta \pmb{R}_{t_h} \in \mathbb{N}^{N_r}$ is a vector that represents the number of occurrences of $N_r$ molecular reactions in any given time interval $(t_h, t_{h+1}]$, following a multivariate counting process with parameters depending on propensity functions $\pmb{\omega}(\pmb{x}_t;\pmb{\theta})$. Thus, $\pmb{C} \cdot \Delta \pmb{R}_{t_h}$ represents the net amount of reaction outputs that occur during the time interval $(t_h, t_{h+1}]$. In many studies (see, for example, \cite{anderson2011continuous,schnoerr2016cox}), one can represent $\Delta \pmb{R}_{t_h}$ with the multivariate Poisson process. More precisely, each component of $\Delta \pmb{R}_{t_h}$ is modeled as a nonhomogeneous Poisson process, and $\Delta \pmb{R}_{t_h} \sim \mbox{Poisson}(\int_{t_h}^{t_{h+1}}\pmb{\omega}(\pmb{x}_t;\pmb{\theta}) \, dt )$.

To characterize the scale of the system, we define a parameter $\Omega$, which can, for instance, correspond to the bioreactor volume within the SRN framework. Under this assumption, the molecular concentration of species is defined as the molecular count divided by the system size, i.e., $s_t^j = x_t^j/\Omega$ for each species $j=1,2,\ldots,N_s$. And the relation between propensity functions and reaction rates can be written as $\omega_k(\pmb{x}_t;\pmb{\theta}_k) = \Omega v_k\left(\Omega^{-1}\pmb{x}_t;\pmb{\theta}_k\right) = \Omega v_k(\pmb{s}_t;\pmb{\theta}_k)$ for $k=1,2,\ldots,N_r$. This yields a discrete-time formulation of the state transition model in terms of molecular concentrations, i.e.,
\begin{equation}
    \pmb{s}_{t_{h+1}} = \pmb{s}_{t_h} + \Omega^{-1}\pmb{C} \cdot \Delta \pmb{R}_{t_h} 
    ~~~
    \mbox{with}
    ~~~
    \Delta \pmb{R}_{t_h} \sim \mbox{Poisson}\left(\int_{t_h}^{t_{h+1}}\Omega \pmb{v}(\pmb{s}_t;\pmb{\theta}) \, dt \right). \label{state-transition}
\end{equation} 
Equation \eqref{state-transition} also reflects the \textit{doubly stochastic} nature of SRN. Specifically, the state transition is governed by a doubly stochastic Poisson process, denoted by $\Delta \pmb{R}_{t_h}$, whose intensity function depends on the current state $\pmb{s}_{t}$ for $t \in (t_h,t_{h+1}]$. Importantly, $\pmb{s}_{t}$ itself is a random vector that evolves stochastically over the same time interval. An exact simulation algorithm for this state transition model was originally developed in the context of chemical kinetics by \cite{gillespie1977exact}, using standard discrete event simulation techniques known as the Gillespie algorithm.

\begin{remark}
    The proposed framework can be applicable to hybrid models (mechanistic + statistical) that enables us to leverage existing domain knowledge of biomanufacturing process mechanisms and facilitate learning from data. In particular, except for the uncertainty of the mechanistic parameters $\pmb{\theta}$, we can incorporate the uncertainty of the model structure. For example, given any time interval $(t_h,t_{h+1}]$, the state transition can be modeled as $\pmb{s}_{t_{h+1}} = \pmb{s}_{t_h} + \Omega^{-1}\pmb{C} \cdot \Delta \pmb{R}_{t_h} + \pmb{e}_{t_h}(\pmb{s}_{t_h})$ with the residual $\pmb{e}_{t_h}$ introduced to account for the error induced by the mechanistic model structure and the missing critical input factors. By applying the central limit theorem (CLT), the residual follows a Gaussian distribution with mean and variance depending on the current state $\pmb{s}_{t_h}$.
\end{remark}

\noindent \textbf{(b) Continuous-time Formulation of SRN.} The mechanistic bioprocess model, formulated as a system of continuous-time SDEs, has a modular design that shows a strong potential to integrate heterogeneous multi-scale data collected across diverse biomanufacturing processes \citep{zheng2024stochastic}. To characterize the stochastic evolution of the state trajectory $\{\pmb{s}_{t}: t\geq 0\}$, we model it as the solution to a set of coupled SDEs $d\pmb{s}_t = \mathbb{E}(d\pmb{s}_t) + \left\{{\rm Cov}(d\pmb{s}_t)\right\}^{\frac{1}{2}}d\pmb{B}_t$, where $d\pmb{B}_t$ is the increment of an $N_s$-dimensional standard Brownian motion. This formulation is derived by a diffusion approximation to the underlying Markov jump process following \cite{gillespie2000chemical}, which neglects the discrete nature of the reaction count vector $\Delta \pmb{R}_t$. Let $d\pmb{R}_t$ represent the number of occurrences of $N_r$ molecular reactions within an infinitesimal time interval $(t, t + dt]$. Assuming that reaction rates remain approximately constant over such intervals, $d\pmb{R}_{t}$ is modeled as a multivariate Poisson process with rate $\Omega\pmb{v}(\pmb{s}_t; \pmb{\theta})$. Conditioned on the current state $\pmb{s}_{t}$, we assume that no two reactions occur simultaneously, implying independence among the components of $d\pmb{R}_{t}$. Consequently, the expectation and covariance of $d\pmb{R}_{t}$ are given by $\mathbb{E}(d\pmb{R}_t) = \Omega \pmb{v} (\pmb{s}_t; \pmb{\theta}) dt$ and Cov$(d\pmb{R}_t) = {\rm diag}\{\Omega \pmb{v} (\pmb{s}_t; \pmb{\theta})dt\}$, where the $k$-th diagonal entry is $\Omega v_k (\pmb{s}_t;\pmb{\theta}_k) dt$. Using It\^{o} calculus \citep{allen2007modeling}, we approximate $d\pmb{R}_t$ by the SDE $d\pmb{R}_t = \mathbb{E}(d\pmb{R}_t) + \left\{{\rm Cov}(d\pmb{R}_t)\right\}^{\frac{1}{2}}d\pmb{B}_t = \Omega \pmb{v} (\pmb{s}_t; \pmb{\theta}) dt + \left\{{\rm diag}\{\Omega \pmb{v} (\pmb{s}_t; \pmb{\theta})\}\right\}^{\frac{1}{2}} d\pmb{B}_t$, where $d\pmb{B}_t$ now denotes an $N_r$-dimensional standard Brownian motion. Given the stoichiometry matrix $\pmb{C}$ that encodes the structure of the SRN, the change in state is expressed as $d\pmb{s}_t = \Omega^{-1}\pmb{C} d\pmb{R}_t$. Substituting the SDE for $d\pmb{R}_t$ yields the continuous-time state transition model
\begin{equation}
    d\pmb{s}_t = \pmb{C} \pmb{v} (\pmb{s}_t; \pmb{\theta}) dt + \Omega^{-\frac{1}{2}} \left\{\pmb{C}{\rm diag}\{\pmb{v} (\pmb{s}_t; \pmb{\theta})\}\pmb{C}^\top\right\}^{\frac{1}{2}} d\pmb{B}_t. \label{diffusion approximation - poisson}
\end{equation}
We refer to Equation \eqref{diffusion approximation - poisson} as \emph{SDE-based pKG mechanistic model of SRN}.

\subsection{Likelihood-based Bayesian Inference} \label{subsec:likelihood}

The continuous-time SDEs, characterizing the dynamics and inherent stochasticity of a multi-scale bioprocessing state trajectory, enable us to connect heterogeneous data. This paper assumes that the structure of each reaction rate function $v_k(\pmb{s}_t;\pmb{\theta}_k)$ for $k=1,2,\ldots,N_r$ is given, and we focus on the inference of unknown mechanistic parameters $\pmb{\theta} = (\pmb{\theta}_1,\pmb{\theta}_2,\ldots, \pmb{\theta}_{N_r})^\top$. To derive a closed-form likelihood, given any time interval $(t_h, t_{h+1}]$ associated with time frames of data collection, we first present a tractable approximation on the state transition density $p(\pmb{s}_{t_{h+1}} |\pmb{s}_{t_h} ; \pmb{\theta})$ of the SDE-based pKG mechanistic model when the time interval is short (we call it dense data collection), and then consider situations with sparse and heterogeneous data with measurement error.

\noindent \textbf{(a) State Transition Density $p(\pmb{s}_{t_{h+1}} |\pmb{s}_{t_h}; \pmb{\theta})$ of SDE-based Mechanistic Model.} Since we can only observe discrete samples of an underlying continuous-time stochastic process, time discretization methods are employed to approximate Equation \eqref{diffusion approximation - poisson}. Several discretization strategies exist; one commonly used approach is the Euler-Maruyama approximation \citep{kloeden1992numerical}. Let the time interval between two consecutive observations be denoted by $\Delta t_h = t_{h+1}-t_h > 0$ for $h=1,2,\ldots,H$. Under the Euler-Maruyama approximation, the state transition density from $\pmb{s}_{t_h}$ to $\pmb{s}_{t_{h+1}}$ is given by
\begin{equation}
    \pmb{s}_{t_{h+1}}  | \pmb{s}_{t_h};\pmb{\theta} \sim \mathcal{N} \left( \pmb{s}_{t_h} + \pmb{C} \pmb{v} (\pmb{s}_{t_h}; \pmb{\theta}) \Delta t_h, \ \Omega^{-1} \pmb{C}{\rm diag}\{ \pmb{v} (\pmb{s}_{t_h}; \pmb{\theta})\}\pmb{C}^\top 
    \Delta t_h
    \right). \label{likelihood - Poisson}
\end{equation}
Importantly, $\Delta t_h$ may vary within or across trajectories, allowing likelihood-based Bayesian inference to accommodate multi-scale data collected from diverse biomanufacturing processes.

\begin{example}[Lotka-Volterra Model] \label{Gillespie_appro_error}
    Suppose that we have observations $\pmb{s}_{t_h}$ at evenly spaced times $t_1, t_2, \ldots, t_{H+1}$ with the length of time interval $\Delta t$. We use a simple Lotka-Volterra model (see \ref{subsec:LV model} for a detailed discussion) to illustrate the approximation errors between the exact samples simulated from the model \eqref{state-transition} by the Gillespie algorithm and the diffusion approximation samples simulated from the model \eqref{likelihood - Poisson} with different values of $\Delta t$. In Figure \ref{fig:diffusion approximation}, we find that the approximation error vanishes as $\Delta t$ gets smaller.
    \begin{figure}
        \FIGURE
        {
        \subcaptionbox{$\Delta t=2$.}{\includegraphics[width=0.25\textwidth]{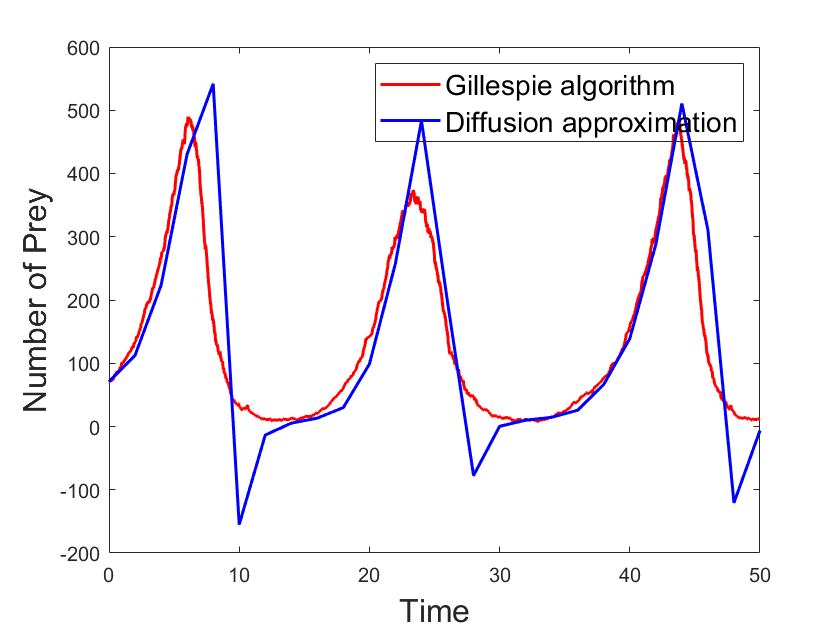}}
        \hfill\subcaptionbox{$\Delta t=1$.}{\includegraphics[width=0.25\textwidth]{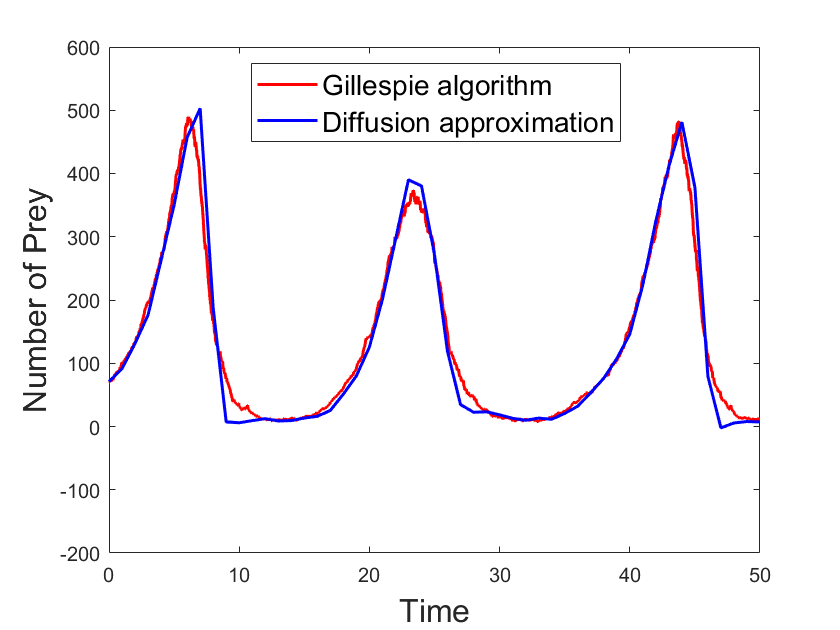}}
        \hfill\subcaptionbox{$\Delta t=0.5$.}{\includegraphics[width=0.25\textwidth]{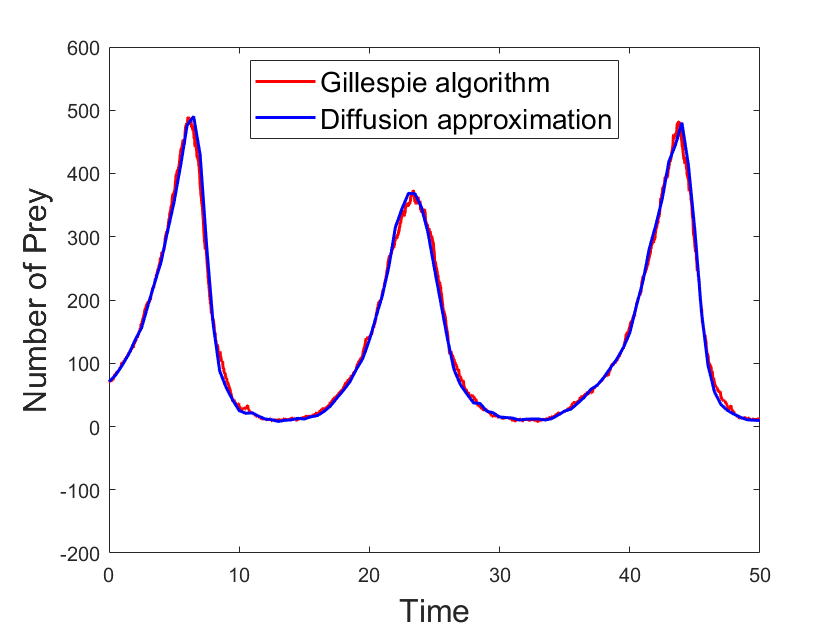}}
        \hfill\subcaptionbox{$\Delta t=0.1$.}{\includegraphics[width=0.25\textwidth]{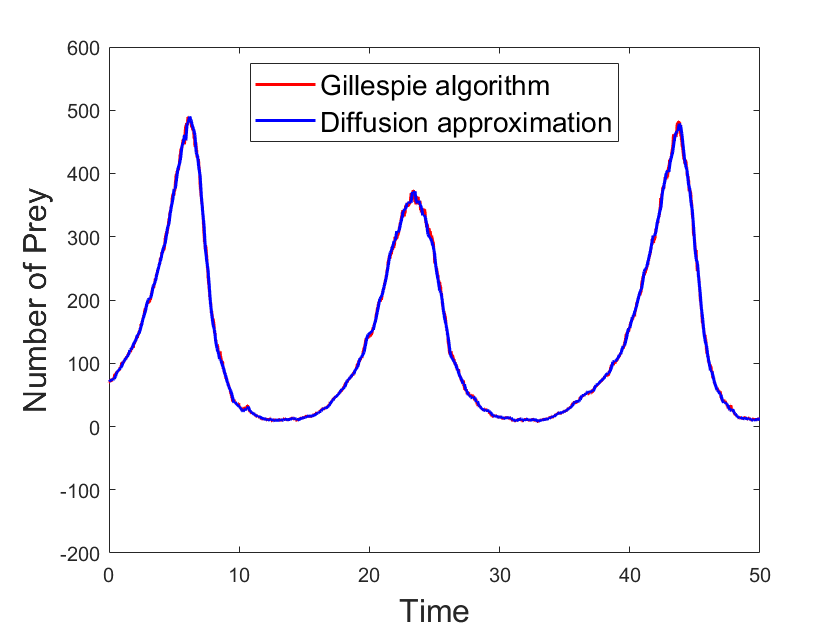}}
        }
        {
        Simulated observations of prey using Gillespie algorithm and diffusion approximation with different values of $\Delta t$. \label{fig:diffusion approximation}}
        {
        One exact sample trajectory and mean value of 100 approximate sample trajectories.}
    \end{figure}
\end{example}

\noindent \textbf{(b) Heterogeneous and Sparse Observations with Measurement Error.} Considering the time and cost of different assay technologies, we often have sparse and heterogeneous observations on partially observed state with random measurement errors. Given the observed dataset denoted by $\mathcal{D}_M^H = \{\{\pmb{y}_{t_h}^{(i)}\}_{h=1}^{H+1}\}_{i=1}^M$, where $M$ is the batch number of experimental run, and $H+1$ is the number of observations (or measurements) collected during an experimental run. For notational simplicity, suppose that the same $H$ applies to all experimental runs, although it can vary between different experimental runs. 

The observations at time $t_h$ can be modeled in the form of a linear additive Gaussian:
\begin{equation}
    \pmb{y}_{t_h} = \pmb{G}_{t_h} \pmb{s}_{t_h} + \pmb{\varepsilon}_{t_h}, \label{measurement error}
\end{equation}
where $\pmb{G}_{t_h}$ is a $|\pmb{J}_{t_h}|$-by-$N_s$ constant matrix that selects and maps the observed components of the state and $\pmb{\varepsilon}_{t_h}$ denotes the measurement errors following a multivariate Gaussian distribution $\mathcal{N}(\pmb{0}, \pmb{\Sigma}_{t_h})$. The diagonal entries of $\pmb{\Sigma}_{t_h}$ are given by the vector $\pmb{\sigma}_{t_h} = (\sigma_j)_{j \in \pmb{J}_{t_h}}$, representing the measurement error levels for the observed components at time $t_h$. The index set $\pmb{J}_{t_h} \subset [N_s]$ identifies which components of the state are observed at time $t_h$, and its cardinality $|\pmb{J}_{t_h}|$ quantifies the degree of \textit{data sparsity} at that time point. Importantly, the set $\pmb{J}_{t_h}$ may vary across time points, reflecting asynchronous and heterogeneous observations. We define the union of all observed indices across the experimental run as $\pmb{J}_y = \cup_{h=1}^{H+1} \pmb{J}_{t_h}$ and the corresponding vector of measurement error levels as $\pmb{\sigma} =(\sigma_j)_{j \in \pmb{J}_y}$. 

\noindent \textbf{(c) Bayesian Inference.} To quantify the model uncertainty, we write the posterior distribution of $\pmb{\theta} \in \pmb{\Theta}$, where $\pmb{\Theta}$ denotes the feasible parameter space, by Bayes' rule, i.e., $p\left(\pmb\theta |\mathcal{D}_M^H\right) \propto  p\left(\pmb{\theta}\right) p\left(\mathcal{D}_M^H | \pmb{\theta}\right)$, where the prior $p(\pmb\theta)$ can be selected to incorporate the existing knowledge about the parameters. Through the state transition model \eqref{likelihood - Poisson} in situations with dense data collection, or the Bayesian updating pKG-LNA metamodel proposed in Section~\ref{subsec:Bayesian_updating_pkg_lna} in situations with sparse data collection, we can derive the likelihood $p(\mathcal{D}_M^H | \pmb{\theta})$ in closed form and then obtain the explicit form of the posterior. And with the collection of new experiment data denoted by $\Delta \mathcal{D}$, the model uncertainty can be updated by applying Bayes' rule, i.e., $p(\pmb{\theta} | \mathcal{D}_M^H \cup \Delta \mathcal{D}) \propto  p(\pmb{\theta} | \mathcal{D}_M^H) \cdot p(\Delta \mathcal{D} | \pmb{\theta})$.

Furthermore, as it is prohibitive to find a conjugate prior for such a complicated likelihood, we defer to the sampling methods to generate the posterior samples. To illustrate the limitation of classical MCMC approaches, we take one of the most popular MCMC algorithms, i.e., Metropolis random walk (MRW), as an example. The posterior search evolution of the Markov chain in MRW-based algorithms could be $\pmb{\theta}(\tau+1) := \pmb{\theta}(\tau) + \sqrt{2} \Delta\pmb{W}^\prime(\tau)$, where $\tau$ is the index of the iteration step and $\Delta \pmb{W}^\prime(\tau) \in \mathbb{R}^{N_{\theta}}$ is a Gaussian random vector with mean zero and covariance matrix $\pmb{\Sigma}^\prime$. Thus, the proposal distribution, denoted by $q_{\rm trans} \left(\pmb{\theta}(\tau+1) | \pmb{\theta}(\tau)\right)$, is Gaussian distributed with mean $\pmb{\theta}(\tau)$ and covariance $2\pmb{\Sigma}^\prime$; that means $q_{\rm trans}$ is assumed to be symmetric centered on the current sample $\pmb{\theta}(\tau)$, making points closer to $\pmb{\theta}(\tau)$ more likely to be visited in the next step. And if the variance of the proposal is too low, the acceptance rate will be high, but the mixing is poor since only small steps are ever taken; if the variance of the proposal is too high, it will almost always make proposals rejected, resulting in many wasted likelihood evaluations before a successful move step. Therefore, due to an inappropriate selection of the proposal distribution, this kind of method can suffer from slow exploration of the parameter space $\pmb{\Theta}$, especially when the parameter dimension is large.

Another widely used class of MCMC algorithms is based on Gibbs sampling. In \ref{Appendix:Gibbs}, we derive a nested Gibbs sampling procedure using the state transition density in Equation~\eqref{likelihood - Poisson}. However, this approach proves to be computationally inefficient due to two key challenges, i.e., the double stochasticity inherent in SRNs and the highly nonlinear nature of the reaction rate functions.

\subsection{Description of Proposed Bayesian Learning Framework} \label{subsec:framework}

In the following sections, we focus on scenarios involving sparse data collection and introduce a mechanism-informed Bayesian learning framework for SRNs, termed RAPTOR-GEN. The overall architecture is illustrated in Figure \ref{fig:framework}, and the framework comprises two key components: \textit{(i) Bayesian Updating pKG-LNA Metamodel (Section~\ref{sec:general_LNA}).} We construct an interpretable metamodel by applying the LNA to the pKG mechanistic model formulated in SDE form. This surrogate model approximates the complex state transition densities and is coupled with a sequential learning strategy on the pKG to integrate sparse and noisy measurements from diverse biomanufacturing processes. The metamodel enables inference of heterogeneous latent state variables, thereby facilitating explicit approximation of the otherwise intractable likelihood function. \textit{(ii) LD-LNA Posterior Sampling Method (Section~\ref{sec:lna}).} We develop an accelerated Bayesian posterior sampling algorithm, LD-LNA, which leverages LD to speed up the search of the target posterior sampling process. By utilizing the gradient of the derived likelihood function, LD-LNA efficiently explores the parameter space and generates posterior samples of the inferred parameters. Furthermore, by generalizing the LNA framework, LD-LNA circumvents the challenge of step size selection. We evaluate both the finite-sample and asymptotic performance of LD-LNA, demonstrating its fast and provable convergence.

To facilitate practical implementation of RAPTOR-GEN, in Section~\ref{sec:iterative_algorithm}, we design a one-stage iterative algorithm inspired by the LD-LNA derivation, and further improve its computational efficiency by reconstructing it into a two-stage iterative algorithm, enabling rapid and robust generation of posterior samples for mechanistic parameters. We provide a theoretical analysis of the controllable error bounds and prove the convergence of the proposed iterative schemes.

\begin{figure}
    \FIGURE
    {\includegraphics[width=0.8\textwidth]{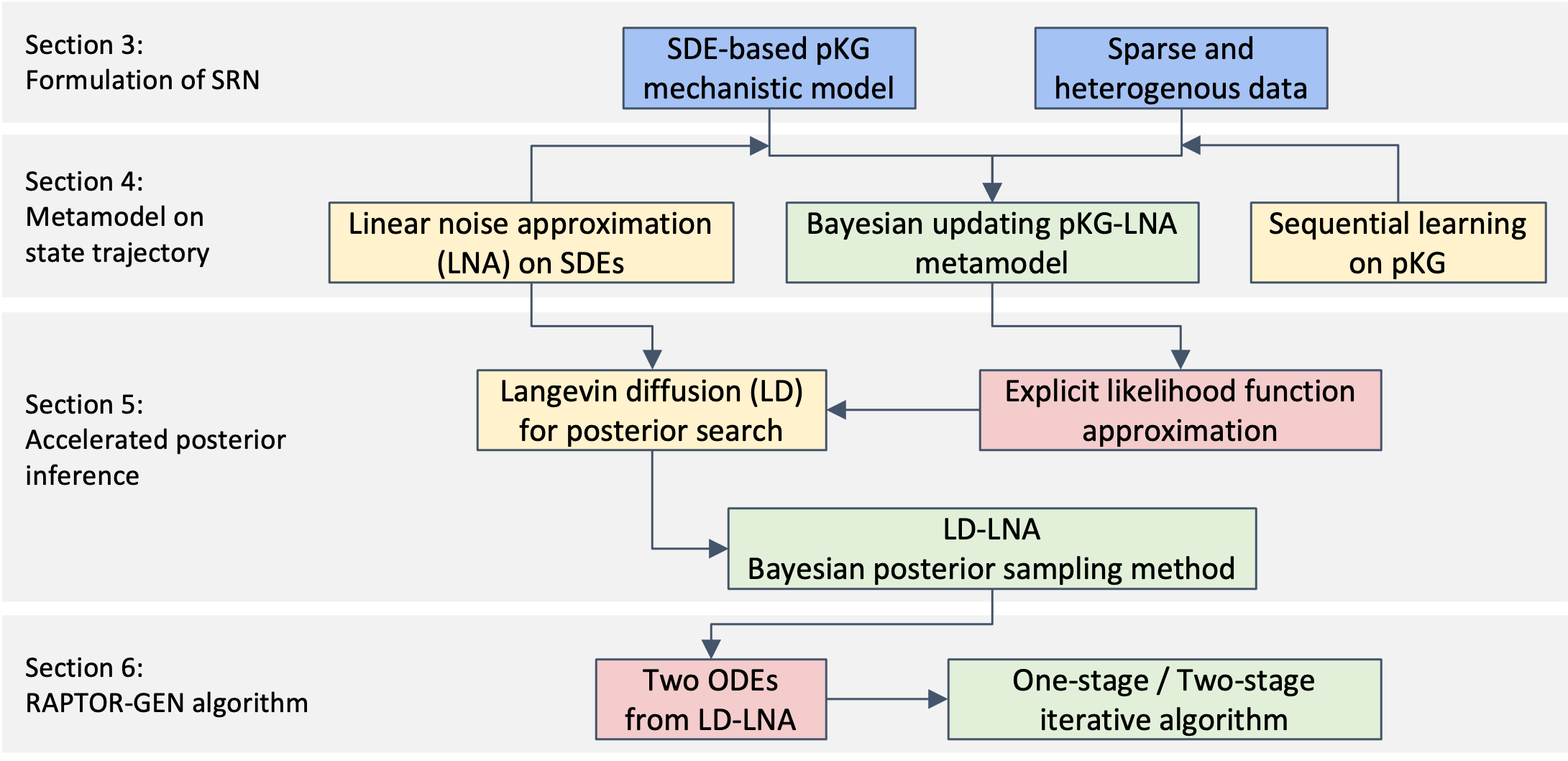}}
    {An illustration of the proposed Bayesian learning framework. \label{fig:framework}}
    {Blue box for the mechanistic model and data, yellow box for the key methodologies adopted, pink box for the key intermediates, green box for two key ingredients of Bayesian learning framework and its implementation algorithm.}
\end{figure}

\begin{remark}
    The proposed Bayesian learning framework can also incorporate the effect of decision making. For biomanufacturing processes, the impact of decisions $\pmb{a}_{t_h}$ (e.g., feeding strategy and pH control by adding base) on the state $\pmb{s}_{t_h}$ is often known and happens immediately; that means we get the post-decision state denoted by $\pmb{s}_{t_h}^\prime = \pmb{f}(\pmb{s}_{t_h},\pmb{a}_{t_h})$ with a known function $\pmb{f}$; for example, in cell culture, we consider the decision ${a}_{t_h}$ as the exchange of media and its impact on the concentration of the solution in the bioreactor leads to the post-decision state $\pmb{s}_{t_h}^\prime = \pmb{s}_{t_h}\cdot(1- a_{t_h})+ \pmb{s}_{in} \cdot a_{t_h}$, where $a_{t_h}$ is the percentage of media exchanged and $\pmb{s}_{in}$ represents the concentration of the added fresh media (suppose it is given). Thus, the proposed Bayesian learning framework can be easily extended to situations accounting for decision impact and guide biomanufacturing process optimal control through the proposed RL scheme on the policy-augmented Bayesian pKG model \citep{zheng2023policy} accounting for model uncertainty and facilitating online data fusion and knowledge update.
\end{remark}

\section{Linear Noise Approximation (LNA) on SDEs with Enhancement} \label{sec:general_LNA}

In this paper, the LNA concept is adopted to two types of SDEs, one is the pKG model in SDE form, and the other is the LD for the posterior sampling process; see Figure~\ref{fig:framework}. Thus, we present LNA as an approximation for general SDEs in Section~\ref{subsec:LNA_general_SDE}. Section~\ref{subsec:pkg_lna} applies LNA to the SDE-based pKG model, providing a tractable state transition density of the system that enables us to infer latent state variables and derive the likelihood in closed form. To mitigate the approximation error inherent in the naive LNA, which tends to accumulate as the state transition time interval and the time variable $t$ in the SDEs increase, Section~\ref{subsec:Bayesian_updating_pkg_lna} introduces an enhanced approach. Specifically, we refine the naive LNA by applying Bayesian updates to the inferred state variables at each observation time point, incorporating partially observed and noisy measurements. This enhancement improves inference accuracy and robustness under sparse data collection conditions.

\subsection{LNA for General SDEs} \label{subsec:LNA_general_SDE}

We formulate the LNA for a general SDE model in favor of an informal derivation to introduce its heuristic idea, and refer the readers to \cite{ferm2008hierarchy} for a more rigorous derivation and a more detailed discussion. Consider a general SDE satisfied by a stochastic process $\{\pmb{z}_t:t\geq 0\}$ of the form
\begin{equation}
    d\pmb{z}_t = \pmb{a}(\pmb{z}_t) dt + \epsilon \pmb{S}(\pmb{z}_t) d \pmb{B}_t, \label{eq:general_sde}
\end{equation}
where $\pmb{a}(\pmb{z}_t)$ and $\epsilon \pmb{S}(\pmb{z}_t)$ are the drift and diffusion terms, respectively, and $\epsilon$ is used to indicate the size of the stochastic perturbation. The idea of LNA is to assume that the path $\{\pmb{z}_t: t \geq 0\}$ of the SDE \eqref{eq:general_sde} can be approximated by a path $\{\hat{\pmb{z}}_t: t \geq 0\}$, which can be partitioned into a deterministic path $\{\bar{\pmb{z}}_t: t \geq 0\}$ and a stochastic perturbation $\{\epsilon \pmb{M}_t: t \geq 0\}$ from this path, represented by
\begin{equation}
    \hat{\pmb{z}}_t = \bar{\pmb{z}}_t + \epsilon \pmb{M}_t. \label{eq:general_LNA}
\end{equation}
Under this assumption, by replacing $\pmb{z}_t$ in SDE \eqref{eq:general_sde} with \eqref{eq:general_LNA}, we have $d \left( \bar{\pmb{z}}_t + \epsilon \pmb{M}_t \right) = \pmb{a}\left(\bar{\pmb{z}}_t + \epsilon \pmb{M}_t\right) dt + \epsilon \pmb{S}\left(\bar{\pmb{z}}_t + \epsilon \pmb{M}_t\right) d\pmb{B}_t$. Through a Taylor expansion of the SDE around $\bar{\pmb{z}}_t$ up to order $\epsilon$, we obtain
\begin{equation}
    d \bar{\pmb{z}}_t + \epsilon d \pmb{M}_t = \pmb{a}\left(\bar{\pmb{z}}_t\right) dt + \epsilon \nabla_{\pmb{z}} \pmb{a}\left(\pmb{z}\right) |_{\pmb{z} = \bar{\pmb{z}}_t} \pmb{M}_t dt + \epsilon \pmb{S}\left(\bar{\pmb{z}}_t\right) d\pmb{B}_t. \label{eq:general_Taylor_Expansion}
\end{equation}
Since the deterministic path means that $\bar{\pmb{z}}_t$ is a constant given any fixed time $t$, while stochastic perturbation means that $\pmb{M}_t$ is a random variable given any fixed time $t$, following the idea of ODE splitting operation as an approximation to facilitate the solving of ODEs, we split Equation \eqref{eq:general_Taylor_Expansion} into one deterministic ODE in \eqref{eq:general_ODE} and one SDE in \eqref{eq:general_SDE}. Specifically, let $\bar{\pmb{z}}_t$ be the solution to the ODE
\begin{equation}
    d \bar{\pmb{z}}_t = \pmb{a}\left(\bar{\pmb{z}}_t\right) dt \label{eq:general_ODE}
\end{equation}
with initial value $\bar{\pmb{z}}_0$, and collecting the remaining terms of Equation \eqref{eq:general_Taylor_Expansion} gives the SDE
\begin{equation}
    d \pmb{M}_t = \nabla_{\pmb{z}} \pmb{a}\left(\pmb{z}\right) |_{\pmb{z} = \bar{\pmb{z}}_t} \pmb{M}_t dt + \pmb{S}\left(\bar{\pmb{z}}_t\right) d\pmb{B}_t. \label{eq:general_SDE}
\end{equation}
For any Gaussian distributed initial condition on $\pmb{M}_0$, the increment in \eqref{eq:general_SDE} is a linear combination of Gaussians. Thus, the solution to SDE \eqref{eq:general_SDE} follows a Gaussian distribution for all time $t \geq 0$, denoted by $\pmb{M}_t \sim \mathcal{N} \left(\pmb{m}_t, \pmb{K}_t\right)$, where the mean vector $\pmb{m}_t$ and the covariance matrix $\pmb{K}_t$ can be obtained by utilizing Proposition \ref{xi_mean_cov}. The proof is given in \ref{proof:Prop_7}.
\begin{proposition} \label{xi_mean_cov}
    The mean vector $\pmb{m}_t$ and the covariance matrix $\pmb{K}_t$ of the Gaussian variable $\pmb{M}_t$ for any $t > 0$ can be obtained by solving the ODEs in \eqref{eq:general_SDE_mean} and \eqref{eq:general_SDE_cov} with initial values $\pmb{m}_0$ and $\pmb{K}_0$:
    \begin{align}
        d\pmb{m}_t &= \nabla_{\pmb{z}} \pmb{a}\left(\pmb{z}\right) |_{\pmb{z} = \bar{\pmb{z}}_t} \pmb{m}_t dt, \label{eq:general_SDE_mean} \\
        d\pmb{K}_t &= \left( \pmb{K}_t \left(\nabla_{\pmb{z}} \pmb{a}\left(\pmb{z}\right) |_{\pmb{z} = \bar{\pmb{z}}_t}\right)^\top + \nabla_{\pmb{z}} \pmb{a}\left(\pmb{z}\right) |_{\pmb{z} = \bar{\pmb{z}}_t} \pmb{K}_t + \pmb{S}\left(\bar{\pmb{z}}_t\right) \pmb{S}\left(\bar{\pmb{z}}_t\right)^\top \right) dt. \label{eq:general_SDE_cov}
    \end{align}
\end{proposition}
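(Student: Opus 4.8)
The plan is to recognize the SDE \eqref{eq:general_SDE} as a \emph{linear} SDE with time-dependent but deterministic coefficients: since $\bar{\pmb{z}}_t$ solves the ODE \eqref{eq:general_ODE}, it is a deterministic function of $t$, and hence so are the Jacobian $\pmb{A}_t := \nabla_{\pmb{z}} \pmb{a}(\pmb{z})|_{\pmb{z}=\bar{\pmb{z}}_t}$ and the diffusion matrix $\pmb{S}_t := \pmb{S}(\bar{\pmb{z}}_t)$. With this notation the dynamics read $d\pmb{M}_t = \pmb{A}_t \pmb{M}_t \, dt + \pmb{S}_t \, d\pmb{B}_t$. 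Because the coefficients are deterministic and $\pmb{M}_0$ is Gaussian, $\pmb{M}_t$ stays Gaussian for every $t \geq 0$, so it suffices to derive closed ODEs for its first two moments $\pmb{m}_t = \mathbb{E}[\pmb{M}_t]$ and $\pmb{K}_t = \mathbb{E}[\pmb{M}_t \pmb{M}_t^\top] - \pmb{m}_t \pmb{m}_t^\top$.

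For the mean, I would take expectations on both sides of the SDE. The Itô integral $\int_0^t \pmb{S}_u \, d\pmb{B}_u$ is a zero-mean martingale, so the diffusion term drops out and I am left with $d\pmb{m}_t = \pmb{A}_t \pmb{m}_t \, dt$, which is exactly \eqref{eq:general_SDE_mean}.

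For the covariance, the key step is the Itô product rule applied to the outer product $\pmb{M}_t \pmb{M}_t^\top$,
\begin{equation*}
    d(\pmb{M}_t \pmb{M}_t^\top) = (d\pmb{M}_t) \pmb{M}_t^\top + \pmb{M}_t (d\pmb{M}_t)^\top + (d\pmb{M}_t)(d\pmb{M}_t)^\top.
\end{equation*}
Substituting the dynamics and invoking the Itô multiplication rules $d\pmb{B}_t \, d\pmb{B}_t^\top = \pmb{I}\,dt$ and $dt\,dt = dt\,d\pmb{B}_t = \pmb{0}$, the quadratic covariation term contributes precisely $\pmb{S}_t \pmb{S}_t^\top \, dt$. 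Taking expectations annihilates the two remaining stochastic-integral terms (again zero-mean martingale increments), giving $d\,\mathbb{E}[\pmb{M}_t \pmb{M}_t^\top] = \left( \pmb{A}_t \mathbb{E}[\pmb{M}_t \pmb{M}_t^\top] + \mathbb{E}[\pmb{M}_t \pmb{M}_t^\top] \pmb{A}_t^\top + \pmb{S}_t \pmb{S}_t^\top \right) dt$. Finally, since $\pmb{m}_t$ is deterministic, the ordinary product rule yields $d(\pmb{m}_t \pmb{m}_t^\top) = \left( \pmb{A}_t \pmb{m}_t \pmb{m}_t^\top + \pmb{m}_t \pmb{m}_t^\top \pmb{A}_t^\top \right) dt$; subtracting this from the second-moment ODE and recombining via $\pmb{K}_t = \mathbb{E}[\pmb{M}_t \pmb{M}_t^\top] - \pmb{m}_t \pmb{m}_t^\top$ reproduces \eqref{eq:general_SDE_cov}.

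The main obstacle is the careful bookkeeping in the Itô product rule, specifically retaining the second-order covariation term $(d\pmb{M}_t)(d\pmb{M}_t)^\top = \pmb{S}_t \pmb{S}_t^\top \, dt$, which is the entire diffusion contribution to the covariance and would be erroneously dropped under naive (Stratonovich-style) calculus. Beyond this, existence and uniqueness of the solution to the linear SDE, and hence well-posedness of the resulting moment ODEs, follow from the standard Lipschitz/linear-growth theory once $\pmb{A}_t$ and $\pmb{S}_t$ are locally bounded in $t$, so I would only remark on this rather than belabor it.
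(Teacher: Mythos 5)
Your proof is correct and follows essentially the same route as the paper's: take expectations of the SDE for the mean, apply the It\^{o} product rule to $\pmb{M}_t\pmb{M}_t^\top$ (retaining the quadratic covariation term $\pmb{S}(\bar{\pmb{z}}_t)\pmb{S}(\bar{\pmb{z}}_t)^\top dt$) for the second moment, then subtract $d(\pmb{m}_t\pmb{m}_t^\top)$ to obtain the covariance ODE. The only additions beyond the paper's argument are your (accurate) remarks on Gaussianity propagation and well-posedness, which are harmless.
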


For the initial values, let $\pmb{m}_0 := \pmb{0}$, then $\pmb{m}_t = \pmb{0}$ for all $t \geq 0$ according to the ODE \eqref{eq:general_SDE_mean}. Suppose that the initial condition for the SDE \eqref{eq:general_sde} is $\pmb{z}_0 \sim \mathcal{N} (\pmb{z}_0^*, \epsilon^2 \pmb{Z}_0^*)$, then we set $\bar{\pmb{z}}_0 := \pmb{z}_0^*$ and $\pmb{K}_0 := \pmb{Z}_0^*$. By doing this, we reduce the number of ODEs from three to two: integrating \eqref{eq:general_ODE} and \eqref{eq:general_SDE_cov} through 0 to time $t$ provides the final result $\hat{\pmb{z}}_t \sim \mathcal{N} \left(\bar{\pmb{z}}_t, \epsilon^2 \pmb{K}_t\right)$, from which we find that the LNA gives us a Gaussian approximation to the state transition density of the system represented by SDEs.

\subsection{LNA for pKG of SRN} \label{subsec:pkg_lna}

For the SDE-based pKG in \eqref{diffusion approximation - poisson} with an emphasis on nonlinear $\pmb{v}(\pmb{s}_t; \pmb{\theta})$ characterizing the regulatory mechanisms of SRN, its nonlinear drift and diffusion terms make its solving directly to obtain the metamodel of continuous-time state trajectory $\{\pmb{s}_t\}$ require time-consuming numerical integration methods. By applying the LNA to the SDE \eqref{diffusion approximation - poisson}, we obtain a computationally efficient metamodel $\hat{\pmb{s}}_t$ that surrogates the true state $\pmb{s}_t$. Specifically, we have $\pmb{z}_t :=\pmb{s}_t$, $\pmb{a}(\pmb{z}_t) := \pmb{C} \pmb{v} (\pmb{s}_t; \pmb{\theta})$, $\pmb{S}(\pmb{z}_t) := \left\{\pmb{C}{\rm diag}\{\pmb{v} (\pmb{s}_t; \pmb{\theta})\}\pmb{C}^\top\right\}^{\frac{1}{2}}$, $\epsilon := \Omega^{-\frac{1}{2}}$, $\hat{\pmb{z}}_t := \hat{\pmb{s}}_t$, $\bar{\pmb{z}}_t := \bar{\pmb{s}}_t$, and $\pmb{K}_t := \pmb{\Gamma}_t$. Therefore, $\pmb{s}_t$ can be approximated by a multivariate Gaussian distribution $\hat{\pmb{s}}_t$ with mean vector $\bar{\pmb{s}}_t$ and covariance matrix $\Omega^{-1} \pmb{\Gamma}_t$ obtained by solving the ODEs \eqref{eq:general_ODE} and \eqref{eq:general_SDE_cov}. 

Further combined with the assumption of a linear additive Gaussian relation between each observation $\pmb{y}_{t_h}$ and the underlying state value $\pmb{s}_{t_h}$ in \eqref{measurement error}, a tractable approximation of the likelihood of observations $\mathcal{D}_M^H$ can be derived. Compared to other black-box metamodels such as GPs, the LNA for the pKG is derived directly from the SDE \eqref{diffusion approximation - poisson}, which fully exploits the structural information of the state transition provided by such an SDE, thus improving the interpretability. For completeness, we provide the results of the asymptotic convergence of LNA for pKG in \ref{subsec:lna_theory}.

\begin{remark}
    In general SDEs, the scaling parameter $\epsilon$ in Equations \eqref{eq:general_sde} and \eqref{eq:general_LNA} adjusts the magnitude of the diffusion term $\epsilon \pmb{S}(\pmb{z}_t)$ and stochastic perturbation $\epsilon \pmb{M}_t$ relative to the drift term $\pmb{a}(\pmb{z}_t)$ and the deterministic path $\bar{\pmb{z}}_t$, and facilitates the grouping of terms after Taylor expansion. When applying LNA to Equation \eqref{diffusion approximation - poisson}, we set $\epsilon := \Omega^{-\frac{1}{2}}$, reflecting the intuition that fluctuations in molecular concentrations $\pmb{s}_t$ diminish as the bioreactor volume $\Omega$ increases. This scaling enables the derivation of an explicit convergence bound for LNA toward the stationary distribution of a properly scaled SRN (see \cite{grunberg2023stein}). The empirical results in Section~\ref{sec:numerical} further demonstrate LNA's robustness even at unit system size ($\Omega := 1$).
\end{remark}

\subsection{Enhance LNA Metamodel through Bayesian Updating on the pKG} \label{subsec:Bayesian_updating_pkg_lna}

With the LNA applied to the pKG in the SDE form as discussed in Section~\ref{subsec:LNA_general_SDE}, the ODEs \eqref{eq:general_ODE} and \eqref{eq:general_SDE_cov} are solved once throughout the time interval for given initial values of state, which could lead to a poor LNA metamodel since the approximation error accumulates as $t$ grows. To address this issue, we enhance the LNA metamodel through sequential Bayesian updating on the pKG and develop an interpretable Bayesian updating pKG-LNA metamodel on state transition dynamics. 

The enhancement strategy involves sequentially updating the LNA metamodel $\hat{\pmb{s}}_{t_h}$ at each observation time $t_h$ using all prior observations up to that point (see Figure~\ref{fig:intro}). Specifically, we combine the LNA-based latent state model $\hat{\pmb{s}}_{t_h}|\pmb{y}_{t_{h-1}},\ldots,\pmb{y}_{t_1};\pmb{\theta},\pmb{\sigma} \sim \mathcal{N} (\bar{\pmb{s}}_{t_h}, \Omega^{-1} \pmb{\Gamma}_{t_h})$ with the conditional prediction distribution of the observation $\pmb{y}_{t_h}|\pmb{y}_{t_{h-1}},\ldots,\pmb{y}_{t_1};\pmb{\theta},\pmb{\sigma} \sim \mathcal{N} (\pmb{G}_{t_h}\bar{\pmb{s}}_{t_h}, \Omega^{-1}\pmb{G}_{t_h}\pmb{\Gamma}_{t_h}\pmb{G}_{t_h}^\top+\pmb{\Sigma}_{t_h})$ derived via Equation \eqref{measurement error}, yielding the joint distribution
\begin{equation*}
    \begin{pmatrix}
        \hat{\pmb{s}}_{t_h} \\
        \pmb{y}_{t_h}
    \end{pmatrix} \Bigg| \pmb{y}_{t_{h-1}},\ldots,\pmb{y}_{t_1};\pmb{\theta},\pmb{\sigma} \sim \mathcal{N} \left\{
    \begin{pmatrix}
        \bar{\pmb{s}}_{t_h} \\
        \pmb{G}_{t_h}\bar{\pmb{s}}_{t_h}
    \end{pmatrix},
    \begin{pmatrix}
        \Omega^{-1}\pmb{\Gamma}_{t_h} & \Omega^{-1}\pmb{\Gamma}_{t_h}\pmb{G}_{t_h}^\top \\
        \Omega^{-1}\pmb{G}_{t_h}\pmb{\Gamma}_{t_h} & \Omega^{-1}\pmb{G}_{t_h}\pmb{\Gamma}_{t_h}\pmb{G}_{t_h}^\top+\pmb{\Sigma}_{t_h}
    \end{pmatrix}
    \right\}.
\end{equation*}
Using the standard properties of multivariate Gaussian distributions, the posterior distribution of $\hat{\pmb{s}}_{t_h}$ given $\pmb{y}_{t_h},\ldots,\pmb{y}_{t_1}$ is $\hat{\pmb{s}}_{t_h}|\pmb{y}_{t_h},\ldots,\pmb{y}_{t_1};\pmb{\theta},\pmb{\sigma} \sim \mathcal{N} (\pmb{\alpha}_{t_h}, \Omega^{-1} \pmb{\beta}_{t_h})$, where 
\begin{align*}
    \pmb{\alpha}_{t_h} &:= \bar{\pmb{s}}_{t_h} + \pmb{\Gamma}_{t_h}\pmb{G}_{t_h}^\top \left(\pmb{G}_{t_h}\pmb{\Gamma}_{t_h}\pmb{G}_{t_h}^\top+\Omega\pmb{\Sigma}_{t_h}\right)^{-1} \left(\pmb{y}_{t_h}-\Omega^{-1}\pmb{G}_{t_h}\bar{\pmb{s}}_{t_h}\right), \\ 
    \pmb{\beta}_{t_h} &:= \pmb{\Gamma}_{t_h} - \pmb{\Gamma}_{t_h}\pmb{G}_{t_h}^\top \left(\pmb{G}_{t_h}\pmb{\Gamma}_{t_h}\pmb{G}_{t_h}^\top+\Omega\pmb{\Sigma}_{t_h}\right)^{-1} \pmb{G}_{t_h}\pmb{\Gamma}_{t_h}.
\end{align*}
This enables us to probe the latent state based on the partial observations with measurement error. 

\begin{figure}
    \FIGURE
    {\includegraphics[width=.7\linewidth]{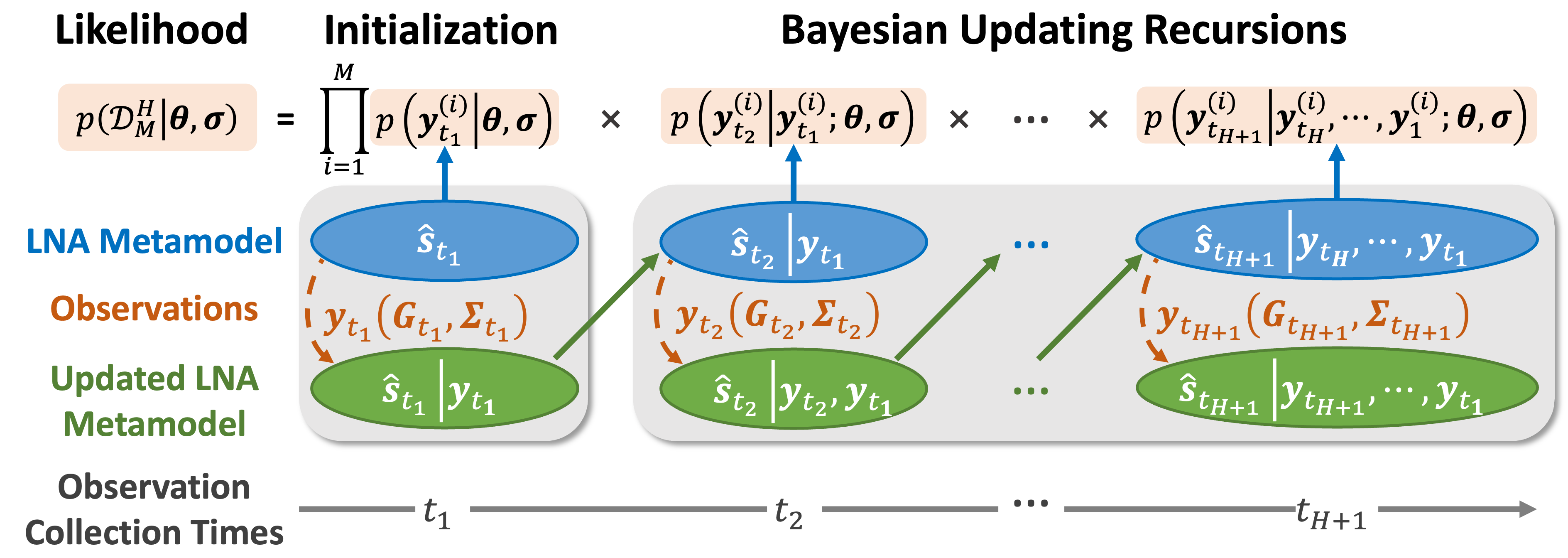}}
    {An illustration of Bayesian updating pKG-LNA metamodel. \label{fig:intro}}
    {}
\end{figure}

Then, the values of the ODEs \eqref{eq:general_ODE} and \eqref{eq:general_SDE_cov} at time $t_h$ (i.e., $\bar{\pmb{z}}_{t_h} := \bar{\pmb{s}}_{t_h}$ and $\pmb{K}_{t_h} := \pmb{\Gamma}_{t_h}$) are reset to $\pmb{\alpha}_{t_h}$ and $\pmb{\beta}_{t_h}$, respectively. By integrating these two updated ODEs over time $t_h$ to $t_{h+1}$, we obtain $\bar{\pmb{s}}_{t_{h+1}}$ and $\pmb{\Gamma}_{t_{h+1}}$, also the LNA metamodel $\hat{\pmb{s}}_{t_{h+1}}|\pmb{y}_{t_h},\ldots,\pmb{y}_{t_1};\pmb{\theta},\pmb{\sigma} \sim \mathcal{N} (\bar{\pmb{s}}_{t_{h+1}}, \Omega^{-1} \pmb{\Gamma}_{t_{h+1}})$. Combined with the linear additive Gaussian measurement error in \eqref{measurement error}, the prediction of the observation $\pmb{y}_{t_{h+1}}$ is tractable in the form of a conditional distribution $\pmb{y}_{t_{h+1}}|\pmb{y}_{t_h},\ldots,\pmb{y}_{t_1};\pmb{\theta},\pmb{\sigma} \sim \mathcal{N} (\pmb{G}_{t_{h+1}}\bar{\pmb{s}}_{t_{h+1}}, \Omega^{-1}\pmb{G}_{t_{h+1}}\pmb{\Gamma}_{t_{h+1}}\pmb{G}_{t_{h+1}}^\top+\pmb{\Sigma}_{t_{h+1}})$. Finally, the likelihood approximation for the historical observation dataset $\mathcal{D}_M^H$ can be constructed by decomposition, $p(\mathcal{D}_M^H | \pmb{\theta},\pmb{\sigma}) = \prod_{i=1}^M \{p(\pmb{y}_{t_1}^{(i)} | \pmb{\theta},\pmb{\sigma}) \prod_{h=1}^{H} p(\pmb{y}_{t_{h+1}}^{(i)} | \pmb{y}_{t_h}^{(i)},\ldots,\pmb{y}_{t_1}^{(i)};\pmb{\theta},\pmb{\sigma})\}$. We refer the readers to \ref{appendix:pKG_LNA} and the preliminary work \citep{xu2024linear} for more details.

Therefore, the pKG mechanistic model is formulated as a continuous-time SDE system that captures the dynamics and intrinsic stochasticity of multi-scale bioprocess trajectories. This formulation enables the integration of heterogeneous data from diverse sources and manufacturing systems. To address the challenge of fusing such complex datasets with distributed latent state variables, we propose a Bayesian sequential updating pKG-LNA metamodel. This metamodel facilitates closed-form likelihood derivation for $\mathcal{D}_M^H$ and accelerates Bayesian learning on the pKG.

\section{Langevin Diffusion-Based Linear Noise Approximation (LD-LNA)} \label{sec:lna}

This section introduces LD to speed up the search for the target Bayesian posterior sampling process in Section~\ref{subsec:LangevinDiffusion} and then extends the LNA framework to LD, resulting in the proposed LD-LNA Bayesian inference method (Section~\ref{subsec:lna}). We analyze its finite-sample and asymptotic performance in Section~\ref{subsec:convergence_analysis}. 
For simplicity, we adopt a slight abuse of notation and represent the inferred parameters as an $N_\theta$-dimensional vector $\pmb{\theta}$, encompassing both the unknown mechanistic parameters and the measurement error levels.

\subsection{Langevin Diffusion (LD) for Efficient Data Fusion and Posterior Inference} \label{subsec:LangevinDiffusion}

To overcome the limitations of random walk-based strategies in classical MCMC (see Section~\ref{subsec:likelihood}), we adopt LD, which leverages the gradient of the posterior $p\left( \pmb{\theta} | \mathcal{D}_M^H\right)$. This gradient captures the interdependencies of the parameters that explain the heterogeneous and sparse observations $\mathcal{D}_M^H$ from various biomanufacturing processes. LD defines a continuous-time stochastic process governed by the following SDE: 
\begin{equation}
    d\pmb{\theta}(\tau) = \nabla_{\pmb{\theta}} \log p\left( \pmb{\theta} \big| \mathcal{D}_M^H\right) \big|_{\pmb{\theta} = \pmb{\theta}(\tau)} d\tau + \sqrt{2} d\pmb{W}(\tau), \label{Langevin diffusion}
\end{equation}
where $d\pmb{W}(\tau)$ denotes a $N_{\theta}$-dimensional standard Brownian motion. This formulation improves MCMC mixing efficiency by guiding the sampling process toward regions of high posterior density, with the gradient acting as a drift term analogous to the gradient flow. To support this approach, we introduce Assumption~\ref{assumption_standing} on the log-posterior, and Theorem~\ref{theorem:langevin_diffusion} establishes that the LD process admits the target posterior as its unique stationary distribution. The proof is provided in \ref{proof:theorem_langevin_diffusion}.

\begin{assumption} \label{assumption_standing}
    The log-posterior distribution $\log p\left( \pmb{\theta} | \mathcal{D}_M^H\right)$ is $L$-smooth. That is, it is continuously differentiable and its gradient $\nabla_{\pmb{\theta}} \log p\left( \pmb{\theta} | \mathcal{D}_M^H\right)$ is Lipschitz continuous with the Lipschitz constant $L$.
\end{assumption}

\begin{theorem} \label{theorem:langevin_diffusion}
    Suppose that Assumption \ref{assumption_standing} holds. The probability density function (PDF) of $\pmb{\theta}(\tau)$ generated by the LD \eqref{Langevin diffusion} approaches a stationary distribution $q\left(\pmb{\theta}\right)$ in the limit of $\tau$, and $q\left(\pmb{\theta}\right) = p\left(\pmb{\theta} | \mathcal{D}_M^H\right)$.
\end{theorem}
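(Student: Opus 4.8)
The plan is to analyze the time evolution of the law of $\pmb{\theta}(\tau)$ through its Fokker--Planck (forward Kolmogorov) equation and to verify directly that the target posterior is the unique invariant density toward which this evolution converges. Writing the drift as $\pmb{b}(\pmb{\theta}) := \nabla_{\pmb{\theta}} \log p(\pmb{\theta}\,|\,\mathcal{D}_M^H)$ and noting that the coefficient $\sqrt{2}$ produces a constant diffusion matrix $\pmb{I}$ (since $\tfrac{1}{2}(\sqrt{2})^2 = 1$), the density $q_\tau$ of $\pmb{\theta}(\tau)$ obeys
\[
    \frac{\partial q_\tau}{\partial \tau} = -\nabla_{\pmb{\theta}} \cdot \left( \pmb{b}\, q_\tau \right) + \Delta_{\pmb{\theta}} q_\tau,
\]
and a stationary density $q$ is a solution of the right-hand side set to zero. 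First I would record well-posedness: Assumption~\ref{assumption_standing} makes $\pmb{b}$ Lipschitz continuous, so the SDE \eqref{Langevin diffusion} admits a unique strong, non-exploding solution whose law is correctly propagated by the Fokker--Planck operator above.

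The central computation is to substitute $q = p(\pmb{\theta}\,|\,\mathcal{D}_M^H)$ and exploit the gradient structure of the drift. Since $\pmb{b} = \nabla_{\pmb{\theta}} \log p = (\nabla_{\pmb{\theta}} p)/p$, the probability flux collapses to $\pmb{b}\, p = \nabla_{\pmb{\theta}} p$, so that $\nabla_{\pmb{\theta}}\cdot(\pmb{b}\, p) = \Delta_{\pmb{\theta}} p$ cancels the diffusion term $\Delta_{\pmb{\theta}} p$ exactly. This shows the stationary operator annihilates $p$, confirming that the posterior is an invariant density of \eqref{Langevin diffusion}. Equivalently, I would verify the detailed-balance (reversibility) condition, namely that the stationary flux $\pmb{b}\, p - \nabla_{\pmb{\theta}} p$ vanishes pointwise; this identifies the diffusion as reversible with respect to $p$ and its generator as self-adjoint in $L^2(p\, d\pmb{\theta})$.

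Finally, to upgrade ``an invariant density'' to ``the unique invariant density approached in the limit,'' I would use the reversibility established above via an entropy-dissipation argument. Rewriting the Fokker--Planck equation in the conservation form $\partial_\tau q_\tau = \nabla_{\pmb{\theta}}\cdot\big(p\,\nabla_{\pmb{\theta}}(q_\tau/p)\big)$ exposes it as the gradient flow of the relative entropy $\mathrm{KL}(q_\tau\,\|\,p)$; differentiating along the flow and integrating by parts yields the dissipation $-\int (p^2/q_\tau)\,\|\nabla_{\pmb{\theta}}(q_\tau/p)\|^2\, d\pmb{\theta} \le 0$, a Fisher-information-type quantity that is nonnegative and vanishes only when $q_\tau \propto p$, i.e.\ $q_\tau = p$. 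Monotone decay of this Lyapunov functional then forces $q_\tau \to p$ and excludes any competing stationary law. The main obstacle I anticipate is precisely this last step: $L$-smoothness controls local regularity and well-posedness but does not by itself guarantee ergodicity or a finite invariant mass, so the argument leans implicitly on $p$ being a genuine integrable probability density (and, for a quantitative rate, on a confinement or Poincar\'e/log-Sobolev-type condition). I would therefore present the convergence as a qualitative limit justified by the entropy dissipation together with the integrability of $p$, deferring any explicit rate to the finite-sample analysis in Section~\ref{subsec:convergence_analysis}.
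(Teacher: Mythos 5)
Your proposal is correct and lives in the same Fokker--Planck framework as the paper's proof, but the logical route is genuinely different. The paper argues in the reverse direction: it \emph{assumes} the law $p(\pmb{\theta},\tau)$ converges to some stationary $q$, interchanges limit and derivative (using $C^1$ regularity) to set $\partial_\tau q = 0$, and then reads off from the stationary Fokker--Planck equation that each component of the flux, $q\,(\partial_{\theta_k}\log q - \partial_{\theta_k}\log p(\pmb{\theta}\,|\,\mathcal{D}_M^H))$, must vanish, giving $q = p(\pmb{\theta}\,|\,\mathcal{D}_M^H)$. You instead verify invariance directly by the flux cancellation $\pmb{b}\,p = \nabla_{\pmb{\theta}} p$ (detailed balance), and then attack convergence and uniqueness via the relative-entropy dissipation identity, correctly computed as $-\int (p^2/q_\tau)\,\|\nabla_{\pmb{\theta}}(q_\tau/p)\|^2\,d\pmb{\theta} \le 0$. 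What your approach buys is rigor exactly where the paper is thinnest: the paper's inference from ``divergence of the flux is zero'' to ``the flux is zero pointwise'' is not valid for a general vector field and is in effect the detailed-balance property that your entropy/reversibility argument is designed to establish; likewise, the paper simply posits the existence of the limit $q$, whereas your Lyapunov functional at least explains \emph{why} the law should settle at $p$ and makes explicit that $L$-smoothness alone does not deliver ergodicity (you would need integrability of $p$ plus a confinement or Poincar\'e/log-Sobolev condition for a genuine, let alone quantitative, convergence statement). What the paper's route buys is brevity and alignment with how the result is used downstream (Theorems \ref{xi_stationary_dist} and \ref{Theorem:Fokker-Planck} reuse the same stationarity-of-the-FP-equation manipulation), at the cost of assuming the convergence your entropy argument tries to prove. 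One caveat on your last step: monotone decay of $\mathrm{KL}(q_\tau\,\|\,p)$ with dissipation vanishing only at $q_\tau = p$ does not by itself force $q_\tau \to p$ (the KL could decrease to a positive constant, or mass could escape); you flag this honestly, and under the paper's standing assumptions your qualitative claim is no weaker than what the paper itself actually proves.
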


To implement the theoretical result in Theorem \ref{theorem:langevin_diffusion}, we employ the Euler-Maruyama approximation on \eqref{Langevin diffusion} to obtain the discretized LD with fixed step size $\Delta \tau = \tau_{k+1}-\tau_k$ and generate an approximate posterior sampling process, i.e.,
\begin{equation}
    \pmb{\theta}_{\Delta \tau}(\tau_{k+1}) := \pmb{\theta}_{\Delta \tau}(\tau_k) + \nabla_{\pmb{\theta}} \log p\left( \pmb{\theta} \big| \mathcal{D}_M^H\right) \big|_{\pmb{\theta} = \pmb{\theta}_{\Delta \tau}(\tau_k)} \Delta \tau + \sqrt{2} \Delta\pmb{W}(\tau_k), \label{update rule}
\end{equation}
where each $\Delta\pmb{W}(\tau_k) \in \mathbb{R}^{N_{\theta}}$ is a Gaussian random vector with mean zero and covariance matrix diag$\{\Delta \tau\}$. Thus, the proposal distribution $q_{\rm trans} \left(\pmb{\theta}_{\Delta \tau}(\tau_{k+1}) | \pmb{\theta}_{\Delta \tau}(\tau_k)\right)$ is Gaussian distributed with mean $\pmb{\theta}_{\Delta \tau}(\tau_k) + \nabla_{\pmb{\theta}} \log p\left( \pmb{\theta} | \mathcal{D}_M^H\right) |_{\pmb{\theta} = \pmb{\theta}_{\Delta \tau}(\tau_k)} \Delta \tau$ and covariance diag$\{2\Delta \tau\}$. 
This implementation of LD in (\ref{update rule}) is commonly referred to as the \textit{unadjusted Langevin algorithm (ULA)} \citep{roberts1996exponential, xifara2014langevin}. ULA leverages the gradient of the log-posterior to steer the sampling process toward regions of higher posterior probability. This gradient-driven approach enables more efficient exploration of the parameter space and yields higher-quality samples of $\pmb{\theta}$, which better capture the complex interdependencies present in the observations $\mathcal{D}_M^H$.
The ULA procedure for Bayesian posterior sampling tailored to SRNs is summarized in \ref{subsec:ula_procedure}.

\begin{remark} \label{remark:reparametrization}
    From Equation \eqref{update rule}, the support of the posterior samples is the whole $N_{\theta}$-dimensional real space $\mathbb{R}^{N_{\theta}}$. But in most real-word cases, the feasible parameter space of $\pmb{\theta}$ is restricted; for example, some biological parameters such as rates should ensure positivity, while some parameters such as probabilities or bioavailability should be between 0 and 1 \citep{prague2013nimrod}. Reparameterization of the system allows us to take these constraints into account. Specifically, we can introduce one-to-one functions $g_k(\cdot)$ for $k=1,2,\ldots,N_{\theta}$, and define transformed parameters $\theta_{k}^{\rm trans} = g_k(\theta_{k})$; for example, $g_k(\cdot)$ can be logarithmic functions to transform the support of parameters from positive space to real space, or inverse logistic functions to transform parameters between 0 and 1 to real space. Then we can perform ULA on the transformed parameters $\pmb{\theta}^{\rm trans} = (\theta_{1}^{\rm trans},\theta_{2}^{\rm trans},\ldots,\theta_{N_{\theta}}^{\rm trans})^\top$.
\end{remark}

\subsection{LNA for LD} \label{subsec:lna}

The discretization of LD in Equation \eqref{Langevin diffusion} with step size $\Delta \tau$ introduces bias into the posterior sampling process for $p\left(\pmb{\theta} | \mathcal{D}_M^H\right)$. A larger $\Delta \tau$ accelerates exploration of the parameter space but increases bias, causing the trajectory $\{\pmb{\theta}(\tau): \tau \geq 0\}$ generated by the update rule in Equation~\eqref{update rule} to deviate significantly from the true posterior distribution \citep{10.3150/18-BEJ1073}. A small step size $\Delta \tau$ reduces bias in the discretized LD but requires many iterations to adequately explore the parameter space. To address this trade-off in step size selection, we adopt the LNA framework and introduce Assumption~\ref{assumption_lna}.

\begin{assumption} \label{assumption_lna}
    Suppose that the parameter space $\pmb{\Theta}$ is convex. The path $\{\pmb{\theta} (\tau):\tau \geq 0\}$ of LD \eqref{Langevin diffusion} can be approximated by a path $\{\hat{\pmb{\theta}}(\tau):\tau \geq 0\}$, which can be partitioned into a deterministic path $\{\bar{\pmb{\theta}}(\tau):\tau \geq 0\}$ and a stochastic perturbation $\{\pmb{\xi}(\tau):\tau \geq 0\}$ as
    \begin{equation}
        \hat{\pmb{\theta}}(\tau) = \bar{\pmb{\theta}}(\tau) + \pmb{\xi}(\tau). \label{eq:LNA}
    \end{equation}
    And $\log p\left( \pmb{\theta} | \mathcal{D}_M^H\right)$ is three times differentiable in a neighborhood of $\bar{\pmb{\theta}}(\tau)$ denoted by $N_\varepsilon(\bar{\pmb{\theta}}(\tau)) = \{\pmb{\theta} \in \mathbb{R}^{N_\theta}: \ d(\bar{\pmb{\theta}}(\tau),\pmb{\theta})<\varepsilon\}$, where $\varepsilon>0$ is a small constant and $d(\cdot,\cdot)$ a distance measure.
\end{assumption}
From the Langevin diffusion in Equation~\eqref{Langevin diffusion}, the trajectory $\{\pmb{\theta} (\tau):\tau \geq 0\}$ spans the entire $N_{\theta}$-dimensional real space $\mathbb{R}^{N_{\theta}}$, implying that the parameter space $\pmb{\Theta}=\mathbb{R}^{N_{\theta}}$ is convex. Alternatively, the reparameterization method described in Remark~\ref{remark:reparametrization} can be used to transform the original parameter space into $\mathbb{R}^{N_{\theta}}$, thereby satisfying the convexity condition in Assumption~\ref{assumption_lna}.

Following similar operations as in Section~\ref{subsec:LNA_general_SDE} with $\epsilon := 1$, we have
\begin{equation}
    d \bar{\pmb{\theta}}(\tau) + d \pmb{\xi}(\tau) = \nabla_{\pmb{\theta}} \log p\left( \pmb{\theta} \big| \mathcal{D}_M^H\right) \big|_{\pmb{\theta} = \bar{\pmb{\theta}}(\tau)} d\tau + \nabla_{\pmb{\theta}}^2 \log p\left( \pmb{\theta} \big| \mathcal{D}_M^H\right) \big|_{\pmb{\theta} = \bar{\pmb{\theta}}(\tau)} \pmb{\xi}(\tau) d\tau + \sqrt{2} d\pmb{W}(\tau), \label{Taylor_Expansion}
\end{equation}
for which we split into one ODE and one SDE with respective solutions $\bar{\pmb{\theta}}(\tau)$ and $\pmb{\xi}(\tau)$ as
\begin{align}
    d \bar{\pmb{\theta}}(\tau) &= \nabla_{\pmb{\theta}} \log p\left( \pmb{\theta} \big| \mathcal{D}_M^H\right) \big|_{\pmb{\theta} = \bar{\pmb{\theta}}(\tau)} d\tau, \label{ODE} \\
    d \pmb{\xi}(\tau) &= \nabla_{\pmb{\theta}}^2 \log p\left( \pmb{\theta} \big| \mathcal{D}_M^H\right) \big|_{\pmb{\theta} = \bar{\pmb{\theta}}(\tau)} \pmb{\xi}(\tau) d\tau + \sqrt{2} d\pmb{W}(\tau), \label{SDE}
\end{align}
with initial values $\bar{\pmb{\theta}}(0)$ and $\pmb{\xi}(0)$, where $\pmb{\xi}(\tau)$ is Gaussian for any Gaussian distributed $\pmb{\xi}(0)$, denoted by $\pmb{\xi}(\tau) \sim \mathcal{N} \left(\pmb{\varphi}(\tau), \pmb{\Psi}(\tau)\right)$. Its mean $\pmb{\varphi}(\tau)$ and covariance $\pmb{\Psi}(\tau)$ can be obtained by solving two ODEs as summarized in Proposition~\ref{xi_mean_cov_original} in \ref{proof:Theorem_9}, which comes directly from Proposition \ref{xi_mean_cov}.

To take advantage of the limiting property of LD shown in Theorem \ref{theorem:langevin_diffusion}, we first let $\tau \to \infty$ in ODE \eqref{ODE} and denote its solution in the limit of $\tau$ as $\bar{\pmb{\theta}}^* \in {\rm int}(\pmb{\Theta})$, which is an equilibrium point of this ODE. Thus, $\bar{\pmb{\theta}}^*$ can be regarded as a local maximum of $\log p\left( \pmb{\theta} | \mathcal{D}_M^H\right)$ satisfying $\nabla_{\pmb{\theta}} \log p\left( \pmb{\theta} | \mathcal{D}_M^H\right) |_{\pmb{\theta} = \bar{\pmb{\theta}}^*}=0$. Then, let $\pmb{\xi}^*(\tau)$ represent the solution of SDE \eqref{SDE} with $\bar{\pmb{\theta}}(\tau):=\bar{\pmb{\theta}}^*$, that is,
\begin{equation}
    d \pmb{\xi}^*(\tau) = \nabla_{\pmb{\theta}}^2 \log p\left( \pmb{\theta} \big| \mathcal{D}_M^H\right) \big|_{\pmb{\theta} = \bar{\pmb{\theta}}^*} \pmb{\xi}^*(\tau) d\tau + \sqrt{2} d\pmb{W}(\tau), \label{SDE_star}
\end{equation} 
indicating that $\{\pmb{\xi}^*(\tau): \tau \geq 0\}$ is an Ornstein-Uhlenbeck process and $\pmb{\xi}^*(\tau)$ is Gaussian for all $\tau \geq 0$ once the initial condition $\pmb{\xi}^*(0)$ is Gaussian, denoted by $\pmb{\xi}^*(\tau) \sim \mathcal{N} \left(\pmb{\varphi}^*(\tau), \pmb{\Psi}^*(\tau)\right)$, whose mean $\pmb{\varphi}^*(\tau)$ and covariance $\pmb{\Psi}^*(\tau)$ can be obtained by solving two ODEs in Proposition \ref{xi_star_mean_cov} in \ref{proof:Theorem_9}.

\begin{remark}
    The SDE \eqref{SDE_star} is equivalent to that obtained by applying the LNA in Section~\ref{subsec:LNA_general_SDE} to the LD \eqref{Langevin diffusion} under $\hat{\pmb{\theta}}(\tau) = \bar{\pmb{\theta}}^* + \pmb{\xi}^*(\tau)$, which means that the deterministic path in Assumption \ref{assumption_lna} is a constant path $\{\bar{\pmb{\theta}}^*\}$ and the stochastic perturbation away from this path is denoted by $\{\pmb{\xi}^*(\tau):\tau \geq 0\}$. 
\end{remark}

Theorem \ref{xi_stationary_dist} shows the limiting distribution of $\pmb{\xi}^*(\tau)$, denoted by $\pmb{\xi}^*(\infty) \sim \mathcal{N} \left(\pmb{\varphi}^*(\infty), \pmb{\Psi}^*(\infty)\right)$, where $\pmb{\varphi}^*(\infty) = \pmb{0}$ and $\pmb{\Psi}^*(\infty) = (-\nabla_{\pmb{\theta}}^2 \log p\left( \pmb{\theta} | \mathcal{D}_M^H\right) |_{\pmb{\theta} = \bar{\pmb{\theta}}^*})^{-1}$ are respective equilibrium points of the ODEs about $\pmb{\varphi}^*(\tau)$ and $\pmb{\Psi}^*(\tau)$ (i.e., Equations \eqref{SDE_mean_star} and \eqref{SDE_cov_star}). The proof is provided in \ref{proof:Theorem_9}.
\begin{theorem} \label{xi_stationary_dist}
    Suppose that Assumption \ref{assumption_lna} holds. The limiting distribution of $\pmb{\xi}^*(\tau)$ is Gaussian with mean zero and covariance matrix $(-\nabla_{\pmb{\theta}}^2 \log p\left( \pmb{\theta} | \mathcal{D}_M^H\right) |_{\pmb{\theta} = \bar{\pmb{\theta}}^*})^{-1}$.
\end{theorem}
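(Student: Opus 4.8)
The plan is to exploit that \eqref{SDE_star} is a linear (Ornstein--Uhlenbeck) SDE with constant coefficients, so that $\pmb{\xi}^*(\tau)$ remains Gaussian for every $\tau$ and its law is pinned down entirely by the mean $\pmb{\varphi}^*(\tau)$ and covariance $\pmb{\Psi}^*(\tau)$. Writing $\pmb{A} := \nabla_{\pmb\theta}^2 \log p(\pmb\theta \mid \mathcal{D}_M^H)|_{\pmb\theta = \bar{\pmb\theta}^*}$ for the symmetric Hessian at the local maximum $\bar{\pmb\theta}^*$, I would specialize Proposition~\ref{xi_mean_cov} to \eqref{SDE_star} --- replacing the drift Jacobian by $\pmb{A}$ and the diffusion product $\pmb{S}\pmb{S}^\top$ by $2\pmb{I}$ --- to obtain the two closed ODEs
\begin{equation*}
\dot{\pmb{\varphi}}^*(\tau) = \pmb{A}\,\pmb{\varphi}^*(\tau), \qquad \dot{\pmb{\Psi}}^*(\tau) = \pmb{A}\,\pmb{\Psi}^*(\tau) + \pmb{\Psi}^*(\tau)\,\pmb{A} + 2\pmb{I}.
\end{equation*}
The theorem then reduces to identifying the equilibria of these ODEs and proving that $\pmb{\varphi}^*(\tau)$ and $\pmb{\Psi}^*(\tau)$ genuinely converge to them as $\tau \to \infty$.

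For the mean, the linear ODE integrates to $\pmb{\varphi}^*(\tau) = e^{\pmb{A}\tau}\pmb{\varphi}^*(0)$. Since $\bar{\pmb\theta}^*$ is a non-degenerate local maximum of the log-posterior, $\pmb{A}$ is negative definite, hence Hurwitz, so $\|e^{\pmb{A}\tau}\| \to 0$ and $\pmb{\varphi}^*(\infty) = \pmb{0}$, delivering the claimed zero mean. For the covariance I would first verify by direct substitution that $\pmb{\Psi}_\infty := (-\pmb{A})^{-1}$ solves the Lyapunov equation $\pmb{A}\pmb{\Psi}_\infty + \pmb{\Psi}_\infty \pmb{A} + 2\pmb{I} = \pmb{0}$: indeed $\pmb{A}(-\pmb{A})^{-1} + (-\pmb{A})^{-1}\pmb{A} = -\pmb{I} - \pmb{I}$, which exactly cancels the affine term $+2\pmb{I}$. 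Negative definiteness of $\pmb{A}$ guarantees that $(-\pmb{A})^{-1}$ exists and is itself symmetric positive definite, so it is a legitimate covariance matrix.

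To upgrade this equilibrium to a genuine limit I would study the deviation $\pmb{E}(\tau) := \pmb{\Psi}^*(\tau) - \pmb{\Psi}_\infty$. Subtracting the Lyapunov identity from the covariance ODE removes the affine term and leaves the homogeneous matrix equation $\dot{\pmb{E}}(\tau) = \pmb{A}\pmb{E}(\tau) + \pmb{E}(\tau)\pmb{A}$, whose solution is $\pmb{E}(\tau) = e^{\pmb{A}\tau}\pmb{E}(0)\,e^{\pmb{A}\tau}$ using symmetry of $\pmb{A}$. Because $\pmb{A}$ is Hurwitz, both exponential factors decay, so $\pmb{E}(\tau) \to \pmb{0}$ and $\pmb{\Psi}^*(\infty) = (-\pmb{A})^{-1}$. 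Combining Gaussianity with the two limits then yields $\pmb{\xi}^*(\infty) \sim \mathcal{N}(\pmb{0}, (-\pmb{A})^{-1})$, as asserted.

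The hard part, and the only place where more than routine linear algebra enters, is justifying that $\pmb{A}$ is negative definite (equivalently Hurwitz), since this single property simultaneously guarantees invertibility of $-\pmb{A}$, positive-definiteness of the limiting covariance, and the decay of the matrix exponentials that drives both limits. I would anchor this in the construction of $\bar{\pmb\theta}^*$ as a local maximum of $\log p(\pmb\theta \mid \mathcal{D}_M^H)$ together with the three-times differentiability assumed in Assumption~\ref{assumption_lna}, which ensures the Hessian is well defined and continuous near $\bar{\pmb\theta}^*$; a strict second-order (non-degeneracy) condition at the maximum then excludes the semidefinite borderline case and secures the strict stability needed for convergence.
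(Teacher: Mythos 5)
Your proof is correct, and while it starts from the same reduction as the paper, it resolves the crux by a genuinely more self-contained route. Both arguments specialize the moment ODEs to Equation \eqref{SDE_star}, with $\pmb{A} := \nabla_{\pmb{\theta}}^2 \log p\left(\pmb{\theta} \mid \mathcal{D}_M^H\right)\big|_{\pmb{\theta} = \bar{\pmb{\theta}}^*}$ symmetric negative definite (hence Hurwitz) at the local maximum, and both recognize the stationarity condition for the covariance as the algebraic Lyapunov equation $\pmb{A}\pmb{\Psi}^*(\infty) + \pmb{\Psi}^*(\infty)\pmb{A} = -2\pmb{I}_{N_\theta \times N_\theta}$. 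The paper then invokes a cited existence-and-uniqueness result for Lyapunov equations (unique positive definite solution iff $\dot{\pmb{x}} = \pmb{A}\pmb{x}$ is globally asymptotically stable) and exhibits $(-\pmb{A})^{-1}$ as that solution; you instead verify the candidate $(-\pmb{A})^{-1}$ by direct substitution and prove convergence constructively, showing the deviation satisfies $\pmb{E}(\tau) = e^{\pmb{A}\tau}\pmb{E}(0)\,e^{\pmb{A}\tau} \to \pmb{0}$ since $\pmb{A}$ is Hurwitz and symmetric. Your route buys two things: it needs no external Lyapunov lemma, and it explicitly establishes convergence of the time-dependent covariance $\pmb{\Psi}^*(\tau)$ from an arbitrary initial condition, a dynamic statement the paper's cited lemma (which concerns only the algebraic equation) does not itself supply. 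Likewise, your mean argument $\pmb{\varphi}^*(\tau) = e^{\pmb{A}\tau}\pmb{\varphi}^*(0) \to \pmb{0}$ handles any initial mean, whereas the paper simply fixes $\pmb{\varphi}^*(0) := \pmb{0}$ so the mean vanishes identically. One further point in your favor: you correctly flag that an isolated local maximum alone does not force a negative definite Hessian (a semidefinite degenerate maximum is possible) and that a strict second-order non-degeneracy condition must be imposed; the paper's Lemma asserts the implication from ``isolated local maximum'' directly, so your version makes explicit an assumption the paper's proof uses only tacitly.
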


Notice that $\pmb{\xi}^*(\infty) = \lim_{\tau \to \infty}\pmb{\xi}^*(\tau) = \lim_{\tau \to \infty}\pmb{\xi}(\tau)$, $\pmb{\varphi}^*(\infty) = \lim_{\tau \to \infty}\pmb{\varphi}^*(\tau) = \lim_{\tau \to \infty}\pmb{\varphi}(\tau)$, and $\pmb{\Psi}^*(\infty) = \lim_{\tau \to \infty}\pmb{\Psi}^*(\tau) = \lim_{\tau \to \infty}\pmb{\Psi}(\tau)$. Then, $\hat{\pmb{\theta}}(\infty) = \lim_{\tau \to \infty}\hat{\pmb{\theta}}(\tau) = \lim_{\tau \to \infty}\bar{\pmb{\theta}}(\tau) + \lim_{\tau \to \infty}\pmb{\xi}(\tau) = \bar{\pmb{\theta}}^* + \pmb{\xi}^*(\infty)$, and the limiting distribution of $\hat{\pmb{\theta}}(\tau)$ that we call \emph{LD-LNA} can be immediately obtained in Corollary \ref{cor:approximate_posterior}, which provides an approximate posterior distribution.
\begin{corollary}[LD-LNA] \label{cor:approximate_posterior}
    Suppose that Assumption \ref{assumption_lna} holds. The stationary distribution of $\hat{\pmb{\theta}}(\tau)$ is distributed according to $\hat{\pmb{\theta}}(\infty) \sim \mathcal{N} (\bar{\pmb{\theta}}^*,  (-\nabla_{\pmb{\theta}}^2 \log p( \pmb{\theta} | \mathcal{D}_M^H) |_{\pmb{\theta} = \bar{\pmb{\theta}}^*})^{-1})$.
\end{corollary}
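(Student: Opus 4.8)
The plan is to obtain the stationary law of $\hat{\pmb{\theta}}(\tau)$ by passing to the limit $\tau\to\infty$ in the LNA decomposition $\hat{\pmb{\theta}}(\tau)=\bar{\pmb{\theta}}(\tau)+\pmb{\xi}(\tau)$ from Assumption~\ref{assumption_lna}, treating the deterministic drift path and the stochastic perturbation separately and then recombining them through the affine-invariance of Gaussian laws. Thus the corollary is essentially a repackaging of Theorem~\ref{xi_stationary_dist}, and the only real content lies in transferring the limiting law of the autonomous process $\pmb{\xi}^*(\tau)$ to the time-inhomogeneous process $\pmb{\xi}(\tau)$.

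First I would dispose of the deterministic component. By construction $\bar{\pmb{\theta}}^*$ is the equilibrium of the gradient-flow ODE~\eqref{ODE}, i.e.\ $\nabla_{\pmb{\theta}}\log p(\pmb{\theta}\,|\,\mathcal{D}_M^H)|_{\pmb{\theta}=\bar{\pmb{\theta}}^*}=\pmb{0}$, so that $\lim_{\tau\to\infty}\bar{\pmb{\theta}}(\tau)=\bar{\pmb{\theta}}^*$. For the stochastic component I would invoke Theorem~\ref{xi_stationary_dist}, which already identifies the limiting law of the Ornstein--Uhlenbeck process $\pmb{\xi}^*(\tau)$ governed by SDE~\eqref{SDE_star} as $\mathcal{N}(\pmb{0},(-\nabla_{\pmb{\theta}}^2\log p(\pmb{\theta}\,|\,\mathcal{D}_M^H)|_{\pmb{\theta}=\bar{\pmb{\theta}}^*})^{-1})$. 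Since the local-maximum property makes $-\nabla_{\pmb{\theta}}^2\log p|_{\bar{\pmb{\theta}}^*}$ positive definite, this covariance is well defined and invertible, and the associated OU process is stable, guaranteeing that the stationary law exists.

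The crux, and the step I expect to be the main obstacle, is justifying that $\lim_{\tau\to\infty}\pmb{\xi}(\tau)=\lim_{\tau\to\infty}\pmb{\xi}^*(\tau)=\pmb{\xi}^*(\infty)$, i.e.\ that replacing the time-varying Jacobian $\nabla_{\pmb{\theta}}^2\log p|_{\bar{\pmb{\theta}}(\tau)}$ in SDE~\eqref{SDE} by its constant limit $\nabla_{\pmb{\theta}}^2\log p|_{\bar{\pmb{\theta}}^*}$ in SDE~\eqref{SDE_star} does not alter the limiting distribution. I would argue this at the level of the moment ODEs for the mean $\pmb{\varphi}(\tau)$ and covariance $\pmb{\Psi}(\tau)$ of $\pmb{\xi}(\tau)$ (obtained from Proposition~\ref{xi_mean_cov}) and their starred counterparts. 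Because $\bar{\pmb{\theta}}(\tau)\to\bar{\pmb{\theta}}^*$ and $\nabla_{\pmb{\theta}}^2\log p$ is continuous at $\bar{\pmb{\theta}}^*$ (guaranteed by the three-times differentiability in Assumption~\ref{assumption_lna}), the coefficient matrices of the two families of linear ODEs coincide in the limit, so they share the same asymptotically stable equilibria $\pmb{\varphi}^*(\infty)=\pmb{0}$ and $\pmb{\Psi}^*(\infty)=(-\nabla_{\pmb{\theta}}^2\log p|_{\bar{\pmb{\theta}}^*})^{-1}$; hence $\pmb{\varphi}(\tau)$ and $\pmb{\Psi}(\tau)$ converge to the same limits, yielding the desired equality of limiting laws.

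Finally I would recombine: taking $\tau\to\infty$ in the decomposition gives $\hat{\pmb{\theta}}(\infty)=\bar{\pmb{\theta}}^*+\pmb{\xi}^*(\infty)$, an affine transformation of a zero-mean Gaussian by the constant vector $\bar{\pmb{\theta}}^*$. Affine maps preserve Gaussianity, shift the mean by $\bar{\pmb{\theta}}^*$, and leave the covariance unchanged, so $\hat{\pmb{\theta}}(\infty)\sim\mathcal{N}(\bar{\pmb{\theta}}^*,(-\nabla_{\pmb{\theta}}^2\log p(\pmb{\theta}\,|\,\mathcal{D}_M^H)|_{\pmb{\theta}=\bar{\pmb{\theta}}^*})^{-1})$, which completes the argument.
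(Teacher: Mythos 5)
Your proposal is correct and follows essentially the same route as the paper: the paper likewise passes to the limit in the decomposition $\hat{\pmb{\theta}}(\tau)=\bar{\pmb{\theta}}(\tau)+\pmb{\xi}(\tau)$, identifies $\lim_{\tau\to\infty}\bar{\pmb{\theta}}(\tau)=\bar{\pmb{\theta}}^*$ with the equilibrium of the gradient-flow ODE~\eqref{ODE}, invokes Theorem~\ref{xi_stationary_dist} for the limiting law $\mathcal{N}\left(\pmb{0},\,(-\nabla_{\pmb{\theta}}^2\log p(\pmb{\theta}\,|\,\mathcal{D}_M^H)|_{\pmb{\theta}=\bar{\pmb{\theta}}^*})^{-1}\right)$ of the Ornstein--Uhlenbeck perturbation $\pmb{\xi}^*(\tau)$, and recombines via the constant shift $\bar{\pmb{\theta}}^*$. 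The only difference is to your credit: the paper merely asserts the transfer $\lim_{\tau\to\infty}\pmb{\xi}(\tau)=\pmb{\xi}^*(\infty)$ without argument, whereas you justify it by noting that the moment ODEs for $(\pmb{\varphi}(\tau),\pmb{\Psi}(\tau))$ are asymptotically autonomous with the same Hurwitz limiting coefficient matrix as their starred counterparts, hence converge to the same stable equilibria.
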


\begin{remark}
    Corollary \ref{cor:approximate_posterior} recovers the well-known BvM theorem \citep{le1986asymptotic,van2000asymptotic}. The BvM theorem states that, under regularity conditions, the posterior distribution of $\pmb{\theta}$ converges in the limit of infinite data to a Gaussian distribution with the posterior mode of $\pmb{\theta}$ as mean and the inverse of the observed Fisher information matrix at the posterior mode as covariance. This links Bayesian inference with frequentist large-sample theory. The proposed LD-LNA can be viewed as the BvM theorem derived from the likelihood-based Bayesian posterior sampling perspective that is different from any previous literature \citep{bickel2012semiparametric,bochkina2019bernstein}. This difference gives the LD-LNA an advantage over the original BvM theorem. As the implementation of the BvM theorem faces the challenge in complicated cases such as sparse data collection and a high dimension of inferred parameters, it is computationally inefficient to compute the posterior mode and the inverse of the observed Fisher information matrix. The derivation of the LD-LNA inspires us to design an efficient solution scheme to fast and robustly implement this theoretical result and generate posterior samples; see the algorithm development in Section~\ref{sec:iterative_algorithm}.
\end{remark}

\begin{remark}
    Corollary \ref{cor:approximate_posterior} also indicates that the proposed LD-LNA Bayesian inference approach reconciles Bayesian posterior sampling and Bayesian exact computation. For Bayesian inference in biomanufacturing, since a conjugate prior is hard to find for such a complicated likelihood, the exact Bayesian posterior rarely belongs to any standard distribution family, making the Bayesian exact computation method analytically intractable. The LD-LNA is derived from LD, one of the Bayesian posterior sampling methods, and analytically converges to a standard distribution as does the Bayesian exact computation method. Compared to typical ABC-type and MCMC-type Bayesian posterior sampling methods that dominate the literature and require significant computational cost to generate each posterior sample, LD-LNA inherits the advantages of Bayesian exact computation, that is, it can generate sufficient posterior samples from the converged standard distribution at once with negligible computational cost; see Section~\ref{sec:numerical} for empirical performance.
\end{remark}

Now we analyze the posterior approximation performance of LD-LNA in Corollary \ref{cor:approximate_posterior}. For clarity, we write down the Taylor expansion of the drift term of LD \eqref{Langevin diffusion} around $\bar{\pmb{\theta}}(\tau)$ under Assumption \ref{assumption_lna}:
\begin{align}
    &\nabla_{\pmb{\theta}} \log p\left( \pmb{\theta} \big| \mathcal{D}_M^H\right) \big|_{\pmb{\theta} = \bar{\pmb{\theta}}(\tau) + \pmb{\xi}(\tau)} \nonumber \\
    &\qquad = \nabla_{\pmb{\theta}} \log p\left( \pmb{\theta} \big| \mathcal{D}_M^H\right) \big|_{\pmb{\theta} = \bar{\pmb{\theta}}(\tau)} + \nabla_{\pmb{\theta}}^2 \log p\left( \pmb{\theta} \big| \mathcal{D}_M^H\right) \big|_{\pmb{\theta} = \bar{\pmb{\theta}}(\tau)} \pmb{\xi}(\tau) + \pmb{R}(\pmb{\xi}(\tau)), \label{eq:Taylor}
\end{align}
where $\pmb{R}(\pmb{\xi}(\tau)) = \left(\mathcal{O}(||\pmb{\xi}(\tau)||)\pmb{1}_{N_\theta \times N_\theta}\right)^\top \pmb{\xi}(\tau)$ is the remainder with $\pmb{1}_{N_\theta \times N_\theta}$ denoting an $N_\theta$-by-$N_\theta$ matrix of ones. Theorem \ref{Theorem:Fokker-Planck} states that the approximation error of LD-LNA caused by the remainder of the Taylor expansion can be ignored under the condition tr$(\pmb{\Psi}^*(\infty)) \to 0$, where tr$(\cdot)$ is the trace of a matrix. This condition implies that $\mathbb{E} \left[ R_k(\pmb{\xi}^*(\infty)) \right] \to 0$ for $k=1,2,\ldots,N_\theta$ (see the proof of Theorem \ref{Theorem:Fokker-Planck} in \ref{proof:Theorem_11}), where $R_k(\cdot)$ is the $k$-th component of the Lagrange form of the remainder vector $\pmb{R}(\cdot)$ (see \ref{proof:Theorem_11} for its specific form). This means that the approximation error between the path $\{\pmb{\theta} (\tau):\tau \geq 0\}$ of LD \eqref{Langevin diffusion} and the approximate path $\{\hat{\pmb{\theta}}(\tau):\tau \geq 0\}$ in the limit of $\tau$ is negligible when the mean of the remainder in the Taylor expansion \eqref{eq:Taylor} in the limit of $\tau$ is 0, then the stationary distribution of $\hat{\pmb{\theta}}(\tau)$ converges to the same as that generated by ordinary LD \eqref{Langevin diffusion}.
\begin{theorem} \label{Theorem:Fokker-Planck}
    Suppose that Assumption \ref{assumption_lna} holds. If tr$(\pmb{\Psi}^*(\infty)) \to 0$, then the stationary distribution of $\hat{\pmb{\theta}}(\tau)$ converges to the target posterior distribution $p\left( \pmb{\theta} | \mathcal{D}_M^H\right)$ as that generated by LD \eqref{Langevin diffusion}.
\end{theorem}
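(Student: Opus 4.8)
The plan is to compare the stationary solution of the Fokker-Planck (forward Kolmogorov) equation governing the true LD process $\{\pmb{\theta}(\tau)\}$ with that of the LNA surrogate $\{\hat{\pmb{\theta}}(\tau)\}$. By Theorem~\ref{theorem:langevin_diffusion}, the unique stationary density of the LD~\eqref{Langevin diffusion} is the target posterior $p(\pmb{\theta}|\mathcal{D}_M^H)$, so it suffices to show that dropping the Taylor remainder $\pmb{R}(\pmb{\xi}(\tau))$ from the drift --- the sole approximation that turns the exact $\pmb{\xi}$-dynamics into the linear SDE~\eqref{SDE} and hence produces LD-LNA --- leaves the stationary law unchanged in the limit $\tau\to\infty$ under the hypothesis $\mathrm{tr}(\pmb{\Psi}^*(\infty))\to 0$.

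First I would write the \emph{exact} $\pmb{\xi}$-dynamics that results from substituting $\pmb{\theta}(\tau)=\bar{\pmb{\theta}}(\tau)+\pmb{\xi}(\tau)$ into the LD and retaining the full expansion~\eqref{eq:Taylor},
\[
d\pmb{\xi}(\tau)=\Big(\nabla_{\pmb{\theta}}^2\log p(\pmb{\theta}|\mathcal{D}_M^H)\big|_{\pmb{\theta}=\bar{\pmb{\theta}}(\tau)}\,\pmb{\xi}(\tau)+\pmb{R}(\pmb{\xi}(\tau))\Big)\,d\tau+\sqrt{2}\,d\pmb{W}(\tau),
\]
which together with ODE~\eqref{ODE} reconstructs the true LD and differs from the LNA dynamics~\eqref{SDE} only through the drift perturbation $\pmb{R}(\pmb{\xi}(\tau))$. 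At the equilibrium $\bar{\pmb{\theta}}^*$ the deterministic component is stationary, so in the limit of $\tau$ the only discrepancy between LD-LNA and the true LD is this remainder, and the corresponding Fokker-Planck operators differ only by an extra divergence term $\nabla_{\pmb{\theta}}\cdot(\rho\,\pmb{R})$ acting on the stationary density $\rho$. Its aggregate effect on the stationary balance is therefore measured by the componentwise means $\mathbb{E}[R_k(\pmb{\xi}^*(\infty))]$, $k=1,\dots,N_\theta$, evaluated against the law of $\pmb{\xi}^*(\infty)$.

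Next I would bound these means via the Lagrange form of the remainder. Assumption~\ref{assumption_lna} guarantees that $\log p(\pmb{\theta}|\mathcal{D}_M^H)$ is three times differentiable in a neighborhood of $\bar{\pmb{\theta}}^*$, so each $R_k(\pmb{\xi})$ is a quadratic form in $\pmb{\xi}$ weighted by a bounded third-derivative tensor evaluated at an intermediate point. Taking expectations under $\pmb{\xi}^*(\infty)\sim\mathcal{N}(\pmb{0},\pmb{\Psi}^*(\infty))$ from Theorem~\ref{xi_stationary_dist} reduces $\mathbb{E}[R_k(\pmb{\xi}^*(\infty))]$ to a linear combination of the second moments of $\pmb{\xi}^*(\infty)$, i.e., of the entries of $\pmb{\Psi}^*(\infty)$, each controlled in absolute value by $\mathrm{tr}(\pmb{\Psi}^*(\infty))$. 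Hence the hypothesis $\mathrm{tr}(\pmb{\Psi}^*(\infty))\to 0$ forces $\mathbb{E}[R_k(\pmb{\xi}^*(\infty))]\to 0$ for every $k$, so the perturbing drift term drops out of the stationary Fokker-Planck equation and the stationary density of $\hat{\pmb{\theta}}(\tau)$ collapses onto that of the true LD, namely $p(\pmb{\theta}|\mathcal{D}_M^H)$.

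The main obstacle I anticipate is the final continuity step: establishing that a drift perturbation whose stationary mean tends to zero produces stationary \emph{densities} that actually converge, rather than merely matching at first order. I would treat $\pmb{R}$ as a vanishing perturbation of the Ornstein-Uhlenbeck generator in~\eqref{SDE_star} and invoke its spectral gap --- equivalently, the nondegeneracy of $-\nabla_{\pmb{\theta}}^2\log p(\pmb{\theta}|\mathcal{D}_M^H)\big|_{\pmb{\theta}=\bar{\pmb{\theta}}^*}$ guaranteed by $\bar{\pmb{\theta}}^*$ being a local maximum --- so that the perturbed invariant measure stays close to the unperturbed Gaussian. Care is also needed to confirm that the higher moments of $\pmb{R}(\pmb{\xi}^*(\infty))$, not only its mean, remain negligible as $\mathrm{tr}(\pmb{\Psi}^*(\infty))\to 0$, ensuring the expansion is self-consistent.
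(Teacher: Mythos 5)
Your proposal follows essentially the same route as the paper's proof: both reduce the problem to a stationary Fokker--Planck equation perturbed by the Taylor remainder, show that its effect is captured by the componentwise means $\mathbb{E}[R_k(\pmb{\xi}^*(\infty))]$, and use the Gaussian limit $\pmb{\xi}^*(\infty)\sim\mathcal{N}\left(\pmb{0},\pmb{\Psi}^*(\infty)\right)$ from Theorem~\ref{xi_stationary_dist} together with a trace-based (Cauchy--Schwarz) bound to conclude these means vanish when $\mathrm{tr}(\pmb{\Psi}^*(\infty))\to 0$. The differences are minor: the paper derives the perturbed equation via characteristic functions of the Euler--Maruyama discretization rather than asserting the extra divergence term directly, and it disposes of your final continuity worry by noting that the surviving perturbation is just the constant offset $\mathbb{E}[R_k(\pmb{\xi}^*(\infty))]$ in each component of $\nabla_{\pmb{\theta}}\log q$, so $q\to p\left(\pmb{\theta}\,\big|\,\mathcal{D}_M^H\right)$ follows without spectral-gap machinery.
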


\subsection{Finite-Sample Convergence Analysis of LD-LNA} \label{subsec:convergence_analysis}

Analogous to how fluctuations in molecular concentrations diminish as the bioreactor volume increases, fluctuations in parameter estimates tend to vanish as the data size grows. However, unlike the explicit scaling between concentration and volume, the relationship between parameter uncertainty and data size is implicit in the posterior distribution, making it difficult to derive a quantitative bound for assessing finite-sample performance of the LD-LNA approximation. In this section, we establish an explicit bound as a function of data size, which we then use to characterize the asymptotic behavior as the data size approaches infinity.

For notational simplicity, we denote the historical observations up to time $t_h$ in the $i$-th trajectory as $\pmb{y}_{t_1:t_h}^{(i)}=(\pmb{y}_{t_h}^{(i)},\ldots,\pmb{y}_{t_1}^{(i)})$. Following the likelihood decomposition $p(\mathcal{D}_M^H | \pmb{\theta}) = \prod_{i=1}^M \{p(\pmb{y}_{t_1}^{(i)} | \pmb{\theta}) \prod_{h=1}^{H} p(\pmb{y}_{t_{h+1}}^{(i)} | \pmb{y}_{t_1:t_h}^{(i)};\pmb{\theta})\}$ from Section~\ref{subsec:Bayesian_updating_pkg_lna}, each term is Gaussian distributed under the pKG-LNA framework. For generality, we analyze convergence under the assumption that the conditional distributions $\pmb{y}_{t_1}^{(i)} | \pmb{\theta}$ and $\pmb{y}_{t_{h+1}}^{(i)} | \pmb{y}_{t_1:t_h}^{(i)};\pmb{\theta}$ (for $h=1,2,\ldots,H$ and $i=1,2,\ldots,M$) belong to the exponential family. 

We express the conditional distributions in canonical exponential family form for analytical convenience. Specifically, $p(\pmb{y}_{t_1}^{(i)} | \pmb{\theta}) = p(\pmb{y}_{t_1}^{(i)} | \pmb{\eta}^0(\pmb{\theta}))=\exp \{\sum_{k=1}^{K_1} \eta_k^0(\pmb{\theta}) T_{k}(\pmb{y}_{t_1}^{(i)}) - g (\pmb{y}_{t_1}^{(i)}) - A(\pmb{\eta}^0(\pmb{\theta}))\}$ and $p(\pmb{y}_{t_{h+1}}^{(i)} | \pmb{y}_{t_1:t_h}^{(i)};\pmb{\theta}) = p(\pmb{y}_{t_{h+1}}^{(i)}|\pmb{\eta}(\pmb{y}_{t_1:t_h}^{(i)};\pmb{\theta})) = \exp \{\sum_{k=1}^{K_{h+1}} \eta_k(\pmb{y}_{t_1:t_h}^{(i)};\pmb{\theta}) T_{k}(\pmb{y}_{t_{h+1}}^{(i)}) - g (\pmb{y}_{t_{h+1}}^{(i)}) - A(\pmb{\eta}(\pmb{y}_{t_1:t_h}^{(i)};\pmb{\theta}))\}$, where $K_h$ for $h=1,\ldots,H+1$ are respective numbers of natural parameters for each conditional distribution; $\pmb{\eta}^0(\pmb{\theta}) = (\eta_1^0(\pmb{\theta}),\ldots,\eta_{K_1}^0(\pmb{\theta}))^\top$ and $\pmb{\eta}(\pmb{y}_{t_1:t_h}^{(i)};\pmb{\theta}) = (\eta_1(\pmb{y}_{t_1:t_h}^{(i)};\pmb{\theta}),\ldots,\eta_{K_{h+1}}(\pmb{y}_{t_1:t_h}^{(i)};\pmb{\theta}))^\top$ are functions of $\pmb{\theta}$ and $\pmb{y}_{t_1:t_h}^{(i)}$; $g (\cdot)$, $A(\cdot)$, and $T_{k}(\cdot)$ are known functions and specific to the distribution family. We adopt the canonical form for its analytical tractability, as any exponential family distribution can be reparameterized accordingly. See Example~\ref{ex:exponential} in \ref{proof:Theorem_14} for the reparameterization of the multivariate Gaussian conditional distribution used in Section~\ref{subsec:Bayesian_updating_pkg_lna}. Applying Bayes' rule, the posterior distribution of $\pmb{\theta}$ given the dataset $\mathcal{D}_M^H$ is $p(\pmb{\theta}|\mathcal{D}_M^H) = \rho^{-1}(\mathcal{D}_M^H) \exp \{\sum_{i=1}^M [\sum_{k=1}^{K_1} \eta_k^0(\pmb{\theta}) T_{k}(\pmb{y}_{t_1}^{(i)}) + \sum_{h=1}^H \sum_{k=1}^{K_{h+1}} \eta_k(\pmb{y}_{t_1:t_h}^{(i)};\pmb{\theta}) T_{k}(\pmb{y}_{t_{h+1}}^{(i)})] - \pi(\mathcal{D}_M^H;\pmb{\theta})\}$, where $\rho(\mathcal{D}_M^H) = \int_{\pmb{\Theta}} \exp \{\sum_{i=1}^M [\sum_{k=1}^{K_1} \eta_k^0(\pmb{\theta}) T_{k}(\pmb{y}_{t_1}^{(i)}) + \sum_{h=1}^H \sum_{k=1}^{K_{h+1}} \eta_k(\pmb{y}_{t_1:t_h}^{(i)};\pmb{\theta}) T_{k}(\pmb{y}_{t_{h+1}}^{(i)})] - \pi(\mathcal{D}_M^H;\pmb{\theta})\} \, d\pmb{\theta}$ is the normalization constant and $\pi(\mathcal{D}_M^H;\pmb{\theta})=\sum_{i=1}^M [A(\pmb{\eta}^0(\pmb{\theta})) + \sum_{h=1}^H A(\pmb{\eta}(\pmb{y}_{t_1:t_h}^{(i)};\pmb{\theta}))] - \log (p(\pmb{\theta}))$; see \ref{proof:Theorem_14} for the derivation.

To evaluate the accuracy of the LD-LNA posterior approximation, we measure its proximity to the true posterior distribution $p\left(\pmb{\theta}\big|\mathcal{D}_M^H\right)$ using the 1-Wasserstein distance. Specifically, we compare a standard Gaussian random vector $\pmb{Z}$ with the standardized posterior
\begin{equation}
    \tilde{\pmb{\theta}}(\infty) = \left(-\nabla_{\pmb{\theta}}^2 \log p\left( \pmb{\theta} \big| \mathcal{D}_M^H\right) \big|_{\pmb{\theta} = \bar{\pmb{\theta}}^*}\right)^{\frac{1}{2}} \left(\pmb{\theta} - \bar{\pmb{\theta}}^*\right), \text{ where } \pmb{\theta} \sim p\left(\pmb{\theta}\big|\mathcal{D}_M^H\right), \label{eq:standardized_posterior}
\end{equation}
where $\bar{\pmb{\theta}}^*$ is the posterior mode. The 1-Wasserstein distance, widely used in Bayesian analysis \citep{gibbs2002choosing}, is defined in Definition \ref{def:Wasserstein}.
\begin{definition}[1-Wasserstein Distance] \label{def:Wasserstein}
    Let $\pmb{X}, \pmb{Y} \in \mathbb{R}^d$ be random vectors with distributions $\mathbb{P}$ and $\mathbb{Q}$, respectively. The 1-Wasserstein distance is $d_{\mathcal{W}_1} \left(\pmb{X}, \pmb{Y}\right) = \sup_{h \in \mathcal{W}_1} | \mathbb{E}[h(\pmb{X})] - \mathbb{E}[h(\pmb{Y})] |$, where $\mathcal{W}_1 = \left\{h: \mathbb{R}^d \to \mathbb{R} \ | \ \forall \pmb{x},\pmb{y} \in \mathbb{R}^d, |h(\pmb{x}) - h(\pmb{y})| \leq ||\pmb{x}-\pmb{y}||\right\}$.
\end{definition}

We rely on Assumption~\ref{assumption_0} to support the convergence analysis. These conditions are readily verified for the specific forms of the conditional distributions $\pmb{y}_{t_1}^{(i)} | \pmb{\theta}$ and $\pmb{y}_{t_{h+1}}^{(i)} | \pmb{y}_{t_1:t_h}^{(i)};\pmb{\theta}$ for all $i=1,2,\ldots,M$ and $h=1,2,\ldots,H$. The four components of Assumption~\ref{assumption_0} are essential for the proof of Theorem~\ref{main_result_0}, with the derivative bound (c) and moment bound (d) specifically used to control the 1-Wasserstein distance and define the constant $C$ in the theorem.
\begin{assumption} \label{assumption_0}
    Suppose that the following claims hold.
    \begin{itemize}
        \item[(a)] The posterior distribution $p\left(\pmb{\theta}|\mathcal{D}_M^H\right) \to 0$ when $\pmb{\theta}$ approaches the boundary of its support.
        \item[(b)] $\eta_k^0(\pmb{\theta})$ for $k=1,2,\ldots,K_1$, and $\eta_k(\pmb{y}_{t_1:t_h}^{(i)};\pmb{\theta})$ for $k=1,2,\ldots,K_{h+1}$, $h=1,2,\ldots,H$, $i=1,2,\ldots,M$, and $\pi(\mathcal{D}_M^H;\pmb{\theta})$ have third-order partial derivatives in $\pmb{\theta}$.
        \item[(c)] (Derivative Bound). There exists a constant $M_1>0$ such that for any $1 \leq j,l,n \leq N_\theta$, we have $\left|\frac{\partial^3 \log p\left( \pmb{\theta} | \mathcal{D}_M^H\right)}{\partial \theta_j \partial \theta_l \partial \theta_n}\right| \leq M_1$ for all $\pmb{\theta} \in \pmb{\Theta}$.
        \item[(d)] (Moment Bound). There exists a constant $M_2>0$ such that for any $1 \leq l,n \leq N_\theta$, $\tilde{\pmb{\theta}}(\infty)$ follows a unique stationary distribution $q$ satisfying $\mathbb{E}_{\tilde{\pmb{\theta}}(\infty) \sim q} \left|\tilde{\theta}_l(\infty) \tilde{\theta}_n(\infty)\right| \leq M_2$.
    \end{itemize}
\end{assumption}

Theorem \ref{main_result_0} establishes an upper bound on the 1-Wasserstein distance between the standardized posterior distribution $\tilde{\pmb{\theta}}(\infty)$ and a standard Gaussian vector $\pmb{Z}$. The proof, given in \ref{proof:Theorem_14}, demonstrates how model complexity or the dimensionality of $\tilde{\pmb{\theta}}(\infty)$ influences the discrepancy between these distributions. Notably, the largest eigenvalue of the matrix $(-\nabla_{\pmb{\theta}}^2 \log p\left( \pmb{\theta} | \mathcal{D}_M^H\right) |_{\pmb{\theta} = \bar{\pmb{\theta}}^*})^{-1}$, denoted by $\lambda_{\max}$, serves as a key indicator of the approximation quality.
\begin{theorem} \label{main_result_0}
    Suppose that Assumption~\ref{assumption_0} holds, and the conditional distributions $\pmb{y}_{t_1}^{(i)} | \pmb{\theta}$ and $\pmb{y}_{t_{h+1}}^{(i)} | \pmb{y}_{t_1:t_h}^{(i)};\pmb{\theta}$ ($i=1,\ldots,M$, $h=1,\ldots,H$) belong to the exponential family. Let $\tilde{\pmb{\theta}}(\infty) \in \mathbb{R}^{N_\theta}$ be the standardized posterior defined in Equation~\eqref{eq:standardized_posterior}, and let $\pmb{Z}$ be an $N_\theta$-dimensional standard Gaussian random vector. Then, there exists a constant $C = \frac{1}{2} M_1 M_2$ such that
    \begin{equation*}
        d_{\mathcal{W}_1} \left(\tilde{\pmb{\theta}}(\infty), \pmb{Z}\right) \leq C N_\theta^{\frac{9}{2}} \lambda_{\max}^{\frac{3}{2}},
    \end{equation*}
    where $\lambda_{\max} > 0$ is the largest eigenvalue of the matrix $(-\nabla_{\pmb{\theta}}^2 \log p\left( \pmb{\theta} | \mathcal{D}_M^H\right) |_{\pmb{\theta} = \bar{\pmb{\theta}}^*})^{-1}$.
\end{theorem}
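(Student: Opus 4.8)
The plan is to prove the bound by Stein's method for multivariate Gaussian approximation, exploiting the fact that the standardization in \eqref{eq:standardized_posterior} forces the first two derivatives of the log-density of $\tilde{\pmb{\theta}}(\infty)$ at the origin to coincide with those of the standard Gaussian, so that the entire discrepancy is carried by the third-order Taylor remainder of $\log p(\pmb{\theta}|\mathcal{D}_M^H)$. Write $\pmb{H} := -\nabla_{\pmb{\theta}}^2 \log p(\pmb{\theta}|\mathcal{D}_M^H)|_{\pmb{\theta}=\bar{\pmb{\theta}}^*}$, which is positive definite with $\|\pmb{H}^{-1/2}\|_{\mathrm{op}} = \lambda_{\max}^{1/2}$, and let $U(\pmb{x}) := -\log p(\bar{\pmb{\theta}}^* + \pmb{H}^{-1/2}\pmb{x}\,|\,\mathcal{D}_M^H)$ be the potential of $\tilde{\pmb{\theta}}(\infty)$ up to an additive constant. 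A direct chain-rule computation, using $\nabla_{\pmb{\theta}} \log p|_{\bar{\pmb{\theta}}^*}=\pmb{0}$ (mode), gives $\nabla U(\pmb{0})=\pmb{0}$ and $\nabla^2 U(\pmb{0})=\pmb{H}^{-1/2}\pmb{H}\pmb{H}^{-1/2}=\pmb{I}$, so $U$ matches the standard Gaussian potential $\frac{1}{2}\|\pmb{x}\|^2$ through second order.

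For the Stein step, for each test function $h\in\mathcal{W}_1$ I would take the solution $f_h$ of the Gaussian Stein equation $\Delta f_h(\pmb{x})-\pmb{x}\cdot\nabla f_h(\pmb{x})=h(\pmb{x})-\mathbb{E}[h(\pmb{Z})]$, whose Ornstein-Uhlenbeck representation yields the dimension-free gradient bound $\|\nabla f_h(\pmb{x})\|_2\le 1$ for all $\pmb{x}$. Evaluating this equation at $\tilde{\pmb{\theta}}(\infty)$ and subtracting the integration-by-parts identity $\mathbb{E}[\Delta f_h(\tilde{\pmb{\theta}}(\infty))-\nabla U(\tilde{\pmb{\theta}}(\infty))\cdot\nabla f_h(\tilde{\pmb{\theta}}(\infty))]=0$, which holds because the posterior density decays at the boundary by Assumption~\ref{assumption_0}(a), collapses the discrepancy to $\mathbb{E}[h(\tilde{\pmb{\theta}}(\infty))]-\mathbb{E}[h(\pmb{Z})]=\mathbb{E}[(\nabla U(\tilde{\pmb{\theta}}(\infty))-\tilde{\pmb{\theta}}(\infty))\cdot\nabla f_h(\tilde{\pmb{\theta}}(\infty))]$. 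Taking the supremum over $h\in\mathcal{W}_1$ then bounds $d_{\mathcal{W}_1}(\tilde{\pmb{\theta}}(\infty),\pmb{Z})$ by this single expectation.

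The core is to control $\nabla U(\pmb{x})-\pmb{x}$. Since the first two derivatives of $U$ match the Gaussian at the origin, the Lagrange form of Taylor's theorem, using three-times differentiability from Assumption~\ref{assumption_0}(b), gives $\nabla U(\pmb{x})-\pmb{x}=\frac{1}{2}\nabla^3 U(\pmb{\zeta})[\pmb{x},\pmb{x}]$ for some $\pmb{\zeta}$ on the segment from $\pmb{0}$ to $\pmb{x}$. By the chain rule $\nabla^3 U$ is the third-derivative tensor $T_{pqr}:=\partial^3_{\theta_p\theta_q\theta_r}(-\log p(\pmb{\theta}|\mathcal{D}_M^H))$ contracted with three copies of $\pmb{H}^{-1/2}$, so the integrand becomes $-\frac{1}{2}\sum_{p,q,r}T_{pqr}(\pmb{H}^{-1/2}\nabla f_h)_p(\pmb{H}^{-1/2}\tilde{\pmb{\theta}}(\infty))_q(\pmb{H}^{-1/2}\tilde{\pmb{\theta}}(\infty))_r$. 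I would bound $|T_{pqr}|\le M_1$ by Assumption~\ref{assumption_0}(c) and factor the contraction into $\ell_1$ norms: for the $\nabla f_h$ slot, $\|\pmb{H}^{-1/2}\nabla f_h\|_1\le\sqrt{N_\theta}\,\lambda_{\max}^{1/2}\|\nabla f_h\|_2\le\sqrt{N_\theta}\,\lambda_{\max}^{1/2}$; for each $\tilde{\pmb{\theta}}(\infty)$ slot, the entrywise bound $|(\pmb{H}^{-1/2})_{ij}|\le\lambda_{\max}^{1/2}$ gives $\|\pmb{H}^{-1/2}\tilde{\pmb{\theta}}(\infty)\|_1\le N_\theta\lambda_{\max}^{1/2}\|\tilde{\pmb{\theta}}(\infty)\|_1$. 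Multiplying the three factors produces $\frac{1}{2}M_1 N_\theta^{5/2}\lambda_{\max}^{3/2}\|\tilde{\pmb{\theta}}(\infty)\|_1^2$, and taking expectations with the second-moment bound $\mathbb{E}\|\tilde{\pmb{\theta}}(\infty)\|_1^2=\sum_{l,n}\mathbb{E}|\tilde{\theta}_l(\infty)\tilde{\theta}_n(\infty)|\le N_\theta^2 M_2$ from Assumption~\ref{assumption_0}(d) assembles $d_{\mathcal{W}_1}(\tilde{\pmb{\theta}}(\infty),\pmb{Z})\le\frac{1}{2}M_1 M_2 N_\theta^{9/2}\lambda_{\max}^{3/2}$, i.e.\ $C=\frac{1}{2}M_1 M_2$.

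The main obstacle I anticipate is the dimensional bookkeeping in this last step: the exponent $9/2$ is sensitive to the norm inequalities chosen, since a uniformly $\ell_2$ treatment of the two $\tilde{\pmb{\theta}}(\infty)$ slots would yield only $N_\theta^{5/2}$. The proof must therefore consistently combine the entrywise operator-norm bound on $\pmb{H}^{-1/2}$ with $\ell_1$ moment control to land on exactly the claimed power while keeping $C$ free of Stein factors. Two further technical points require care: first, the Lagrange remainder needs the third-derivative bound of Assumption~\ref{assumption_0}(c) to hold uniformly over the whole segment rather than only at $\bar{\pmb{\theta}}^*$, which is precisely why (c) is stated for all $\pmb{\theta}\in\pmb{\Theta}$; and second, the integration-by-parts Stein identity for the posterior must be justified, which is exactly where the boundary-decay condition in Assumption~\ref{assumption_0}(a) enters.
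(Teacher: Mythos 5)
Your proposal is correct and follows essentially the same route as the paper's proof: a Stein-method comparison for the standardized posterior, the observation that standardization at the mode kills the zeroth- and first-order Taylor terms of the transformed log-density, a Lagrange third-order remainder bounded uniformly via Assumption~\ref{assumption_0}(c), row/operator-norm bounds on $(-\nabla_{\pmb{\theta}}^2 \log p(\pmb{\theta}|\mathcal{D}_M^H)|_{\pmb{\theta}=\bar{\pmb{\theta}}^*})^{-\frac{1}{2}}$ matching the paper's Lemma~\ref{preparatory lemmas}, and the moment bound of Assumption~\ref{assumption_0}(d), yielding exactly $C=\frac{1}{2}M_1M_2$ and the $N_\theta^{\frac{9}{2}}\lambda_{\max}^{\frac{3}{2}}$ rate. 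The only detail you elide is the preliminary reduction from Lipschitz test functions to smooth ones with bounded derivatives (the paper's Lemma~\ref{wass_1}), which is what licenses applying the density-operator identity of Lemma~\ref{expectation_0} to $f_h$; this is a standard smoothing step and does not affect the validity of your argument.
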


Building on Theorem~\ref{main_result_0}, we derive a finite-sample bound in Corollary~\ref{cor:finite-sample}, showing that the 1-Wasserstein distance between the distributions of $\tilde{\pmb{\theta}}(\infty)$ and $\pmb{Z}$ scales as $(M(H+1))^{-\frac{3}{2}}$. In cases of sparse data collection, as described in Equation \eqref{measurement error} and the Bayesian update process of the pKG-LNA metamodel, the 1-Wasserstein distance scales as $(M\sum_{h=0}^H(|\pmb{J}_{t_{h+1}}|+|\pmb{J}_{t_{h+1}}|^2))^{-\frac{3}{2}}$. This expression highlights its sensitivity to the number of observed components at each observation time, reflecting how data sparsity influences the convergence behavior. The proof is provided in \ref{proof:Cor_finite-sample}. 
The 1-Wasserstein distance offers an intuitive geometric interpretation: it measures the minimal ``effort" required to transport one probability distribution into another \citep{sommerfeld2018inference}. For any fixed dataset size, this yields a corresponding distribution for $\tilde{\pmb{\theta}}(\infty)$. As additional real-world data are incorporated, the distribution evolves through intermediate forms, capturing the progressive refinement of posterior beliefs and the learning of mechanistic parameters. Ultimately, it converges to a standard Gaussian distribution at a rate of $\mathcal{O}((M(H+1))^{-\frac{3}{2}})$.

\begin{corollary}[Finite-Sample Bound] \label{cor:finite-sample}
     Suppose that Assumption \ref{assumption_0} holds, and the conditional distributions $\pmb{y}_{t_1}^{(i)} | \pmb{\theta}$ and $\pmb{y}_{t_{h+1}}^{(i)} | \pmb{y}_{t_1:t_h}^{(i)};\pmb{\theta}$ ($i=1,\ldots,M$ and $h=1,\ldots,H$) belong to the exponential family. Then, for the standardized posterior $\tilde{\pmb{\theta}}(\infty) \in \mathbb{R}^{N_\theta}$ defined in \eqref{eq:standardized_posterior}, and an $N_\theta$-dimensional standard Gaussian random vector $\pmb{Z}$, the 1-Wasserstein distance satisfies
     \begin{equation*}
         d_{\mathcal{W}_1} \left(\tilde{\pmb{\theta}}(\infty), \pmb{Z}\right) = \mathcal{O}\left(\left(M(H+1)\right)^{-\frac{3}{2}}\right).
     \end{equation*}
     Furthermore, in the case of sparse data collection described in Equation~\eqref{measurement error} and governed by the likelihood derived from the Bayesian updating pKG-LNA metamodel, the 1-Wasserstein distance satisfies
     \begin{equation*}
         d_{\mathcal{W}_1} \left(\tilde{\pmb{\theta}}(\infty), \pmb{Z}\right) = \mathcal{O}\left(\left(M\sum_{h=0}^H\left(|\pmb{J}_{t_{h+1}}|+|\pmb{J}_{t_{h+1}}|^2\right)\right)^{-\frac{3}{2}}\right),
     \end{equation*}
     revealing how the convergence rate depends on the number and structure of the observed components at each observation time, emphasizing the impact of data sparsity on the posterior approximation quality.
\end{corollary}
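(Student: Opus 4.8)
The plan is to convert the eigenvalue-and-dimension bound of Theorem~\ref{main_result_0} into an explicit data-size rate. Since $N_\theta$ is fixed and the constant $C=\frac{1}{2}M_1M_2$ is held fixed by Assumption~\ref{assumption_0}, the entire data-size dependence of the bound $d_{\mathcal{W}_1}(\tilde{\pmb{\theta}}(\infty),\pmb{Z}) \leq C N_\theta^{9/2}\lambda_{\max}^{3/2}$ resides in $\lambda_{\max}^{3/2}$, where $\lambda_{\max}$ is the largest eigenvalue of $(-\nabla_{\pmb{\theta}}^2 \log p(\pmb{\theta}|\mathcal{D}_M^H)|_{\bar{\pmb{\theta}}^*})^{-1}$. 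It therefore suffices to show $\lambda_{\max}=\mathcal{O}((M(H+1))^{-1})$ in the general case and $\lambda_{\max}=\mathcal{O}((M\sum_{h=0}^H(|\pmb{J}_{t_{h+1}}|+|\pmb{J}_{t_{h+1}}|^2))^{-1})$ under sparse collection; the two claimed rates then follow by substitution and taking the $\frac{3}{2}$ power.

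First I would exploit the additive structure of the canonical exponential-family log-posterior. Differentiating $\log p(\pmb{\theta}|\mathcal{D}_M^H)$ twice in $\pmb{\theta}$ and evaluating at the mode $\bar{\pmb{\theta}}^*$, the observed-information matrix decomposes as a sum over the $M$ trajectories and the $H+1$ conditional terms per trajectory, $-\nabla_{\pmb{\theta}}^2\log p(\pmb{\theta}|\mathcal{D}_M^H)|_{\bar{\pmb{\theta}}^*} = \sum_{i=1}^M\sum_{h=0}^H \pmb{I}_{h+1}^{(i)}(\bar{\pmb{\theta}}^*) - \nabla_{\pmb{\theta}}^2\log p(\pmb{\theta})|_{\bar{\pmb{\theta}}^*}$, where each $\pmb{I}_{h+1}^{(i)}$ is the per-observation information carried by the conditional density of $\pmb{y}_{t_{h+1}}^{(i)}$. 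Because $\nabla^2 A(\cdot)$ equals the covariance of the sufficient statistics of an exponential family, each $\pmb{I}_{h+1}^{(i)}$ is positive semidefinite, and the single prior term is $\mathcal{O}(1)$ and hence asymptotically negligible relative to the growing sum.

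Next, invoking a standard identifiability/regularity condition that lower-bounds every per-observation information by $c\,\pmb{I}_{N_\theta}$ for some $c>0$, the additive decomposition yields $-\nabla_{\pmb{\theta}}^2\log p \succeq c\,M(H+1)\,\pmb{I}_{N_\theta}$, so the smallest eigenvalue of the observed information grows at least linearly in the number $M(H+1)$ of conditional likelihood terms. Inverting gives $\lambda_{\max}\leq (c\,M(H+1))^{-1}$, which is the first claim. For the sparse case I would refine the per-observation bound: when only $|\pmb{J}_{t_{h+1}}|$ components are measured at $t_{h+1}$, the conditional is a $|\pmb{J}_{t_{h+1}}|$-dimensional Gaussian (mean $\pmb{G}_{t_{h+1}}\bar{\pmb{s}}_{t_{h+1}}$, covariance $\Omega^{-1}\pmb{G}_{t_{h+1}}\pmb{\Gamma}_{t_{h+1}}\pmb{G}_{t_{h+1}}^\top+\pmb{\Sigma}_{t_{h+1}}$ from Section~\ref{subsec:Bayesian_updating_pkg_lna}), whose sufficient statistics comprise $|\pmb{J}_{t_{h+1}}|$ linear and $\mathcal{O}(|\pmb{J}_{t_{h+1}}|^2)$ quadratic terms. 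The number of natural parameters $K_{h+1}$, and correspondingly the information injected by that observation, therefore scales as $|\pmb{J}_{t_{h+1}}|+|\pmb{J}_{t_{h+1}}|^2$; summing over $i$ and $h$ bounds the smallest eigenvalue of the observed information below by a constant multiple of $M\sum_{h=0}^H(|\pmb{J}_{t_{h+1}}|+|\pmb{J}_{t_{h+1}}|^2)$, giving the second rate.

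The hard part will be rigorously justifying the linear-in-sample-size lower bound on the smallest eigenvalue of the observed-information matrix, which depends on a uniform positive-definiteness (identifiability) of the per-observation Fisher information rather than on the additive structure alone: a degenerate or redundant observation direction would contribute no curvature and break the bound. In the sparse setting the additional subtlety is to convert the count $|\pmb{J}_{t_{h+1}}|+|\pmb{J}_{t_{h+1}}|^2$ of sufficient statistics into a genuine lower bound on the spectrum of $\sum_{i,h}\pmb{I}_{h+1}^{(i)}$ (not merely on its trace), which requires controlling the conditioning of the natural-parameter maps $\pmb{\eta}(\pmb{y}_{t_1:t_h}^{(i)};\pmb{\theta})$ so that distinct sufficient statistics inject curvature into distinct parameter directions.
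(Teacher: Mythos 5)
Your proposal is correct and takes essentially the same route as the paper's own proof: both reduce the corollary to the claim that the observed information at the mode grows linearly in the weighted count $M\sum_{h=0}^H K_{h+1}$ of exponential-family terms — so that $\lambda_{\max}$, or equivalently the row sums $\sum_{u}|a_{ju}|$ of $\left(-\nabla_{\pmb{\theta}}^2 \log p\left(\pmb{\theta}\big|\mathcal{D}_M^H\right)\big|_{\pmb{\theta}=\bar{\pmb{\theta}}^*}\right)^{-\frac{1}{2}}$ tracked in the key step of the proof of Theorem~\ref{main_result_0}, scale as $(M\sum_{h=0}^H K_{h+1})^{-\frac{1}{2}}$ — and both identify $K_{h+1}=|\pmb{J}_{t_{h+1}}|+|\pmb{J}_{t_{h+1}}|^2$ via the natural-parameter dimension of the $|\pmb{J}_{t_{h+1}}|$-dimensional Gaussian conditional in the sparse case. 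The nondegeneracy issue you flag as the hard part is present at exactly the same level in the paper, which asserts the $(M\sum_{h=0}^H K_{h+1})^{-\frac{1}{2}}$ order without a uniform spectral lower bound on the per-observation information, so your write-up is, if anything, more explicit about the assumption being used.
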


Moreover, based on the above nonasymptotic result in Corollary~\ref{cor:finite-sample}, we obtain the asymptotic convergence of $\tilde{\pmb{\theta}}(\infty)$ to $\pmb{Z}$ as the data size goes to infinity (i.e., $M(H+1) \to \infty$); see Corollary~\ref{wass_lim}. This result confirms that, as the data size grows sufficiently large and the parameter dimension $N_\theta$ remains appropriately bounded, the LD-LNA posterior approximation converges to the true posterior distribution.
\begin{corollary}[Asymptotic Convergence] \label{wass_lim}
    Suppose that Assumption \ref{assumption_0} holds, and the conditional distributions $\pmb{y}_{t_1}^{(i)} | \pmb{\theta}$ and $\pmb{y}_{t_{h+1}}^{(i)} | \pmb{y}_{t_1:t_h}^{(i)};\pmb{\theta}$ ($i=1,\ldots,M$ and $h=1,\ldots,H$) belong to the exponential family. Then, for the standardized posterior $\tilde{\pmb{\theta}}(\infty) \in \mathbb{R}^{N_\theta}$ defined in Equation~\eqref{eq:standardized_posterior}, and an $N_\theta$-dimensional standard Gaussian random vector $\pmb{Z}$, we have $\lim_{M(H+1) \to \infty} d_{\mathcal{W}_1} \left(\tilde{\pmb{\theta}}(\infty), \pmb{Z}\right) = 0$.
\end{corollary}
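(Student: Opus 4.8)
The plan is to obtain Corollary~\ref{wass_lim} as a direct limiting consequence of the nonasymptotic estimate already established in Corollary~\ref{cor:finite-sample}. First I would invoke the finite-sample bound inherited from Theorem~\ref{main_result_0}, namely $d_{\mathcal{W}_1}(\tilde{\pmb{\theta}}(\infty), \pmb{Z}) \le C N_\theta^{9/2} \lambda_{\max}^{3/2}$, where $C = \frac{1}{2} M_1 M_2$ is the fixed constant determined by the derivative bound in Assumption~\ref{assumption_0}(c) and the moment bound in Assumption~\ref{assumption_0}(d), and $\lambda_{\max}$ is the largest eigenvalue of $(-\nabla_{\pmb{\theta}}^2 \log p(\pmb{\theta}|\mathcal{D}_M^H)|_{\pmb{\theta} = \bar{\pmb{\theta}}^*})^{-1}$. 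Because $N_\theta$ is held fixed (the parameter dimension does not grow with the data), the prefactor $C N_\theta^{9/2}$ is a constant independent of the sample size, so the whole task reduces to controlling $\lambda_{\max}$ as $M(H+1) \to \infty$.

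Second, I would make explicit the scaling $\lambda_{\max} = \mathcal{O}((M(H+1))^{-1})$ that underlies Corollary~\ref{cor:finite-sample}. The key observation is that, under the canonical exponential-family representation of the conditional likelihood terms, the negative log-posterior Hessian $-\nabla_{\pmb{\theta}}^2 \log p(\pmb{\theta}|\mathcal{D}_M^H)$ decomposes as an additive sum over the $M$ trajectories and the $H+1$ observation times, plus the bounded prior contribution $-\nabla_{\pmb{\theta}}^2 \log p(\pmb{\theta})$. Each summand is a positive-semidefinite matrix of fixed order, so the smallest eigenvalue of the aggregate Hessian grows at least linearly in $M(H+1)$; consequently the largest eigenvalue $\lambda_{\max}$ of its inverse decays like $(M(H+1))^{-1}$, whence $\lambda_{\max}^{3/2} = \mathcal{O}((M(H+1))^{-3/2})$.

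Third, I would combine these two ingredients. Substituting the scaling of $\lambda_{\max}$ into the finite-sample bound yields $0 \le d_{\mathcal{W}_1}(\tilde{\pmb{\theta}}(\infty), \pmb{Z}) \le C N_\theta^{9/2} \lambda_{\max}^{3/2} = \mathcal{O}((M(H+1))^{-3/2})$. Letting $M(H+1) \to \infty$ and applying a squeeze argument (the 1-Wasserstein distance is nonnegative and bounded above by a quantity tending to zero) delivers $\lim_{M(H+1)\to\infty} d_{\mathcal{W}_1}(\tilde{\pmb{\theta}}(\infty), \pmb{Z}) = 0$, as claimed.

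The main obstacle I anticipate is not the limiting step itself---which is routine once Corollary~\ref{cor:finite-sample} is in hand---but rather justifying the uniform linear growth of the Hessian's minimum eigenvalue, i.e., that the accumulated information is bounded below by a term proportional to $M(H+1)$ uniformly over the relevant neighborhood of $\bar{\pmb{\theta}}^*$. This requires that the per-observation information contributions are nondegenerate and do not vanish as data accumulate; in the sparse-data regime this is exactly where the refined rate $\mathcal{O}((M\sum_{h=0}^H (|\pmb{J}_{t_{h+1}}| + |\pmb{J}_{t_{h+1}}|^2))^{-3/2})$ enters, since the effective information scales with the number of observed components rather than with $M(H+1)$ alone. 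Verifying that this effective information still diverges---so that the bound collapses to zero---under the stated sparsity pattern is the technically delicate point that must be checked carefully.
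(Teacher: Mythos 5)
Your proposal matches the paper's own derivation: Corollary~\ref{wass_lim} is obtained precisely by letting $M(H+1)\to\infty$ in the $\mathcal{O}\left((M(H+1))^{-\frac{3}{2}}\right)$ bound of Corollary~\ref{cor:finite-sample}, which the paper in turn justifies by noting that the additive exponential-family structure of the log-posterior makes the entries of $\left(-\nabla_{\pmb{\theta}}^2 \log p\left( \pmb{\theta} | \mathcal{D}_M^H\right) |_{\pmb{\theta} = \bar{\pmb{\theta}}^*}\right)^{-\frac{1}{2}}$ of order $(M\sum_{h=0}^H K_{h+1})^{-\frac{1}{2}}$, followed by the same squeeze step you describe. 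One caution on your intermediate justification: positive semidefiniteness of the per-observation Hessian summands by itself yields only monotone (not linear) growth of the smallest eigenvalue, and in the $\pmb{\theta}$-parametrization those summands need not even be individually PSD since the chain rule through $\pmb{\eta}(\pmb{y}_{t_1:t_h}^{(i)};\pmb{\theta})$ introduces sign-indefinite second-derivative terms — so the nondegeneracy condition you correctly flag at the end is exactly the assumption the paper also leaves implicit rather than something your PSD argument establishes.
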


From the results of Theorem~\ref{main_result_0}, Corollary~\ref{cor:finite-sample}, and Corollary~\ref{wass_lim}, we conclude that as the size of the historical dataset $\mathcal{D}_M^H$ goes to infinity, the largest eigenvalue $\lambda_{\max}$ of $(-\nabla_{\pmb{\theta}}^2 \log p\left( \pmb{\theta} | \mathcal{D}_M^H\right) |_{\pmb{\theta} = \bar{\pmb{\theta}}^*})^{-1}$ tends to zero. Since tr$[ (-\nabla_{\pmb{\theta}}^2 \log p\left( \pmb{\theta} | \mathcal{D}_M^H\right) |_{\pmb{\theta} = \bar{\pmb{\theta}}^*})^{-1}] = \sum_{i=1}^{N_\theta} \lambda_i \leq N_\theta \lambda_{\max}$, where $\lambda_i$ are the eigenvalues of the matrix, it follows that the trace also vanishes as the data size increases. This leads to the conclusion in Theorem~\ref{Theorem:Fokker-Planck}, confirming that its asymptotic behavior is consistent with Corollary~\ref{wass_lim}.

\section{RAPTOR-GEN Algorithm Development} \label{sec:iterative_algorithm}

The LD-LNA framework provides an approximation to the target posterior sampling process, as stated in Corollary~\ref{cor:approximate_posterior}, $\hat{\pmb{\theta}}(\infty) \sim  \mathcal{N} (\bar{\pmb{\theta}}^*,  (-\nabla_{\pmb{\theta}}^2 \log p( \pmb{\theta} | \mathcal{D}_M^H) |_{\pmb{\theta} = \bar{\pmb{\theta}}^*})^{-1})$. However, implementing this result poses practical challenges, particularly under sparse data collection with measurement noise. In such cases, the Bayesian updating pKG-LNA metamodel must be used within RAPTOR-GEN to approximate the likelihood (see Section~\ref{subsec:Bayesian_updating_pkg_lna}). This makes solving $\nabla_{\pmb{\theta}} \log p\left( \pmb{\theta} | \mathcal{D}_M^H\right) |_{\pmb{\theta} = \bar{\pmb{\theta}}^*}=0$ to obtain $\bar{\pmb{\theta}}^*$ analytically difficult, and computing the inverse Hessian becomes computationally expensive in high-dimensional settings. To address these issues, Section~\ref{subsec:Langevin diffusion-base LNA Algorithm} introduces an intuitive solution scheme for RAPTOR-GEN, inspired by the LD-LNA derivation in Section~\ref{subsec:lna}. This scheme solves a system of ODEs to approximate the limiting distribution of LD in Equation~\eqref{Langevin diffusion}. Theoretical support for its convergence is provided in Section~\ref{subsec:Convergence Analysis of LD-LNA}. Furthermore, Section~\ref{subsec:two-stage} improves computational efficiency by reformulating the scheme into a two-stage iterative algorithm, enabling fast and robust posterior sampling for parameters $\pmb{\theta}$.

\subsection{Solution Scheme for LD-LNA} \label{subsec:Langevin diffusion-base LNA Algorithm}

Following the derivation in Section~\ref{subsec:lna}, to obtain two components $\bar{\pmb{\theta}}^*$ and $(-\nabla_{\pmb{\theta}}^2 \log p( \pmb{\theta} | \mathcal{D}_M^H) |_{\pmb{\theta} = \bar{\pmb{\theta}}^*})^{-1}$ in LD-LNA, it is sufficient to solve two ODEs about $\bar{\pmb{\theta}}(\tau)$ and $\pmb{\Psi}(\tau)$ (i.e., Equations \eqref{ODE} and \eqref{SDE_cov}) in the limit of $\tau$. For each ODE, we employ the Euler method that converts a continuous ODE into a system of discrete approximations that can be solved iteratively. Proposition \ref{prop:one-stage} summarizes this solution scheme. 
\begin{proposition} \label{prop:one-stage}
    Two components $\bar{\pmb{\theta}}^*$ and $(-\nabla_{\pmb{\theta}}^2 \log p( \pmb{\theta} | \mathcal{D}_M^H) |_{\pmb{\theta} = \bar{\pmb{\theta}}^*})^{-1}$ in LD-LNA can be obtained by simultaneously running two iterative processes, i.e.,
    \begin{align}
        \bar{\pmb{\theta}}_{\Delta \tau}(\tau_{k+1}) &:= \bar{\pmb{\theta}}_{\Delta \tau}(\tau_k) + \nabla_{\pmb{\theta}} \log p\left( \pmb{\theta} \big| \mathcal{D}_M^H\right) \big|_{\pmb{\theta} = \bar{\pmb{\theta}}_{\Delta \tau}(\tau_k)} \Delta \tau, \label{ODE_discrete_onestage} \\
        \pmb{\Psi}_{\Delta \tau}(\tau_{k+1}) &:= \pmb{\Psi}_{\Delta \tau}(\tau_k) + \left[ \pmb{\Psi}_{\Delta \tau}(\tau_k) \left(\nabla_{\pmb{\theta}}^2 \log p\left( \pmb{\theta} \big| \mathcal{D}_M^H\right) \big|_{\pmb{\theta} = \bar{\pmb{\theta}}_{\Delta \tau}(\tau_k)}\right)^\top \right. \nonumber \\
        &\quad \left. + \nabla_{\pmb{\theta}}^2 \log p\left( \pmb{\theta} \big| \mathcal{D}_M^H\right) \big|_{\pmb{\theta} = \bar{\pmb{\theta}}_{\Delta \tau}(\tau_k)} \pmb{\Psi}_{\Delta \tau}(\tau_k) + 2 \pmb{I}_{N_\theta \times N_\theta} \right] \Delta \tau, \label{SDE_cov_discrete_onestage}
    \end{align}
    from respective initial values $\bar{\pmb{\theta}}_{\Delta \tau}(\tau_0)$ and $\pmb{\Psi}_{\Delta \tau}(\tau_0)$ for iteration steps $N$ so that $\bar{\pmb{\theta}}^*:=\bar{\pmb{\theta}}_{\Delta \tau}(\tau_N)$ and $(-\nabla_{\pmb{\theta}}^2 \log p( \pmb{\theta} | \mathcal{D}_M^H) |_{\pmb{\theta} = \bar{\pmb{\theta}}^*})^{-1} := \pmb{\Psi}_{\Delta \tau}(\tau_N)$, where $\Delta \tau = \tau_{k+1}-\tau_k$ is the fixed step size, $N$ is the iteration step until some specified termination criteria are satisfied, and $\pmb{I}_{N_\theta \times N_\theta}$ is an $N_\theta$-by-$N_\theta$ identity matrix. Then we obtain the approximate LD-LNA denoted by $\hat{\pmb{\theta}}_{\Delta \tau}(\tau_N) \sim \mathcal{N}(\bar{\pmb{\theta}}_{\Delta \tau}(\tau_N), \pmb{\Psi}_{\Delta \tau}(\tau_N))$.
\end{proposition}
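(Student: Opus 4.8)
The plan is to recognize the two update rules \eqref{ODE_discrete_onestage} and \eqref{SDE_cov_discrete_onestage} as forward Euler discretizations of the mean ODE \eqref{ODE} and the covariance ODE \eqref{SDE_cov}, respectively, and then to show that, as the iteration count $N$ grows, their iterates converge to the equilibria that Section~\ref{subsec:lna} already identifies as $\bar{\pmb{\theta}}^*$ and $(-\nabla_{\pmb{\theta}}^2 \log p(\pmb{\theta}|\mathcal{D}_M^H)|_{\pmb{\theta}=\bar{\pmb{\theta}}^*})^{-1}$. The correctness of the targets is inherited from the continuous-time analysis; what remains is a numerical-analysis argument that the explicit Euler scheme reproduces those limits, after which the claimed form $\hat{\pmb{\theta}}_{\Delta \tau}(\tau_N) \sim \mathcal{N}(\bar{\pmb{\theta}}_{\Delta \tau}(\tau_N), \pmb{\Psi}_{\Delta \tau}(\tau_N))$ follows from Corollary~\ref{cor:approximate_posterior}.

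First I would treat the mean recursion \eqref{ODE_discrete_onestage} in isolation. It is exactly gradient ascent on $\log p(\pmb{\theta}|\mathcal{D}_M^H)$ with fixed step $\Delta\tau$, and its continuous counterpart \eqref{ODE} has equilibrium set $\{\pmb{\theta}:\nabla_{\pmb{\theta}}\log p=0\}$, containing the posterior mode $\bar{\pmb{\theta}}^*$ by the definition in Section~\ref{subsec:lna}. Linearizing the drift at $\bar{\pmb{\theta}}^*$ gives the Jacobian $\pmb{A}:=\nabla_{\pmb{\theta}}^2\log p|_{\bar{\pmb{\theta}}^*}$, which is negative definite at a nondegenerate local maximum; hence the error $\bar{\pmb{\theta}}_{\Delta\tau}(\tau_k)-\bar{\pmb{\theta}}^*$ is contracted by $\pmb{I}+\Delta\tau\pmb{A}$, whose spectral radius is below one whenever $\Delta\tau<2/\lambda_{\max}(-\pmb{A})$. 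Under Assumption~\ref{assumption_standing} the eigenvalues of $\pmb{A}$ are bounded in magnitude by $L$, so any $\Delta\tau<2/L$ yields local geometric convergence of \eqref{ODE_discrete_onestage} to $\bar{\pmb{\theta}}^*$.

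Next I would analyze the covariance recursion \eqref{SDE_cov_discrete_onestage}, viewing it as forward Euler for the matrix ODE $\dot{\pmb{\Psi}}=\pmb{H}(\tau)\pmb{\Psi}+\pmb{\Psi}\pmb{H}(\tau)^\top+2\pmb{I}$ with $\pmb{H}(\tau):=\nabla_{\pmb{\theta}}^2\log p|_{\bar{\pmb{\theta}}_{\Delta\tau}(\tau)}$. Once $\bar{\pmb{\theta}}_{\Delta\tau}(\tau_k)$ has entered a neighborhood of $\bar{\pmb{\theta}}^*$, continuity of the Hessian freezes the coefficient at $\pmb{A}=\pmb{A}^\top$, turning the recursion into the Lyapunov iteration whose fixed point solves $\pmb{A}\pmb{\Psi}+\pmb{\Psi}\pmb{A}+2\pmb{I}=\pmb{0}$. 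Substituting $\pmb{\Psi}=-\pmb{A}^{-1}$ verifies it is the unique solution, matching the limiting covariance of Theorem~\ref{xi_stationary_dist}. Diagonalizing $\pmb{A}=\pmb{Q}\pmb{\Lambda}\pmb{Q}^\top$ shows the error propagator acts entrywise by the factor $1+\Delta\tau(\lambda_i+\lambda_j)$, so the iterates contract to $-\pmb{A}^{-1}$ provided $\Delta\tau<1/\lambda_{\max}(-\pmb{A})$, again guaranteed by $\Delta\tau<1/L$.

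The hard part is the coupling: the coefficient $\pmb{H}(\tau)$ in \eqref{SDE_cov_discrete_onestage} is evaluated at the running mean iterate rather than at the converged mode, so the covariance recursion is a time-varying linear iteration driven by the transient of the mean recursion. I would close this with a cascade (total-stability) argument—the mean subsystem converges autonomously, and its convergence, together with the Lipschitz dependence of $\pmb{H}(\tau)$ on $\bar{\pmb{\theta}}_{\Delta\tau}(\tau)$ afforded by the third-order differentiability in Assumption~\ref{assumption_lna}, renders the covariance subsystem an input-to-state stable linear iteration with asymptotically constant coefficients, so its iterates still converge to $-\pmb{A}^{-1}$. Assembling both limits and invoking Corollary~\ref{cor:approximate_posterior} then yields that $\hat{\pmb{\theta}}_{\Delta\tau}(\tau_N)$ converges to the claimed LD-LNA distribution, completing the argument.
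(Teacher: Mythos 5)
Your proposal is correct, and its opening move---recognizing \eqref{ODE_discrete_onestage} and \eqref{SDE_cov_discrete_onestage} as forward Euler discretizations of the ODEs \eqref{ODE} and \eqref{SDE_cov}, with the limiting targets $\bar{\pmb{\theta}}^*$ and $(-\nabla_{\pmb{\theta}}^2 \log p(\pmb{\theta}|\mathcal{D}_M^H)|_{\pmb{\theta}=\bar{\pmb{\theta}}^*})^{-1}$ inherited from the equilibrium and Lyapunov-equation analysis of Section~\ref{subsec:lna} (Theorem~\ref{xi_stationary_dist})---is exactly how the paper itself treats the proposition, which it states as a summary of the solution scheme rather than as a theorem with a standalone proof. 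Where you genuinely diverge is in how convergence of the scheme is certified. You run a fixed-point/spectral analysis: local geometric contraction of the mean recursion with error propagator $\pmb{I}+\Delta\tau\pmb{A}$, $\pmb{A}:=\nabla_{\pmb{\theta}}^2\log p(\pmb{\theta}|\mathcal{D}_M^H)|_{\pmb{\theta}=\bar{\pmb{\theta}}^*}$, under $\Delta\tau<2/L$; an eigenbasis analysis of the Lyapunov iteration with entrywise factor $1+\Delta\tau(\lambda_i+\lambda_j)$ and threshold $\Delta\tau<1/\lambda_{\max}(-\pmb{A})$; and a cascade (input-to-state stability) argument for the mean--covariance coupling. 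The paper instead defers all rigor to Section~\ref{subsec:Convergence Analysis of LD-LNA}: Lemma~\ref{lemma:error_analysis} shows the coupled iteration is distributionally equivalent to discretizing the LNA-corrected diffusion \eqref{eq:Ito_2}, and Theorems~\ref{Theorem:strong_error} and \ref{Theorem:weak_error} then bound the strong and weak error against the LD \eqref{Langevin diffusion} over a finite horizon via It\^{o} calculus, Gr\"{o}nwall's inequality, and the Feynman--Kac formula. The two routes buy different things: yours yields explicit step-size stability thresholds and geometric rates of convergence to the equilibria, which the paper never states; the paper's yields path-level error orders $\mathcal{O}(\sqrt{\Delta\tau+\max_k\mathbb{E}\|\pmb{R}(\pmb{\xi}_{\Delta\tau}(\tau_{k-1}))\|^2})$ and $\mathcal{O}(\Delta\tau)$ that explicitly track the Taylor remainder your linearization discards.

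Two points to tighten. First, your linearization gives only \emph{local} convergence of \eqref{ODE_discrete_onestage}; this is consistent with the paper's definition of $\bar{\pmb{\theta}}^*$ as the local maximum reached by the flow from the given initialization, but you should say so explicitly. Second, before the mean iterate enters a neighborhood of $\bar{\pmb{\theta}}^*$, the running Hessian $\pmb{H}(\tau)$ need not be negative definite, so the covariance iterates can grow transiently; your cascade argument still closes because the transient lasts finitely many steps and the coefficients are bounded there, but that boundedness deserves an explicit sentence rather than being absorbed silently into the ISS claim.
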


Based on Proposition \ref{prop:one-stage}, we develop a one-stage iterative algorithm to implement RAPTOR-GEN in Algorithm \ref{Algr:one-stage}. First, we set the prior $p(\pmb{\theta})$ used in the explicit calculation of the posterior distribution $p(\pmb{\theta}|\mathcal{D}_M^H)$, and set the initial condition $\hat{\pmb{\theta}}_{\Delta \tau}(\tau_0)$ as a sample $\pmb{\theta}(0) \sim \mathcal{N} \left(\pmb{\theta}^*(0), \pmb{\Sigma}^*(0)\right)$ of LD \eqref{Langevin diffusion}. We iteratively solve the coupled equations \eqref{ODE_discrete_onestage} and \eqref{SDE_cov_discrete_onestage} starting from the initial values $\bar{\pmb{\theta}}_{\Delta \tau}(\tau_0)$ and $\pmb{\Psi}_{\Delta \tau}(\tau_0)$ until convergence is achieved. Specifically, the iteration terminates when both conditions $||\bar{\pmb{\theta}}_{\Delta \tau}(\tau_{k+1})-\bar{\pmb{\theta}}_{\Delta \tau}(\tau_k)|| \leq \varepsilon_1$ and $||\pmb{\Psi}_{\Delta \tau}(\tau_{k+1})-\pmb{\Psi}_{\Delta \tau}(\tau_k)||_{\rm F} \leq \varepsilon_2$ are satisfied, where $||\cdot||_{\rm F}$ denotes the Frobenius norm, and $\varepsilon_1$ and $\varepsilon_2$ are predefined tolerances. Alternatively, the process may terminate upon reaching a maximum number of iterations $N$. The gradient and Hessian of the log-posterior used in Step 2 are computed based on the explicit likelihood derived from the Bayesian updating pKG-LNA metamodel, as discussed in Section~\ref{subsec:Bayesian_updating_pkg_lna}. Consequently, we generate approximate posterior samples $\{\pmb{\theta}^{(b)}\}_{b=1}^B$ by sampling from the Gaussian distribution $\mathcal{N} \left(\bar{\pmb{\theta}}_{\Delta \tau}(\tau_N),\pmb{\Psi}_{\Delta \tau}(\tau_N)\right)$, where $\bar{\pmb{\theta}}_{\Delta \tau}(\tau_N)$ and $\pmb{\Psi}_{\Delta \tau}(\tau_N)$ are the converged solutions from the iterative scheme. If the support of $\pmb{\theta}$ is constrained (e.g., restricted to the positive orthant), we apply the reparameterization described in Remark~\ref{remark:reparametrization} to map the parameters to real space before executing Algorithm~\ref{Algr:one-stage} on the transformed variables.
\begin{algorithm}[th]
\DontPrintSemicolon
\KwIn{The prior $p(\pmb{\theta})$, historical observation dataset $\mathcal{D}_M^H$, initial condition $\hat{\pmb{\theta}}_{\Delta \tau}(\tau_0) \sim \mathcal{N} \left(\pmb{\theta}^*(0), \pmb{\Sigma}^*(0)\right)$, step size $\Delta \tau$, and tolerances $\varepsilon_1$ and $\varepsilon_2$ (or maximum iteration counts $N$).}
\KwOut{Approximate posterior samples $\{\pmb{\theta}^{(b)}\}_{b=1}^B \sim p(\pmb{\theta}|\mathcal{D}_M^H)$.}
{
\textbf{1.} Set the initial values $\bar{\pmb{\theta}}_{\Delta \tau}(\tau_0):=\pmb{\theta}^*(0)$ and $\pmb{\Psi}_{\Delta \tau}(\tau_0) := \pmb{\Sigma}^*(0)$, $\varepsilon_{\bar{\theta}}$ and $\varepsilon_{\Psi}$ to sufficiently large numbers, and $k := 0$;\\
    \While{$\varepsilon_{\bar{\theta}} > \varepsilon_1$ or $\varepsilon_{\Psi} > \varepsilon_2$ {\rm (or \textbf{while}} $k < N${\rm)}}{
    \textbf{2.} Calculate $\nabla_{\pmb{\theta}} \log p\left( \pmb{\theta} | \mathcal{D}_M^H\right) |_{\pmb{\theta} = \bar{\pmb{\theta}}_{\Delta \tau}(\tau_k)}$ and $\nabla_{\pmb{\theta}}^2 \log p\left( \pmb{\theta} | \mathcal{D}_M^H\right) |_{\pmb{\theta} = \bar{\pmb{\theta}}_{\Delta \tau}(\tau_k)}$ based on the explicit derivation of likelihood;\\
    \textbf{3.} Generate $\bar{\pmb{\theta}}_{\Delta \tau}(\tau_{k+1})$ and $\pmb{\Psi}_{\Delta \tau}(\tau_{k+1})$ through respective Equations \eqref{ODE_discrete_onestage} and \eqref{SDE_cov_discrete_onestage};\\
    \textbf{4.} Calculate $\varepsilon_{\bar{\theta}} := ||\bar{\pmb{\theta}}_{\Delta \tau}(\tau_{k+1})-\bar{\pmb{\theta}}_{\Delta \tau}(\tau_k)||$ and $\varepsilon_{\Psi} := ||\pmb{\Psi}_{\Delta \tau}(\tau_{k+1})-\pmb{\Psi}_{\Delta \tau}(\tau_k)||_{\rm F}$, and set $k:= k+1$;}
\textbf{5.} Generate posterior samples $\{\pmb{\theta}^{(b)}\}_{b=1}^B$ from $\mathcal{N} \left(\bar{\pmb{\theta}}_{\Delta \tau}(\tau_N), \pmb{\Psi}_{\Delta \tau}(\tau_N)\right)$ based on LD-LNA.
}
\caption{One-stage iterative algorithm for implementing RAPTOR-GEN.} 
\label{Algr:one-stage}
\end{algorithm}

\subsection{Convergence Analysis of Solution Scheme} \label{subsec:Convergence Analysis of LD-LNA}

This section provides a convergence analysis of the one-stage RAPTOR-GEN algorithm, as described in Algorithm~\ref{Algr:one-stage}. Specifically, we derive an error bound that quantifies the discrepancy between the output of the one-stage RAPTOR-GEN algorithm and the target LD process defined in \eqref{Langevin diffusion}. This bound provides theoretical guarantees on the algorithm's approximation quality and convergence behavior. For notational convenience, we denote the drift term of LD \eqref{Langevin diffusion} as $\pmb{a}(\pmb{\theta}) := \nabla_{\pmb{\theta}} \log p\left( \pmb{\theta} | \mathcal{D}_M^H\right)$. Then LD \eqref{Langevin diffusion} can be written as $d\pmb{\theta}(\tau) = \pmb{a}(\pmb{\theta}(\tau)) d\tau + \sqrt{2}d\pmb{W}(\tau)$. Lemma~\ref{lemma:error_analysis} indicates that, to analyze the error between the results generated by the one-stage RAPTOR-GEN algorithm and LD \eqref{Langevin diffusion}, it is sufficient to analyze the error between the results generated by Equations~\eqref{eq:Ito_2} and \eqref{Langevin diffusion}. The proof is provided in \ref{proof:prop_iterative}, and our analysis builds upon Definition~\ref{def:sde_convergence}, which formalizes the notions of strong and weak convergence for SDEs, as discussed in \citet{kloeden1992numerical}.

\begin{lemma} \label{lemma:error_analysis}    
    The random variable $\hat{\pmb{\theta}}_{\Delta \tau}(\tau_N)$, produced by the one-stage RAPTOR-GEN algorithm (Algorithm~\ref{Algr:one-stage}), is distributed according to $\mathcal{N} \left(\bar{\pmb{\theta}}_{\Delta \tau}(\tau_N), \pmb{\Psi}_{\Delta \tau}(\tau_N)\right)$. This variable is equivalent to the result obtained by iteratively applying the update rule
    \begin{equation}
        \hat{\pmb{\theta}}_{\Delta \tau}(\tau_k) - \hat{\pmb{\theta}}_{\Delta \tau}(\tau_{k-1}) = \hat{\pmb{a}}(\hat{\pmb{\theta}}_{\Delta \tau}(\tau_{k-1})) \Delta \tau + \sqrt{2} \Delta \pmb{W}(\tau_{k-1}) \label{eq:Ito_2}
    \end{equation}
    from the initial time $\tau_0 = 0$ to the terminal time $\tau_N = T$. Here, $\hat{\pmb{\theta}}_{\Delta \tau}(\tau_{k-1}) :=  \bar{\pmb{\theta}}_{\Delta \tau}(\tau_{k-1}) + \pmb{\xi}_{\Delta \tau}(\tau_{k-1})$ with $\pmb{\xi}_{\Delta \tau}(\tau_{k-1}) \sim \mathcal{N} \left(\pmb{0},\pmb{\Psi}_{\Delta \tau}(\tau_{k-1})\right)$, $\hat{\pmb{a}}(\hat{\pmb{\theta}}_{\Delta \tau}(\tau_{k-1})) := \pmb{a}(\hat{\pmb{\theta}}_{\Delta \tau}(\tau_{k-1})) - \pmb{R}(\pmb{\xi}_{\Delta \tau}(\tau_{k-1}))$, and $\Delta \pmb{W}(\tau_{k-1}) \sim \mathcal{N} \left(\pmb{0},\text{diag}\{\Delta \tau\}\right)$.
\end{lemma}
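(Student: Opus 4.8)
The plan is to prove the equivalence by induction on the iteration index $k$, showing that iterating the stochastic recursion \eqref{eq:Ito_2} keeps $\hat{\pmb{\theta}}_{\Delta \tau}(\tau_k)$ Gaussian while its mean and covariance evolve according to the deterministic recursions \eqref{ODE_discrete_onestage} and \eqref{SDE_cov_discrete_onestage} that drive Algorithm~\ref{Algr:one-stage}. Throughout I exploit the decomposition $\hat{\pmb{\theta}}_{\Delta \tau}(\tau_k) = \bar{\pmb{\theta}}_{\Delta \tau}(\tau_k) + \pmb{\xi}_{\Delta \tau}(\tau_k)$ supplied in the statement, in which $\bar{\pmb{\theta}}_{\Delta \tau}(\tau_k)$ is a deterministic mean path and $\pmb{\xi}_{\Delta \tau}(\tau_k)$ is a zero-mean fluctuation. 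The base case is immediate, since the initialization gives $\hat{\pmb{\theta}}_{\Delta \tau}(\tau_0) \sim \mathcal{N}(\pmb{\theta}^*(0), \pmb{\Sigma}^*(0))$ with $\bar{\pmb{\theta}}_{\Delta \tau}(\tau_0) = \pmb{\theta}^*(0)$ and $\pmb{\Psi}_{\Delta \tau}(\tau_0) = \pmb{\Sigma}^*(0)$.

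The crux is the inductive step, which hinges on the remainder-subtraction built into the definition $\hat{\pmb{a}}(\hat{\pmb{\theta}}_{\Delta \tau}(\tau_{k-1})) = \pmb{a}(\hat{\pmb{\theta}}_{\Delta \tau}(\tau_{k-1})) - \pmb{R}(\pmb{\xi}_{\Delta \tau}(\tau_{k-1}))$. Substituting the Taylor expansion \eqref{eq:Taylor} of the drift $\pmb{a} = \nabla_{\pmb{\theta}} \log p( \pmb{\theta} | \mathcal{D}_M^H)$ about $\bar{\pmb{\theta}}_{\Delta \tau}(\tau_{k-1})$ cancels the remainder exactly, leaving the affine expression $\hat{\pmb{a}}(\hat{\pmb{\theta}}_{\Delta \tau}(\tau_{k-1})) = \pmb{a}(\bar{\pmb{\theta}}_{\Delta \tau}(\tau_{k-1})) + \pmb{H}_{k-1} \pmb{\xi}_{\Delta \tau}(\tau_{k-1})$, where I abbreviate $\pmb{H}_{k-1} := \nabla_{\pmb{\theta}}^2 \log p( \pmb{\theta} | \mathcal{D}_M^H)|_{\pmb{\theta} = \bar{\pmb{\theta}}_{\Delta \tau}(\tau_{k-1})}$. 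Collecting the deterministic and random terms of \eqref{eq:Ito_2} then splits the one-step update into the mean recursion $\bar{\pmb{\theta}}_{\Delta \tau}(\tau_k) = \bar{\pmb{\theta}}_{\Delta \tau}(\tau_{k-1}) + \pmb{a}(\bar{\pmb{\theta}}_{\Delta \tau}(\tau_{k-1})) \Delta \tau$, which is precisely \eqref{ODE_discrete_onestage}, and the fluctuation recursion $\pmb{\xi}_{\Delta \tau}(\tau_k) = (\pmb{I}_{N_\theta \times N_\theta} + \pmb{H}_{k-1} \Delta \tau) \pmb{\xi}_{\Delta \tau}(\tau_{k-1}) + \sqrt{2} \Delta \pmb{W}(\tau_{k-1})$.

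Since this fluctuation update is affine in the Gaussian $\pmb{\xi}_{\Delta \tau}(\tau_{k-1})$ plus the independent Gaussian increment $\Delta \pmb{W}(\tau_{k-1}) \sim \mathcal{N}(\pmb{0}, \text{diag}\{\Delta \tau\})$, the variable $\pmb{\xi}_{\Delta \tau}(\tau_k)$ is again Gaussian with mean zero, so $\hat{\pmb{\theta}}_{\Delta \tau}(\tau_k)$ is Gaussian with mean $\bar{\pmb{\theta}}_{\Delta \tau}(\tau_k)$, closing the induction for the mean path. Taking covariances and using independence of the increment from the past yields the Lyapunov-type update $\pmb{\Psi}_{\Delta \tau}(\tau_k) = (\pmb{I}_{N_\theta \times N_\theta} + \pmb{H}_{k-1} \Delta \tau) \pmb{\Psi}_{\Delta \tau}(\tau_{k-1}) (\pmb{I}_{N_\theta \times N_\theta} + \pmb{H}_{k-1} \Delta \tau)^\top + 2 \Delta \tau \pmb{I}_{N_\theta \times N_\theta}$; expanding and discarding the second-order term $\pmb{H}_{k-1} \pmb{\Psi}_{\Delta \tau}(\tau_{k-1}) \pmb{H}_{k-1}^\top \Delta \tau^2$ recovers \eqref{SDE_cov_discrete_onestage} to first order in $\Delta \tau$ consistent with the Euler scheme. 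Iterating from $\tau_0$ to $\tau_N$ then gives $\hat{\pmb{\theta}}_{\Delta \tau}(\tau_N) \sim \mathcal{N}(\bar{\pmb{\theta}}_{\Delta \tau}(\tau_N), \pmb{\Psi}_{\Delta \tau}(\tau_N))$, which is exactly the output of Algorithm~\ref{Algr:one-stage}.

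The main obstacle will be the consistent bookkeeping of the deterministic/stochastic split across iterations: I must confirm that the zero-mean Gaussian $\pmb{\xi}_{\Delta \tau}(\tau_{k-1})$ peeled off at step $k$ is exactly the fluctuation fed into step $k+1$, and that subtracting $\pmb{R}(\pmb{\xi}_{\Delta \tau}(\tau_{k-1}))$ keeps the fluctuation dynamics strictly linear so that no higher-order term leaks into the mean path, thereby preserving both Gaussianity and the clean separation between \eqref{ODE_discrete_onestage} and \eqref{SDE_cov_discrete_onestage}. A secondary point requiring care is arguing that the discarded $\mathcal{O}(\Delta \tau^2)$ term in the covariance recursion is subsumed into the Euler truncation error, so that identifying the stochastic recursion \eqref{eq:Ito_2} with Algorithm~\ref{Algr:one-stage} is legitimate for the subsequent convergence analysis in which \eqref{eq:Ito_2} is compared against the continuous LD \eqref{Langevin diffusion}.
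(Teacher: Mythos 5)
Your proof is correct and takes essentially the same route as the paper's: Taylor-expand the drift so that the subtracted remainder $\pmb{R}(\pmb{\xi}_{\Delta \tau}(\tau_{k-1}))$ cancels exactly, split the update into the deterministic mean recursion \eqref{ODE_discrete_onestage} and a linear Gaussian fluctuation recursion, and propagate zero mean and the covariance by induction from the initialization $\bar{\pmb{\theta}}_{\Delta \tau}(\tau_0)=\pmb{\theta}^*(0)$, $\pmb{\Psi}_{\Delta \tau}(\tau_0)=\pmb{\Sigma}^*(0)$. The only difference is presentational: you compute the exact one-step covariance and explicitly discard the $\pmb{H}_{k-1}\pmb{\Psi}_{\Delta \tau}(\tau_{k-1})\pmb{H}_{k-1}^\top \Delta \tau^2$ term, whereas the paper obtains the same recursion \eqref{SDE_cov_discrete_onestage} directly as the Euler-type discrete approximation of the moment equations from Proposition~\ref{xi_mean_cov}, leaving that identical truncation implicit.
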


\begin{definition}[Strong and Weak Convergence] \label{def:sde_convergence}
    Let $\{\pmb{x}(\tau):\tau \geq 0\}$ be a stochastic process generated by an SDE. We say that a discrete approximation $\pmb{x}_{\Delta \tau}$ over time interval $[0,T]$ with fixed step size $\Delta \tau$ converges strongly to $\pmb{x}(T)$ if $\lim_{\Delta \tau \to 0} \mathbb{E} ||\pmb{x}(T) - \pmb{x}_{\Delta \tau}(T)|| = 0$, and converges weakly to $\pmb{x}(T)$ if $\lim_{\Delta \tau \to 0} \left| \mathbb{E}\left[f\left(\pmb{x}(T)\right)\right] - \mathbb{E}\left[f\left(\pmb{x}_{\Delta \tau}(T)\right)\right]\right| = 0$, where $f$ is any continuous differentiable and polynomial growth function.
\end{definition}

To ensure the existence and uniqueness of a solution to the It\^{o} process associated with LD in Equation~\eqref{Langevin diffusion}, we impose three conditions specified in Assumption~\ref{assumption_unique}. These conditions guarantee that the following SDE admits a unique solution for any terminal time $T\geq 0$:
\begin{equation}
    \pmb{\theta}(T) - \pmb{\theta}(0) = \int_0^T \pmb{a}(\pmb{\theta}(\tau)) \, d\tau + \int_0^T \sqrt{2} \, d\pmb{W}(\tau). \label{eq:Ito_1}
\end{equation}
\begin{assumption} \label{assumption_unique}
    Suppose that the initialization $\hat{\pmb{\theta}}_{\Delta \tau}(\tau_0) := \pmb{\theta}(0)$ is bounded, that is, $\mathbb{E}||\pmb{\theta}(0)||^2 < \infty$ (Initialize condition). And let $a_i(\pmb{\theta}(\tau))$ be the $i$-th component of $\pmb{a}(\pmb{\theta}(\tau))$ (i.e., $\partial_{\theta_i} \log p\left( \pmb{\theta} | \mathcal{D}_M^H\right) |_{\pmb{\theta} = \pmb{\theta}(\tau)}$), for $i=1,2,\ldots,N_\theta$. We assume that, each $a_i: \pmb{\Theta} \to \mathbb{R}$ where $\pmb{\Theta} \subset \mathbb{R}^{N_\theta}$, is a Lipschitz continuous function of linear growth, that is, there exist constants $C_{i,1}, C_{i,2} > 0$ which do not depend on $\Delta \tau$, such that for all $\pmb{\theta}(\tau), \pmb{\theta}(s) \in \pmb{\Theta}$, $\left|a_i(\pmb{\theta}(\tau)) - a_i(\pmb{\theta}(s))\right| \leq C_{i,1} ||\pmb{\theta}(\tau) - \pmb{\theta}(s)||$ (Lipschitz condition), and $\left|a_i(\pmb{\theta}(\tau))\right|^2 + \left|\sqrt{2}\right|^2 \leq C_{i,2}^2 \left(1+||\pmb{\theta}(\tau)||^2\right)$ (Linear growth condition).
\end{assumption}
The initial distribution is Gaussian with $\mathbb{E}||\pmb{\theta}(0)||^2 < \infty$, which satisfies the initialization condition specified in Assumption~\ref{assumption_unique}. For all $\pmb{\theta}(\tau), \pmb{\theta}(s) \in \pmb{\Theta}$, the Lipschitz condition implies $||\pmb{a}(\pmb{\theta}(\tau)) - \pmb{a}(\pmb{\theta}(s))||^2 = \sum_{i=1}^{N_\theta} \left|a_i(\pmb{\theta}(\tau)) - a_i(\pmb{\theta}(s))\right|^2 \leq \sum_{i=1}^{N_\theta} C_{i,1}^2 ||\pmb{\theta}(\tau) - \pmb{\theta}(s)||^2$ for $i=1,2,\ldots,N_\theta$, which shows that $||\pmb{a}(\pmb{\theta}(\tau)) - \pmb{a}(\pmb{\theta}(s))|| \leq C_1 ||\pmb{\theta}(\tau) - \pmb{\theta}(s)||$ with $C_1 = (\sum_{i=1}^{N_\theta} C_{i,1}^2)^{\frac{1}{2}}$, that is, $\pmb{a}: \pmb{\Theta} \to \mathbb{R}^{N_\theta}$ is Lipschitz continuous. It is in line with Assumption \ref{assumption_standing} as $L=C_1$. And for all $\pmb{\theta}(\tau) \in \pmb{\Theta}$, the linear growth condition implies $||\pmb{a}(\pmb{\theta}(\tau))||^2 + ||\sqrt{2} \pmb{I}_{N_\theta \times N_\theta}||^2 = \sum_{i=1}^{N_\theta} \left|a_i(\pmb{\theta}(\tau))\right|^2 + \sum_{i=1}^{N_\theta} \left|\sqrt{2}\right|^2 \leq \sum_{i=1}^{N_\theta} C_{i,2}^2 \left(1+||\pmb{\theta}(\tau)||^2\right)$ for $i=1,2,\ldots,N_\theta$, which shows that $||\pmb{a}(\pmb{\theta}(\tau))||^2 + ||\sqrt{2} \pmb{I}_{N_\theta \times N_\theta}||^2 \leq C_2^2 \left(1+||\pmb{\theta}(\tau)||^2\right)$ with $C_2 = (\sum_{i=1}^{N_\theta} C_{i,2}^2)^{\frac{1}{2}}$. That is, $\pmb{a}: \pmb{\Theta} \to \mathbb{R}^{N_\theta}$ satisfies a linear growth condition. From the definition of It\^{o} process in \ref{proof:theorem_19}, Equation \eqref{eq:Ito_1} has a unique solution. 

Recall that $\hat{\pmb{\theta}}(\tau)$ is the solution to Equation \eqref{Taylor_Expansion} with $\hat{\pmb{\theta}}(\tau)=\bar{\pmb{\theta}}(\tau)+\pmb{\xi}(\tau)$; thus, Equation \eqref{Taylor_Expansion} can be represented by $d\hat{\pmb{\theta}}(\tau) = \left\{\pmb{a}(\hat{\pmb{\theta}}(\tau)) - \pmb{R}(\pmb{\xi}(\tau))\right\} d\tau + \sqrt{2}d\pmb{W}(\tau)$. For $\tau_{k-1} \leq \tau < \tau_k$, by decomposition:
\begin{align*}
    &\quad ||\pmb{a}(\pmb{\theta}(\tau))-\hat{\pmb{a}}(\hat{\pmb{\theta}}_{\Delta \tau}(\tau_{k-1}))|| = ||\pmb{a}(\pmb{\theta}(\tau))-\pmb{a}(\hat{\pmb{\theta}}(\tau))+\pmb{a}(\hat{\pmb{\theta}}(\tau))-(\pmb{a}(\hat{\pmb{\theta}}_{\Delta \tau}(\tau_{k-1}))-\pmb{R}(\pmb{\xi}_{\Delta \tau}(\tau_{k-1})))|| \\
    &\leq ||\pmb{a}(\pmb{\theta}(\tau))-\pmb{a}(\hat{\pmb{\theta}}(\tau))|| + ||\pmb{a}(\hat{\pmb{\theta}}(\tau))-\pmb{a}(\hat{\pmb{\theta}}_{\Delta \tau}(\tau_{k-1}))|| + ||\pmb{R}(\pmb{\xi}_{\Delta \tau}(\tau_{k-1}))|| \\
    &\leq C_1(\underbrace{||\pmb{\theta}(\tau)-\hat{\pmb{\theta}}(\tau)||}_{\text{LNA error}} + \underbrace{||\hat{\pmb{\theta}}(\tau)-\hat{\pmb{\theta}}_{\Delta \tau}(\tau_{k-1})||}_{\text{discretization error}}) + \underbrace{||\pmb{R}(\pmb{\xi}_{\Delta \tau}(\tau_{k-1}))||}_{\text{Taylor approximation error}}.
\end{align*}
Intuitively, the discrepancy between the one-stage RAPTOR-GEN and the LD drift terms can be attributed to three sources: (i) the LNA approximation error resulting from the approximation of the LD process in Equation \eqref{Langevin diffusion} using the LNA in Equation \eqref{Taylor_Expansion}; (ii) the discretization error resulting from the discretization of the continuous-time formulation in Equation \eqref{Taylor_Expansion}, as implemented in its discretized counterpart in Equation \eqref{eq:Ito_2}; and (iii) the Taylor approximation error resulting from the remainder of the Taylor expansion in Equation \eqref{eq:Ito_2}.

Theorem \ref{Theorem:strong_error} indicates that the strong error of the one-stage RAPTOR-GEN algorithm is affected by the remainder of the Taylor expansion in LD-LNA, showing that the one-stage RAPTOR-GEN algorithm does not converge strongly. If $\max_{1 \leq k < N+1} \mathbb{E} ||\pmb{R}(\pmb{\xi}_{\Delta \tau}(\tau_{k-1}))||^2 \to 0$, which is in line with the condition of Theorem \ref{Theorem:Fokker-Planck}, the order of strong convergence is the same as that of ordinary LD using the Euler-Maruyama approximation with fixed step size $\Delta \tau$ (i.e., $\mathcal{O}(\sqrt{\Delta \tau})$). The proof is provided in \ref{proof:theorem_19}.
\begin{theorem}[Strong Error] \label{Theorem:strong_error}
    Let $\pmb{\theta}(\tau_N)$ and $\hat{\pmb{\theta}}_{\Delta \tau}(\tau_N)$ be the respective results generated by LD \eqref{Langevin diffusion} and the one-stage RAPTOR-GEN algorithm (i.e., Algorithm \ref{Algr:one-stage}) with the same initial condition (i.e., $\pmb{\theta}(\tau_0)=\hat{\pmb{\theta}}_{\Delta \tau}(\tau_0)$). Suppose that Assumption \ref{assumption_unique} holds, we have 
    \begin{equation*}
        \mathbb{E} ||\pmb{\theta}(\tau_N)-\hat{\pmb{\theta}}_{\Delta \tau}(\tau_N)|| = \mathcal{O} \left(\sqrt{\Delta \tau + \max_{1 \leq k < N+1} \mathbb{E} ||\pmb{R}(\pmb{\xi}_{\Delta \tau}(\tau_{k-1}))||^2}\right),
    \end{equation*}
    where $\Delta \tau$ is the step size in Algorithm \ref{Algr:one-stage}, and $\pmb{\xi}_{\Delta \tau}(\tau_{k-1}) \sim \mathcal{N} \left(\pmb{0},\pmb{\Psi}_{\Delta \tau}(\tau_{k-1})\right)$ for $k=1,2,\ldots,N$. 
\end{theorem}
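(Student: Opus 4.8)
The plan is to adapt the classical strong-convergence analysis of the Euler--Maruyama scheme \citep{kloeden1992numerical} to the perturbed-drift setting of LD-LNA. By Lemma~\ref{lemma:error_analysis}, $\hat{\pmb{\theta}}_{\Delta \tau}(\tau_N)$ is generated by iterating \eqref{eq:Ito_2}, which is exactly an Euler--Maruyama discretization of LD \eqref{Langevin diffusion} whose drift $\hat{\pmb{a}} = \pmb{a} - \pmb{R}$ carries the extra Taylor remainder. Crucially, both \eqref{Langevin diffusion} and \eqref{eq:Ito_2} are driven by the same Brownian path with the identical constant diffusion coefficient $\sqrt{2}$, so the stochastic integrals cancel in the difference and the entire error propagates through the drift discrepancy alone. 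This is the structural feature that makes the stated square-root rate attainable and that concentrates the obstruction to strong convergence in the remainder term.

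First I would introduce the continuous-time interpolation of the discrete iterates, with drift frozen at the most recent grid point $\tau_{\eta(s)} := \max\{\tau_k : \tau_k \leq s\}$, so that $\hat{\pmb{\theta}}_{\Delta \tau}(\tau) = \hat{\pmb{\theta}}_{\Delta \tau}(0) + \int_0^\tau \hat{\pmb{a}}(\hat{\pmb{\theta}}_{\Delta \tau}(\tau_{\eta(s)}))\, ds + \int_0^\tau \sqrt{2}\, d\pmb{W}(s)$. Writing $\pmb{e}(\tau) := \pmb{\theta}(\tau) - \hat{\pmb{\theta}}_{\Delta \tau}(\tau)$ and using $\pmb{e}(0) = \pmb{0}$ (shared initial condition), the Brownian terms cancel, giving $\pmb{e}(\tau) = \int_0^\tau [\pmb{a}(\pmb{\theta}(s)) - \hat{\pmb{a}}(\hat{\pmb{\theta}}_{\Delta \tau}(\tau_{\eta(s)}))]\, ds$. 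The norm of this integrand is exactly the quantity decomposed before the theorem statement; bounding it via the Lipschitz condition of Assumption~\ref{assumption_unique} yields $\|\pmb{a}(\pmb{\theta}(s)) - \hat{\pmb{a}}(\hat{\pmb{\theta}}_{\Delta \tau}(\tau_{\eta(s)}))\| \leq C_1\|\pmb{\theta}(s) - \hat{\pmb{\theta}}_{\Delta \tau}(\tau_{\eta(s)})\| + \|\pmb{R}(\pmb{\xi}_{\Delta \tau}(\tau_{\eta(s)}))\|$, and I would further split $\|\pmb{\theta}(s) - \hat{\pmb{\theta}}_{\Delta \tau}(\tau_{\eta(s)})\| \leq \|\pmb{e}(s)\| + \|\hat{\pmb{\theta}}_{\Delta \tau}(s) - \hat{\pmb{\theta}}_{\Delta \tau}(\tau_{\eta(s)})\|$. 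The three resulting pieces are precisely the LNA error (carried by the running error $\|\pmb{e}(s)\|$), the discretization error (the one-step increment), and the Taylor remainder identified in the text.

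Next I would control the one-step increment $\hat{\pmb{\theta}}_{\Delta \tau}(s) - \hat{\pmb{\theta}}_{\Delta \tau}(\tau_{\eta(s)}) = \hat{\pmb{a}}(\hat{\pmb{\theta}}_{\Delta \tau}(\tau_{\eta(s)}))(s - \tau_{\eta(s)}) + \sqrt{2}(\pmb{W}(s) - \pmb{W}(\tau_{\eta(s)}))$: the Brownian increment contributes $2 N_\theta \Delta \tau$ to the mean square, while the drift contribution is $\mathcal{O}((\Delta \tau)^2)$ once the linear-growth condition of Assumption~\ref{assumption_unique} and a uniform second-moment bound $\sup_{0 \leq \tau \leq T} \mathbb{E}\|\hat{\pmb{\theta}}_{\Delta \tau}(\tau)\|^2 < \infty$ are in place. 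Applying Cauchy--Schwarz to the time integral and collecting terms yields an integral inequality of Gronwall type,
\begin{equation*}
    \mathbb{E}\|\pmb{e}(\tau)\|^2 \leq C \int_0^\tau \mathbb{E}\|\pmb{e}(s)\|^2\, ds + C\Big(\Delta \tau + \max_{1 \leq k < N+1} \mathbb{E}\|\pmb{R}(\pmb{\xi}_{\Delta \tau}(\tau_{k-1}))\|^2\Big),
\end{equation*}
with $C$ depending only on $C_1$, $N_\theta$, $T$ and the moment bound, but not on $\Delta \tau$. Gronwall's inequality over $[0,T]$ then gives $\mathbb{E}\|\pmb{e}(\tau_N)\|^2 = \mathcal{O}(\Delta \tau + \max_{1 \leq k < N+1}\mathbb{E}\|\pmb{R}(\pmb{\xi}_{\Delta \tau}(\tau_{k-1}))\|^2)$, and Jensen's inequality $\mathbb{E}\|\pmb{e}(\tau_N)\| \leq (\mathbb{E}\|\pmb{e}(\tau_N)\|^2)^{1/2}$ produces the claimed square-root bound.

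The hard part will be establishing the uniform second-moment bound on the interpolated iterates despite the nonstandard drift $\hat{\pmb{a}} = \pmb{a} - \pmb{R}$: the remainder $\pmb{R}(\pmb{\xi}_{\Delta \tau}(\tau_{k-1}))$ depends on the perturbation $\pmb{\xi}_{\Delta \tau}(\tau_{k-1})$ whose covariance $\pmb{\Psi}_{\Delta \tau}(\tau_{k-1})$ is itself produced by the coupled iteration \eqref{SDE_cov_discrete_onestage}, so one must verify these contributions stay bounded uniformly in $\Delta \tau$ and enter the final estimate only through the maximum $\max_{1 \leq k < N+1}\mathbb{E}\|\pmb{R}(\pmb{\xi}_{\Delta \tau}(\tau_{k-1}))\|^2$ rather than accumulating additively across the $N$ steps. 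Keeping the Gronwall constant $C$ free of $\Delta \tau$ while routing the remainder cleanly through the decomposition is the delicate bookkeeping that drives the argument and explains why, as anticipated in the remark preceding the theorem, strong convergence holds only when this remainder maximum vanishes.
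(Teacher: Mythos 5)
Your overall architecture coincides with the paper's proof: the same frozen-drift interpolation of the iterates, the same cancellation of the Brownian terms so that the error is a pure time integral of the drift discrepancy, the same Lipschitz splitting into running error, time-offset term, and Taylor remainder, followed by Gr\"{o}nwall and Jensen. Note also that the remainder entering only through $\max_{1\le k<N+1}\mathbb{E}\|\pmb{R}(\pmb{\xi}_{\Delta\tau}(\tau_{k-1}))\|^2$ rather than accumulating additively is automatic in both treatments, since each step contributes $\mathbb{E}\|\pmb{R}(\pmb{\xi}_{\Delta\tau}(\tau_{k-1}))\|^2\,\Delta\tau$ and summing $N=T/\Delta\tau$ such terms gives at most $T$ times the maximum; the bookkeeping you flag as delicate is not where the difficulty lies.

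The one genuine divergence is where you place the time-offset, and it is exactly what creates your unresolved step. You split $\|\pmb{\theta}(s)-\hat{\pmb{\theta}}_{\Delta\tau}(\tau_{\eta(s)})\| \le \|\pmb{e}(s)\| + \|\hat{\pmb{\theta}}_{\Delta\tau}(s)-\hat{\pmb{\theta}}_{\Delta\tau}(\tau_{\eta(s)})\|$, measuring time-continuity on the \emph{scheme}, which forces the uniform bound $\sup_{0\le\tau\le T}\mathbb{E}\|\hat{\pmb{\theta}}_{\Delta\tau}(\tau)\|^2<\infty$ for iterates driven by the perturbed drift $\hat{\pmb{a}}=\pmb{a}-\pmb{R}$. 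You correctly identify this as the hard part but leave it unproven, and it is genuinely awkward: any such bound inherits a dependence on the remainder moments through the coupled covariance iteration \eqref{SDE_cov_discrete_onestage}, so the constant hidden in your Gr\"{o}nwall inequality would implicitly depend on $\max_k\mathbb{E}\|\pmb{R}(\pmb{\xi}_{\Delta\tau}(\tau_{k-1}))\|^2$, muddying the claimed $\mathcal{O}$ statement. The paper sidesteps this entirely by splitting $a_i(\pmb{\theta}(\tau))-a_i(\hat{\pmb{\theta}}_{\Delta\tau}(\tau_{k-1}))$ through $\pmb{\theta}(\tau_{k-1})$, i.e., measuring time-continuity on the \emph{true} LD solution, whose increments satisfy $\mathbb{E}\|\pmb{\theta}(\tau)-\pmb{\theta}(\tau_{k-1})\|^2\le D_3(\tau-\tau_{k-1})$ by Lemma \ref{lemma:Kloeden_Platen} using only the initialization, Lipschitz, and linear-growth conditions of Assumption \ref{assumption_unique}; no moment bound on the scheme is ever required, and the resulting constant is free of the remainder. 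If you adopt that single swap in the decomposition, your proof closes; as written, the uniform second-moment bound on the interpolated iterates is a missing step, not merely a technicality. (A cosmetic difference: the paper runs a per-step componentwise Gr\"{o}nwall followed by an explicit discrete geometric recursion with a $\Delta\tau\to 0$ limit via L'Hospital, where you use one continuous Gr\"{o}nwall over $[0,T]$; these are equivalent.)
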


Theorem~\ref{Theorem:weak_error} shows that the one-stage RAPTOR-GEN algorithm achieves weak convergence of the same order as ordinary LD using the Euler-Maruyama approximation with fixed step size $\Delta \tau$. Notably, the Taylor expansion remainder appears only as a coefficient of $\Delta \tau$ (see the proof in \ref{proof:theorem_21}). In Bayesian settings, expectations of certain functions of the model parameters, denoted as $\mathbb{E}[f(\pmb{\theta})]$, are commonly used. Theorem~\ref{Theorem:weak_error} establishes a useful property of the one-stage RAPTOR-GEN algorithm in this context, enhancing its applicability to Bayesian learning tasks. A notable example is the posterior predictive distribution, where the function is defined as $f(\pmb{\theta}) := p(\pmb{s}_{t^*}|\pmb{\theta})$ with $\pmb{s}_{t^*}$ representing the state variables at any prediction time $t^*$. Hence, when calculating some statistics related to $f(\pmb{\theta})$, the one-stage RAPTOR-GEN algorithm can be a good alternative to ordinary discretized LD (i.e., ULA).
\begin{theorem}[Weak Convergence] \label{Theorem:weak_error}
    Let $\pmb{\theta}(\tau_N)$ and $\hat{\pmb{\theta}}_{\Delta \tau}(\tau_N)$ be the respective results generated by LD \eqref{Langevin diffusion} and the one-stage RAPTOR-GEN algorithm (i.e., Algorithm \ref{Algr:one-stage}) with the same initial condition (i.e., $\pmb{\theta}(\tau_0)=\hat{\pmb{\theta}}_{\Delta \tau}(\tau_0)$). Suppose that Assumption \ref{assumption_unique} holds, for any continuous differentiable and polynomial growth function $f$, we have
    \begin{equation*}
        \left|\mathbb{E}[f(\pmb{\theta}(\tau_N))] - \mathbb{E}[f(\hat{\pmb{\theta}}_{\Delta \tau}(\tau_N))]\right| = \mathcal{O}\left(\Delta \tau\right),
    \end{equation*}
    where $\Delta \tau$ is the step size in Algorithm \ref{Algr:one-stage}.
\end{theorem}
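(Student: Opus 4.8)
The plan is to follow the classical weak-error analysis for Euler–Maruyama schemes via the backward Kolmogorov equation (the Talay–Tubaro approach), treating the one-stage RAPTOR-GEN iteration in its equivalent form \eqref{eq:Ito_2} from Lemma~\ref{lemma:error_analysis} as an Euler discretization of LD \eqref{Langevin diffusion} whose drift $\pmb{a}(\pmb{\theta}) = \nabla_{\pmb{\theta}}\log p(\pmb{\theta}\,|\,\mathcal{D}_M^H)$ has been perturbed by the Taylor remainder $\pmb{R}(\pmb{\xi}_{\Delta\tau})$. First I would introduce the value function $u(\tau, \pmb{x}) := \mathbb{E}[f(\pmb{\theta}(\tau_N))\,|\,\pmb{\theta}(\tau) = \pmb{x}]$, where $\pmb{\theta}$ solves the exact LD, and recall that $u$ satisfies the terminal-value problem $\partial_\tau u + \mathcal{L}u = 0$ with $u(\tau_N,\cdot) = f$ and generator $\mathcal{L}\phi = \pmb{a}\cdot\nabla_{\pmb{\theta}}\phi + \Delta_{\pmb{\theta}}\phi$, the Laplacian arising from the $\sqrt{2}$ diffusion coefficient. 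Under the Lipschitz and linear-growth conditions of Assumption~\ref{assumption_unique} (which, as already verified, yield a unique It\^o solution of \eqref{eq:Ito_1}) together with the smoothness of $\log p$ from Assumption~\ref{assumption_lna}, $u$ inherits bounded, polynomial-growth spatial derivatives up to the order needed below.

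Next I would write the global weak error as a telescoping sum over the time grid. Since $\pmb{\theta}(\tau_0) = \hat{\pmb{\theta}}_{\Delta\tau}(\tau_0)$, we have $u(\tau_0, \hat{\pmb{\theta}}_{\Delta\tau}(\tau_0)) = \mathbb{E}[f(\pmb{\theta}(\tau_N))]$ and $u(\tau_N, \hat{\pmb{\theta}}_{\Delta\tau}(\tau_N)) = f(\hat{\pmb{\theta}}_{\Delta\tau}(\tau_N))$, so that $\mathbb{E}[f(\hat{\pmb{\theta}}_{\Delta\tau}(\tau_N))] - \mathbb{E}[f(\pmb{\theta}(\tau_N))] = \sum_{k=1}^{N}\mathbb{E}[u(\tau_k, \hat{\pmb{\theta}}_{\Delta\tau}(\tau_k)) - u(\tau_{k-1}, \hat{\pmb{\theta}}_{\Delta\tau}(\tau_{k-1}))]$. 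Each summand is a one-step local error: conditioning on $\hat{\pmb{\theta}}_{\Delta\tau}(\tau_{k-1})$ and Taylor-expanding $u(\tau_k,\cdot)$ in space against the increment $\hat{\pmb{a}}\Delta\tau + \sqrt{2}\Delta\pmb{W}(\tau_{k-1})$, while expanding $u(\tau_{k-1},\cdot)$ in time through $\partial_\tau u = -\mathcal{L}u$, the leading $\Delta\tau\,\mathcal{L}u$ contributions cancel by construction of $u$. Using the increment moments $\mathbb{E}[\Delta\pmb{W}] = \pmb{0}$ and $\mathbb{E}[\Delta\pmb{W}\Delta\pmb{W}^\top] = \text{diag}\{\Delta\tau\}$ (so that the second-order term reproduces $\Delta\tau\,\Delta_{\pmb\theta} u$), the standard Euler terms leave a remainder of order $\Delta\tau^2$ per step, which accumulates over $N = T/\Delta\tau$ steps to $\mathcal{O}(\Delta\tau)$, recovering the weak order of ordinary ULA.

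The one extra contribution, absent in ordinary Euler–Maruyama, comes from the drift perturbation $\hat{\pmb{a}} = \pmb{a} - \pmb{R}(\pmb{\xi}_{\Delta\tau})$: the first-order term of the spatial Taylor expansion produces $-\Delta\tau\,\mathbb{E}[\nabla_{\pmb{\theta}}u\cdot\pmb{R}(\pmb{\xi}_{\Delta\tau}(\tau_{k-1}))]$ in each summand. I would estimate this using the explicit form $\pmb{R}(\pmb{\xi}) = (\mathcal{O}(\|\pmb{\xi}\|)\pmb{1}_{N_\theta\times N_\theta})^\top\pmb{\xi}$ from \eqref{eq:Taylor}, the Gaussianity $\pmb{\xi}_{\Delta\tau}(\tau_{k-1})\sim\mathcal{N}(\pmb{0}, \pmb{\Psi}_{\Delta\tau}(\tau_{k-1}))$ with the moment bound of Assumption~\ref{assumption_0}(d), and the polynomial-growth bound on $\nabla_{\pmb{\theta}}u$; since this factor enters already multiplied by $\Delta\tau$, the remainder manifests in the final estimate only as a coefficient of $\Delta\tau$, exactly as claimed.

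The hardest part is controlling this drift-perturbation contribution $-\Delta\tau\sum_{k=1}^N\mathbb{E}[\nabla_{\pmb{\theta}}u\cdot\pmb{R}(\pmb{\xi}_{\Delta\tau}(\tau_{k-1}))]$: because each of the $N = T/\Delta\tau$ summands already carries a factor $\Delta\tau$ while $\pmb{R}$ is an $\mathcal{O}(\|\pmb{\xi}\|^2)$ term whose mean need not vanish, one must show that it enters only through the coefficient multiplying $\Delta\tau$ and does not accumulate so as to degrade the order. This requires combining the structural bound on $\pmb{R}$, uniform-in-$k$ control of $\mathbb{E}\|\pmb{R}(\pmb{\xi}_{\Delta\tau}(\tau_{k-1}))\|$ via boundedness of $\mathrm{tr}(\pmb{\Psi}_{\Delta\tau}(\tau_k))$ along the iteration (tied to the condition $\mathrm{tr}(\pmb{\Psi}^*(\infty))\to 0$ appearing in Theorem~\ref{Theorem:Fokker-Planck}), and the polynomial-growth regularity of $u$ and its derivatives. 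Establishing this uniform regularity and moment control along the discrete trajectory is the technical core; once it is in place, the telescoping estimate yields $|\mathbb{E}[f(\pmb{\theta}(\tau_N))] - \mathbb{E}[f(\hat{\pmb{\theta}}_{\Delta\tau}(\tau_N))]| = \mathcal{O}(\Delta\tau)$.
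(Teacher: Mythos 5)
Your overall skeleton is the same as the paper's: the paper also introduces the value function $u(\tau,\pmb{\phi})=\mathbb{E}\left[f(\pmb{\theta}(\tau_N))\,\middle|\,\pmb{\theta}(\tau)=\pmb{\phi}\right]$ and its Kolmogorov backward equation via the Feynman--Kac formula (Lemma~\ref{lemma:Feynman-Kac}), treats the scheme through its equivalent form \eqref{eq:Ito_2}, and isolates the drift perturbation $\pmb{R}(\pmb{\xi}_{\Delta\tau})$. The only presentational difference is that the paper applies It\^{o}'s formula (Theorem~\ref{them:Ito_formula}) to $u(\tau,\hat{\pmb{\theta}}(\tau))$ along the interpolated process, so the cancellation $\partial_\tau u+\mathcal{L}u=0$ is exact and the weak error reduces to $\sum_i\int_0^T\mathbb{E}\left[(\hat{a}_i-a_i)\,\partial_{\phi_i}u\right]d\tau$; the pure discretization mismatch $\rho_i=\left(a_i(\hat{\pmb{\theta}}_{\Delta\tau}(\tau_{k-1}))-a_i(\hat{\pmb{\theta}}(\tau))\right)\partial_{\phi_i}u$ is then bounded by $M_{i,k}\Delta\tau$ through a second It\^{o} application, using that $\rho_i$ vanishes at each grid point $\tau_{k-1}$. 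Your discrete Talay--Tubaro telescoping with matched increment moments is an equivalent rendering of that part, and it is fine.

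The genuine gap is in your handling of the perturbation term, which is the decisive step of this theorem. You correctly identify that each step contributes $-\Delta\tau\,\mathbb{E}\left[\nabla_{\pmb{\theta}}u\cdot\pmb{R}(\pmb{\xi}_{\Delta\tau}(\tau_{k-1}))\right]$, but the mechanism you propose for closing it --- uniform-in-$k$ boundedness of $\mathbb{E}\|\pmb{R}(\pmb{\xi}_{\Delta\tau}(\tau_{k-1}))\|$ via trace control of $\pmb{\Psi}_{\Delta\tau}$ --- only yields boundedness, not smallness: with $N=T/\Delta\tau$ steps the accumulated contribution is then $T\sup_k\left|\mathbb{E}\left[\nabla_{\pmb{\theta}}u\cdot\pmb{R}\right]\right|=\mathcal{O}(1)$, which would degrade the weak order exactly the way the remainder degrades the strong rate in Theorem~\ref{Theorem:strong_error}; the entire content of Theorem~\ref{Theorem:weak_error} is that this does \emph{not} happen. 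What is needed is that the per-step expectation is itself $\mathcal{O}(\Delta\tau)$. The paper produces this extra factor by conditioning on $\pmb{\xi}_{\Delta\tau}(\tau_{k-1})$ and evaluating the Gaussian mean of the remainder in closed form as the trace $L_{i,k}=\text{tr}\left(\pmb{\varphi}^*_{\Delta\tau}(\tau_{k-1}){\pmb{\varphi}^*_{\Delta\tau}}^\top(\tau_{k-1})\mathcal{O}_i^\top(1)+\pmb{\Psi}^*_{\Delta\tau}(\tau_{k-1})\mathcal{O}_i^\top(1)\right)$ (cf.\ Equation~\eqref{eq:E_R}), pairing it with the bound $\mathbb{E}\left[\partial_{\phi_i}u(\tau,\hat{\pmb{\theta}}(\tau))\right]\le N_{i,k}\Delta\tau$, so that each perturbation contribution enters as $L_{i,k}N_{i,k}\Delta\tau$ per unit time and integrates to $TC_{\max}\Delta\tau$; this is precisely the sense in which ``the Taylor remainder appears only as a coefficient of $\Delta\tau$.'' Your sketch gestures at the right ingredients but never produces this additional factor of $\Delta\tau$. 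Two smaller mismatches: you invoke Assumption~\ref{assumption_0}(d) and the condition $\text{tr}(\pmb{\Psi}^*(\infty))\to 0$ from Theorem~\ref{Theorem:Fokker-Planck}, but Theorem~\ref{Theorem:weak_error} is stated under Assumption~\ref{assumption_unique} alone, so neither hypothesis is available to you here.
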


\subsection{Two-Stage Iterative Algorithm for Efficient Computation} \label{subsec:two-stage}

The one-stage RAPTOR-GEN algorithm (Algorithm~\ref{Algr:one-stage}) offers a principled approach for computing the two fundamental components of the LD-LNA framework. However, it requires evaluating both the gradient and the Hessian of the log-posterior at each iteration, resulting in a total of $N$ times gradient plus $N$ times Hessian calculations. This leads to a substantial computational burden, especially as the data size $M(H+1)$ or the parameter dimension $N_\theta$ increases. The cost arises from the fact that both the gradient and Hessian are derived from the explicit likelihood formulation based on the Bayesian updating pKG-LNA metamodel. 

It is important to recognize that Equation \eqref{SDE_cov_discrete_onestage} is coupled with Equation \eqref{ODE_discrete_onestage} through the parameter trajectory $\bar{\pmb{\theta}}_{\Delta \tau}(\tau_k)$. Notably, Equation~\eqref{ODE_discrete_onestage} remains independent of the covariance term $\pmb{\Psi}_{\Delta \tau}(\tau_k)$ appearing in Equation \eqref{SDE_cov_discrete_onestage}. This structural decoupling motivates the formulation of a two-stage solution scheme, formally introduced in Proposition \ref{prop:iterative}. The key distinction from the one-stage scheme lies in the treatment of $\bar{\pmb{\theta}}_{\Delta \tau}(\tau_k)$ during the iteration of Equation \eqref{SDE_cov_discrete_onestage}. In the two-stage scheme, this parameter is fixed as $\bar{\pmb{\theta}}_{\Delta \tau}(\tau_{N_1})$, as used in Equation~\eqref{SDE_cov_discrete}. Consequently, the Hessian of the log-posterior is evaluated only once. Overall, the two-stage scheme computes the gradient $N_1$ times and the Hessian once, achieving a significant gain in computational efficiency under the same termination criteria, where $N \geq N_1$.
\begin{proposition} \label{prop:iterative}
    Two components $\bar{\pmb{\theta}}^*$ and $(-\nabla_{\pmb{\theta}}^2 \log p( \pmb{\theta} | \mathcal{D}_M^H) |_{\pmb{\theta} = \bar{\pmb{\theta}}^*})^{-1}$ in LD-LNA can be obtained by first running the iterative process, i.e.,
    \begin{equation}
        \bar{\pmb{\theta}}_{\Delta \tau}(\tau_{k+1}) := \bar{\pmb{\theta}}_{\Delta \tau}(\tau_k) + \nabla_{\pmb{\theta}} \log p\left( \pmb{\theta} \big| \mathcal{D}_M^H\right) \big|_{\pmb{\theta} = \bar{\pmb{\theta}}_{\Delta \tau}(\tau_k)} \Delta \tau \label{ODE_discrete}
    \end{equation}
    from initial value $\bar{\pmb{\theta}}_{\Delta \tau}(\tau_0)$ for iteration steps $N_1$ so that $\bar{\pmb{\theta}}^*:=\bar{\pmb{\theta}}_{\Delta \tau}(\tau_{N_1})$, and then iteratively solving
    \begin{align}
        \pmb{\Psi}^*_{\Delta \tau}(\tau_{k+1}) &:= \pmb{\Psi}^*_{\Delta \tau}(\tau_k) + \left[ \pmb{\Psi}^*_{\Delta \tau}(\tau_k) \left(\nabla_{\pmb{\theta}}^2 \log p\left( \pmb{\theta} \big| \mathcal{D}_M^H\right) \big|_{\pmb{\theta} = \bar{\pmb{\theta}}_{\Delta \tau}(\tau_{N_1})}\right)^\top \right. \nonumber \\
        &\quad \left. + \nabla_{\pmb{\theta}}^2 \log p\left( \pmb{\theta} \big| \mathcal{D}_M^H\right) \big|_{\pmb{\theta} = \bar{\pmb{\theta}}_{\Delta \tau}(\tau_{N_1})} \pmb{\Psi}^*_{\Delta \tau}(\tau_k) + 2 \pmb{I}_{N_\theta \times N_\theta} \right] \Delta \tau \label{SDE_cov_discrete}
    \end{align}
    from $\pmb{\Psi}^*_{\Delta \tau}(\tau_{N_1})$ for iteration steps $N_2$ so that $(-\nabla_{\pmb{\theta}}^2 \log p( \pmb{\theta} | \mathcal{D}_M^H) |_{\pmb{\theta} = \bar{\pmb{\theta}}^*})^{-1} := \pmb{\Psi}^*_{\Delta \tau}(\tau_{N_1+N_2})$, where $\Delta \tau$ is the fixed step size, $N_1$ and $N_2$ are the iteration steps until some specified termination criteria are satisfied. Then we obtain the approximate LD-LNA denoted by $\hat{\pmb{\theta}}_{\Delta \tau} \sim \mathcal{N}(\bar{\pmb{\theta}}_{\Delta \tau}(\tau_{N_1}), \pmb{\Psi}^*_{\Delta \tau}(\tau_{N_1+N_2}))$.
\end{proposition}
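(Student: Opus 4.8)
The plan is to show that the two-stage scheme recovers exactly the same two components, $\bar{\pmb{\theta}}^*$ and $(-\nabla_{\pmb{\theta}}^2 \log p( \pmb{\theta} | \mathcal{D}_M^H) |_{\pmb{\theta} = \bar{\pmb{\theta}}^*})^{-1}$, as the one-stage scheme in Proposition~\ref{prop:one-stage}, so that the resulting LD-LNA distribution is unchanged. The starting point is the decoupling observation already noted in the text: the iteration \eqref{ODE_discrete} for $\bar{\pmb{\theta}}_{\Delta \tau}(\tau_k)$ is identical to \eqref{ODE_discrete_onestage} and does not involve the covariance $\pmb{\Psi}_{\Delta \tau}(\tau_k)$. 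Consequently, the first-stage trajectory $\{\bar{\pmb{\theta}}_{\Delta \tau}(\tau_k)\}$ coincides with the one generated by the one-stage scheme, and at convergence it yields the same equilibrium $\bar{\pmb{\theta}}^* = \bar{\pmb{\theta}}_{\Delta \tau}(\tau_{N_1})$ satisfying $\nabla_{\pmb{\theta}} \log p( \pmb{\theta} | \mathcal{D}_M^H)|_{\pmb{\theta} = \bar{\pmb{\theta}}^*} = \pmb{0}$. This reproduces the first component directly from the convergence of the ODE \eqref{ODE} established in Section~\ref{subsec:lna}.

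For the second component, I would recognize that, with the drift point frozen at $\bar{\pmb{\theta}}_{\Delta \tau}(\tau_{N_1})$, the iteration \eqref{SDE_cov_discrete} is precisely the forward-Euler discretization of the covariance ODE \eqref{SDE_cov_star} associated with the Ornstein--Uhlenbeck process \eqref{SDE_star}. Its fixed point is therefore the solution of the continuous-time Lyapunov equation $\pmb{\Psi} \pmb{A}^\top + \pmb{A} \pmb{\Psi} + 2\pmb{I}_{N_\theta \times N_\theta} = \pmb{0}$ with $\pmb{A} := \nabla_{\pmb{\theta}}^2 \log p( \pmb{\theta} | \mathcal{D}_M^H)|_{\pmb{\theta} = \bar{\pmb{\theta}}^*}$. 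Since $\bar{\pmb{\theta}}^*$ is a local maximum of the log-posterior, $\pmb{A}$ is symmetric and negative definite; hence the Lyapunov operator $\mathcal{L}(\pmb{\Psi}) = \pmb{\Psi}\pmb{A} + \pmb{A}\pmb{\Psi}$ is invertible (its eigenvalues are all nonzero), guaranteeing a unique solution, and direct substitution confirms it is $\pmb{\Psi}^* = -\pmb{A}^{-1} = (-\nabla_{\pmb{\theta}}^2 \log p( \pmb{\theta} | \mathcal{D}_M^H)|_{\pmb{\theta} = \bar{\pmb{\theta}}^*})^{-1}$. This is exactly the equilibrium identified in Theorem~\ref{xi_stationary_dist}, and it coincides with the fixed point of the one-stage covariance iteration \eqref{SDE_cov_discrete_onestage} obtained by setting its increment to zero with $\bar{\pmb{\theta}}_{\Delta \tau}(\tau_k) \to \bar{\pmb{\theta}}^*$.

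The key technical step is to verify that the discretized iteration \eqref{SDE_cov_discrete} actually converges to this fixed point for a suitably small step size. Vectorizing the symmetric-matrix iteration, the update becomes the affine map $\text{vec}(\pmb{\Psi}) \mapsto (\pmb{I} + \Delta \tau \, \mathcal{L})\,\text{vec}(\pmb{\Psi}) + 2\Delta\tau \, \text{vec}(\pmb{I}_{N_\theta \times N_\theta})$, where $\mathcal{L} = \pmb{A} \oplus \pmb{A}$ is the Kronecker sum whose eigenvalues are $\lambda_i + \lambda_j$ with $\lambda_i$ the strictly negative eigenvalues of $\pmb{A}$. Because every such eigenvalue is negative, the linear part has spectral radius below one whenever $\Delta \tau < 2/\max_{i,j}|\lambda_i + \lambda_j|$, so the iterates contract geometrically to the unique fixed point $\pmb{\Psi}^*$. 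I expect this spectral analysis of the Lyapunov operator to be the main obstacle, since it requires transferring the negative definiteness of $\pmb{A}$ through the Kronecker-sum structure to guarantee stability of the Euler scheme; everything else follows from the decoupling and the fixed-point characterization. Combining the two stages then yields $\hat{\pmb{\theta}}_{\Delta \tau} \sim \mathcal{N}(\bar{\pmb{\theta}}_{\Delta \tau}(\tau_{N_1}), \pmb{\Psi}^*_{\Delta \tau}(\tau_{N_1+N_2}))$ with both parameters matching Proposition~\ref{prop:one-stage}, completing the argument.
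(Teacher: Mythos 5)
Your proposal is correct, and its first two steps follow the paper's own route: the paper treats Proposition~\ref{prop:iterative} as a solution scheme justified by the structural decoupling of the $\bar{\pmb{\theta}}$-recursion \eqref{ODE_discrete} from the covariance (noted immediately before the proposition), together with Euler discretization of the frozen-drift covariance ODE \eqref{SDE_cov_star}, whose limit is supplied by Theorem~\ref{xi_stationary_dist} — proved there exactly as you argue, via the Lyapunov equation $\pmb{\Psi}\pmb{A}^\top + \pmb{A}\pmb{\Psi} + 2\pmb{I}_{N_\theta \times N_\theta} = \pmb{0}$ with $\pmb{A} = \nabla_{\pmb{\theta}}^2 \log p(\pmb{\theta}|\mathcal{D}_M^H)|_{\pmb{\theta}=\bar{\pmb{\theta}}^*}$ symmetric negative definite (Lemma~\ref{Hurwitz}) and unique solution $-\pmb{A}^{-1}$. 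Where you genuinely go beyond the paper is your third step: vectorizing \eqref{SDE_cov_discrete} into the affine map with linear part $\pmb{I} + \Delta\tau\left(\pmb{A}\oplus\pmb{A}\right)$ and using the Kronecker-sum spectrum $\{\lambda_i+\lambda_j\}$ to obtain geometric contraction of the \emph{discrete} covariance iterates whenever $\Delta\tau < 2/\max_{i,j}|\lambda_i+\lambda_j|$. The paper never analyzes the discrete recursion itself — its convergence theory (Theorems~\ref{Theorem:strong_error} and \ref{Theorem:weak_error}) concerns the one-stage algorithm's strong/weak error relative to the LD, and systematic theory for the two-stage scheme is explicitly deferred to future work — so your spectral argument adds an explicit step-size threshold and rate that the paper only enforces through termination criteria; the continuous-time stationarity result alone does not guarantee that the Euler iterates converge, so this is a substantive strengthening rather than decoration. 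Two small caveats: convergence of the Stage-1 gradient-ascent recursion to $\bar{\pmb{\theta}}^*$ also needs a step-size condition (e.g., $\Delta\tau \le 1/L$ under the $L$-smoothness of Assumption~\ref{assumption_standing}), which you presuppose just as the paper does; and your Stage-2 fixed point is strictly $-\pmb{A}_{N_1}^{-1}$ with $\pmb{A}_{N_1}$ the Hessian at $\bar{\pmb{\theta}}_{\Delta\tau}(\tau_{N_1})$ rather than at $\bar{\pmb{\theta}}^*$, whose negative definiteness holds by continuity near the mode — consistent with the paper's own informal identification of the two points up to tolerance.
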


Building on Proposition \ref{prop:iterative}, we propose a two-stage iterative scheme designed to \textit{accelerate} the RAPTOR-GEN, as outlined in Algorithm \ref{Algr:MALA_LNA}. Most of the configurations and computational steps in Algorithm \ref{Algr:MALA_LNA} are directly inherited from the one-stage approach in Algorithm \ref{Algr:one-stage}. In the first stage, we iteratively solve Equation~\eqref{ODE_discrete}, starting from the initial value $\bar{\pmb{\theta}}_{\Delta \tau}(\tau_0)$ until the convergence criterion $||\bar{\pmb{\theta}}_{\Delta \tau}(\tau_{k+1})-\bar{\pmb{\theta}}_{\Delta \tau}(\tau_k)|| \leq \varepsilon_1$ is satisfied, where $\varepsilon_1$ is a predefined tolerance. Let $N_1$ denote the iteration step at which this condition is first met, or the maximum number of iterations if convergence is not achieved. In the second stage, using the solution from the first stage $\bar{\pmb{\theta}}_{\Delta \tau}(\tau_{N_1})$, we solve Equation \eqref{SDE_cov_discrete} starting from $\pmb{\Psi}^*_{\Delta \tau}(\tau_{N_1})$. The iteration proceeds until the Frobenius norm-based convergence condition $||\pmb{\Psi}^*_{\Delta \tau}(\tau_{k+1})-\pmb{\Psi}^*_{\Delta \tau}(\tau_k)||_{\rm F} \leq \varepsilon_2$ is satisfied, with $\varepsilon_2$ as the predefined tolerance. The corresponding iteration step is denoted by $N_2$, or the maximum number of iterations if the condition is not met. The two-stage structure of Algorithm~\ref{Algr:MALA_LNA} offers substantial gains in computational efficiency, as evidenced by the empirical results presented in Section~\ref{sec:numerical}. Further enhancements and systematic theoretical investigation are planned for future research.
\begin{algorithm}[th]
\DontPrintSemicolon
\KwIn{The prior $p(\pmb{\theta})$, historical observation dataset $\mathcal{D}_M^H$, initial condition $\hat{\pmb{\theta}}_{\Delta \tau}(\tau_0) \sim \mathcal{N} \left(\pmb{\theta}^*(0), \pmb{\Sigma}^*(0)\right)$, step size $\Delta \tau$, and tolerances $\varepsilon_1$ and $\varepsilon_2$ (or maximum iteration counts $N_1$ and $N_2$).}
\KwOut{Generate posterior samples $\{\pmb{\theta}^{(b)}\}_{b=1}^B \sim p(\pmb{\theta}|\mathcal{D}_M^H)$.}
{
\textbf{Stage 1:}\\
\textbf{1.} Set the initial value $\bar{\pmb{\theta}}_{\Delta \tau}(\tau_0):=\pmb{\theta}^*(0)$, $\varepsilon_{\bar{\theta}}$ to a sufficiently large number, and $k := 0$;\\
    \While{$\varepsilon_{\bar{\theta}} > \varepsilon_1$ {\rm (or \textbf{while}} $k < N_1${\rm)}}{
    \textbf{2.} Calculate $\nabla_{\pmb{\theta}} \log p\left( \pmb{\theta} | \mathcal{D}_M^H\right) |_{\pmb{\theta} = \bar{\pmb{\theta}}_{\Delta \tau}(\tau_k)}$ based on the explicit likelihood;\\
    \textbf{3.} Generate the next sample $\bar{\pmb{\theta}}_{\Delta \tau}(\tau_{k+1})$ through Equation \eqref{ODE_discrete};\\
    \textbf{4.} Calculate $\varepsilon_{\bar{\theta}} := ||\bar{\pmb{\theta}}_{\Delta \tau}(\tau_{k+1})-\bar{\pmb{\theta}}_{\Delta \tau}(\tau_k)||$, and set $k:= k+1$;}
\vspace{0.08in}
\textbf{Stage 2:}\\
\textbf{5.} Set $N_1:=k$, and calculate $\nabla_{\pmb{\theta}}^2 \log p\left( \pmb{\theta} | \mathcal{D}_M^H\right) |_{\pmb{\theta} = \bar{\pmb{\theta}}_{\Delta \tau}(\tau_{N_1})}$ based on the explicit likelihood;\\
\textbf{6.} Set $\pmb{\Psi}^*_{\Delta \tau}(\tau_{N_1}) := \pmb{\Sigma}^*(0)$, $\varepsilon_{\Psi}$ to a sufficiently large number, and $k := N_1$;\\
    \While{$\varepsilon_{\Psi} > \varepsilon_2$ {\rm (or \textbf{while}} $\tau < N_1+N_2${\rm)}}{
    \textbf{7.} Generate the next sample $\pmb{\Psi}^*_{\Delta \tau}(\tau_{k+1})$ through Equation \eqref{SDE_cov_discrete};\\
    \textbf{8.} Calculate $\varepsilon_{\Psi} := ||\pmb{\Psi}^*_{\Delta \tau}(\tau_{k+1})-\pmb{\Psi}^*_{\Delta \tau}(\tau_k)||_{\rm F}$, and set $k := k+1$;}
\vspace{0.08in} 
\textbf{9.} Set $N_2:=k-N_1$, and sample $\{\pmb{\theta}^{(b)}\}_{b=1}^B$ from LD-LNA $\mathcal{N} \left(\bar{\pmb{\theta}}_{\Delta \tau}(\tau_{N_1}), \pmb{\Psi}^*_{\Delta \tau}(\tau_{N_1+N_2})\right)$.
}
\caption{Two-stage iterative algorithm for accelerating RAPTOR-GEN.}
\label{Algr:MALA_LNA}
\end{algorithm}

\begin{remark}
    To ensure effective termination of Algorithms \ref{Algr:one-stage} and \ref{Algr:MALA_LNA}, tolerances $\varepsilon_1$ and $\varepsilon_2$ should be set sufficiently small to indicate convergence, i.e., further iterations yield negligible improvement. These \textit{deterministic} algorithms offer a key advantage over typical MCMC methods, which struggle to assess convergence during sampling. As a result, the outputs of Algorithms \ref{Algr:one-stage} and \ref{Algr:MALA_LNA} are more stable, reliable, and reproducible. Additionally, a predefined maximum number of iterations serves as a safeguard against indefinite execution in cases of slow convergence.
\end{remark}

\begin{remark}
    Choosing the step size for discretized ODEs (i.e., $\Delta \tau$ in Equations \eqref{ODE_discrete_onestage} and \eqref{SDE_cov_discrete_onestage} of Algorithm~\ref{Algr:one-stage}, and Equations \eqref{ODE_discrete} and \eqref{SDE_cov_discrete} of Algorithm~\ref{Algr:MALA_LNA}) is generally more straightforward than for discretized SDEs, such as the LD update in Equation~\eqref{update rule}. This is due to the stochastic term in SDEs, whose magnitude depends directly on the step size. An improperly chosen step size can cause parameter updates to fall outside a reasonable range, destabilizing the gradient computation of the log-posterior based on the sequential Bayesian updates in the pKG-LNA metamodel. This may result in vanishing or exploding gradients and numerical instability. As shown in Section~\ref{sec:numerical}, Algorithms~\ref{Algr:one-stage} and~\ref{Algr:MALA_LNA} demonstrate significantly greater robustness than discretized LD implementation (i.e., ULA).
\end{remark}

Recall Theorem \ref{main_result_0}, for Algorithms \ref{Algr:one-stage} and \ref{Algr:MALA_LNA}, the largest eigenvalues of $\pmb{\Psi}_{\Delta \tau}(\tau_N)$ and $\pmb{\Psi}^*_{\Delta \tau}(\tau_{N_1+N_2})$ respectively serve as estimates of $\lambda_{\max}$, and can be used as performance indicators. To further reduce approximation error, Section~\ref{subsec:LD-LNA-MALA} introduces a promising strategy that applies the Metropolis-adjusted Langevin algorithm (MALA) as a ``corrector'' step following the execution of Algorithm~\ref{Algr:one-stage} or Algorithm~\ref{Algr:MALA_LNA}. This hybrid approach enhances robustness and accuracy by leveraging MALA's ability to refine posterior samples.

\section{Empirical Study} \label{sec:numerical}

We conduct an empirical study to validate our theoretical findings and evaluate the finite-sample performance of the proposed RAPTOR-GEN. The primary focus is on scenarios involving sparse data collection from a partially observed state. In these settings, we approximate the likelihood function using the Bayesian updating pKG-LNA metamodel. For comparison, we also present results under dense data collection, which are organized in \ref{subsec:LV model}. Throughout the empirical analysis, we evaluate the following algorithms: (i) ULA, the discretized Langevin dynamics procedure defined in Equation~\eqref{update rule}; (ii) the one-stage RAPTOR-GEN algorithm (Algorithm~\ref{Algr:one-stage}); and (iii) the two-stage RAPTOR-GEN algorithm (Algorithm~\ref{Algr:MALA_LNA}). For the step size selection, we follow a similar approach to that proposed by \cite{chewi2021optimal}. As indicated by Theorem~\ref{main_result_0}, the dimension dependence of the LD-LNA algorithm scales as $N_\theta^{\frac{9}{2}}$ under certain regularity conditions. Accordingly, we set the step size as $\Delta \tau = c N_\theta^{-\frac{9}{2}}$, where $c>0$ is a tunable constant. Additionally, we assume a unit system size $\Omega:=1$, corresponding to a unit volume bioreactor in the SRN setting. Then, the molecular concentration of each species is numerically equivalent to its molecular count, simplifying the interpretation of the state variables.

\subsection{Enzyme Kinetics} \label{subsec:MM}

We begin with the enzyme kinetics model from Example~\ref{ex:MM} in Section~\ref{subsec:process_modeling}, focusing on case (i), where reaction rates follow mass-action kinetics, i.e., $\pmb{v}(\pmb{s}_t;\pmb{\theta}) = (\theta_1 s_t^1 s_t^2, \theta_2 s_t^3, \theta_3 s_t^3)^\top$. A single trajectory ($M=1$) is simulated over the interval $[0,80]$ seconds using the Gillespie algorithm, with true parameters $\pmb{\theta}^{\rm true} = \left(0.001,0.005,0.01\right)^\top$ and initial states sampled from $\mathcal{N}((50,40,60,10)^\top,\pmb{I}_{4\times 4})$. The molecular counts of Enzyme ($s_t^1$), Substrate ($s_t^2$), and Complex ($s_t^3$) are treated as latent states, while only Product count $(s_t^4)$ is observed at intervals of $\Delta t$ seconds from $t_1 = 0$ to $t_{H+1} = 80$, yielding $H + 1$ observations. Since the reaction rates do not depend directly on the Product, inference is more challenging. Observations are corrupted by additive Gaussian noise $\mathcal{N}(0, \sigma)$ with $\sigma = 4$. For model parameters $\pmb{\theta}=(k_F, k_R, k_{cat})$, we assume known values for $\theta_1$ and $\theta_2$ (Reactions 1 and 2), and aim to infer the unknown kinetic parameter $\theta_3$ (Reaction 3) and the noise level $\sigma$. This setup reflects real-world enzymatic studies, where the focus is often on estimating the enzyme turnover rate $k_{\rm cat}$ \citep{wang2024mpek}. To evaluate algorithm performance under high model uncertainty, we consider three sparse data regimes $H = 4, 8, 16$ (with $\Delta t = 20, 10, 5$ seconds, respectively). The priors are specified as $\theta_3 \sim U(0,1)$ and $\sigma \sim U(0,25)$ with a posterior sample size of $B=100$.

Figure \ref{fig:mm_inferred_trajectory} illustrates that the Bayesian updating pKG-LNA metamodel recovers well the dynamics of three completely unobserved latent state variables and one partially observed and noisy state variable, demonstrating its promising capabilities in trajectory inference. In addition, the inferred trajectories provide better fits to the true trajectories as the data size increases. 
The discontinuities observed in the inferred trajectories arise from the sequential updates of the LNA metamodel, which incorporates newly available observations at each collection time to iteratively correct approximation errors.

\begin{figure}
    \FIGURE
    {
    \subcaptionbox{$s_t^1$ (unobserved).}{\includegraphics[width=0.25\textwidth]{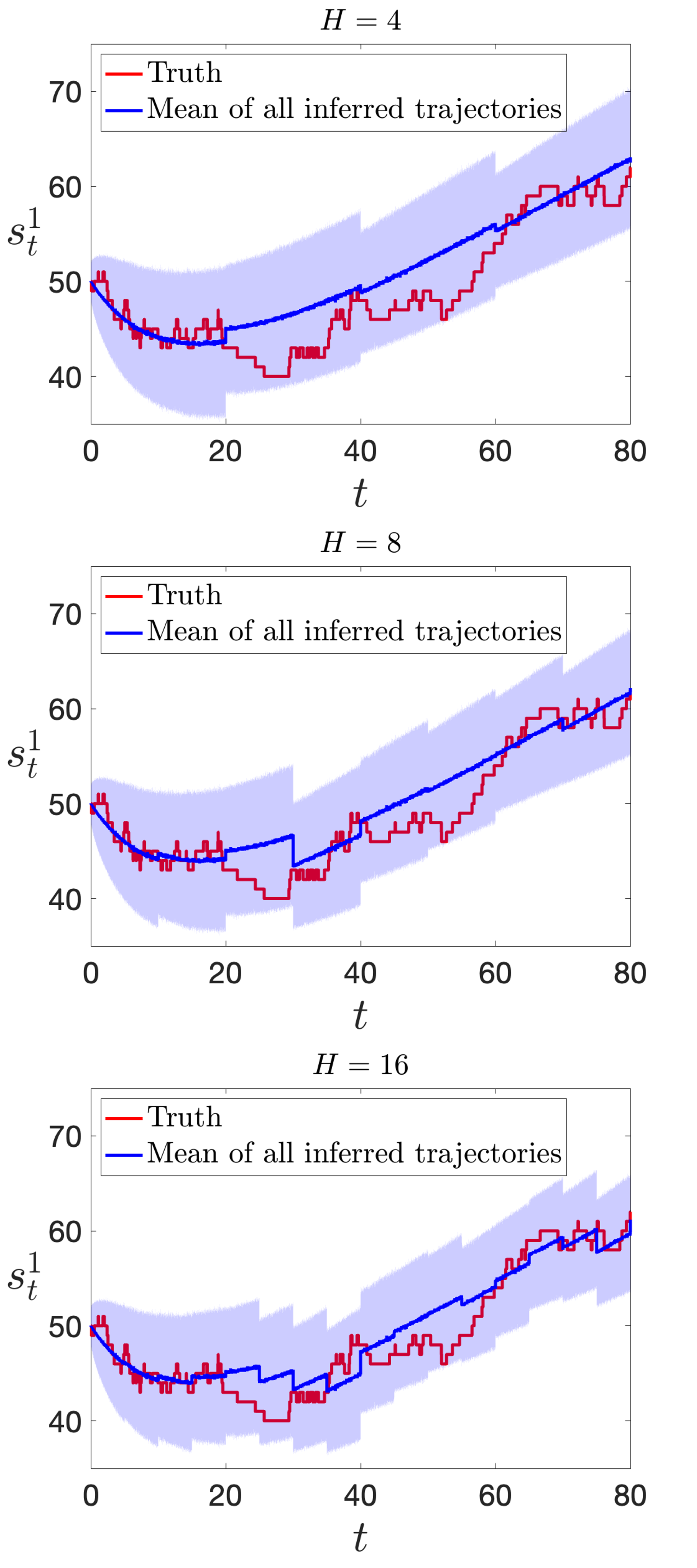}}
    \hfill\subcaptionbox{$s_t^2$ (unobserved).}{\includegraphics[width=0.25\textwidth]{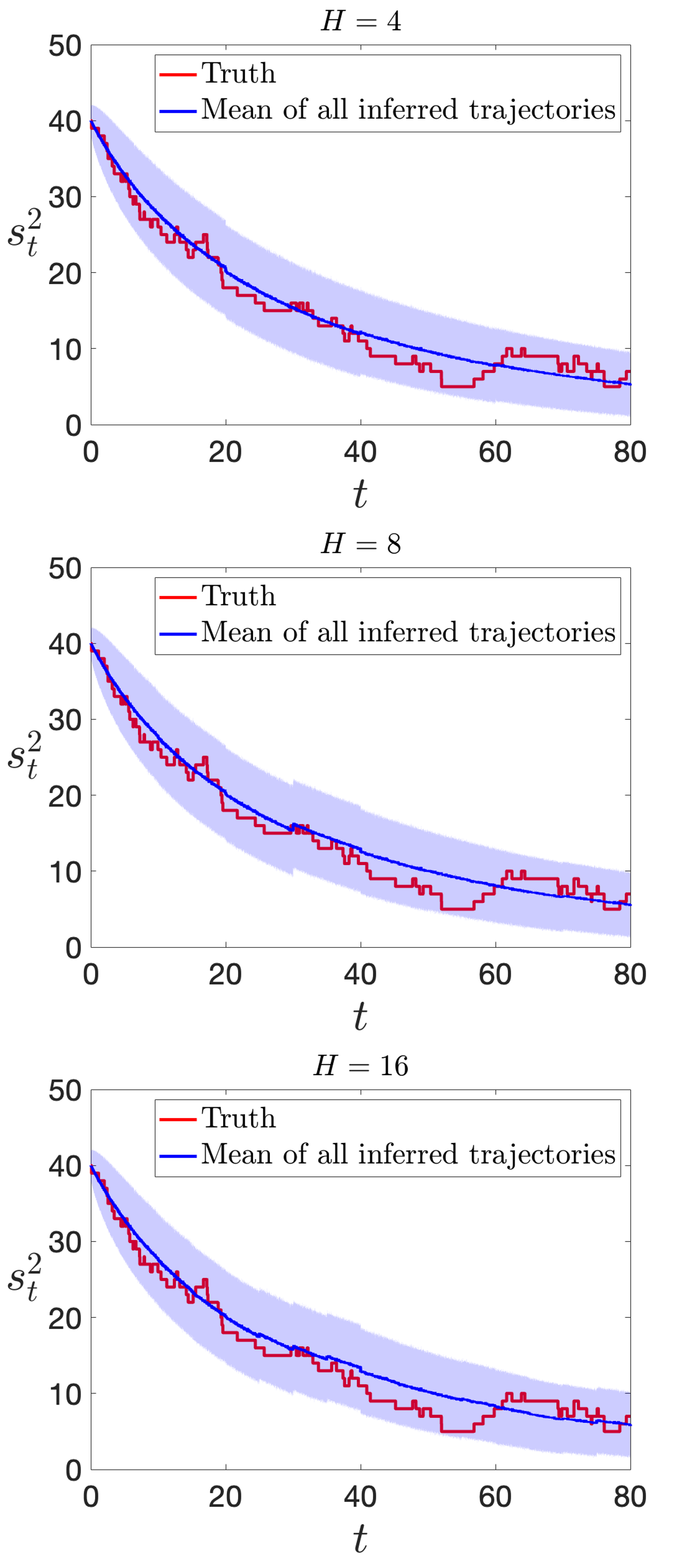}}
    \hfill\subcaptionbox{$s_t^3$ (unobserved).}{\includegraphics[width=0.25\textwidth]{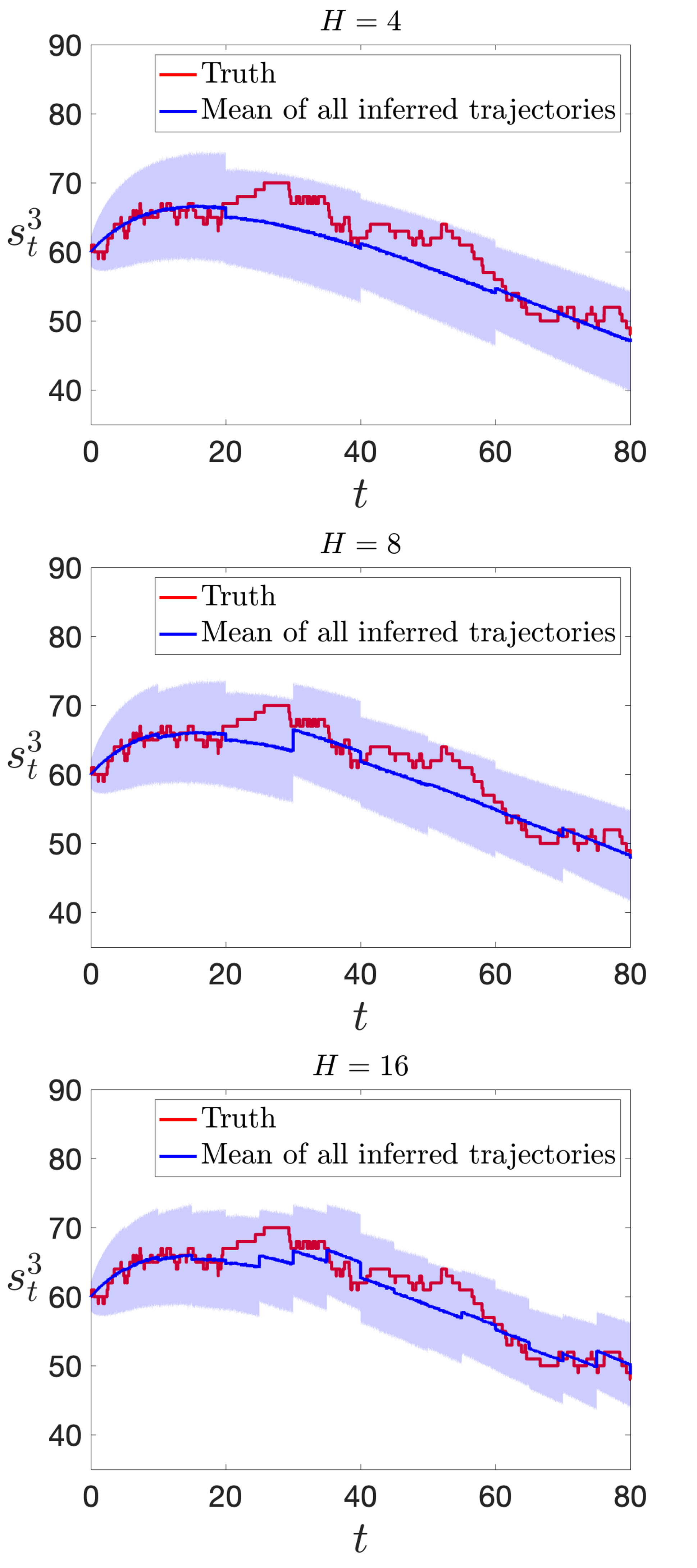}}
    \hfill\subcaptionbox{$s_t^4$ (partially observed).}{\includegraphics[width=0.25\textwidth]{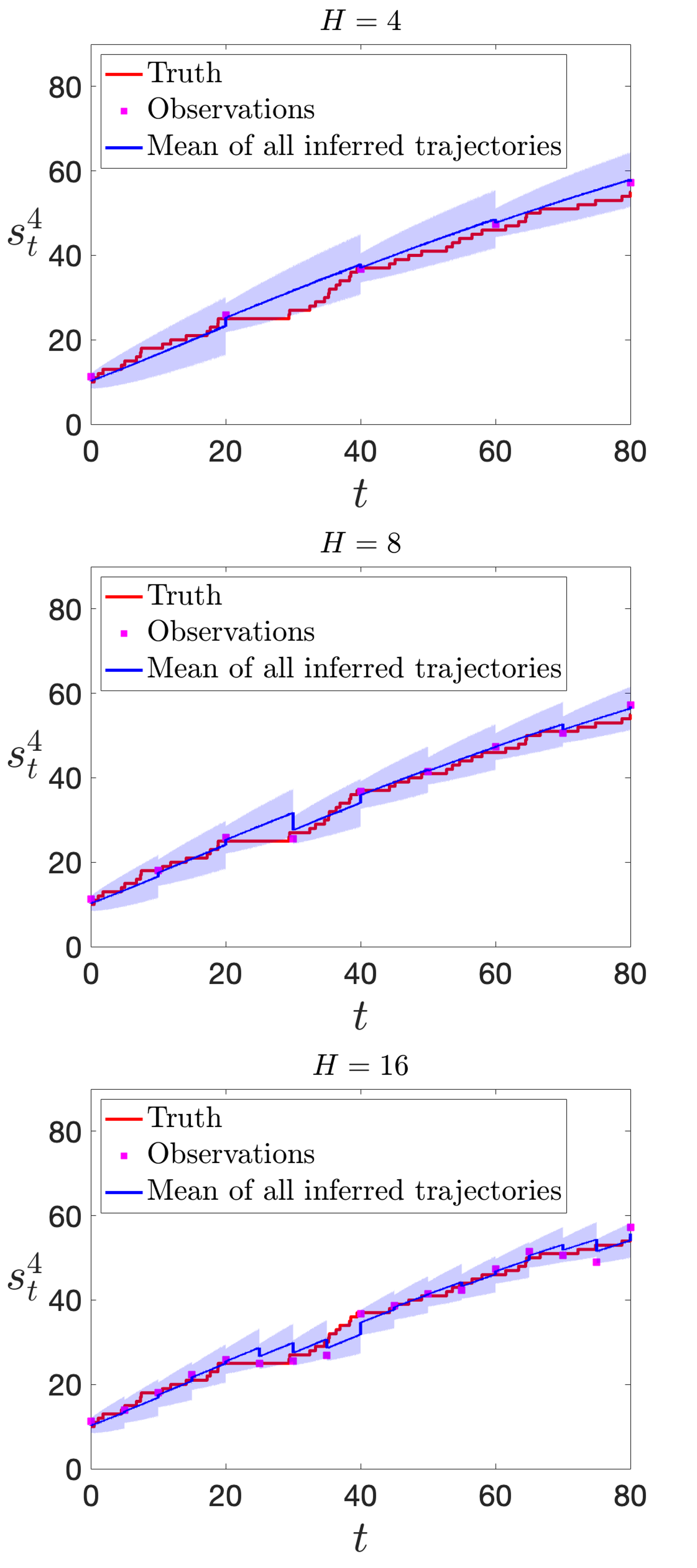}}
    }
    {
    Trajectory inference for four state variables by Bayesian updating pKG-LNA metamodel with three data sizes. \label{fig:mm_inferred_trajectory}}
    {
    The red curves represent true trajectories of four state variables simulated by Gillespie algorithm. The blue curves and shaded areas represent the means and 95\% prediction intervals of 1000 inferred trajectories obtained by Bayesian updating pKG-LNA metamodel based on the limited and noisy observations shown via the magenta square dots.}
\end{figure}

Table \ref{table:one-two-stage} presents the root mean square errors (RMSEs), computed from $B$ posterior samples, to quantify the differences between the estimated and the true parameters, along with estimates of $\lambda_{\max}$. These results indicate that the one- and two-stage RAPTOR-GEN algorithms yield equivalent parameter estimates. Both algorithms use tolerances $\varepsilon_1 = \varepsilon_2 = 1 \times 10^{-5}$. The columns $N$ and $N_1$ show that the one-stage RAPTOR-GEN performs $N$ gradient and $N$ Hessian evaluations, while the two-stage version performs $N_1$ gradient evaluations and only one Hessian evaluation. Combined with the runtimes $T_{\rm one-stage}$ and $T_{\rm two-stage}$, the results demonstrate that the two-stage RAPTOR-GEN significantly improves computational efficiency.

\begin{table}
    \TABLE
    {Parameter inference by one- and two-stage RAPTOR-GEN algorithms with three data sizes and two step sizes. \label{table:one-two-stage}}
    {\tiny
    \begin{tabular}{c|c|cc|c|c|c|cc|c|c|c}
        \hline \up \down
        & & \multicolumn{5}{c|}{One-Stage RAPTOR-GEN} & \multicolumn{5}{c}{Two-Stage RAPTOR-GEN} \\
        \cline{3-12} \up \down
        $H$ & $c$ & RMSE of $\theta_3$ & RMSE of $\sigma$ & $N$ & $T_{\rm one-stage}$ & $\lambda_{\max}$ & RMSE of $\theta_3$ & RMSE of $\sigma$ & $N_1$ & $T_{\rm two-stage}$ & $\lambda_{\max}$ \\
        \hline \up
        \multirow{2}{*}{4} & $0.5$ & $0.0033 \pm 0.00002$ & $16.45 \pm 0.77$ & $545 \pm 54$ & $918.3 \pm 89.7$ & $0.830$ & $0.0033 \pm 0.00002$ & $16.45 \pm 0.77$ & $541 \pm 52$ & $138.3 \pm 13.1$ & $0.830$ \\
        & $0.25$ & $0.0033 \pm 0.00002$ & $16.46 \pm 0.77$ & $1120 \pm 129$ & $1208.3 \pm 160.8$ & $0.831$ & $0.0033 \pm 0.00002$ & $16.46 \pm 0.77$ & $1098 \pm 122$ & $285.0 \pm 31.6$ & $0.830$ \down \\
        \hline \up
        \multirow{2}{*}{8} & $0.5$ & $0.0030 \pm 0.00002$ & $4.32 \pm 0.20$ & $313 \pm 17$ & $560.5 \pm 31.6$ & $0.654$ & $0.0030 \pm 0.00002$ & $4.32 \pm 0.20$ & $313 \pm 17$ & $88.0 \pm 4.8$ & $0.654$ \\
        & $0.25$ & $0.0030 \pm 0.00002$ & $4.32 \pm 0.20$ & $595 \pm 40$ & $1052.8 \pm 71.7$ & $0.654$ & $0.0030 \pm 0.00002$ & $4.32 \pm 0.20$ & $594 \pm 40$ & $155.3 \pm 10.5$ & $0.654$ \down \\
        \hline \up
        \multirow{2}{*}{16} & $0.5$ & $0.0029 \pm 0.00002$ & $2.72 \pm 0.01$ & $199 \pm 16$ & $225.4 \pm 17.6$ & $0.435$ & $0.0029 \pm 0.00002$ & $2.72 \pm 0.01$ & $199 \pm 15$ & $72.2 \pm 5.7$ & $0.435$ \\
        & $0.25$ & $0.0029 \pm 0.00002$ & $2.72 \pm 0.01$ & $372 \pm 31$ & $440.0 \pm 36.8$ & $0.435$ & $0.0029 \pm 0.00002$ & $2.72 \pm 0.01$ & $371 \pm 30$ & $128.5 \pm 10.2$ & $0.435$ \down \\
        \hline
    \end{tabular}}
    {(i) Average results across 100 macro-replications are reported together with 95\% CIs. (ii) $T_{\rm one-stage}$ and $T_{\rm two-stage}$ represent the respective computational time of the one- and two-stage RAPTOR-GEN algorithms (wall-clock time, unit: seconds). (iii) The 95\% CIs for $\lambda_{\max}$ are sufficiently small that they are ignored.}
\end{table}

To demonstrate that RAPTOR-GEN overcomes the step size sensitivity issue faced by ULA, we evaluate both methods using step sizes $c = 1, 0.75, 0.5, 0.25$. For ULA, we set a warm-up length of $N_0 = 1000$ iterations to reduce initial bias and retain one sample every $\delta = 10$ iterations to alleviate sample autocorrelation. Table~\ref{table:rmse} summarizes the results of parameter inference across 100 randomly selected macro-replications. Two columns of Solved Number, together with parameter RMSEs, show that the performance of ULA is very sensitive to the choice of step size, and RAPTOR-GEN is more robust. Comparing $T_{\rm ULA}$ with $T_{\rm two-stage}$, RAPTOR-GEN shows a significant improvement in computational efficiency over ULA. The confidence intervals (CIs) of the parameter RMSEs also reveal that the two parameters inferred by ULA, especially $\sigma$, are far from convergent with the initial warm-up length $N_0=1000$. The decreasing estimates of $\lambda_{\max}$ with increasing data size align with the finite-sample performance guarantees established in Theorem \ref{main_result_0} and Corollary \ref{cor:finite-sample}.

\begin{table}
    \TABLE
    {Parameter inference by ULA and two-stage RAPTOR-GEN algorithm with three data sizes and four step sizes. \label{table:rmse}}
    {\tiny
    \begin{tabular}{c|c|c|cc|c|c|cc|c|c}
        \hline \up \down
        & & \multicolumn{4}{c|}{ULA} & \multicolumn{5}{c}{Two-Stage RAPTOR-GEN} \\
        \cline{3-11} \up
        \multirow{2}{*}{$H$} & \multirow{2}{*}{$c$} & Solved & \multirow{2}{*}{RMSE of $\theta_3$} & \multirow{2}{*}{RMSE of $\sigma$} & \multirow{2}{*}{$T_{\rm ULA}$} & Solved & \multirow{2}{*}{RMSE of $\theta_3$} & \multirow{2}{*}{RMSE of $\sigma$} & \multirow{2}{*}{$T_{\rm two-stage}$} & \multirow{2}{*}{$\lambda_{\max}$} \\
        \down
        & & Number & & & & Number & & & & \\
        \hline \up
        \multirow{4}{*}{4} & $1$ & 84 & $0.0511 \pm 0.0922$ & $(2.60 \pm 5.18)\times 10^{25}$ & $536.1 \pm 20.9$ & 100 & $0.0033 \pm 0.00002$ & $16.45 \pm 0.77$ & $366.1 \pm 299.2$ & $0.830$ \\
        & $0.75$ & 89 & $0.0041 \pm 0.0004$ & $122.65 \pm 63.68$ & $538.7 \pm 9.4$ & 100 & $0.0033 \pm 0.00002$ & $16.45 \pm 0.77$ & $159.9 \pm 70.5$ & $0.830$ \\
        & $0.5$ & 93 & $0.0181 \pm 0.0286$ & $101.57 \pm 98.21$ & $639.6 \pm 19.4$ & 100 & $0.0033 \pm 0.00002$ & $16.45 \pm 0.77$ & $138.3 \pm 13.1$ & $0.830$ \\
        & $0.25$ & 97 & $0.1446 \pm 0.1877$ & $298.75 \pm 208.36$ & $562.3 \pm 11.6$ & 100 & $0.0033 \pm 0.00002$ & $16.46 \pm 0.77$ & $285.0 \pm 31.6$ & $0.830$ \down \\
        \hline \up
        \multirow{4}{*}{8} & $1$ & 68 & $0.2708 \pm 0.3821$ & $(1.16 \pm 2.31)\times 10^{35}$ & $545.1 \pm 14.2$ & 100 & $0.0030 \pm 0.00002$ & $4.32 \pm 0.20$ & $109.1 \pm 30.8$ & $0.654$ \\
        & $0.75$ & 95 & $0.0035 \pm 0.0002$ & $24.63 \pm 22.45$ & $529.6 \pm 8.6$ & 100 & $0.0030 \pm 0.00002$ & $4.32 \pm 0.20$ & $70.9 \pm 4.5$ & $0.654$ \\
        & $0.5$ & 98 & $0.0034 \pm 0.0003$ & $29.01 \pm 23.78$ & $577.6 \pm 10.6$ & 100 & $0.0030 \pm 0.00002$ & $4.32 \pm 0.20$ & $88.0 \pm 4.8$ & $0.654$ \\
        & $0.25$ & 98 & $0.2639 \pm 0.3602$ & $156.78 \pm 164.98$ & $556.8 \pm 13.2$ & 100 & $0.0030 \pm 0.00002$ & $4.32 \pm 0.20$ & $155.3 \pm 10.5$ & $0.654$ \down \\
        \hline \up
        \multirow{4}{*}{16} & $1$ & 27 & $1.4874 \pm 1.1513$ & $(1.08 \pm 2.21)\times 10^{15}$ & $537.7 \pm 14.9$ & 98 & $0.0029 \pm 0.00002$ & $2.72 \pm 0.02$ & $48.6 \pm 7.0$ & $0.435$ \\
        & $0.75$ & 92 & $0.0033 \pm 0.0002$ & $7.72 \pm 9.83$ & $549.9 \pm 11.1$ & 100 & $0.0029 \pm 0.00002$ & $2.72 \pm 0.01$ & $50.2 \pm 3.9$ & $0.435$ \\
        & $0.5$ & 99 & $0.0031 \pm 0.0002$ & $12.19 \pm 13.07$ & $541.0 \pm 4.3$ & 100 & $0.0029 \pm 0.00002$ & $2.72 \pm 0.01$ & $72.2 \pm 5.7$ & $0.435$ \\
        & $0.25$ & 100 & $0.2088 \pm 0.2926$ & $99.57 \pm 135.91$ & $544.1 \pm 4.8$ & 100 & $0.0029 \pm 0.00002$ & $2.72 \pm 0.01$ & $128.5 \pm 10.2$ & $0.435$ \down \\
        \hline
    \end{tabular}}
    {(i) Solved Number represents the number of macro-replications where algorithms obtain the posterior samples across 100 macro-replications. (ii) Average results across solved macro-replications are reported together with 95\% CIs. (iii) $T_{\rm ULA}$ and $T_{\rm two-stage}$ represent the respective computational time of ULA and two-stage RAPTOR-GEN algorithm (wall-clock time, unit: seconds). (iv) The 95\% CIs for $\lambda_{\max}$ are sufficiently small that they are ignored. }
\end{table}

Finally, because RAPTOR-GEN is a likelihood-based method, we assess its strengths and limitations by comparing its performance against a widely used likelihood-free approach, the ABC-SMC sampler \citep{sisson2007sequential}. This comparison underscores the advantages of leveraging explicit likelihood information over relying solely on likelihood-free sampling. The detailed methodology and results of this evaluation are provided in \ref{appendix:ABCSMC}. Our findings demonstrate that RAPTOR-GEN achieves more accurate parameter estimation, particularly for the measurement error level, by fully exploiting the explicit likelihood derived from a moderately sized dataset. Furthermore, the two-stage RAPTOR-GEN exhibits substantially faster convergence than ABC-SMC, especially as the size of the dataset increases.

\subsection{Simplified Prokaryotic Autoregulation Gene Network}

To assess the empirical performance of the proposed RAPTOR-GEN in scenarios with multi-dimensional unknown kinetic rate parameters, we use a simplified prokaryotic autoregulation gene network example involving five biochemical species, i.e., DNA, RNA, protein (denoted by P), protein dimer (denoted by P$_2$), and DNA$\cdot$P$_2$. The self-regulating production of the protein P and its dimer P$_2$ can be represented by the following eight chemical reactions,
\begin{align*}
    &\text{Reaction } 1: \ {\rm DNA} + {\rm P}_2 \xrightarrow{v_1} {\rm DNA} \cdot {\rm P}_2, \quad \text{Reaction } 2: \ {\rm DNA} \cdot {\rm P}_2 \xrightarrow{v_2} {\rm DNA} + {\rm P}_2, \\
    &\text{Reaction } 3: \ {\rm DNA} \xrightarrow{v_3} {\rm DNA} + {\rm RNA}, \quad \ \ \text{Reaction } 4: \ {\rm RNA} \xrightarrow{v_4} {\rm RNA} + {\rm P}, \\
    &\text{Reaction } 5: \ 2{\rm P} \xrightarrow{v_5} {\rm P}_2, \qquad \qquad \qquad \ \ \ \text{Reaction } 6: \ {\rm P}_2 \xrightarrow{v_6} 2{\rm P}, \\
    &\text{Reaction } 7: \ {\rm RNA} \xrightarrow{v_7} \emptyset, \qquad \qquad \qquad \ \text{Reaction } 8: \ {\rm P} \xrightarrow{v_8} \emptyset.
\end{align*}
Reactions 1 and 2 describe the reversible process in which the protein dimer P$_2$ binds to DNA, and thereby inhibits the production of the protein P shown by Reactions 3 and 4. The reversible dimerization of the protein is characterized by Reactions 5 and 6, while Reactions 7 and 8 complete the degradation of the RNA and of the protein. The system has been analyzed by \cite{golightly2005bayesian}, and therefore we adopt the same settings to aid the exposition. In particular, the total molecular count of DNA and DNA$\cdot$P$_2$, i.e., $\zeta$ (a.k.a. conservation constant), is fixed throughout the evolution of the system. Thus, we can substitute $\zeta - {\rm DNA}$ for DNA$\cdot$P$_2$ to reduce the model to one involving only four biochemical species, i.e., DNA, RNA, P, and P$_2$, and the system state $\pmb{s}_t = (s_t^1,s_t^2,s_t^3,s_t^4)^\top$ includes the respective molecular counts of DNA, RNA, P, and P$_2$ at any time $t$. The stoichiometry matrix associated with the reduced system is $\pmb{C}$ with $C_{11}=C_{27}=C_{38}=C_{41}=C_{46}=-1$, $C_{12}=C_{23}=C_{34}=C_{42}=C_{45}=1$, $C_{35}=-2$, $C_{36}=2$, and zeros elsewhere. We use mass-action kinetics to model reaction rates as $\pmb{v}(\pmb{s}_t;\pmb{\theta}) = (\theta_1 s_t^1 s_t^4, \theta_2 (\zeta-s_t^1), \theta_3 s_t^1, \theta_4 s_t^2, 0.5 \theta_5 s_t^3 (s_t^3-1), \theta_6 s_t^4, \theta_7 s_t^2, \theta_8 s_t^3)^\top$. Our goal is to infer the unknown kinetic rate parameters $\pmb{\theta} = (\theta_1,\theta_2,\ldots,\theta_8)^\top$.

We simulate a single measurement trajectory ($M=1$) over the interval $[0, 50]$ using the Gillespie algorithm with true parameters $\pmb{\theta}^{\rm true} = (0.1, 0.7, 0.35, 0.2, 0.1, 0.9, 0.3, 0.1)^\top$ and initial states $\pmb{s}_{t_1} \sim \mathcal{N}((5, 5, 5, 5)^\top, \pmb{I}_{4\times 4})$. Due to the difficulty of directly measuring DNA activation states, we treat the DNA count ($s_t^1$) as a latent state, while RNA, P, and P$_2$ counts ($s_t^2$, $s_t^3$, and $s_t^4$) are observed every $\Delta t = 0.5$ from $t_1 = 0$ to $t_{H+1} = 50$, yielding $H+1 = 101$ observations. In this partially observed system, DNA and DNA$\cdot$P$_2$ are unobserved, but the conservation constant $\zeta = 10$ is known, representing the gene copy number, which is typically observable in practice. Figure \ref{fig:gn_inferred_trajectory} demonstrates that the inferred trajectories from the Bayesian pKG-LNA metamodel closely match the true dynamics, effectively recovering the system behavior including the fully unobserved latent state $s_t^1$.

\begin{figure}
    \FIGURE
    {
    \subcaptionbox{$s_t^1$ (unobserved).}{\includegraphics[width=0.25\textwidth]{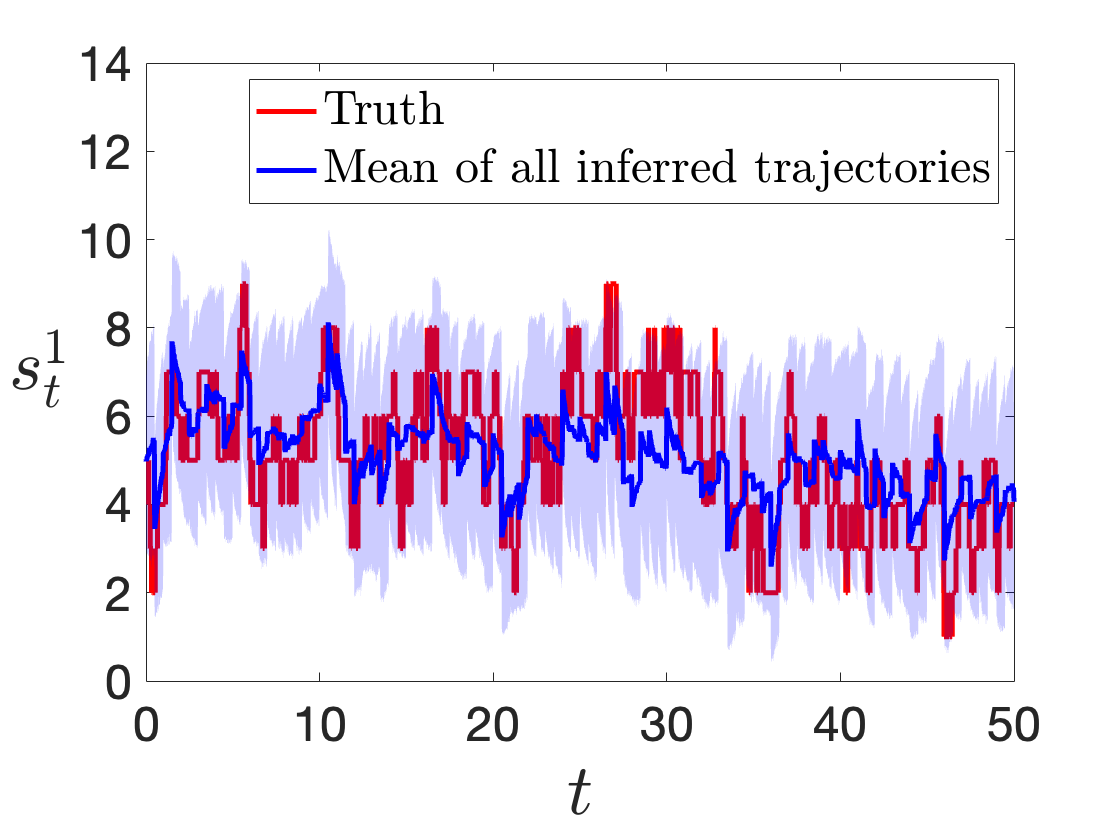}}
    \hfill\subcaptionbox{$s_t^2$ (partially observed).}{\includegraphics[width=0.25\textwidth]{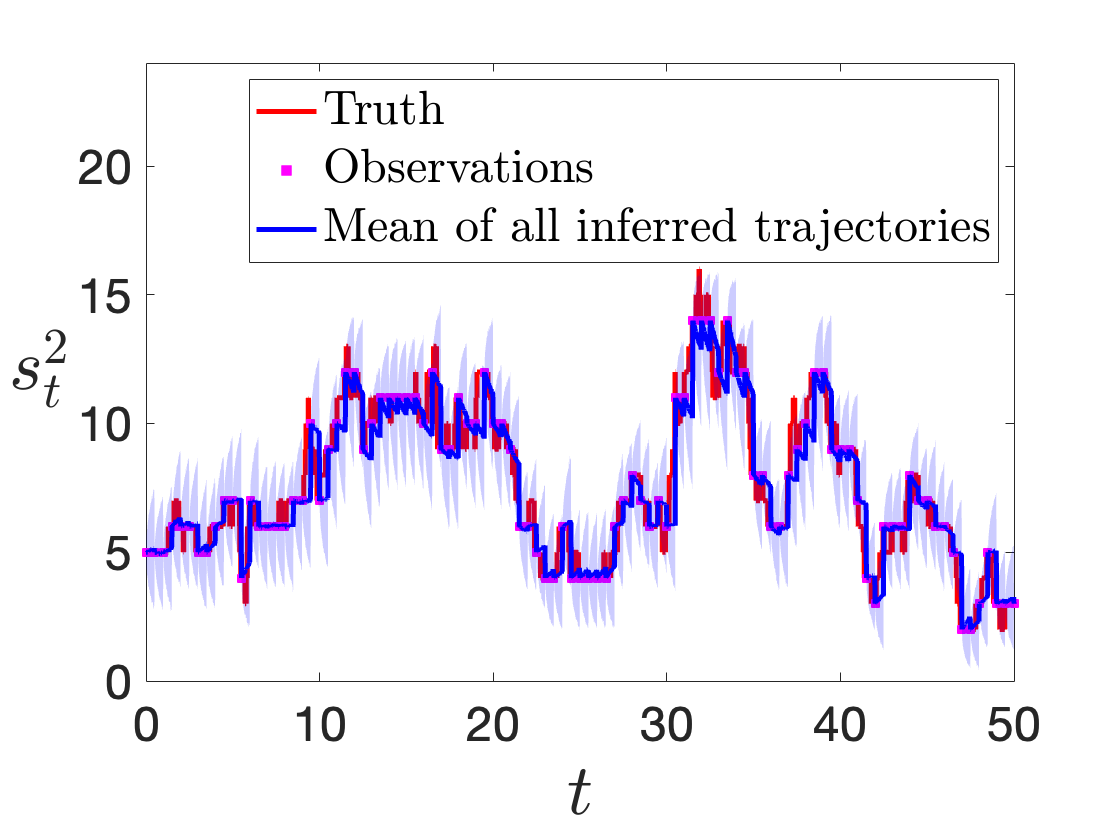}}
    \hfill\subcaptionbox{$s_t^3$ (partially observed).}{\includegraphics[width=0.25\textwidth]{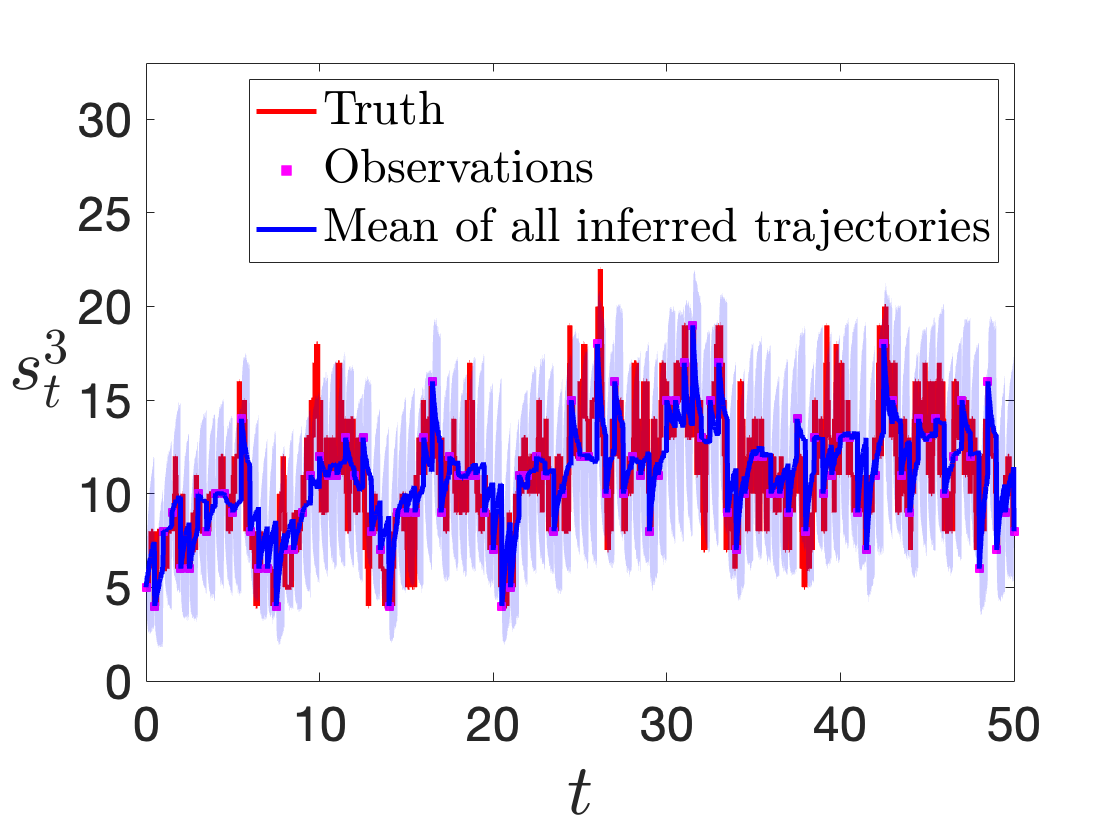}}
    \hfill\subcaptionbox{$s_t^4$ (partially observed).}{\includegraphics[width=0.25\textwidth]{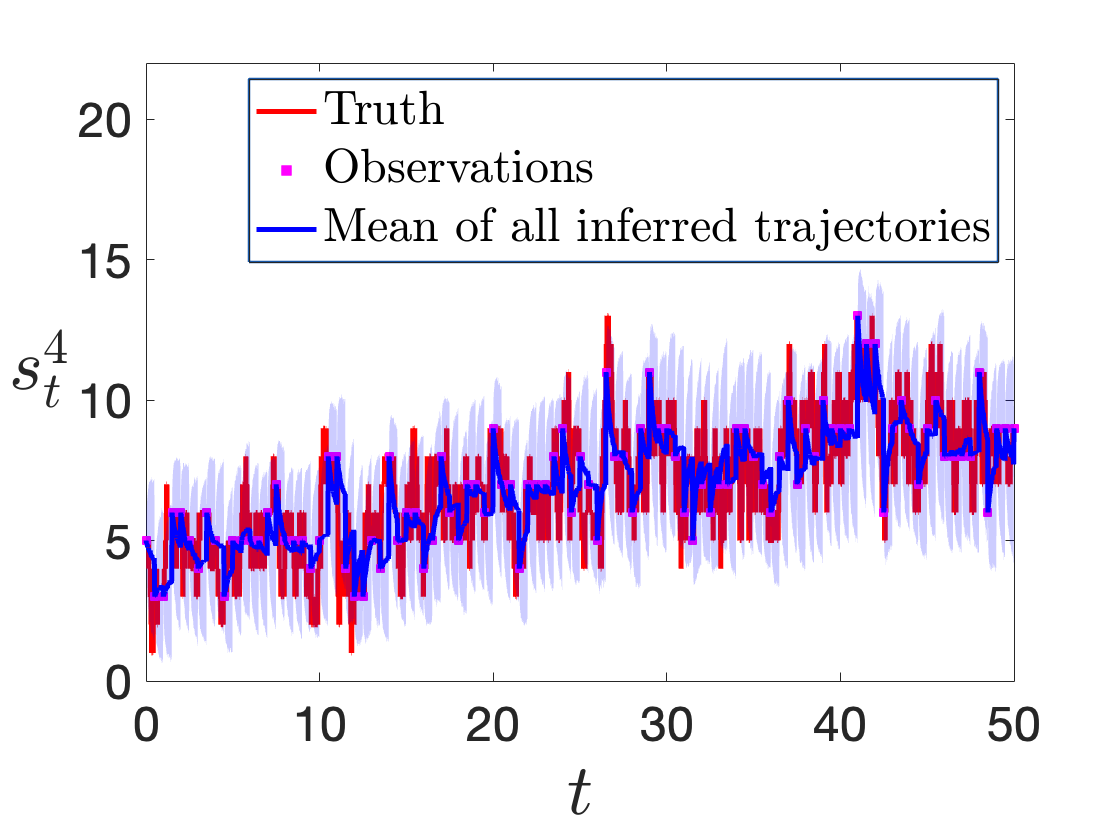}}
    }
    {
    Trajectory inference for four state variables by Bayesian updating pKG-LNA metamodel with data size $H=100$. \label{fig:gn_inferred_trajectory}}
    {
    The red curves represent true trajectories of four state variables simulated by Gillespie algorithm. The blue curves and shaded areas represent the means and 95\% prediction intervals of 1000 inferred trajectories obtained by Bayesian updating pKG-LNA metamodel based on the observations shown via the magenta square dots.}
\end{figure}

To assess RAPTOR-GEN's inference performance in scenario with moderately high-dimensional parameter spaces, we benchmark it against ULA, using a carefully chosen step size of $c = 45$. For RAPTOR-GEN, we set tolerances $\varepsilon_1 = \varepsilon_2 = 1 \times 10^{-6}$ and record its computational budget. ULA is then run with the same budget, using $\delta = 10$ to reduce the sample correlation. Both methods generate $B = 100$ posterior samples, with uniform priors $\theta_k \sim U(0.1, 0.9)$ for $k = 1, \ldots, 8$. Table \ref{table:gn_performance_1} summarizes the results. RAPTOR-GEN yields substantially lower RMSEs across most parameters compared to ULA, with similar performance only for $\theta_7$. Notably, ULA struggles to converge for parameters such as $\theta_2$ and $\theta_6$, as indicated by wide RMSE CIs. RAPTOR-GEN demonstrates superior computational efficiency and accurately recovers system parameters, except for $\theta_2$, which appears only in the equation for the unobserved latent state $s_t^1$. Combined with Figure~\ref{fig:gn_inferred_trajectory}, these results show that RAPTOR-GEN effectively reconstructs the whole system, even with completely unobserved latent dynamics.

\begin{table}
    \TABLE
    {Parameter inference by two-stage RAPTOR-GEN algorithm and ULA with same computational budget. \label{table:gn_performance_1}}
    {\tiny
    \begin{tabular}{c|cccccccc|c}
        \hline
        & RMSE of $\theta_1$ & RMSE of $\theta_2$ & RMSE of $\theta_3$ & RMSE of $\theta_4$ & RMSE of $\theta_5$ & RMSE of $\theta_6$ & RMSE of $\theta_7$ & RMSE of $\theta_8$ & $\lambda_{\max}$ \up \down \\
        \hline \up
        Two-Stage & \multirow{2}{*}{$\pmb{0.0427 \pm 0.0009}$} & \multirow{2}{*}{$\pmb{1.1981 \pm 0.0231}$} & \multirow{2}{*}{$\pmb{0.0817 \pm 0.0010}$} & \multirow{2}{*}{$\pmb{0.1499 \pm 0.0006}$} & \multirow{2}{*}{$\pmb{0.0469 \pm 0.0010}$} & \multirow{2}{*}{$\pmb{0.3347 \pm 0.0075}$} & \multirow{2}{*}{$0.1388 \pm 0.0017$} & \multirow{2}{*}{$\pmb{0.0758 \pm 0.0005}$} & \multirow{2}{*}{$1.142$} \\
        RAPTOR-GEN & & & & & & & & & \down \\
        ULA & $0.1749 \pm 0.0558$ & $1.6273 \pm 0.3507$ & $0.1173 \pm 0.0098$ & $0.1535 \pm 0.0045$ & $0.2257 \pm 0.0539$ & $1.7560 \pm 0.4262$ & $0.1380 \pm 0.0025$ & $0.0800 \pm 0.0024$ & -- \down \\
        \hline
    \end{tabular}}
    {(i) Average results across 100 macro-replications are reported together with 95\% CIs. (ii) Bold highlights the best algorithm in terms of parameter RMSE for each parameter. (iii) The 95\% CI for $\lambda_{\max}$ is sufficiently small that it is ignored. (iv) The computational time of both algorithms is $1180.4 \pm 19.8$ (wall-clock time, unit: seconds).}
\end{table}

\section{Concluding Remarks} \label{sec:conclusion}

This paper tackles key challenges in biomanufacturing system management, where mechanisms are poorly understood and data are sparse, noisy, and heterogeneous. To advance scientific understanding and accelerate digital twin platform development, we introduce RAPTOR-GEN, a mechanism-informed Bayesian learning framework that features an interpretable metamodel and efficient posterior sampling. RAPTOR-GEN is built upon a modular, multi-scale pKG mechanistic foundational model for Bio-SoS, formulated as a system of SDEs. It flexibly captures spatial-temporal causal interdependencies across the molecular, cellular, and macroscopic scales, enabling the representation of diverse biomanufacturing processes. The metamodel, pKG-LNA, applies the linear noise approximation (LNA) to the underlying SDE dynamics and incorporates a sequential learning strategy to integrate sparse, noisy measurements, enabling inference of latent states and derivation of a closed-form likelihood. To improve sample efficiency and interpretability, LD-LNA leverages Langevin diffusion (LD) to quantify parameter interdependencies and accelerate posterior sampling. By generalizing LNA, we avoid step size tuning in LD and derive a finite-sample performance bound for the approximation error. We also present a practical RAPTOR-GEN algorithm for fast and robust posterior sampling with provable error control. Numerical experiments validate the effectiveness of the proposed framework in learning mechanistic parameters and quantifying uncertainty in biomanufacturing processes.

This study offers three potential avenues for extension. First, while our focus is on mechanistic learning and digital twin platform development for biomanufacturing systems, the proposed SDE-based pKG model and sequential Bayesian learning framework are broadly applicable to other domains in operations research and operations management, such as queueing systems and financial modeling. Second, the proposed Bayesian learning framework based on likelihood gradients can be extended to incorporate higher-order information, enabling the capture of complex parameter interactions and facilitating model structure selection to support more sample-efficient recovery of the underlying mechanisms. Third, LD-LNA within the proposed posterior sampling framework offers a fast and robust implementation procedure that enhances mixing and mitigates the challenge of step size selection. As a result, LD-LNA emerges as a promising alternative to the commonly used LD in the reverse process of diffusion models.




%
%
%

\ACKNOWLEDGMENT{We gratefully acknowledge funding support from the National Institute of Standards and Technology (Grant nos. 70NANB21H086) and the National Science Foundation (Grant CAREER CMMI-2442970).}



\bibliographystyle{informs2014} 
\bibliography{reference,proposal,proj_ref} 





  



\ECSwitch


\ECHead{Supplemental Material}


\section{Notation} \label{Appendix:notation}

Throughout the paper, we denote (column) vectors by bold symbols, e.g., $\pmb{x}$, $\pmb{\theta}$, and $\pmb{Z}$. Let $\mathbb{R}^d$/$\mathbb{R}_+^d$/$\mathbb{Z}^d$/$\mathbb{N}^d$ and $\mathbb{R}^{n \times m}$/ $\mathbb{R}_+^{n \times m}$/$\mathbb{Z}^{n \times m}$/$\mathbb{N}^{n \times m}$ denote the $d$ dimensional and $n \times m$ dimensional spaces of real numbers/nonnegative real numbers/integers/natural numbers, respectively. Denote by $\pmb{0}$ a vector consisting of pure zeros as its components, $\pmb{0}_{n \times m}$ an $n$-by-$m$ matrix of zeros, $\pmb{1}_{n \times m}$ an $n$-by-$m$ matrix of ones, and $\pmb{I}_{n \times n}$ is an $n$-by-$n$ identity matrix. Let diag$(\pmb{x})$ = diag$(x_1,x_2,\ldots,x_d)$ with $\pmb{x}=(x_1,x_2,\ldots,x_d)$ and diag$(x)$ = diag$(x,x,\ldots,x)$, which are diagonal matrices with appropriate dimensions. For a vector $\pmb{x} \in \mathbb{R}^d$, we denote its 1-norm (a.k.a. absolute value norm) and 2-norm (a.k.a. Euclidean norm) by $|\pmb{x}|$ and $||\pmb{x}||$, respectively. For a matrix $\pmb{A} \in \mathbb{R}^{n \times d}$, we denote its 2-norm by $||\pmb{A}|| = \sup_{\pmb{x} \neq \pmb{0}_d} \frac{||\pmb{A}\pmb{x}||}{||\pmb{x}||}$. Given a function $f(\pmb{x})$: $\mathbb{R}^d \to \mathbb{R}$, let $\partial_k = \partial_{x_k} = \frac{\partial}{\partial{x_k}}f(\pmb{x})$, $\nabla_{\pmb{x}} f(\pmb{x})$, and $\nabla_{\pmb{x}}^2 f(\pmb{x})$ represent the partial derivative with respect to the $k$-th component of the vector $\pmb{x}$, the gradient, and the Hessian matrix of $f$, respectively. We denote a set of $k$-times continuously differentiable functions in a certain domain $\mathcal{D}$ by $C^k(\mathcal{D})$ and denote $[n]$ as the set $\{1,2,\ldots,n\}$ with $[0]=\emptyset$. And $f(n) = \mathcal{O}(g(n))$ means that there exist positive constants $c$ and $N$ such that $f(n)\leq c g(n)$ for all $n\geq N$.

\section{Gibbs Sampling for Model Uncertainty Quantification on SRN} \label{Appendix:Gibbs}

This section derives a Gibbs sampling procedure for SRN model parameters under dense data collection with ignorable measurement error so that the observations $\pmb{y}_{t_h} := \pmb{s}_{t_h}$ for $h=1,2,\ldots,H+1$, and we use Equation \eqref{likelihood - Poisson} for the state transition density. First, let
\begin{align*}
    G_1^j (\theta_{t_h}^{k,n}) &= s_{t_h}^j + \pmb{C}^j\pmb{v}(\pmb{s}_{t_h}; \pmb{\theta}_{t_h})\Delta {t_h} = s_{t_h}^j + \sum_{l=1}^{N_r} C_{jl} v_l(\pmb{s}_{t_h}; \pmb{\theta}_{t_h}^l)\Delta {t_h} \\
    &= s_{t_h}^j + \sum_{l=1,l\neq k}^{N_r} C_{jl} v_l(\pmb{s}_{t_h}; \pmb{\theta}_{t_h}^l)\Delta {t_h} + C_{jk} v_k(\pmb{s}_{t_h}; \theta_{t_h}^{k,n}, \pmb{\theta}_{t_h}^{k,-n})\Delta {t_h}, \\
    G_2^j (\theta_{t_h}^{k,n}) &= \pmb{C}^j{\rm diag}\{v_l (\pmb{s}_{t_h}; \pmb{\theta}_{t_h}^l)\}\pmb{C}^{j \top} \Delta {t_h} = \sum_{l=1,l\neq k}^{N_r} C_{jl}^2 v_l(\pmb{s}_{t_h}; \pmb{\theta}_{t_h}^l)\Delta {t_h} + C_{jk}^2 v_k(\pmb{s}_{t_h}; \theta_{t_h}^{k,n}, \pmb{\theta}_{t_h}^{k,-n})\Delta {t_h},
\end{align*}
where $\theta_{t_h}^{k,n}$ represents the $n$-th component of the $N_{\theta_{t_h}^k}$-dimensional vector $\pmb{\theta}_{t_h}^k$, $\pmb{\theta}_{t_h}^{k,-n}$ represents the collection of parameters $\pmb{\theta}_{t_h}^k$ excluding the $n$-th component, $s_{t_h}^j$ represents the $j$-th component of the vector $\pmb{s}_{t_h}$, and $\pmb{C}^j$ and $ C_{jk}$ represent the $j$-th row and the $(j,k)$-th element of the stoichiometry matrix $\pmb{C}$, respectively. From Equation \eqref{likelihood - Poisson}, we have
\begin{equation}
    s_{t_{h+1}}^j | \pmb{s}_{t_h};\pmb{\theta}_{t_h} \sim \mathcal{N} \left( G_1^j (\theta_{t_h}^{k,n}), G_2^j (\theta_{t_h}^{k,n})\right). \label{con_distri}
\end{equation}

Now we derive the conditional posterior for each parameter in $\pmb{\theta}_{t_h}$. Given the data $\{(\pmb{y}_{t_h}^{(i)}, \pmb{y}_{t_{h+1}}^{(i)})\}_{i=1}^M$ and the collection of parameters $\pmb{\theta}_{t_h}$ excluding the $n$-th component of $\pmb{\theta}_{t_h}^k$ denoted by $\pmb{\theta}_{t_h}^{-(k,n)}$, we can infer $\theta_{t_h}^{k,n} = {G_2^j}^{-1} (G_2^j (\theta_{t_h}^{k,n}))$ as follows, where ${G_2^j}^{-1}$ is the inverse function of $G_2^j$. Since the conditional posterior cumulative distribution function (CDF) for the random variable $\Theta_{t_h}^{k,n}$ is
\begin{align}
    &\quad F_{\Theta_{t_h}^{k,n}} \left( \theta_{t_h}^{k,n} \bigg| \{(\pmb{y}_{t_h}^{(i)}, \pmb{y}_{t_{h+1}}^{(i)})\}_{i=1}^M, \pmb{\theta}_{t_h}^{-(k,n)} \right) = \mathbb{P} \left( \Theta_{t_h}^{k,n} \leq \theta_{t_h}^{k,n} \bigg| \{(\pmb{y}_{t_h}^{(i)}, \pmb{y}_{t_{h+1}}^{(i)})\}_{i=1}^M, \pmb{\theta}_{t_h}^{-(k,n)} \right) \nonumber \\
    &= \begin{cases}
        \mathbb{P} \left( G_2^j (\Theta_{t_h}^{k,n}) \leq G_2^j (\theta_{t_h}^{k,n}) \bigg| \{(\pmb{y}_{t_h}^{(i)}, \pmb{y}_{t_{h+1}}^{(i)})\}_{i=1}^M, \pmb{\theta}_{t_h}^{-(k,n)} \right), \ {\scriptsize \text{if } G_2^j \text{ is monotonically nondecreasing with respect to } \theta_{t_h}^{k,n}} \\
        1 - \mathbb{P} \left( G_2^j (\Theta_{t_h}^{k,n}) \leq G_2^j (\theta_{t_h}^{k,n}) \bigg| \{(\pmb{y}_{t_h}^{(i)}, \pmb{y}_{t_{h+1}}^{(i)})\}_{i=1}^M, \pmb{\theta}_{t_h}^{-(k,n)} \right), \ {\scriptsize \text{otherwise}}
    \end{cases} \nonumber \\
    &= \begin{cases}
        F_{G_2^j (\Theta_{t_h}^{k,n})} \left( G_2^j (\theta_{t_h}^{k,n}) \bigg| \{(\pmb{y}_{t_h}^{(i)}, \pmb{y}_{t_{h+1}}^{(i)})\}_{i=1}^M, \pmb{\theta}_{t_h}^{-(k,n)} \right), \ {\scriptsize \text{if } G_2^j \text{ is monotonically nondecreasing with respect to } \theta_{t_h}^{k,n}} \\
        1 - F_{G_2^j (\Theta_{t_h}^{k,n})} \left( G_2^j (\theta_{t_h}^{k,n}) \bigg| \{(\pmb{y}_{t_h}^{(i)}, \pmb{y}_{t_{h+1}}^{(i)})\}_{i=1}^M, \pmb{\theta}_{t_h}^{-(k,n)} \right), \ {\scriptsize \text{otherwise}}
    \end{cases} \label{posterior_cdf}
\end{align}
Then the conditional posterior PDF is
\begin{equation}
    p \left( \theta_{t_h}^{k,n} \bigg| \{(\pmb{y}_{t_h}^{(i)}, \pmb{y}_{t_{h+1}}^{(i)})\}_{i=1}^M, \pmb{\theta}_{t_h}^{-(k,n)} \right) = \frac{d}{d\theta_{t_h}^{k,n}} F_{\Theta_{t_h}^{k,n}} \left( \theta_{t_h}^{k,n} \bigg| \{(\pmb{y}_{t_h}^{(i)}, \pmb{y}_{t_{h+1}}^{(i)})\}_{i=1}^M, \pmb{\theta}_{t_h}^{-(k,n)} \right). \label{posterior_pdf}
\end{equation}

According to Equation \eqref{posterior_cdf}, we need to infer the function of parameters, i.e., $G_2^j (\theta_{t_h}^{k,n})$. To obtain the posterior distribution of $G_2^j (\theta_{t_h}^{k,n})$, we first derive the joint posterior distribution for $G_1^j (\theta_{t_h}^{k,n})$ and $G_2^j (\theta_{t_h}^{k,n})$. Without strong prior information and for simplicity, we consider the conjugate prior with initial hyperparameters giving relatively flat density, that is, a Gaussian $\mathcal{N} ( G_1^j (\theta_{t_h}^{k,n}) | G_2^j (\theta_{t_h}^{k,n}); G_{1,0}^j (\theta_{t_h}^{k,n}),\frac{G_2^j (\theta_{t_h}^{k,n})}{\kappa_0} ) \times$ inverted-chi square Inv-$\chi ( G_2^j (\theta_{t_h}^{k,n}); \psi_0,G_{2,0}^j (\theta_{t_h}^{k,n}) )$ prior. The Gaussian likelihood \eqref{con_distri} combined with the above prior can yield a Gaussian $\mathcal{N} ( G_1^j (\theta_{t_h}^{k,n}) | G_2^j (\theta_{t_h}^{k,n}); G_{1,M}^j (\theta_{t_h}^{k,n}),\frac{G_2^j (\theta_{t_h}^{k,n})}{\kappa_M} ) \times$ inverted-chi square Inv-$\chi ( G_2^j (\theta_{t_h}^{k,n}); \psi_M,G_{2,M}^j (\theta_{t_h}^{k,n}) )$ joint posterior distribution where
\begin{align*}
    &G_{1,M}^j (\theta_{t_h}^{k,n}) = \frac{1}{\kappa_M} \left( \kappa_0 G_{1,0}^j (\theta_{t_h}^{k,n}) + M \bar{s}_{t_{h+1}}^j \right) \text{ with } \kappa_M=\kappa_0+M, \ \bar{s}_{t_{h+1}}^j = \frac{1}{M}\sum_{i=1}^M y_{t_{h+1}}^{j,(i)}, \\
    &G_{2,M}^j (\theta_{t_h}^{k,n}) = \frac{\psi_0 G_{2,0}^j (\theta_{t_h}^{k,n}) + (M-1)\sigma_{t_{h+1},j}^2 + \frac{\kappa_0 M}{\kappa_M}\left( G_{1,0}^j (\theta_{t_h}^{k,n}) - \bar{s}_{t_{h+1}}^j \right)^2}{\psi_M} \\
    &\qquad \text{with } \psi_M=\psi_0+M, \ \sigma_{t_{h+1},j}^2 = \frac{1}{M-1}\sum_{i=1}^M \left(y_{t_{h+1}}^{j,(i)}-\bar{s}_{t_{h+1}}^j\right)^2.
\end{align*}
where $y_{t_{h+1}}^{j,(i)}$ is the $j$-th component of the vector $\pmb{y}_{t_{h+1}}^{(i)}$. Then, we can obtain the marginal posterior distribution for $G_2^j (\theta_{t_h}^{k,n})$ by integrating out $G_1^j (\theta_{t_h}^{k,n})$ from the above joint posterior as follows,
\begin{equation*}
    p \left(G_2^j (\theta_{t_h}^{k,n}) \bigg| \{(\pmb{y}_{t_h}^{(i)}, \pmb{y}_{t_{h+1}}^{(i)})\}_{i=1}^M, \pmb{\theta}_{t_h}^{-(k,n)} \right) \propto \left(G_2^j (\theta_{t_h}^{k,n})\right)^{-\left(\frac{\psi_M}{2}+1\right)} \exp \left\{-\frac{\psi_M G_{2,M}^j (\theta_{t_h}^{k,n})}{2G_2^j (\theta_{t_h}^{k,n})} \right\}.
\end{equation*}
Combined with Equation \eqref{posterior_pdf}, we finally obtain the conditional posterior distribution for $\theta_{t_h}^{k,n}$:
\begin{align}
    &\quad p \left( \theta_{t_h}^{k,n} \bigg| \{(\pmb{y}_{t_h}^{(i)}, \pmb{y}_{t_{h+1}}^{(i)})\}_{i=1}^M, \pmb{\theta}_{t_h}^{-(k,n)} \right) \nonumber \\
    &\propto \begin{cases}
        \left(G_2^j (\theta_{t_h}^{k,n})\right)^{-\left(\frac{\psi_M}{2}+1\right)} \exp \left\{-\frac{\psi_M G_{2,M}^j (\theta_{t_h}^{k,n})}{2G_2^j (\theta_{t_h}^{k,n})} \right\} \frac{d G_2^j (\theta_{t_h}^{k,n})}{d\theta_{t_h}^{k,n}}, \ {\scriptsize \text{if } G_2^j \text{ is monotonically nondecreasing with respect to } \theta_{t_h}^{k,n}} \\
        - \left(G_2^j (\theta_{t_h}^{k,n})\right)^{-\left(\frac{\psi_M}{2}+1\right)} \exp \left\{-\frac{\psi_M G_{2,M}^j (\theta_{t_h}^{k,n})}{2G_2^j (\theta_{t_h}^{k,n})} \right\} \frac{d G_2^j (\theta_{t_h}^{k,n})}{d\theta_{t_h}^{k,n}}, \ {\scriptsize \text{otherwise}} 
    \end{cases} \label{con_posterior}
\end{align}

Based on the conditional posterior distributions derived above, we provide the Gibbs sampling procedure in Algorithm \ref{Algr:gibbs} to generate posterior samples. We first set the prior $p(\pmb{\theta})$ and generate the initial point $\pmb{\theta}^{(0)}$ by sampling the prior. Within each $r$-th iteration of the Gibbs sampling, given the previous sample $\pmb{\theta}^{(r-1)}$, we sequentially compute and generate one sample from the conditional posterior distribution for each parameter $\theta_{t_h}^{k,n}$. Repetition of this procedure can get samples $\pmb{\theta}^{(r)}$ for $r = 1,2,\ldots,N_0+(B-1)\delta+1$. Note that to reduce initial bias and correlations between consecutive samples, we remove the first $N_0$ samples and keep one for every $\delta$ samples. Consequently, we obtain the posterior samples $\tilde{\pmb{\theta}}^{(b)} \sim p(\pmb{\theta}|\mathcal{D}_M^H)$ with $b=1,2,\ldots,B$.

\begin{remark}
    The techniques used in Algorithm \ref{Algr:gibbs} to reduce correlations in Gibbs samples include discarding the first $N_0$ steps, called burn-in iterations, and subsampling the chain using only every $\delta$ step, also called thinning. However, in general, the use of thinning decreases the statistical efficiency of the Gibbs estimator \citep{maceachern1994subsampling}.
\end{remark}

\begin{algorithm}[th]
\DontPrintSemicolon
\KwIn{The prior $p(\pmb{\theta})$, historical observation dataset $\mathcal{D}_M^H=\{\{(\pmb{s}_{t_h}^{(i)}, \pmb{s}_{t_{h+1}}^{(i)})\}_{i=1}^M\}_{h=1}^H$, posterior sample size $B$, initial warm-up length $N_0$, and an appropriate integer $\delta$ to reduce sample correlation.}
\KwOut{Posterior samples $\tilde{\pmb{\theta}}^{(b)} \sim p(\pmb{\theta}|\mathcal{D}_M^H)$ with $b=1,2,\ldots,B$.}
{
\textbf{1.} Set the initial value $\pmb{\theta}^{(0)}$ by sampling prior $p(\pmb{\theta})$;\\
\For{$r = 1,2,\ldots,N_0+(B-1)\delta+1$}{
    \For{$h = 1,2,\ldots,H$}{
    \For{$k = 1,2,\ldots,{N_r}$}{
    \textbf{2.} Choose $j\in \{1,2,\ldots,N_s\}$ that satisfies $C_{jk} \neq 0$;\\
    \For{$n = 1,2,\ldots, N_{\theta_{t_h}^k}$}{
    \textbf{3.} Set
    \begin{align*}
        \mathcal{X} := \bigg\{&\{\theta_{t_{h_0}}^{k_0,n_0,(r)}, h_0 \in [h-1], k_0 \in [N_r], n_0 \in [N_{\theta_{t_h}^k}]\}, \\
        &\{\theta_{t_h}^{k_0,n_0,(r)}, k_0 \in [k-1], n_0 \in [N_{\theta_{t_h}^k}]\}, \{\theta_{t_h}^{k,n_0,(r)}, n_0 \in [n-1]\}, \\
        &\{\theta_{t_h}^{k,n_0,(r-1)}, n_0 \in \{n+1,\ldots,N_{\theta_{t_h}^k}\}\}, \\
        &\{\theta_{t_{h_0}}^{k_0,n_0,(r-1)}, h_0 \in \{h+1,\ldots,H\}, k_0 \in \{k+1,\ldots,N_r\}, n_0 \in [N_{\theta_{t_h}^k}]\} \bigg\};
    \end{align*}\\
    \textbf{4.} Set $G_{1,0}^{j,(r)} (\theta_{t_h}^{k,n})$, $\kappa_0^{(r)}$, $\psi_0^{(r)}$ and $G_{2,0}^{j,(r)} (\theta_{t_h}^{k,n})$ according to $\mathcal{X}$ and $\theta_{t_h}^{k,n,(r-1)}$;\\
    \textbf{5.} Generate $\theta_{t_h}^{k,n,(r)} \sim p ( \theta_{t_h}^{k,n} | \{(\pmb{y}_{t_h}^{(i)}, \pmb{y}_{t_{h+1}}^{(i)})\}_{i=1}^M,\mathcal{X} )$ through Equation \eqref{con_posterior};
    }}}
    \textbf{6.} Obtain a new posterior sample $\pmb{\theta}^{(r)}$;}
\textbf{7.} Return $\tilde{\pmb{\theta}}^{(b)} := \pmb{\theta}^{(N_0+(b-1)\delta+1)}$ for $b=1,2,\ldots,B$. 
}
\caption{Gibbs sampler for model parameters of SRN.}
\label{Algr:gibbs}   
\end{algorithm}

\section{Supplements to LNA Metamodel}

\subsection{Some Theoretical Results about LNA for SRN.} \label{subsec:lna_theory}

Recall that $\Omega$ is a size parameter of the system, and the relation between molecular concentrations $\pmb{s}_t$ and molecular counts $\pmb{x}_t$ is $\pmb{s}_t = \Omega^{-1} \pmb{x}_t$. We impose the following Assumption \ref{assu:propensity_reaction rate} (adopted from Assumption 1 in \cite{ferm2008hierarchy}).
\begin{assumption} \label{assu:propensity_reaction rate}
    The propensity functions have continuous third derivatives in space and time, i.e., $\omega_k(\pmb{x}_t;\pmb{\theta}_k) \in C^3(\{\pmb{x} \geq 0, t \geq 0\})$ for $k=1,2,\ldots,N_r$, and the relation between the propensity functions and the reaction rates (i.e., $v_k(\pmb{s}_t;\pmb{\theta}_k)$ for $k=1,2,\ldots,N_r$) can be written as $\omega_k(\pmb{x}_t;\pmb{\theta}_k) = \Omega v_k\left(\Omega^{-1}\pmb{x}_t;\pmb{\theta}_k\right) = \Omega v_k(\pmb{s}_t;\pmb{\theta}_k)$.
\end{assumption}

Suppose that Assumption \ref{assu:propensity_reaction rate} holds. If the initial conditions $\pmb{x}_0$ (the molecular counts of species at the initial time $t=0$) satisfy $\lim_{\Omega \to \infty} \Omega^{-1} \pmb{x}_0 = \bar{\pmb{s}}_0$ where $\bar{\pmb{s}}_0$ is the mean vector of the LNA model at the initial time $t=0$, then a result similar to the law of large numbers has been shown in \cite{kurtz1970solutions}, that is,
\begin{equation*}
    \lim_{\Omega \to \infty} \mathbb{P} \left( \sup_{t \leq T} ||\Omega^{-1} \pmb{x}_t - \bar{\pmb{s}}_t|| > \epsilon_0 \right) = 0,
\end{equation*}
for any $\epsilon_0 > 0$ in a finite time interval $t \in [0,T]$. This means that for a large system size $\Omega$, the molecular concentrations of species are well approximated by the mean vector of the LNA model. Moreover, consider a shifted and scaled version of $\pmb{x}_t$ defined by $\tilde{\pmb{x}}_t = \Omega^{\frac{1}{2}} \left(\Omega^{-1} \pmb{x}_t - \bar{\pmb{s}}_t\right)$. In \cite{kurtz1971limit}, it has been shown that $\tilde{\pmb{x}}_t$ is an estimate similar to CLT, that is, $\tilde{\pmb{x}}_t$ converges to a Gaussian random vector with mean $\pmb{0}$ and a covariance matrix $\pmb{\Gamma}_t$ in a finite interval $t \in [0,T]$ when $\Omega \to \infty$. Furthermore, a recent work provides nonasymptotic bounds as functions of the system size $\Omega$ on the error between the LNA and the stationary distribution of an appropriately scaled SRN; see \cite{grunberg2023stein} for details.

\subsection{More Descriptions of Bayesian Updating pKG-LNA Metamodel} \label{appendix:pKG_LNA}

The procedure of Bayesian updating pKG-LNA metamodel is summarized in Figure \ref{fig:intro2}.

\begin{figure}
    \FIGURE
    {\includegraphics[width=\textwidth]{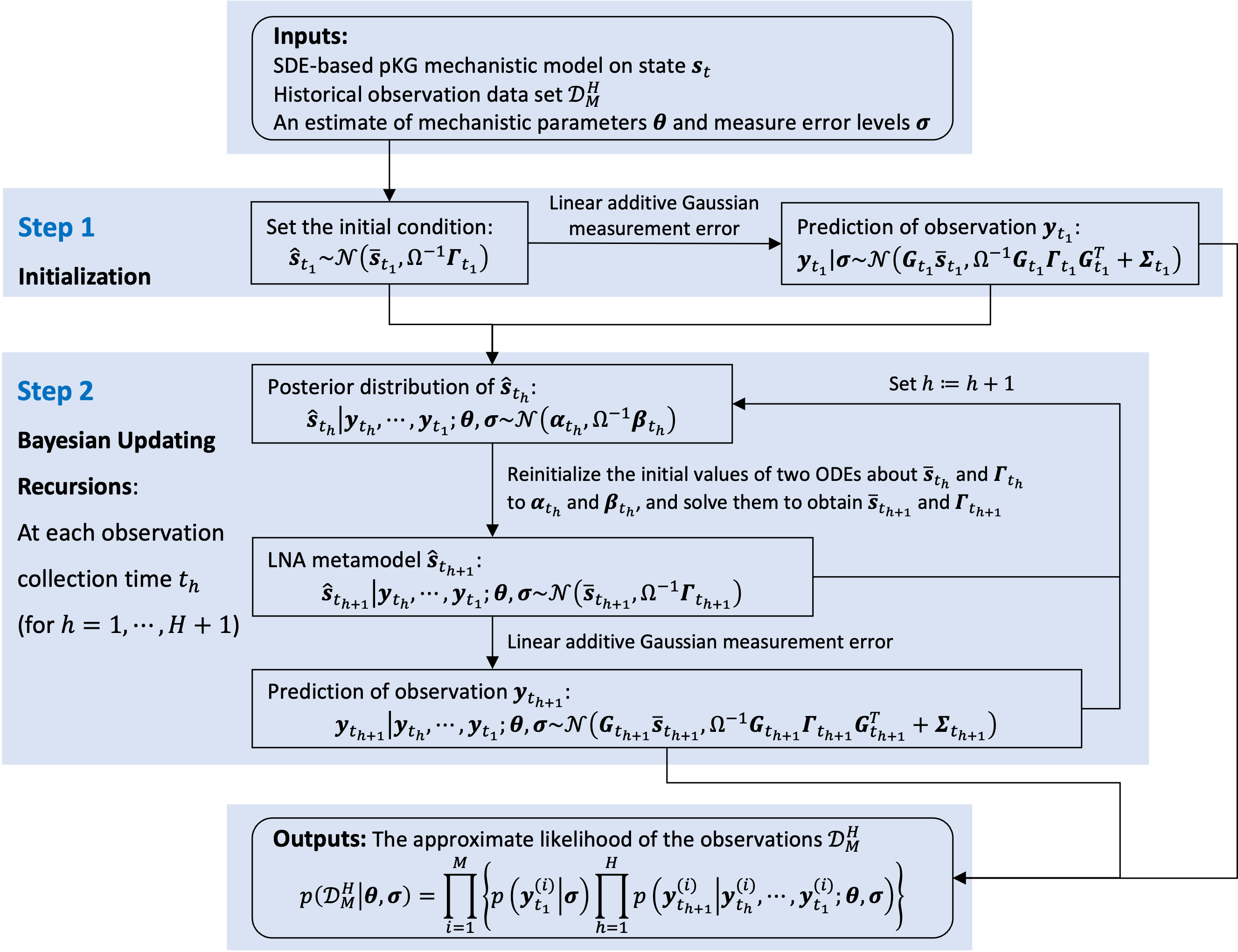}}
    {The procedure of Bayesian updating pKG-LNA metamodel. \label{fig:intro2}}
    {}
\end{figure}

\section{ULA for Bayesian Posterior Sampling} \label{subsec:ula_procedure}

We provide a detailed ULA procedure in Algorithm \ref{Algr:ULA} to generate posterior samples for the parameters $\pmb{\theta}$. We first set the prior $p(\pmb{\theta})$ and the initial value $\pmb{\theta}_{\Delta \tau}(\tau_0)$. The initial value can be set arbitrarily or generated by sampling the prior. Within each iteration of the ULA posterior sampler, given the previous sample $\pmb{\theta}_{\Delta \tau}(\tau_k)$, we compute and generate a proposal sample $\pmb{\theta}_{\Delta \tau}(\tau_{k+1})$ from the discretized LD \eqref{update rule}. In particular, the calculation of the gradient of log-posterior in Step 2 is based on the explicit derivation of likelihood, which can be built on the Bayesian updating pKG-LNA metamodel discussed in Section~\ref{subsec:Bayesian_updating_pkg_lna} in situations with sparse data collection, or the state transition model \eqref{likelihood - Poisson} in situations with dense data collection. Repetition of this procedure can get samples $\pmb{\theta}_{\Delta \tau}(\tau_k)$ for $K = 1,2,\ldots,N_0+(B-1)\delta+1$. Note that to reduce initial bias and correlations between consecutive samples, we remove the first $N_0$ samples and keep one for every $\delta$ samples. Consequently, we obtain the posterior samples $\pmb{\theta}^{(b)} \sim p(\pmb{\theta}|\mathcal{D}_M^H)$ with $b=1,2,\ldots,B$.
\begin{algorithm}[th]
\DontPrintSemicolon
\KwIn{The prior $p(\pmb{\theta})$, historical observation dataset $\mathcal{D}_M^H$, step size $\Delta \tau$, posterior sample size $B$, initial warm-up length $N_0$, and an appropriate integer $\delta$ to reduce sample correlation.} 
\KwOut{Posterior samples $\{\pmb{\theta}^{(b)}\}_{b=1}^B \sim p(\pmb{\theta}|\mathcal{D}_M^H)$.}
{
\textbf{1.} Set the initial value $\pmb{\theta}_{\Delta \tau}(\tau_0)$;\\
\For{$k=0,1,\ldots,N_0+(B-1)\delta$}{
    \textbf{2.} Calculate $\nabla_{\pmb{\theta}} \log p\left( \pmb{\theta} | \mathcal{D}_M^H\right) |_{\pmb{\theta} = \pmb{\theta}_{\Delta \tau}(\tau_k)}$ based on the explicit derivation of likelihood; \\
    \textbf{3.} Generate $\pmb{\theta}_{\Delta \tau}(\tau_{k+1})$ through Equation \eqref{update rule};}
\textbf{4.} Return $\pmb{\theta}^{(b)} := \pmb{\theta}_{\Delta \tau}(\tau_{N_0+(b-1)\delta+1})$ for $b=1,2,\ldots,B$.
}
\caption{ULA posterior sampler for parameters of SRN.}
\label{Algr:ULA}   
\end{algorithm}

\section{Proofs of Theorems, Propositions and Corollaries in Sections~\ref{sec:general_LNA} and \ref{sec:lna}}

This section provides the proofs of Theorem \ref{theorem:langevin_diffusion}, Proposition \ref{xi_mean_cov}, Theorem \ref{xi_stationary_dist}, Theorem \ref{Theorem:Fokker-Planck}, Theorem \ref{main_result_0}, and Corollary \ref{cor:finite-sample} of the paper.

\subsection{Proof of Proposition \ref{xi_mean_cov}} \label{proof:Prop_7}

\begin{proof}{Proof.}
    Define $\pmb{G}_t = \mathbb{E}[\pmb{M}_t\pmb{M}_t^\top]$. Recall $\pmb{m}_t = \mathbb{E}[\pmb{M}_t]$ and $\pmb{K}_t = \text{Var}[\pmb{M}_t] = \pmb{G}_t - \pmb{m}_t\pmb{m}_t^\top$. Then from Equation \eqref{eq:general_SDE}, we have
    \begin{align*}
        d\pmb{m}_t &= \mathbb{E}\left[ d\pmb{M}_t \right] = \mathbb{E}\left[ \nabla_{\pmb{x}} \pmb{a}\left(\pmb{x}\right) |_{\pmb{x} = \bar{\pmb{x}}_t} \pmb{M}_t dt \right] = \nabla_{\pmb{x}} \pmb{a}\left(\pmb{x}\right) |_{\pmb{x} = \bar{\pmb{x}}_t} \pmb{m}_t dt, \\
        d\pmb{G}_t &= \mathbb{E}\left[ \pmb{M}_{t+dt}\pmb{M}_{t+dt}^\top-\pmb{M}_t\pmb{M}_t^\top \right] = \mathbb{E}\left[ \pmb{M}_td\pmb{M}_t^\top+\left(d\pmb{M}_t\right)\pmb{M}_t^\top+d\pmb{M}_td\pmb{M}_t^\top \right] \\
        &= \mathbb{E}\left[\pmb{M}_t\pmb{M}_t^\top \left(\nabla_{\pmb{x}} \pmb{a}\left(\pmb{x}\right) |_{\pmb{x} = \bar{\pmb{x}}_t}\right)^\top + \nabla_{\pmb{x}} \pmb{a}\left(\pmb{x}\right) |_{\pmb{x} = \bar{\pmb{x}}_t} \pmb{M}_t\pmb{M}_t^\top\right]dt + \mathbb{E}\left[ \pmb{S}\left(\bar{\pmb{x}}_t\right) \pmb{S}\left(\bar{\pmb{x}}_t\right)^\top \right]dt \\
        &= \left(\pmb{G}_t \left(\nabla_{\pmb{x}} \pmb{a}\left(\pmb{x}\right) |_{\pmb{x} = \bar{\pmb{x}}_t}\right)^\top + \nabla_{\pmb{x}} \pmb{a}\left(\pmb{x}\right) |_{\pmb{x} = \bar{\pmb{x}}_t} \pmb{G}_t + \pmb{S}\left(\bar{\pmb{x}}_t\right) \pmb{S}\left(\bar{\pmb{x}}_t\right)^\top\right) dt, \\
        d\pmb{K}_t &= d\pmb{G}_t - d\left(\pmb{m}_t\pmb{m}_t^\top\right) = d\pmb{G}_t - \pmb{m}_td\pmb{m}_t^\top - \left(d\pmb{m}_t\right)\pmb{m}_t^\top \\
        &= \left(\pmb{G}_t \left(\nabla_{\pmb{x}} \pmb{a}\left(\pmb{x}\right) |_{\pmb{x} = \bar{\pmb{x}}_t}\right)^\top + \nabla_{\pmb{x}} \pmb{a}\left(\pmb{x}\right) |_{\pmb{x} = \bar{\pmb{x}}_t} \pmb{G}_t + \pmb{S}\left(\bar{\pmb{x}}_t\right) \pmb{S}\left(\bar{\pmb{x}}_t\right)^\top \right. \\
        &\quad \left. - \pmb{m}_t\pmb{m}_t^\top \left(\nabla_{\pmb{x}} \pmb{a}\left(\pmb{x}\right) |_{\pmb{x} = \bar{\pmb{x}}_t}\right)^\top - \nabla_{\pmb{x}} \pmb{a}\left(\pmb{x}\right) |_{\pmb{x} = \bar{\pmb{x}}_t}\pmb{m}_t\pmb{m}_t^\top\right) dt \\
        &= \left( \pmb{K}_t \left(\nabla_{\pmb{x}} \pmb{a}\left(\pmb{x}\right) |_{\pmb{x} = \bar{\pmb{x}}_t}\right)^\top + \nabla_{\pmb{x}} \pmb{a}\left(\pmb{x}\right) |_{\pmb{x} = \bar{\pmb{x}}_t} \pmb{K}_t + \pmb{S}\left(\bar{\pmb{x}}_t\right) \pmb{S}\left(\bar{\pmb{x}}_t\right)^\top \right) dt.
    \end{align*}
    \hfill \Halmos
\end{proof}

\subsection{Proof of Theorem \ref{theorem:langevin_diffusion}} \label{proof:theorem_langevin_diffusion}

\begin{proof}{Proof.}
This can be proved by using the Fokker-Planck equation \citep{risken1996fokker}. Recall that $\pmb{\theta}$ is an $N_{\theta}$-dimensional vector. The Fokker-Planck equation is a partial differential equation describing the time evolution of the PDF of $\pmb{\theta}$ at time $\tau$ denoted by $p(\pmb{\theta},\tau)$ given by
\begin{equation}
    \partial_\tau p(\pmb{\theta},\tau) =  -\sum_{k=1}^{N_{\theta}} \partial_{\theta_{k}} \left(\left(\partial_{\theta_{k}} \log p\left( \pmb{\theta} | \mathcal{D}_M^H\right)\right) p(\pmb{\theta},\tau)\right) + \sum_{k=1}^{N_{\theta}} \partial_{\theta_{k}}^2 p(\pmb{\theta},\tau), \label{FP equation}
\end{equation}
where $\theta_{k}$ is the $k$-th component of $\pmb{\theta}$ for $k=1,2,\ldots,N_{\theta}$. As $\tau \to \infty$, $p(\pmb{\theta},\tau) \to q(\pmb{\theta})$. And since $p(\pmb{\theta},\tau) \in C^1(\mathbb{R}^{N_{\theta}} \times \mathbb{R}_+)$, we can interchange the order of limit and derivative. Thus we have $\lim_{\tau \to \infty} \partial_\tau p(\pmb{\theta},\tau) = \partial_\tau \lim_{\tau \to \infty}p(\pmb{\theta},\tau) = \partial_\tau q(\pmb{\theta}) = 0$, and
\begin{align*}
    &\quad \lim_{\tau \to \infty} \left\{-\sum_{k=1}^{N_{\theta}} \partial_{\theta_{k}} \left(\left(\partial_{\theta_{k}} \log p\left( \pmb{\theta} | \mathcal{D}_M^H\right)\right) p(\pmb{\theta},\tau)\right) + \sum_{k=1}^{N_{\theta}} \partial_{\theta_{k}}^2 p(\pmb{\theta},\tau)\right\} \\
    &= -\sum_{k=1}^{N_{\theta}} \partial_{\theta_{k}} \left(\left(\partial_{\theta_{k}} \log p\left( \pmb{\theta} | \mathcal{D}_M^H\right)\right) q(\pmb{\theta})\right) + \sum_{k=1}^{N_{\theta}} \partial_{\theta_{k}}^2 q(\pmb{\theta}) \\
    &= \sum_{k=1}^{N_{\theta}} \partial_{\theta_{k}} \left\{\left(-\partial_{\theta_{k}} \log p\left( \pmb{\theta} | \mathcal{D}_M^H\right)\right) q(\pmb{\theta}) + \partial_{\theta_{k}} q(\pmb{\theta})\right\} \\
    &= \sum_{k=1}^{N_{\theta}} \partial_{\theta_{k}} q(\pmb{\theta}) \left\{-\partial_{\theta_{k}} \log p\left( \pmb{\theta} | \mathcal{D}_M^H\right) + \partial_{\theta_{k}} \log q(\pmb{\theta}) \right\} = 0.
\end{align*}
Hence, for $k=1,2,\ldots,N_{\theta}$, we have $-\partial_{\theta_{k}} \log p\left( \pmb{\theta} | \mathcal{D}_M^H\right) + \partial_{\theta_{k}} \log q(\pmb{\theta}) = 0$, which yields the conclusion $q(\pmb{\theta}) = \exp \left\{ \log p\left( \pmb{\theta} | \mathcal{D}_M^H\right)\right\} = p\left( \pmb{\theta} | \mathcal{D}_M^H\right)$. \hfill \Halmos
\end{proof}

\subsection{Proof of Theorem \ref{xi_stationary_dist}} \label{proof:Theorem_9}

The mean $\pmb{\varphi}(\tau)$ and covariance $\pmb{\Psi}(\tau)$ of $\pmb{\xi}(\tau)$ can be obtained by Proposition \ref{xi_mean_cov_original}, which is directly from Proposition \ref{xi_mean_cov}. In particular, letting the initial value $\pmb{\varphi}(0) := \pmb{0}$, then $\pmb{\varphi}(\tau) = \pmb{0}$ for all $\tau \geq 0$ according to the ODE \eqref{SDE_mean}.
\begin{proposition} \label{xi_mean_cov_original}
    The mean vector $\pmb{\varphi}(\tau)$ and the covariance matrix $\pmb{\Psi}(\tau)$ of the Gaussian variable $\pmb{\xi}(\tau)$ for any $\tau \geq 0$ can be obtained by solving the ODEs in \eqref{SDE_mean} and \eqref{SDE_cov},
    \begin{align}
        d\pmb{\varphi}(\tau) &= \nabla_{\pmb{\theta}}^2 \log p\left( \pmb{\theta} | \mathcal{D}_M^H\right) |_{\pmb{\theta} = \bar{\pmb{\theta}}(\tau)} \pmb{\varphi}(\tau) d\tau, \label{SDE_mean} \\
        d\pmb{\Psi}(\tau) &= \left( \pmb{\Psi}(\tau) \left(\nabla_{\pmb{\theta}}^2 \log p\left( \pmb{\theta} | \mathcal{D}_M^H\right) |_{\pmb{\theta} = \bar{\pmb{\theta}}(\tau)}\right)^\top + \nabla_{\pmb{\theta}}^2 \log p\left( \pmb{\theta} | \mathcal{D}_M^H\right) |_{\pmb{\theta} = \bar{\pmb{\theta}}(\tau)} \pmb{\Psi}(\tau) + 2 \pmb{I}_{N_\theta \times N_\theta} \right) d\tau, \label{SDE_cov}
    \end{align}
    with initial values $\pmb{\varphi}(0)$ and $\pmb{\Psi}(0)$, where $\pmb{I}_{N_\theta \times N_\theta}$ is an $N_\theta$-by-$N_\theta$ identity matrix.
\end{proposition}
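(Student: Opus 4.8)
The plan is to obtain Proposition~\ref{xi_mean_cov_original} as an immediate specialization of Proposition~\ref{xi_mean_cov}, since the SDE~\eqref{SDE} governing $\pmb{\xi}(\tau)$ shares exactly the linear form of the general SDE~\eqref{eq:general_SDE} for $\pmb{M}_t$ analyzed there. First I would establish the dictionary of correspondences: identifying $\pmb{M}_t$ with $\pmb{\xi}(\tau)$, the Jacobian $\nabla_{\pmb{z}}\pmb{a}(\pmb{z})|_{\pmb{z}=\bar{\pmb{z}}_t}$ with the Hessian $\nabla_{\pmb{\theta}}^2 \log p(\pmb{\theta}|\mathcal{D}_M^H)|_{\pmb{\theta}=\bar{\pmb{\theta}}(\tau)}$ (which arises because the drift of LD~\eqref{Langevin diffusion} is the log-posterior gradient, so its first-order Taylor coefficient in \eqref{Taylor_Expansion} is precisely the Hessian), and the state-dependent diffusion matrix $\pmb{S}(\bar{\pmb{z}}_t)$ with the constant matrix $\sqrt{2}\,\pmb{I}_{N_\theta \times N_\theta}$. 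Under this identification the general mean $\pmb{m}_t$ and covariance $\pmb{K}_t$ map to $\pmb{\varphi}(\tau)$ and $\pmb{\Psi}(\tau)$, respectively.

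Next I would observe that, because $\pmb{\xi}(\tau)$ solves a linear SDE driven by Brownian motion from a Gaussian initial condition, its law is Gaussian for every $\tau \geq 0$, so it is fully characterized by its first two moments; this justifies restricting attention to the mean and covariance dynamics. Substituting the correspondences into the general mean ODE~\eqref{eq:general_SDE_mean} reproduces \eqref{SDE_mean} directly. For the covariance, the only computation is the diffusion product $\pmb{S}(\bar{\pmb{z}}_t)\pmb{S}(\bar{\pmb{z}}_t)^\top$, which under $\pmb{S}(\bar{\pmb{z}}_t)=\sqrt{2}\,\pmb{I}_{N_\theta \times N_\theta}$ evaluates to $2\,\pmb{I}_{N_\theta \times N_\theta}$; inserting this into the general covariance ODE~\eqref{eq:general_SDE_cov} yields \eqref{SDE_cov} verbatim.

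Since the argument is a transcription of an already-proven result, there is no genuine obstacle; the only point requiring a moment of care is confirming that the linear coefficient multiplying $\pmb{\xi}(\tau)$ in the drift of \eqref{SDE} is a symmetric Hessian, so that the two Jacobian terms in \eqref{eq:general_SDE_cov} combine consistently and the resulting ODE preserves symmetry of $\pmb{\Psi}(\tau)$. This holds because $\nabla_{\pmb{\theta}}^2 \log p(\pmb{\theta}|\mathcal{D}_M^H)|_{\pmb{\theta}=\bar{\pmb{\theta}}(\tau)}$ is symmetric under the third-order smoothness in Assumption~\ref{assumption_lna}, equating its transpose to itself and leaving \eqref{SDE_cov} as stated. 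Finally, I would verify the initialization remark: setting $\pmb{\varphi}(0):=\pmb{0}$ turns \eqref{SDE_mean} into a homogeneous linear ODE with zero initial data, whose unique solution is $\pmb{\varphi}(\tau)=\pmb{0}$ for all $\tau \geq 0$.
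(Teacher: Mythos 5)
Your proposal is correct and follows exactly the route the paper takes: the paper derives Proposition~\ref{xi_mean_cov_original} directly from Proposition~\ref{xi_mean_cov} by the same specialization $\pmb{a}(\pmb{z}) := \nabla_{\pmb{\theta}} \log p\left(\pmb{\theta}\,\middle|\,\mathcal{D}_M^H\right)$ (so the Jacobian becomes the Hessian) and $\pmb{S} := \sqrt{2}\,\pmb{I}_{N_\theta \times N_\theta}$ with $\epsilon := 1$, giving $\pmb{S}\pmb{S}^\top = 2\,\pmb{I}_{N_\theta \times N_\theta}$. Your added remarks on the symmetry of the Hessian and the zero-mean initialization are consistent with the paper's surrounding discussion and do not change the argument.
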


The mean $\pmb{\varphi}^*(\tau)$ and covariance $\pmb{\Psi}^*(\tau)$ of $\pmb{\xi}^*(\tau)$ can be obtained by Proposition \ref{xi_star_mean_cov}, which is directly from Proposition \ref{xi_mean_cov_original}. Likewise, letting the initial value $\pmb{\varphi}^*(0) := \pmb{0}$, then $\pmb{\varphi}^*(\tau) = \pmb{0}$ for all $\tau \geq 0$ according to the ODE \eqref{SDE_mean_star}.
\begin{proposition} \label{xi_star_mean_cov}
    The mean vector $\pmb{\varphi}^*(\tau)$ and the covariance matrix $\pmb{\Psi}^*(\tau)$ of the Gaussian variable $\pmb{\xi}^*(\tau)$ for any $\tau \geq 0$ can be obtained by solving the ODEs in \eqref{SDE_mean_star} and \eqref{SDE_cov_star},
    \begin{align}
        d\pmb{\varphi}^*(\tau) &= \nabla_{\pmb{\theta}}^2 \log p\left( \pmb{\theta} \big| \mathcal{D}_M^H\right) \big|_{\pmb{\theta} = \bar{\pmb{\theta}}^*} \pmb{\varphi}^*(\tau) d\tau, \label{SDE_mean_star} \\
        d\pmb{\Psi}^*(\tau) &= \left( \pmb{\Psi}^*(\tau) \left(\nabla_{\pmb{\theta}}^2 \log p\left( \pmb{\theta} \big| \mathcal{D}_M^H\right) \big|_{\pmb{\theta} = \bar{\pmb{\theta}}^*}\right)^\top + \nabla_{\pmb{\theta}}^2 \log p\left( \pmb{\theta} \big| \mathcal{D}_M^H\right) \big|_{\pmb{\theta} = \bar{\pmb{\theta}}^*} \pmb{\Psi}^*(\tau) + 2 \pmb{I}_{N_\theta \times N_\theta} \right) d\tau, \label{SDE_cov_star}
    \end{align}
    with initial values $\pmb{\varphi}^*(0)$ and $\pmb{\Psi}^*(0)$, where $\pmb{I}_{N_\theta \times N_\theta}$ is an $N_\theta$-by-$N_\theta$ identity matrix.
\end{proposition}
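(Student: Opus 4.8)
The plan is to obtain Proposition~\ref{xi_star_mean_cov} as an immediate specialization of the already-established moment result. Note first that Proposition~\ref{xi_mean_cov_original} (itself derived directly from the general Proposition~\ref{xi_mean_cov}) furnishes the mean/covariance ODEs for the Gaussian process $\pmb{\xi}(\tau)$ solving the SDE~\eqref{SDE}, in which the drift Jacobian is the Hessian $\nabla_{\pmb{\theta}}^2 \log p(\pmb{\theta}|\mathcal{D}_M^H)|_{\pmb{\theta}=\bar{\pmb{\theta}}(\tau)}$ evaluated along the \emph{time-varying} deterministic path $\bar{\pmb{\theta}}(\tau)$. The target SDE~\eqref{SDE_star} for $\pmb{\xi}^*(\tau)$ is syntactically identical to \eqref{SDE}, except that the deterministic path is frozen at the constant equilibrium point $\bar{\pmb{\theta}}^*$ of ODE~\eqref{ODE}. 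Hence the whole proof reduces to substituting the constant path $\bar{\pmb{\theta}}(\tau) := \bar{\pmb{\theta}}^*$ into Proposition~\ref{xi_mean_cov_original}.

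Carrying this out, I would invoke Proposition~\ref{xi_mean_cov_original} with $\bar{\pmb{\theta}}(\tau)$ replaced throughout by $\bar{\pmb{\theta}}^*$. Because $\bar{\pmb{\theta}}^*$ does not depend on $\tau$, every occurrence of $\nabla_{\pmb{\theta}}^2 \log p(\pmb{\theta}|\mathcal{D}_M^H)|_{\pmb{\theta}=\bar{\pmb{\theta}}(\tau)}$ in \eqref{SDE_mean} and \eqref{SDE_cov} becomes the constant matrix $\nabla_{\pmb{\theta}}^2 \log p(\pmb{\theta}|\mathcal{D}_M^H)|_{\pmb{\theta}=\bar{\pmb{\theta}}^*}$, which reproduces exactly the mean ODE~\eqref{SDE_mean_star} and the covariance ODE~\eqref{SDE_cov_star}. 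Tracing back to the generic form, the identifications are $\pmb{M}_t \leftrightarrow \pmb{\xi}^*(\tau)$, the drift coefficient $\nabla_{\pmb{z}}\pmb{a}(\pmb{z})|_{\pmb{z}=\bar{\pmb{z}}_t} \leftrightarrow \nabla_{\pmb{\theta}}^2 \log p(\pmb{\theta}|\mathcal{D}_M^H)|_{\pmb{\theta}=\bar{\pmb{\theta}}^*}$, and the diffusion matrix $\pmb{S}(\bar{\pmb{z}}_t) \leftrightarrow \sqrt{2}\,\pmb{I}_{N_\theta \times N_\theta}$; the only arithmetic to check is the diffusion product appearing in \eqref{eq:general_SDE_cov}, namely $\pmb{S}(\bar{\pmb{z}}_t)\pmb{S}(\bar{\pmb{z}}_t)^\top = (\sqrt{2}\,\pmb{I}_{N_\theta \times N_\theta})(\sqrt{2}\,\pmb{I}_{N_\theta \times N_\theta})^\top = 2\,\pmb{I}_{N_\theta \times N_\theta}$, which supplies the additive $2\,\pmb{I}_{N_\theta \times N_\theta}$ term in \eqref{SDE_cov_star}. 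Gaussianity of $\pmb{\xi}^*(\tau)$ for all $\tau \ge 0$, given a Gaussian $\pmb{\xi}^*(0)$, is inherited from the linearity of \eqref{SDE_star} (as already noted in the text preceding the statement), so $\pmb{\varphi}^*(\tau)$ and $\pmb{\Psi}^*(\tau)$ fully determine the law of $\pmb{\xi}^*(\tau)$.

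I do not anticipate a substantive obstacle: the statement is a corollary of Proposition~\ref{xi_mean_cov}, and if one prefers a self-contained derivation, the computation in the proof of Proposition~\ref{xi_mean_cov} (differentiating $\mathbb{E}[\pmb{\xi}^*(\tau)]$ and $\mathbb{E}[\pmb{\xi}^*(\tau)\pmb{\xi}^*(\tau)^\top]$ via It\^{o}'s rule) can be reused verbatim with the constant coefficient in place of the time-varying Jacobian and $2\,\pmb{I}_{N_\theta \times N_\theta}$ in place of $\pmb{S}(\bar{\pmb{z}}_t)\pmb{S}(\bar{\pmb{z}}_t)^\top$. The single point worth stating explicitly is the constancy of the Hessian along the path $\{\bar{\pmb{\theta}}^*\}$, since this is precisely what distinguishes the linear time-invariant ODE~\eqref{SDE_cov_star} from the time-varying ODE~\eqref{SDE_cov} and what subsequently makes $\{\pmb{\xi}^*(\tau):\tau \ge 0\}$ an Ornstein--Uhlenbeck process; beyond flagging this, the proof is a mechanical substitution.
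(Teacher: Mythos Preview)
Your proposal is correct and mirrors the paper's own treatment: the paper simply states that Proposition~\ref{xi_star_mean_cov} ``is directly from Proposition~\ref{xi_mean_cov_original}'' (which itself comes from Proposition~\ref{xi_mean_cov}), i.e., it is obtained by substituting the constant path $\bar{\pmb{\theta}}(\tau)\equiv\bar{\pmb{\theta}}^*$ into the time-varying moment ODEs. Your write-up is in fact more explicit than the paper's one-line justification, and the identification of the diffusion term $\pmb{S}\pmb{S}^\top = 2\pmb{I}_{N_\theta\times N_\theta}$ is the only substantive check, which you handle correctly.
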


To prove Theorem \ref{xi_stationary_dist}, we need the following Lemma \ref{Hurwitz}.
\begin{lemma} \label{Hurwitz}
    Suppose that Assumption \ref{assumption_lna} holds, $\nabla_{\pmb{\theta}}^2 \log p\left( \pmb{\theta} | \mathcal{D}_M^H\right) |_{\pmb{\theta} = \bar{\pmb{\theta}}^*}$ is Hurwitz, which means that all of its eigenvalues have strictly negative real parts.
\end{lemma}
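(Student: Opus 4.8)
The plan is to exploit the fact that the matrix in question is the Hessian of $\log p(\pmb{\theta}|\mathcal{D}_M^H)$ evaluated at the interior equilibrium $\bar{\pmb{\theta}}^*$ of the gradient-ascent ODE \eqref{ODE}, at which $\nabla_{\pmb{\theta}} \log p(\pmb{\theta}|\mathcal{D}_M^H)|_{\pmb{\theta}=\bar{\pmb{\theta}}^*}=0$ and where $\log p$ attains a local maximum. Writing $\pmb{H} := \nabla_{\pmb{\theta}}^2 \log p(\pmb{\theta}|\mathcal{D}_M^H)|_{\pmb{\theta}=\bar{\pmb{\theta}}^*}$, Assumption \ref{assumption_lna} guarantees that $\log p$ is (at least) twice continuously differentiable on the neighborhood $N_\varepsilon(\bar{\pmb{\theta}}^*)$, so by Schwarz's theorem $\pmb{H}$ is a real symmetric matrix and all of its eigenvalues are real. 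For such a matrix, being Hurwitz (every eigenvalue has strictly negative real part) is equivalent to being negative definite, so it suffices to show $\pmb{H} \prec \pmb{0}$.

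First I would invoke the second-order necessary condition for an interior local maximum: since $\bar{\pmb{\theta}}^* \in {\rm int}(\pmb{\Theta})$ maximizes $\log p$ locally and the gradient vanishes there, the Hessian must be negative semidefinite, $\pmb{H} \preceq \pmb{0}$, giving real eigenvalues $\lambda_1,\ldots,\lambda_{N_\theta} \le 0$. The remaining and decisive step is to rule out a zero eigenvalue, i.e., to upgrade negative semidefiniteness to strict negative definiteness. I would do this by appealing to the non-degeneracy of the maximizer $\bar{\pmb{\theta}}^*$: treating $\bar{\pmb{\theta}}^*$ as a strict, isolated, non-degenerate local maximum—the standard regularity condition under which the posterior mode behaves like that of a locally Gaussian density—forces $\pmb{H}$ to be invertible, and an invertible negative semidefinite symmetric matrix is negative definite. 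Combining this with the real-spectrum property then yields $\lambda_i < 0$ for all $i$, hence $\pmb{H}$ is Hurwitz.

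The main obstacle is precisely the exclusion of zero eigenvalues, because convergence of the gradient flow to $\bar{\pmb{\theta}}^*$ does not by itself deliver a non-degenerate Hessian: gradient ascent on a degenerate potential such as $-x^4$ still converges to its maximizer while the second derivative vanishes at the limit. Thus mere attractivity of the equilibrium is insufficient, and the argument must rest on the non-degeneracy of the local maximum. I would emphasize that this non-degeneracy is the natural condition in the present setting, as it is exactly what renders the inverse Hessian $(-\pmb{H})^{-1}$ well-defined as the covariance in the subsequent Ornstein--Uhlenbeck analysis; under it the conclusion $\pmb{H} \prec \pmb{0}$, and therefore the Hurwitz property, follows at once.
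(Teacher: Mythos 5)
Your proposal is correct and takes essentially the same route as the paper's proof: symmetry of the Hessian from the differentiability in Assumption \ref{assumption_lna}, the equivalence of the Hurwitz property and negative definiteness for real symmetric matrices, and negative definiteness at the interior maximizer $\bar{\pmb{\theta}}^*$. You are in fact more careful than the paper, which simply asserts that an isolated local maximum implies a negative definite Hessian --- false in general, as your $-x^4$ example shows --- whereas you make the required non-degeneracy of the posterior mode explicit; that hidden regularity condition is needed in either version of the argument (and is indeed what makes $(-\nabla_{\pmb{\theta}}^2 \log p(\pmb{\theta}|\mathcal{D}_M^H)|_{\pmb{\theta}=\bar{\pmb{\theta}}^*})^{-1}$ well-defined downstream).
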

\begin{proof}{Proof.}
    Note that $\log p\left( \pmb{\theta} | \mathcal{D}_M^H\right)$ attains an isolated local maximum at $\pmb{\theta} = \bar{\pmb{\theta}}^*$, which means that the Hessian matrix $\nabla_{\pmb{\theta}}^2 \log p\left( \pmb{\theta} | \mathcal{D}_M^H\right)$ is negative definite at $\pmb{\theta} = \bar{\pmb{\theta}}^*$. And since $\log p\left( \pmb{\theta} | \mathcal{D}_M^H\right)$ is three times differentiable, its second partial derivatives are all continuous, thus the Hessian matrix $\nabla_{\pmb{\theta}}^2 \log p\left( \pmb{\theta} | \mathcal{D}_M^H\right) |_{\pmb{\theta} = \bar{\pmb{\theta}}^*}$ is a symmetric matrix by the symmetry of the second derivatives. Hence, the real symmetric matrix $\nabla_{\pmb{\theta}}^2 \log p\left( \pmb{\theta} | \mathcal{D}_M^H\right) |_{\pmb{\theta} = \bar{\pmb{\theta}}^*}$ is negative definite if and only if all of its eigenvalues are negative, which means it is Hurwitz. \hfill \Halmos
\end{proof}

Now we provide the proof of Theorem \ref{xi_stationary_dist}.
\begin{proof}{Proof.}
    From Proposition \ref{xi_star_mean_cov}, we have $\pmb{\xi}^*(\infty)$ is a zero mean Gaussian with covariance matrix $\pmb{\Psi}^*(\infty) \in \mathbb{R}^{N_\theta \times N_\theta}$ given as the solution to the following equation,
    \begin{equation}
        \pmb{\Psi}^*(\infty) \left(\nabla_{\pmb{\theta}}^2 \log p\left( \pmb{\theta} \big| \mathcal{D}_M^H\right) \big|_{\pmb{\theta} = \bar{\pmb{\theta}}^*}\right)^\top + \nabla_{\pmb{\theta}}^2 \log p\left( \pmb{\theta} \big| \mathcal{D}_M^H\right) \big|_{\pmb{\theta} = \bar{\pmb{\theta}}^*} \pmb{\Psi}^*(\infty) = - 2 \pmb{I}_{N_\theta \times N_\theta}. \label{Lyapunov}
    \end{equation}
    We note that Equation \eqref{Lyapunov} is a Lyapunov equation and organize the following lemma from \cite{simoncini2016computational},
    \begin{lemma}
        Given any square matrices $\pmb{A}$, $\pmb{P}$, $\pmb{Q} \in \mathbb{R}^{n\times n}$, where $\pmb{P}$ and $\pmb{Q}$ are symmetric, and $\pmb{Q}$ is positive definite, there exists a unique positive definite matrix $\pmb{P}$ satisfying $\pmb{A}^\top \pmb{P}+\pmb{P}\pmb{A}+\pmb{Q}=\pmb{0}_{n \times n}$ if and only if the linear system $\dot{\pmb{x}}=\pmb{A}\pmb{x}$ is globally asymptotically stable, that is, $\pmb{x}(\tau)\to 0$ as $\tau\to \infty$.
    \end{lemma}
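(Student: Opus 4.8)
The statement is the classical Lyapunov stability theorem for the matrix equation $\pmb{A}^\top\pmb{P} + \pmb{P}\pmb{A} + \pmb{Q} = \pmb{0}_{n\times n}$, and the plan is to prove the two implications separately, using throughout the standard fact that, for the linear system $\dot{\pmb{x}} = \pmb{A}\pmb{x}$, global asymptotic stability is equivalent to $\pmb{A}$ being Hurwitz (every eigenvalue having strictly negative real part).

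For the direction from stability to the existence of a unique positive definite solution, I would exhibit the solution explicitly. Assuming $\pmb{A}$ is Hurwitz, set
\begin{equation*}
    \pmb{P} = \int_0^\infty e^{\pmb{A}^\top \tau}\, \pmb{Q}\, e^{\pmb{A}\tau}\, d\tau.
\end{equation*}
The Hurwitz property gives an exponential bound $\|e^{\pmb{A}\tau}\| \leq c\, e^{-\gamma\tau}$ for some $c,\gamma>0$, which guarantees the integral converges. Symmetry of $\pmb{P}$ follows from symmetry of $\pmb{Q}$, and positive definiteness from $\pmb{x}^\top\pmb{P}\pmb{x} = \int_0^\infty (e^{\pmb{A}\tau}\pmb{x})^\top \pmb{Q}\, (e^{\pmb{A}\tau}\pmb{x})\, d\tau > 0$ for $\pmb{x}\neq\pmb{0}$, since $\pmb{Q}\succ\pmb{0}$ and $e^{\pmb{A}\tau}$ is invertible. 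To verify the equation I would identify the integrand of $\pmb{A}^\top\pmb{P} + \pmb{P}\pmb{A}$ with the exact derivative $\frac{d}{d\tau}\big(e^{\pmb{A}^\top\tau}\pmb{Q}\, e^{\pmb{A}\tau}\big)$, so the fundamental theorem of calculus yields $\pmb{A}^\top\pmb{P} + \pmb{P}\pmb{A} = \big[e^{\pmb{A}^\top\tau}\pmb{Q}\, e^{\pmb{A}\tau}\big]_0^\infty = -\pmb{Q}$. Uniqueness then follows by considering the difference $\pmb{D}$ of two solutions: the homogeneous relation $\pmb{A}^\top\pmb{D} + \pmb{D}\pmb{A} = \pmb{0}_{n\times n}$ forces $e^{\pmb{A}^\top\tau}\pmb{D}\, e^{\pmb{A}\tau}$ to be constant in $\tau$, and evaluating at $\tau=0$ and letting $\tau\to\infty$ gives $\pmb{D} = \pmb{0}_{n\times n}$.

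For the converse, suppose a positive definite $\pmb{P}$ satisfies the equation. I would conclude global asymptotic stability via Lyapunov's direct method with candidate $V(\pmb{x}) = \pmb{x}^\top\pmb{P}\pmb{x}$, which is positive definite and radially unbounded, and whose derivative along trajectories is $\dot V = \pmb{x}^\top(\pmb{A}^\top\pmb{P} + \pmb{P}\pmb{A})\pmb{x} = -\pmb{x}^\top\pmb{Q}\pmb{x} < 0$ for $\pmb{x}\neq\pmb{0}$; negative definiteness of $\dot V$ yields the claim. An equivalent eigenvalue route is to take any eigenpair $\pmb{A}\pmb{v} = \lambda\pmb{v}$ and pre/post-multiply the equation by $\pmb{v}^*$ and $\pmb{v}$ to obtain $2\,\mathrm{Re}(\lambda)\,\pmb{v}^*\pmb{P}\pmb{v} = -\pmb{v}^*\pmb{Q}\pmb{v}$, which forces $\mathrm{Re}(\lambda)<0$ since both quadratic forms are strictly positive.

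The hard part will be confined to the forward direction: every nontrivial step — convergence of the matrix integral defining $\pmb{P}$, the boundary evaluation at $\tau\to\infty$, and the uniqueness argument — rests on the exponential-decay estimate supplied by the Hurwitz property, so the real work lies in making that estimate and the associated passage to the limit rigorous. The converse and the reduction of global asymptotic stability to the Hurwitz condition are then routine applications of Lyapunov theory for linear time-invariant systems.
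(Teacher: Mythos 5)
Your proof is correct, and in fact the paper contains no proof to compare it with: the lemma is stated without argument, imported from \cite{simoncini2016computational}, inside the proof of Theorem~\ref{xi_stationary_dist}, where it is applied to the Lyapunov equation \eqref{Lyapunov} with the Hurwitz property supplied by Lemma~\ref{Hurwitz}. What you give is the complete classical proof, and every nontrivial step checks out. In the forward direction, the exponential bound $\|e^{\pmb{A}\tau}\| \leq c\,e^{-\gamma\tau}$ from the Hurwitz property justifies the convergence of $\pmb{P}=\int_0^\infty e^{\pmb{A}^\top\tau}\pmb{Q}\,e^{\pmb{A}\tau}\,d\tau$, the vanishing boundary term in $\pmb{A}^\top\pmb{P}+\pmb{P}\pmb{A}=\bigl[e^{\pmb{A}^\top\tau}\pmb{Q}\,e^{\pmb{A}\tau}\bigr]_0^\infty=-\pmb{Q}$, and the uniqueness argument; note that your uniqueness step proves more than the lemma asserts, namely uniqueness among \emph{all} solutions rather than only among positive definite ones, which is in fact the form implicitly used in the paper when it identifies the solution as $\pmb{\Psi}^*(\infty)=(-\nabla_{\pmb{\theta}}^2 \log p(\pmb{\theta}|\mathcal{D}_M^H)|_{\pmb{\theta}=\bar{\pmb{\theta}}^*})^{-1}$ for the symmetric case $\pmb{Q}=2\pmb{I}_{N_\theta\times N_\theta}$. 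In the converse, either of your two routes works; the eigenpair identity $2\,\mathrm{Re}(\lambda)\,\pmb{v}^*\pmb{P}\pmb{v}=-\pmb{v}^*\pmb{Q}\pmb{v}$ has the advantage of being self-contained (no appeal to a separate Lyapunov stability theorem), provided you record the two small facts it relies on: $\pmb{v}^*\pmb{A}^\top=(\pmb{A}\pmb{v})^*$ because $\pmb{A}$ is real, and $\pmb{v}^*\pmb{P}\pmb{v}>0$ for complex $\pmb{v}\neq\pmb{0}$ because a real symmetric positive definite matrix remains positive definite as a Hermitian form. The only blemish is inherited from the lemma statement itself, in which $\pmb{P}$ is simultaneously "given" and existentially quantified; you have proved the intended assertion.
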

    And if $\pmb{A}$ is a Hurwitz matrix, which can also be called a stable matrix, then the differential equation $\dot{\pmb{x}}=\pmb{A}\pmb{x}$ is asymptotically stable. Hence, Equation \eqref{Lyapunov} has a unique positive definite solution as long as $\nabla_{\pmb{\theta}}^2 \log p\left( \pmb{\theta} | \mathcal{D}_M^H\right) |_{\pmb{\theta} = \bar{\pmb{\theta}}^*}$ is Hurwitz. This condition holds in this case due to Lemma \ref{Hurwitz}. Furthermore, we can obtain this unique positive definite solution $\pmb{\Psi}^*(\infty) = (-\nabla_{\pmb{\theta}}^2 \log p\left( \pmb{\theta} | \mathcal{D}_M^H\right) |_{\pmb{\theta} = \bar{\pmb{\theta}}^*})^{-1}$. \hfill \Halmos
\end{proof}

\subsection{Proof of Theorem \ref{Theorem:Fokker-Planck}} \label{proof:Theorem_11}

First, we derive the Taylor series for the $j$-th component of $\nabla_{\pmb{\theta}} \log p\left( \pmb{\theta} | \mathcal{D}_M^H\right)|_{\pmb{\theta} = \bar{\pmb{\theta}}(\tau) + \pmb{\xi}(\tau)}$. Specifically, for each $j$-th component of $\nabla_{\pmb{\theta}} \log p\left( \pmb{\theta} | \mathcal{D}_M^H\right)|_{\pmb{\theta} = \bar{\pmb{\theta}}(\tau) + \pmb{\xi}(\tau)}$, i.e., $\partial_{\theta_j} \log p\left( \pmb{\theta} | \mathcal{D}_M^H\right)|_{\pmb{\theta} = \bar{\pmb{\theta}}(\tau) + \pmb{\xi}(\tau)}$, there exists $\alpha_j \in [0,1]$ such that $\alpha_j \left(\bar{\pmb{\theta}}(\tau) + \pmb{\xi}(\tau)\right) + (1-\alpha_j)\bar{\pmb{\theta}}(\tau) = \bar{\pmb{\theta}}(\tau) + \alpha_j \pmb{\xi}(\tau)$, and for $j=1,2,\ldots,N_\theta$,
\begin{align*}
    &\quad \partial_{\theta_j} \log p\left( \pmb{\theta} \big| \mathcal{D}_M^H\right)\big|_{\pmb{\theta} = \bar{\pmb{\theta}}(\tau) + \pmb{\xi}(\tau)} \\
    &= \partial_{\theta_j} \log p\left( \pmb{\theta} \big| \mathcal{D}_M^H\right) \big|_{\pmb{\theta} = \bar{\pmb{\theta}}(\tau)} + \left(\nabla_{\pmb{\theta}} \partial_{\theta_j} \log p\left( \pmb{\theta} \big| \mathcal{D}_M^H\right) \big|_{\pmb{\theta} = \bar{\pmb{\theta}}(\tau)}\right)^\top \left( \bar{\pmb{\theta}}(\tau) + \pmb{\xi}(\tau) - \bar{\pmb{\theta}}(\tau) \right) \\
    &\quad + \frac{1}{2} \left( \bar{\pmb{\theta}}(\tau) + \pmb{\xi}(\tau) - \bar{\pmb{\theta}}(\tau) \right)^\top \nabla_{\pmb{\theta}}^2 \partial_{\theta_j} \log p\left( \pmb{\theta} \big| \mathcal{D}_M^H\right) \big|_{\pmb{\theta} = \bar{\pmb{\theta}}(\tau) + \alpha_j \pmb{\xi}(\tau)} \left( \bar{\pmb{\theta}}(\tau) + \pmb{\xi}(\tau) - \bar{\pmb{\theta}}(\tau) \right) \\
    &= \partial_{\theta_j} \log p\left( \pmb{\theta} \big| \mathcal{D}_M^H\right) \big|_{\pmb{\theta} = \bar{\pmb{\theta}}(\tau)} \\
    &\quad + \left[\nabla_{\pmb{\theta}} \partial_{\theta_j} \log p\left( \pmb{\theta} \big| \mathcal{D}_M^H\right) \big|_{\pmb{\theta} = \bar{\pmb{\theta}}(\tau)} + \frac{1}{2} \nabla_{\pmb{\theta}}^2 \partial_{\theta_j} \log p\left( \pmb{\theta} \big| \mathcal{D}_M^H\right) \big|_{\pmb{\theta} = \bar{\pmb{\theta}}(\tau) + \alpha_j \pmb{\xi}(\tau)} \pmb{\xi}(\tau)\right]^\top \pmb{\xi}(\tau) \\
    &= \partial_{\theta_j} \log p\left( \pmb{\theta} \big| \mathcal{D}_M^H\right) \big|_{\pmb{\theta} = \bar{\pmb{\theta}}(\tau)} + \left[\nabla_{\pmb{\theta}} \partial_{\theta_j} \log p\left( \pmb{\theta} \big| \mathcal{D}_M^H\right) \big|_{\pmb{\theta} = \bar{\pmb{\theta}}(\tau)} + \mathcal{O}_j(1) \pmb{\xi}(\tau)\right]^\top \pmb{\xi}(\tau).
\end{align*}
The Lagrange form of the remainder $R_j(\pmb{\xi}(\tau)) = \left(\mathcal{O}_j(1)\pmb{\xi}(\tau)\right)^\top \pmb{\xi}(\tau)$ is the $j$-th component of the vector $\pmb{R}(\pmb{\xi}(\tau)) = \left(\mathcal{O}(||\pmb{\xi}(\tau)||)\pmb{1}_{N_\theta \times N_\theta}\right)^\top \pmb{\xi}(\tau)$, with $\mathcal{O}_j(1)$ a matrix equal to $\frac{1}{2} \nabla_{\pmb{\theta}}^2 \partial_{\theta_j} \log p\left( \pmb{\theta} | \mathcal{D}_M^H\right) |_{\pmb{\theta} = \bar{\pmb{\theta}}(\tau) + \alpha_j \pmb{\xi}(\tau)}$.

Now we provide the proof of Theorem \ref{Theorem:Fokker-Planck}.
\begin{proof}{Proof.}
For theoretical use, we represent Equation \eqref{Taylor_Expansion} using the Euler-Maruyama approximation with a fixed step size $\Delta \tau > 0$,
\begin{equation*}
    \hat{\pmb{\theta}}(\tau+1) = \hat{\pmb{\theta}}(\tau) + \left\{ \nabla_{\pmb{\theta}} \log p\left( \pmb{\theta} \big| \mathcal{D}_M^H\right) \big|_{\pmb{\theta} = \bar{\pmb{\theta}}(\tau)} + \nabla_{\pmb{\theta}}^2 \log p\left( \pmb{\theta} \big| \mathcal{D}_M^H\right) \big|_{\pmb{\theta} = \bar{\pmb{\theta}}(\tau)} \pmb{\xi}(\tau) \right\} \Delta \tau + \sqrt{2} \Delta \pmb{W}(\tau).
\end{equation*}
According to Taylor expansion \eqref{eq:Taylor}, the above equation can be represented as follows,
\begin{equation}
    \hat{\pmb{\theta}}(\tau+1) = \hat{\pmb{\theta}}(\tau) + \left\{ \nabla_{\pmb{\theta}} \log p\left( \pmb{\theta} \big| \mathcal{D}_M^H\right) \big|_{\pmb{\theta} = \hat{\pmb{\theta}}(\tau)} - \pmb{R}(\pmb{\xi}(\tau)) \right\} \Delta \tau + \sqrt{2} \Delta \pmb{W}(\tau). \label{taylor_error_1}
\end{equation}
Let $\phi_{\hat{\pmb{\theta}}(\tau)} (\pmb{v})$ be the characteristic function of $p(\pmb{\theta},\tau)$ (the PDF of $\pmb{\theta}$ at time $\tau$), defined by
\begin{equation}
    \phi_{\hat{\pmb{\theta}}(\tau)} (\pmb{v}) = \int \exp \left\{i\pmb{v}^\top \hat{\pmb{\theta}}(\tau)\right\} p(\pmb{\theta},\tau) \, d\hat{\pmb{\theta}}(\tau). \label{cf:theta_tau}
\end{equation}
The characteristic function of $\hat{\pmb{\theta}}(\tau) + \nabla_{\pmb{\theta}} \log p\left( \pmb{\theta} | \mathcal{D}_M^H\right) |_{\pmb{\theta} = \hat{\pmb{\theta}}(\tau)} \Delta \tau$ is
\begin{equation*}
    \phi_{\hat{\pmb{\theta}}(\tau) + \nabla_{\pmb{\theta}} \log p\left( \pmb{\theta} \big| \mathcal{D}_M^H\right) \big|_{\pmb{\theta} = \hat{\pmb{\theta}}(\tau)} \Delta \tau} (\pmb{v}) = \int \exp \left\{i\pmb{v}^\top \hat{\pmb{\theta}}(\tau) + i\pmb{v}^\top \nabla_{\pmb{\theta}} \log p\left( \pmb{\theta} \big| \mathcal{D}_M^H\right) \big|_{\pmb{\theta} = \hat{\pmb{\theta}}(\tau)} \Delta \tau\right\} p(\pmb{\theta},\tau) \, d\hat{\pmb{\theta}}(\tau).
\end{equation*}
And the characteristic function of $-\pmb{R}(\pmb{\xi}(\tau)) \Delta \tau$ is
\begin{align*}
    \phi_{-\pmb{R}(\pmb{\xi}(\tau)) \Delta \tau} (\pmb{v}) &= \mathbb{E} \left[ \exp \left\{ -i\pmb{v}^\top \pmb{R}(\pmb{\xi}(\tau)) \Delta \tau \right\} \right] = \mathbb{E} \left[ \sum_{l=0}^\infty \frac{1}{l!} \left(-i\pmb{v}^\top \pmb{R}(\pmb{\xi}(\tau)) \Delta \tau \right)^l \right] \\
    &= \sum_{l=0}^\infty \frac{1}{l!} \left(-i \Delta \tau \right)^l \mathbb{E} \left[ \left(\pmb{v}^\top \pmb{R}(\pmb{\xi}(\tau)) \right)^l \right] = 1 - \Delta \tau i \pmb{v}^\top \mathbb{E} \left[ \pmb{R}(\pmb{\xi}(\tau)) \right] + \mathcal{O}(\Delta \tau^2).
\end{align*}
It is known that the characteristic function of $\sqrt{2} \Delta \pmb{W}(\tau)$ is $\phi_{\sqrt{2} \Delta \pmb{W}(\tau)} (\pmb{v}) = \exp \left\{-\Delta \tau \pmb{v}^\top \pmb{v} \right\}$ since $\Delta \pmb{W}(\tau) \sim \mathcal{N} \left(\pmb{0}, \text{diag}\{\Delta \tau\}\right)$. Hence, the characteristic function of $\hat{\pmb{\theta}}(\tau+1)$ is
\begin{align}
    \phi_{\hat{\pmb{\theta}}(\tau+1)} (\pmb{v}) &= \phi_{\hat{\pmb{\theta}}(\tau) + \nabla_{\pmb{\theta}} \log p\left( \pmb{\theta} \big| \mathcal{D}_M^H\right) \big|_{\pmb{\theta} = \hat{\pmb{\theta}}(\tau)} \Delta \tau} (\pmb{v}) \phi_{-\pmb{R}(\pmb{\xi}(\tau)) \Delta \tau} (\pmb{v}) \phi_{\sqrt{2} \Delta \pmb{W}(\tau)} (\pmb{v}) \nonumber \\
    &= \int \exp \left\{i\pmb{v}^\top \hat{\pmb{\theta}}(\tau) + i\pmb{v}^\top \nabla_{\pmb{\theta}} \log p\left( \pmb{\theta} \big| \mathcal{D}_M^H\right) \big|_{\pmb{\theta} = \hat{\pmb{\theta}}(\tau)} \Delta \tau\right\} p(\pmb{\theta},\tau) \, d\hat{\pmb{\theta}}(\tau) \nonumber \\
    &\quad \cdot \left(1 - \Delta \tau i \pmb{v}^\top \mathbb{E} \left[ \pmb{R}(\pmb{\xi}(\tau)) \right] + \mathcal{O}(\Delta \tau^2)\right) \exp \left\{-\Delta \tau \pmb{v}^\top \pmb{v} \right\} \nonumber \\
    &= \int \exp \left\{i\pmb{v}^\top \hat{\pmb{\theta}}(\tau) + i\pmb{v}^\top \nabla_{\pmb{\theta}} \log p\left( \pmb{\theta} \big| \mathcal{D}_M^H\right) \big|_{\pmb{\theta} = \hat{\pmb{\theta}}(\tau)} \Delta \tau - \Delta \tau \pmb{v}^\top \pmb{v}\right\} \nonumber \\
    &\quad \cdot \left(1 - \Delta \tau i \pmb{v}^\top \mathbb{E} \left[ \pmb{R}(\pmb{\xi}(\tau)) \right] \right) p(\pmb{\theta},\tau) \, d\hat{\pmb{\theta}}(\tau) + \mathcal{O}(\Delta \tau^2). \label{cf:theta_tau+1}
\end{align}
From Equations \eqref{cf:theta_tau} and \eqref{cf:theta_tau+1}, and the fact that $\exp(x) = 1+x+\mathcal{O}(x^2)$, we have
\begin{align*}
    &\phi_{\hat{\pmb{\theta}}(\tau+1)} (\pmb{v}) - \phi_{\hat{\pmb{\theta}}(\tau)} (\pmb{v}) \\
    = &\int \exp \left\{i\pmb{v}^\top \hat{\pmb{\theta}}(\tau) \right\} \left( \exp \left\{i\pmb{v}^\top \nabla_{\pmb{\theta}} \log p\left( \pmb{\theta} \big| \mathcal{D}_M^H\right) \big|_{\pmb{\theta} = \hat{\pmb{\theta}}(\tau)} \Delta \tau - \Delta \tau \pmb{v}^\top \pmb{v}\right\} \left(1 - \Delta \tau i \pmb{v}^\top \mathbb{E} \left[ \pmb{R}(\pmb{\xi}(\tau)) \right] \right) - 1 \right) \\
    &\cdot p(\pmb{\theta},\tau) \, d\hat{\pmb{\theta}}(\tau) + \mathcal{O}(\Delta \tau^2) \\
    = &\int \exp \left\{i\pmb{v}^\top \hat{\pmb{\theta}}(\tau) \right\} \\
    &\cdot \left( \left(1 + i\pmb{v}^\top \nabla_{\pmb{\theta}} \log p\left( \pmb{\theta} \big| \mathcal{D}_M^H\right) \big|_{\pmb{\theta} = \hat{\pmb{\theta}}(\tau)} \Delta \tau - \Delta \tau \pmb{v}^\top \pmb{v} + \mathcal{O}(\Delta \tau^2)\right) \left(1 - \Delta \tau i \pmb{v}^\top \mathbb{E} \left[ \pmb{R}(\pmb{\xi}(\tau)) \right] \right) - 1 \right) \\
    &\cdot p(\pmb{\theta},\tau) \, d\hat{\pmb{\theta}}(\tau) + \mathcal{O}(\Delta \tau^2) \\
    = &\int \exp \left\{i\pmb{v}^\top \hat{\pmb{\theta}}(\tau) \right\} \left(i\pmb{v}^\top \nabla_{\pmb{\theta}} \log p\left( \pmb{\theta} \big| \mathcal{D}_M^H\right) \big|_{\pmb{\theta} = \hat{\pmb{\theta}}(\tau)} \Delta \tau - \Delta \tau \pmb{v}^\top \pmb{v}\right) p(\pmb{\theta},\tau) \, d\hat{\pmb{\theta}}(\tau) + \mathcal{O}(\Delta \tau^2) \\
    &- \int \exp \left\{i\pmb{v}^\top \hat{\pmb{\theta}}(\tau) \right\} \left(1 + i\pmb{v}^\top \nabla_{\pmb{\theta}} \log p\left( \pmb{\theta} \big| \mathcal{D}_M^H\right) \big|_{\pmb{\theta} = \hat{\pmb{\theta}}(\tau)} \Delta \tau - \Delta \tau \pmb{v}^\top \pmb{v}\right) \Delta \tau i \pmb{v}^\top \mathbb{E} \left[ \pmb{R}(\pmb{\xi}(\tau)) \right] \\
    &\cdot p(\pmb{\theta},\tau) \, d\hat{\pmb{\theta}}(\tau) + \mathcal{O}(\Delta \tau^3) \\
    = &\int \exp \left\{i\pmb{v}^\top \hat{\pmb{\theta}}(\tau) \right\} \left(i\pmb{v}^\top \nabla_{\pmb{\theta}} \log p\left( \pmb{\theta} \big| \mathcal{D}_M^H\right) \big|_{\pmb{\theta} = \hat{\pmb{\theta}}(\tau)} \Delta \tau - \Delta \tau \pmb{v}^\top \pmb{v}\right) p(\pmb{\theta},\tau) \, d\hat{\pmb{\theta}}(\tau) \\
    &- \int \exp \left\{i\pmb{v}^\top \hat{\pmb{\theta}}(\tau) \right\} \Delta \tau i \pmb{v}^\top \mathbb{E} \left[ \pmb{R}(\pmb{\xi}(\tau)) \right] p(\pmb{\theta},\tau) \, d\hat{\pmb{\theta}}(\tau) + \mathcal{O}(\Delta \tau^2) + \mathcal{O}(\Delta \tau^3).
\end{align*}
Thus,
\begin{align*}
    &\frac{\phi_{\hat{\pmb{\theta}}(\tau+1)} (\pmb{v}) - \phi_{\hat{\pmb{\theta}}(\tau)} (\pmb{v})}{\Delta \tau} \\
    = &\int \exp \left\{i\pmb{v}^\top \hat{\pmb{\theta}}(\tau) \right\} \left(i\pmb{v}^\top \nabla_{\pmb{\theta}} \log p\left( \pmb{\theta} \big| \mathcal{D}_M^H\right) \big|_{\pmb{\theta} = \hat{\pmb{\theta}}(\tau)} - \pmb{v}^\top \pmb{v}\right) p(\pmb{\theta},\tau) \, d\hat{\pmb{\theta}}(\tau) \\
    &- \int \exp \left\{i\pmb{v}^\top \hat{\pmb{\theta}}(\tau) \right\} i \pmb{v}^\top \mathbb{E} \left[ \pmb{R}(\pmb{\xi}(\tau)) \right] p(\pmb{\theta},\tau) \, d\hat{\pmb{\theta}}(\tau) + \mathcal{O}(\Delta \tau) + \mathcal{O}(\Delta \tau^2) \\
    = &\int \exp \left\{i\pmb{v}^\top \hat{\pmb{\theta}}(\tau)\right\} \sum_{k=1}^{N_\theta}\left(iv_k \partial_{\theta_k} \log p\left( \pmb{\theta} \big| \mathcal{D}_M^H\right) \big|_{\pmb{\theta} = \hat{\pmb{\theta}}(\tau)} - v_k^2\right) p(\pmb{\theta},\tau) \, d\hat{\pmb{\theta}}(\tau) \\
    &-\int \exp \left\{i\pmb{v}^\top \hat{\pmb{\theta}}(\tau)\right\} \sum_{k=1}^{N_\theta} \left(i v_k \mathbb{E} \left[ R_k(\pmb{\xi}(\tau)) \right]\right) p(\pmb{\theta},\tau) \, d\hat{\pmb{\theta}}(\tau) + \mathcal{O}(\Delta \tau) + \mathcal{O}(\Delta \tau^2) \\
    = &\sum_{k=1}^{N_\theta}\left(-iv_k\right) \int \exp \left\{i\pmb{v}^\top \hat{\pmb{\theta}}(\tau) \right\} \left(-\partial_{\theta_k} \log p\left( \pmb{\theta} \big| \mathcal{D}_M^H\right) \big|_{\pmb{\theta} = \hat{\pmb{\theta}}(\tau)}\right) p(\pmb{\theta},\tau) \, d\hat{\pmb{\theta}}(\tau) \\
    &+ \sum_{k=1}^{N_\theta}\left(-iv_k\right)^2 \int \exp \left\{i\pmb{v}^\top \hat{\pmb{\theta}}(\tau) \right\} p(\pmb{\theta},\tau) \, d\hat{\pmb{\theta}}(\tau) \\
    &+ \sum_{k=1}^{N_\theta}\left(-iv_k\right) \int \exp \left\{i\pmb{v}^\top \hat{\pmb{\theta}}(\tau) \right\} \left(\mathbb{E} \left[ R_k(\pmb{\xi}(\tau)) \right]\right) p(\pmb{\theta},\tau) \, d\hat{\pmb{\theta}}(\tau) + \mathcal{O}(\Delta \tau) + \mathcal{O}(\Delta \tau^2),
\end{align*}
where $R_k(\pmb{\xi}(\tau)) = \left(\mathcal{O}_k(1)\pmb{\xi}(\tau)\right)^\top \pmb{\xi}(\tau)$ is the $k$-th component of the vector $\pmb{R}(\pmb{\xi}(\tau))$. Let $\mathcal{F}$ be the Fourier transform defined by
\begin{align*}
    &\mathcal{F} \left[f(\pmb{x})\right] (\pmb{v}) = \frac{1}{\sqrt{2\pi}} \int f(\pmb{x}) \exp\{i\pmb{v}^\top\pmb{x}\} \, d\pmb{x}, \\
    &\mathcal{F}^{-1} \left[f(\pmb{x})\right] (\pmb{v}) = f(\pmb{x}) = \frac{1}{\sqrt{2\pi}} \int \mathcal{F} \left[f(\pmb{x})\right] (\pmb{v}) \exp\{-i\pmb{v}^\top\pmb{x}\} \, d\pmb{v},
\end{align*}
for an integrable function $f$. And the Fourier transform of the derivatives with respect to $x_k$ of the $l$-th order $\partial_{x_k}^l f(\pmb{x})$ is $\mathcal{F} \left[\partial_{x_k}^l f(\pmb{x})\right] (\pmb{v}) = (-iv_k)^l \mathcal{F} \left[f(\pmb{x})\right] (\pmb{v})$. Hence,
\begin{align*}
    &\frac{\phi_{\hat{\pmb{\theta}}(\tau+1)} (\pmb{v}) - \phi_{\hat{\pmb{\theta}}(\tau)} (\pmb{v})}{\sqrt{2\pi} \Delta \tau} \\
    = &\sum_{k=1}^{N_\theta}\left(-iv_k\right) \mathcal{F} \left[\left(-\partial_{\theta_k} \log p\left( \pmb{\theta} \big| \mathcal{D}_M^H\right) \big|_{\pmb{\theta} = \hat{\pmb{\theta}}(\tau)}\right) p(\pmb{\theta},\tau)\right] (\pmb{v}) + \sum_{k=1}^{N_\theta}\left(-iv_k\right)^2 \mathcal{F} \left[p(\pmb{\theta},\tau)\right] (\pmb{v}) \\
    &+ \sum_{k=1}^{N_\theta}\left(-iv_k\right) \mathcal{F} \left[\left(\mathbb{E} \left[ R_k(\pmb{\xi}(\tau)) \right]\right) p(\pmb{\theta},\tau)\right] (\pmb{v}) + \mathcal{O}(\Delta \tau) + \mathcal{O}(\Delta \tau^2) \\
    = &\sum_{k=1}^{N_\theta} \mathcal{F} \left[\partial_{\theta_k} \left(\left(-\partial_{\theta_k} \log p\left( \pmb{\theta} \big| \mathcal{D}_M^H\right) \big|_{\pmb{\theta} = \hat{\pmb{\theta}}(\tau)}\right) p(\pmb{\theta},\tau)\right)\right] (\pmb{v}) + \sum_{k=1}^{N_\theta} \mathcal{F} \left[\partial_{\theta_k}^2 p(\pmb{\theta},\tau)\right] (\pmb{v}) \\
    &+ \sum_{k=1}^{N_\theta} \mathcal{F} \left[\partial_{\theta_k}\left(\mathbb{E} \left[ R_k(\pmb{\xi}(\tau)) \right] p(\pmb{\theta},\tau)\right)\right] (\pmb{v}) + \mathcal{O}(\Delta \tau) + \mathcal{O}(\Delta \tau^2).
\end{align*}
From the fact that $\mathcal{F}^{-1} \phi_{\hat{\pmb{\theta}}(\tau)} (\pmb{v}) = \sqrt{2\pi} p(\pmb{\theta},\tau)$, we have
\begin{align*}
    \frac{p(\pmb{\theta},\tau+1) - p(\pmb{\theta},\tau)}{\Delta \tau} = &\sum_{k=1}^{N_\theta} \partial_{\theta_k} \left(\left(-\partial_{\theta_k} \log p\left( \pmb{\theta} \big| \mathcal{D}_M^H\right) \big|_{\pmb{\theta} = \hat{\pmb{\theta}}(\tau)}\right) p(\pmb{\theta},\tau)\right) + \sum_{k=1}^{N_\theta} \partial_{\theta_k}^2 p(\pmb{\theta},\tau) \\
    &+ \sum_{k=1}^{N_\theta} \partial_{\theta_k}\left(\mathbb{E} \left[ R_k(\pmb{\xi}(\tau)) \right] p(\pmb{\theta},\tau)\right) + \mathcal{O}(\Delta \tau) + \mathcal{O}(\Delta \tau^2) \\
    = &-\sum_{k=1}^{N_\theta} \partial_{\theta_k} \left(\left(\partial_{\theta_k} \log p\left( \pmb{\theta} \big| \mathcal{D}_M^H\right) \big|_{\pmb{\theta} = \hat{\pmb{\theta}}(\tau)}\right) p(\pmb{\theta},\tau)\right) + \sum_{k=1}^{N_\theta} \partial_{\theta_k}^2 p(\pmb{\theta},\tau) \\
    &+ \sum_{k=1}^{N_\theta} \partial_{\theta_k}\left(\mathbb{E} \left[ R_k(\pmb{\xi}(\tau)) \right] p(\pmb{\theta},\tau)\right) + \mathcal{O}(\Delta \tau) + \mathcal{O}(\Delta \tau^2).
\end{align*}
Now, letting $\Delta \tau \to 0$, we can obtain
\begin{align*}
    &\partial_{\tau} p(\pmb{\theta},\tau) = \lim_{\Delta \tau \rightarrow 0} \frac{p(\pmb{\theta},\tau+1) - p(\pmb{\theta},\tau)}{\Delta \tau} \\
    &= -\sum_{k=1}^{N_\theta} \partial_{\theta_k} \left(\left(\partial_{\theta_k} \log p\left( \pmb{\theta} \big| \mathcal{D}_M^H\right) \big|_{\pmb{\theta} = \hat{\pmb{\theta}}(\tau)}\right) p(\pmb{\theta},\tau)\right) + \sum_{k=1}^{N_\theta} \partial_{\theta_k}^2 p(\pmb{\theta},\tau) + \sum_{k=1}^{N_\theta} \partial_{\theta_k}\left(\mathbb{E} \left[ R_k(\pmb{\xi}(\tau)) \right] p(\pmb{\theta},\tau)\right).
\end{align*}
Further, let $q(\pmb{\theta})$ be the stationary distribution of $p(\pmb{\theta},\tau)$. As $\tau \to \infty$, $p(\pmb{\theta},\tau) \to q(\pmb{\theta})$. And since $p(\pmb{\theta},\tau) \in C^1(\mathbb{R}^{N_{\theta}} \times \mathbb{R}_+)$, we can interchange the order of limit and derivative. Thus, we have $\lim_{\tau \to \infty} \partial_\tau p(\pmb{\theta},\tau) = \partial_\tau \lim_{\tau \to \infty} p(\pmb{\theta},\tau) = \partial_\tau q(\pmb{\theta}) = 0$. And by Lebesgue's dominated convergence theorem, we can interchange the order of limit and integration. That is, $\lim_{\tau \to \infty} \mathbb{E} \left[ R_k(\pmb{\xi}(\tau)) \right] = \mathbb{E} \left[ R_k(\pmb{\xi}^*(\infty)) \right]$ (by the fact that $\lim_{\tau \to \infty}\pmb{\xi}(\tau) = \pmb{\xi}^*(\infty)$). Thus, we have
\begin{align*}
    \lim_{\tau \to \infty} &\left\{-\sum_{k=1}^{N_\theta} \partial_{\theta_k} \left(\left(\partial_{\theta_k} \log p\left( \pmb{\theta} \big| \mathcal{D}_M^H\right) \big|_{\pmb{\theta} = \hat{\pmb{\theta}}(\tau)}\right) p(\pmb{\theta},\tau)\right) + \sum_{k=1}^{N_\theta} \partial_{\theta_k}^2 p(\pmb{\theta},\tau) \right. \\
    &\quad \left. + \sum_{k=1}^{N_\theta} \partial_{\theta_k}\left(\mathbb{E} \left[ R_k(\pmb{\xi}(\tau)) \right] p(\pmb{\theta},\tau)\right)\right\} \\
    &= -\sum_{k=1}^{N_\theta} \partial_{\theta_k} \left(\left(\partial_{\theta_k} \log p\left( \pmb{\theta} | \mathcal{D}_M^H\right)\right) q(\pmb{\theta})\right) + \sum_{k=1}^{N_\theta} \partial_{\theta_k}^2 q(\pmb{\theta}) + \sum_{k=1}^{N_\theta} \partial_{\theta_k}\left(\mathbb{E} \left[ R_k(\pmb{\xi}^*(\infty)) \right] q(\pmb{\theta})\right) \\
    &= \sum_{k=1}^{N_\theta} \partial_{\theta_k} \left\{\left(-\partial_{\theta_k} \log p\left( \pmb{\theta} | \mathcal{D}_M^H\right)\right) q(\pmb{\theta}) + \partial_{\theta_k} q(\pmb{\theta}) + \mathbb{E} \left[ R_k(\pmb{\xi}^*(\infty)) \right] q(\pmb{\theta})\right\} \\
    &= \sum_{k=1}^{N_\theta} \partial_{\theta_k} q(\pmb{\theta}) \left\{-\partial_{\theta_k} \log p\left( \pmb{\theta} | \mathcal{D}_M^H\right) + \partial_{\theta_k} \log q(\pmb{\theta}) + \mathbb{E} \left[ R_k(\pmb{\xi}^*(\infty)) \right] \right\} = 0,
\end{align*}
where
\begin{align}
    \mathbb{E} \left[ R_k(\pmb{\xi}^*(\infty)) \right] &= \mathbb{E} \left[\left(\mathcal{O}_k(1)\pmb{\xi}^*(\infty)\right)^\top \pmb{\xi}^*(\infty)\right] = \mathbb{E} \left[{\pmb{\xi}^*}^\top(\infty)\mathcal{O}_k^\top(1) \pmb{\xi}^*(\infty)\right] \nonumber \\
    &= \mathbb{E} \left[\text{tr}\left({\pmb{\xi}^*}^\top(\infty)\mathcal{O}_k^\top(1) \pmb{\xi}^*(\infty)\right)\right] = \mathbb{E} \left[\text{tr}\left(\pmb{\xi}^*(\infty){\pmb{\xi}^*}^\top(\infty)\mathcal{O}_k^\top(1)\right)\right] \nonumber \\
    &= \text{tr}\left(\mathbb{E} \left[\pmb{\xi}^*(\infty){\pmb{\xi}^*}^\top(\infty)\mathcal{O}_k^\top(1)\right]\right) \nonumber \\
    &= \text{tr}\left(\mathbb{E}\left[\pmb{\xi}^*(\infty)\right]\mathbb{E}\left[{\pmb{\xi}^*}^\top(\infty)\mathcal{O}_k^\top(1)\right] + \text{Cov}\left(\pmb{\xi}^*(\infty),\mathcal{O}_k(1)\pmb{\xi}^*(\infty)\right)\right) \nonumber \\
    &= \text{tr}\left(\pmb{\varphi}^*(\infty){\pmb{\varphi}^*}^\top(\infty)\mathcal{O}_k^\top(1) + \pmb{\Psi}^*(\infty)\mathcal{O}_k^\top(1)\right). \label{eq:E_R}
\end{align}
From Theorem \ref{xi_stationary_dist}, we have $\mathbb{E} \left[ R_k(\pmb{\xi}^*(\infty)) \right] = \text{tr}\left(\left(-\nabla_{\pmb{\theta}}^2 \log p\left( \pmb{\theta} \big| \mathcal{D}_M^H\right) \big|_{\pmb{\theta} = \bar{\pmb{\theta}}^*}\right)^{-1}\mathcal{O}_k^\top(1)\right)$. And by the Cauchy-Schwarz inequality, 
\begin{align*}
    0 &\leq \left[ \text{tr}\left(\left(-\nabla_{\pmb{\theta}}^2 \log p\left( \pmb{\theta} \big| \mathcal{D}_M^H\right) \big|_{\pmb{\theta} = \bar{\pmb{\theta}}^*}\right)^{-1}\mathcal{O}_k^\top(1)\right) \right]^2 \\
    &\leq \text{tr}\left(\left(-\nabla_{\pmb{\theta}}^2 \log p\left( \pmb{\theta} \big| \mathcal{D}_M^H\right) \big|_{\pmb{\theta} = \bar{\pmb{\theta}}^*}\right)^{-1}\left(-\nabla_{\pmb{\theta}}^2 \log p\left( \pmb{\theta} \big| \mathcal{D}_M^H\right) \big|_{\pmb{\theta} = \bar{\pmb{\theta}}^*}\right)^{-\top}\right) \text{tr}\left(\mathcal{O}_k(1)\mathcal{O}_k^\top(1)\right) \\
    &\leq \left[ \text{tr}\left(\left(-\nabla_{\pmb{\theta}}^2 \log p\left( \pmb{\theta} \big| \mathcal{D}_M^H\right) \big|_{\pmb{\theta} = \bar{\pmb{\theta}}^*}\right)^{-1}\right) \right]^2 \left[ \text{tr}\left(\mathcal{O}_k^\top(1)\right) \right]^2.
\end{align*}
Since tr$(\pmb{\Psi}^*(\infty))=\text{tr}\left(\left(-\nabla_{\pmb{\theta}}^2 \log p\left( \pmb{\theta} | \mathcal{D}_M^H\right) |_{\pmb{\theta} = \bar{\pmb{\theta}}^*}\right)^{-1}\right) \to 0$, by the squeeze theorem we obtain $\mathbb{E} \left[ R_k(\pmb{\xi}^*(\infty)) \right] \to 0$. Thus, for $k=1,2,\ldots,N_{\theta}$, we have $-\partial_{\theta_k} \log p\left( \pmb{\theta} | \mathcal{D}_M^H\right) + \partial_{\theta_k} \log q(\pmb{\theta}) \to 0$, which yields the conclusion that $q(\pmb{\theta}) \to \exp \left\{\log p\left( \pmb{\theta} | \mathcal{D}_M^H\right)\right\} = p\left( \pmb{\theta} | \mathcal{D}_M^H\right)$, meaning that the stationary distribution of $\hat{\pmb{\theta}}(\tau)$ converges to the target posterior distribution $p\left( \pmb{\theta} | \mathcal{D}_M^H\right)$. \hfill \Halmos
\end{proof}

\subsection{Proof of Theorem \ref{main_result_0}} \label{proof:Theorem_14}

First, we convert the multivariate Gaussian conditional distribution $p(\pmb{y}_{t_{h+1}} | \pmb{y}_{t_1:t_h};\pmb{\theta})$ in Section~\ref{subsec:Bayesian_updating_pkg_lna}, i.e., $\pmb{y}_{t_{h+1}}|\pmb{y}_{t_1:t_h};\pmb{\theta} \sim \mathcal{N} (\pmb{G}_{t_{h+1}}\bar{\pmb{s}}_{t_{h+1}}, \Omega^{-1}\pmb{G}_{t_{h+1}}\pmb{\Gamma}_{t_{h+1}}\pmb{G}_{t_{h+1}}^\top+\pmb{\Sigma}_{t_{h+1}})$, to the canonical exponential family form via reparameterization.
\begin{example} \label{ex:exponential}
    We can rewrite $\pmb{y}_{t_{h+1}}|\pmb{y}_{t_1:t_h};\pmb{\theta} \sim \mathcal{N} (\pmb{G}_{t_{h+1}}\bar{\pmb{s}}_{t_{h+1}}, \Omega^{-1}\pmb{G}_{t_{h+1}}\pmb{\Gamma}_{t_{h+1}}\pmb{G}_{t_{h+1}}^\top+\pmb{\Sigma}_{t_{h+1}})$ as an exponential family distribution in the canonical form where
    \begin{align*}
        &\pmb{\eta}(\pmb{y}_{t_1:t_h};\pmb{\theta}) = 
        \begin{pmatrix}
            \left(\Omega^{-1}\pmb{G}_{t_{h+1}}\pmb{\Gamma}_{t_{h+1}}\pmb{G}_{t_{h+1}}^\top+\pmb{\Sigma}_{t_{h+1}}\right)^{-1} \pmb{G}_{t_{h+1}}\bar{\pmb{s}}_{t_{h+1}} \\
            {\rm vec} \left[-\frac{1}{2} \left(\Omega^{-1}\pmb{G}_{t_{h+1}}\pmb{\Gamma}_{t_{h+1}}\pmb{G}_{t_{h+1}}^\top+\pmb{\Sigma}_{t_{h+1}}\right)^{-1}\right]
        \end{pmatrix}, \quad
        \pmb{T}(\pmb{y}_{t_{h+1}}) = 
        \begin{pmatrix}
            \pmb{y}_{t_{h+1}} \\
            {\rm vec} \left[\pmb{y}_{t_{h+1}}\pmb{y}_{t_{h+1}}^\top\right]
        \end{pmatrix}, \\
        &A(\pmb{\eta}(\pmb{y}_{t_1:t_h};\pmb{\theta})) = \frac{1}{2} \left( \log \left|\Omega^{-1}\pmb{G}_{t_{h+1}}\pmb{\Gamma}_{t_{h+1}}\pmb{G}_{t_{h+1}}^\top+\pmb{\Sigma}_{t_{h+1}}\right| \right. \\
        &\quad \left. + \left(\pmb{G}_{t_{h+1}}\bar{\pmb{s}}_{t_{h+1}}\right)^\top \left(\Omega^{-1}\pmb{G}_{t_{h+1}}\pmb{\Gamma}_{t_{h+1}}\pmb{G}_{t_{h+1}}^\top+\pmb{\Sigma}_{t_{h+1}}\right)^{-1} \pmb{G}_{t_{h+1}}\bar{\pmb{s}}_{t_{h+1}} \right), \quad g (\pmb{y}_{t_{h+1}}) = \frac{1}{2} |\pmb{J}_{t_{h+1}}| \log (2\pi),
    \end{align*}
    where vec$[\cdot]$ is the vectorized operation for an $n$-by-$m$ matrix $\pmb{A}=(a_{ij})_{i,j=1}^{n,m}$ simply stacking the rows one at a time to create an $nm$-dimensional vector, i.e., ${\rm vec}[\pmb{A}] = (a_{11},\ldots,a_{1m},\ldots,a_{n1},\ldots,a_{nm})^\top$.
\end{example}

Next, since the likelihood function of historical observations $\mathcal{D}_M^H$ can be written as
\begin{align*}
    p\left(\mathcal{D}_M^H\big|\pmb{\theta}\right) &= \prod_{i=1}^M \left\{p\left(\pmb{y}_{t_1}^{(i)} \big| \pmb{\theta}\right) \prod_{h=1}^H p\left(\pmb{y}_{t_{h+1}}^{(i)} \big| \pmb{y}_{t_1:t_h}^{(i)};\pmb{\theta}\right)\right\} \\
    &= \exp \left\{\sum_{i=1}^M \left[\sum_{k=1}^{K_1} \eta_k^0(\pmb{\theta}) T_{k}(\pmb{y}_{t_1}^{(i)}) + \sum_{h=1}^H \sum_{k=1}^{K_{h+1}} \eta_k(\pmb{y}_{t_1:t_h}^{(i)};\pmb{\theta}) T_{k}(\pmb{y}_{t_{h+1}}^{(i)})\right] - \sum_{i=1}^M \sum_{h=0}^H g (\pmb{y}_{t_{h+1}}^{(i)}) \right. \\
    &\qquad \left. - \sum_{i=1}^M \left[A(\pmb{\eta}^0(\pmb{\theta})) + \sum_{h=1}^H A(\pmb{\eta}(\pmb{y}_{t_1:t_h}^{(i)};\pmb{\theta}))\right]\right\},
\end{align*}
and according to Bayes' rule, we have the posterior distribution of $\pmb{\theta}$ given historical observations $\mathcal{D}_M^H$ as
\begin{align*}
    p\left(\pmb{\theta}\big|\mathcal{D}_M^H\right) &= \frac{p(\pmb{\theta}) p\left(\mathcal{D}_M^H\big|\pmb{\theta}\right)}{\int_{\pmb{\Theta}}p(\pmb{\theta}) p\left(\mathcal{D}_M^H\big|\pmb{\theta}\right) \, d\pmb{\theta}} \\
    &= \rho^{-1}(\mathcal{D}_M^H) \exp \left\{\sum_{i=1}^M \left[\sum_{k=1}^{K_1} \eta_k^0(\pmb{\theta}) T_{k}(\pmb{y}_{t_1}^{(i)}) + \sum_{h=1}^H \sum_{k=1}^{K_{h+1}} \eta_k(\pmb{y}_{t_1:t_h}^{(i)};\pmb{\theta}) T_{k}(\pmb{y}_{t_{h+1}}^{(i)})\right] \right. \\
    &\qquad \left. - \sum_{i=1}^M \left[A(\pmb{\eta}^0(\pmb{\theta})) + \sum_{h=1}^H A(\pmb{\eta}(\pmb{y}_{t_1:t_h}^{(i)};\pmb{\theta}))\right] + \log \left(p(\pmb{\theta})\right)\right\},
\end{align*}
where
\begin{align*}
    \rho(\mathcal{D}_M^H) = \int_{\pmb{\Theta}} \exp &\left\{\sum_{i=1}^M \left[\sum_{k=1}^{K_1} \eta_k^0(\pmb{\theta}) T_{k}(\pmb{y}_{t_1}^{(i)}) + \sum_{h=1}^H \sum_{k=1}^{K_{h+1}} \eta_k(\pmb{y}_{t_1:t_h}^{(i)};\pmb{\theta}) T_{k}(\pmb{y}_{t_{h+1}}^{(i)})\right] \right. \\
    &\left. - \sum_{i=1}^M \left[A(\pmb{\eta}^0(\pmb{\theta})) + \sum_{h=1}^H A(\pmb{\eta}(\pmb{y}_{t_1:t_h}^{(i)};\pmb{\theta}))\right] + \log \left(p(\pmb{\theta})\right)\right\} \, d\pmb{\theta}
\end{align*}
is the normalization constant. In the following discussions, for the sake of notational simplicity, we set $\eta_k(\pmb{y}_{t_1:t_0}^{(i)};\pmb{\theta})=\eta_k^0(\pmb{\theta})$ for $k=1,2,\ldots,K_1$, $i=1,2,\ldots,M$, and $\pi(\mathcal{D}_M^H;\pmb{\theta})=\sum_{i=1}^M [A(\pmb{\eta}^0(\pmb{\theta})) + \sum_{h=1}^H A(\pmb{\eta}(\pmb{y}_{t_1:t_h}^{(i)};\pmb{\theta}))] - \log \left(p(\pmb{\theta})\right)=\sum_{i=1}^M \sum_{h=0}^H A(\pmb{\eta}(\pmb{y}_{t_1:t_h}^{(i)};\pmb{\theta}))-\log \left(p(\pmb{\theta})\right)$. Then, the above posterior distribution can be rewritten as 
\begin{equation} \label{posterior_exp}
    p\left(\pmb{\theta}\big|\mathcal{D}_M^H\right) = \rho^{-1}(\mathcal{D}_M^H) \exp \left\{\sum_{i=1}^M \sum_{h=0}^H \sum_{k=1}^{K_{h+1}} \eta_k(\pmb{y}_{t_1:t_h}^{(i)};\pmb{\theta}) T_{k}(\pmb{y}_{t_{h+1}}^{(i)}) - \pi(\mathcal{D}_M^H;\pmb{\theta})\right\},
\end{equation}
where $\rho(\mathcal{D}_M^H) = \int_{\pmb{\Theta}} \exp \left\{\sum_{i=1}^M \sum_{h=0}^H \sum_{k=1}^{K_{h+1}} \eta_k(\pmb{y}_{t_1:t_h}^{(i)};\pmb{\theta}) T_{k}(\pmb{y}_{t_{h+1}}^{(i)}) - \pi(\mathcal{D}_M^H;\pmb{\theta}) \right\} \, d\pmb{\theta}$.

The main tool used to derive the bound is the Stein method. Therefore, we first give the definition of the Stein operator, which is synthesized from studies \cite{anastasiou2023stein,barbour1990stein,gotze1991rate}.
\begin{definition}[Stein Density Operator] \label{def:Stein_Operator}
    For a random vector $\pmb{W} \in \mathbb{R}^d$ with PDF $p$, a Stein density operator is given by
    \begin{equation}
        T_{\pmb{W}}f(\pmb{w}) = \Delta f(\pmb{w}) + <\nabla \log p(\pmb{w}), \nabla f(\pmb{w})>, \label{Stein_operator_W}
    \end{equation}
    where $\Delta$ is the Laplacian and $<\cdot,\cdot>$ is the scalar product. Moreover, for the $d$-dimensional standard multivariate Gaussian random vector $\pmb{Z} \sim \mathcal{N}\left(\pmb{0},\pmb{I}_{d\times d}\right)$, we have
    \begin{equation}
        T_{\pmb{Z}}f(\pmb{w}) = \Delta f(\pmb{w}) - <\pmb{w}, \nabla f(\pmb{w})>.
    \end{equation}
\end{definition}

Next, according to \cite{braverman2017stein}, we give the definition of the Stein equation for the standard multivariate Gaussian distribution, which is the center part of the Stein method.
\begin{definition}[Standard Gaussian Stein Equation] \label{def:Stein_Equation}
    Let $\pmb{Z} \sim \mathcal{N}\left(\pmb{0},\pmb{I}_{d\times d}\right)$ and $h$ be a suitable function. The Stein equation for the standard multivariate Gaussian distribution is
    \begin{equation}
        T_{\pmb{Z}}f_h(\pmb{w}) = \Delta f_h(\pmb{w}) - <\pmb{w}, \nabla f_h(\pmb{w})> = h(\pmb{w}) - \mathbb{E}\left[h(\pmb{Z})\right]. \label{eq:Stein_Equation}
    \end{equation}
\end{definition}

To prove Theorem \ref{main_result_0}, we need the following Lemma \ref{wass_1} from Lemma 4.2 in \cite{grunberg2023stein}, which reformulates the 1-Wasserstein distance computed by taking a supremum over the 1-Lipschitz functions into the 1-Lipschitz functions that are also in $C^3(\mathbb{R}^d)$ with bounded derivatives.
\begin{lemma} \label{wass_1}
    The following holds for any two probability distributions $\mathbb{P}$ and $\mathbb{Q}$ over $\mathbb{R}^d$,
    \begin{equation*}
        \sup_{h \in \mathcal{W}_1} | \mathbb{E}_{\pmb{X}\sim \mathbb{P}}[h(\pmb{X})] - \mathbb{E}_{\pmb{Y}\sim \mathbb{Q}}[h(\pmb{Y})] | = \sup_{h \in \mathcal{H}} | \mathbb{E}_{\pmb{X}\sim \mathbb{P}}[h(\pmb{X})] - \mathbb{E}_{\pmb{Y}\sim \mathbb{Q}}[h(\pmb{Y})] |,
    \end{equation*}
    where $\mathcal{H} = \{h \in C^3(\mathbb{R}^d) \ | \ h \in \mathcal{W}_1, \forall k \in \{1,2,3\}, \sup_{\pmb{x}\in \mathbb{R}^d} ||\nabla^k h(\pmb{x})||<\infty\}$.
\end{lemma}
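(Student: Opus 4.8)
The plan is to establish the claimed equality by proving two inequalities. Since every $h \in \mathcal{H}$ is by definition $1$-Lipschitz, we have $\mathcal{H} \subseteq \mathcal{W}_1$, so the inequality $\sup_{h\in\mathcal{H}} |\mathbb{E}[h(\pmb{X})] - \mathbb{E}[h(\pmb{Y})]| \leq \sup_{h\in\mathcal{W}_1} |\mathbb{E}[h(\pmb{X})] - \mathbb{E}[h(\pmb{Y})]|$ is immediate. All the content lies in the reverse inequality, which I would prove by \emph{Gaussian mollification}: given an arbitrary $1$-Lipschitz test function $h$, I smooth it by convolution with a narrow Gaussian kernel, show the smoothed function lands in $\mathcal{H}$, and control the approximation error uniformly in $\pmb{x}$.

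Concretely, for $t > 0$ let $\phi_t$ denote the density of $\mathcal{N}(\pmb{0}, t^2 \pmb{I}_{d\times d})$ and define $h_t := h * \phi_t$. First, because $\phi_t \in C^\infty(\mathbb{R}^d)$ with all derivatives integrable, differentiating under the integral sign shows $h_t \in C^\infty(\mathbb{R}^d) \subseteq C^3(\mathbb{R}^d)$. Second, $h_t$ remains $1$-Lipschitz: since $h$ is $1$-Lipschitz it is differentiable almost everywhere with $\|\nabla h\| \leq 1$ by Rademacher's theorem, and $\nabla h_t = (\nabla h) * \phi_t$, so $\|\nabla h_t(\pmb{x})\| \leq \int \|\nabla h(\pmb{x}-\pmb{y})\| \phi_t(\pmb{y})\, d\pmb{y} \leq 1$. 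Third, for the higher derivatives I would move the extra differentiations onto the kernel, writing $\nabla^k h_t = (\nabla h) * \nabla^{k-1} \phi_t$ for $k \in \{2,3\}$, whence $\sup_{\pmb{x}} \|\nabla^k h_t(\pmb{x})\| \leq \|\nabla^{k-1}\phi_t\|_{L^1} < \infty$ because Gaussian derivatives are integrable. Together these three facts give $h_t \in \mathcal{H}$.

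The final ingredient is a uniform approximation bound. Using that $h$ is $1$-Lipschitz and that $\phi_t$ is the law of $t\pmb{Z}$ with $\pmb{Z} \sim \mathcal{N}(\pmb{0}, \pmb{I}_{d\times d})$, I obtain
\begin{equation*}
    |h_t(\pmb{x}) - h(\pmb{x})| = \left|\int \left(h(\pmb{x}-\pmb{y}) - h(\pmb{x})\right)\phi_t(\pmb{y})\, d\pmb{y}\right| \leq \int \|\pmb{y}\| \phi_t(\pmb{y})\, d\pmb{y} = t\, \mathbb{E}\|\pmb{Z}\| =: c_d\, t,
\end{equation*}
uniformly in $\pmb{x}$. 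Crucially this bound holds even when $h$ grows linearly and is unbounded, since it depends only on the Lipschitz modulus, not on $\|h\|_\infty$. Then $|\mathbb{E}[h(\pmb{X})] - \mathbb{E}[h(\pmb{Y})]| \leq |\mathbb{E}[h_t(\pmb{X})] - \mathbb{E}[h_t(\pmb{Y})]| + 2 c_d t \leq \sup_{g \in \mathcal{H}} |\mathbb{E}[g(\pmb{X})] - \mathbb{E}[g(\pmb{Y})]| + 2 c_d t$. Taking the supremum over $h \in \mathcal{W}_1$ and then letting $t \to 0$ yields the reverse inequality and completes the proof.

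The main obstacle is the simultaneous control of regularity and the Lipschitz constant: the mollified function must acquire bounded derivatives of orders one through three while its gradient stays bounded by one, which forces a careful choice of where each derivative is placed in the convolution---on $h$ to preserve the Lipschitz bound for the first derivative, and on the kernel $\phi_t$ to obtain finite sup-norms for the second and third. A secondary care point is that $\mathbb{E}[h(\pmb{X})]$ and $\mathbb{E}[h(\pmb{Y})]$ are only well-defined when $\mathbb{P}$ and $\mathbb{Q}$ have finite first moments; this is guaranteed in our setting because the distributions of interest, namely that of $\tilde{\pmb{\theta}}(\infty)$ and the standard Gaussian $\pmb{Z}$, are light-tailed, so the expectations and the uniform error control are legitimate throughout.
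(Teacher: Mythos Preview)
Your mollification argument is correct and complete: the three-step program---convolve with a Gaussian, verify membership in $\mathcal{H}$ by splitting derivatives between $h$ and the kernel, and control the uniform error via the Lipschitz bound---is exactly the right approach, and your handling of the integrability caveat is appropriate for the intended application.

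The paper itself does not prove this lemma at all; it simply invokes it as Lemma~4.2 of \cite{grunberg2023stein}. So there is no ``paper's own proof'' to compare against---you have supplied what the paper outsources. Your Gaussian mollification is in fact the standard route to such density results (and is essentially what the cited reference does), so there is no methodological divergence to discuss.
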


We also need the following derivative bound, which is adopted from Proposition 2.1 in \cite{gallouet2018regularity} and Lemma 3.1 in \cite{goldstein1996multivariate}.
\begin{lemma}[Wasserstein Gradient Bound] \label{derivative_bounds}
    If $h \in \mathcal{H}$, then the function
    \begin{equation*}
        f_h(\pmb{w}) = -\int_0^1 \frac{1}{2t} \left\{\mathbb{E}\left[h\left(\sqrt{t}\pmb{w}+\sqrt{(1-t)}\pmb{Z}\right)\right] - \mathbb{E}\left[h\left(\pmb{Z}\right)\right]\right\} \, dt
    \end{equation*}
    is a solution to Stein equation \eqref{eq:Stein_Equation} where $\pmb{Z} \sim \mathcal{N}\left(\pmb{0},\pmb{I}_{d \times d}\right)$. Moreover, we have the following bound for any first-order partial derivative,
    \begin{equation}
        \left|\frac{\partial f_h(\pmb{w})}{\partial w_k}\right| \leq \sup_{1\leq j \leq d} \left|\frac{\partial h(\pmb{w})}{\partial w_j}\right| \leq ||\nabla h(\pmb{w})|| \leq ||h||_{\text{Lip}} \leq 1,
    \end{equation}
    where $||h||_{\text{Lip}} = \sup_{\pmb{x}\neq \pmb{y}} \frac{|h(\pmb{x})-h(\pmb{y})|}{||\pmb{x}-\pmb{y}||}$.
\end{lemma}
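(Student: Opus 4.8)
The plan is to recognize $f_h$ as the classical solution to the Gaussian Stein equation built from the Ornstein--Uhlenbeck (OU) semigroup, and then to obtain the stated integral form by a change of variables. Let $(P_s)_{s\geq 0}$ denote the OU semigroup acting by $(P_s h)(\pmb{w}) = \mathbb{E}[h(e^{-s}\pmb{w} + \sqrt{1-e^{-2s}}\,\pmb{Z})]$, which is Mehler's formula with stationary measure $\mathcal{N}(\pmb{0},\pmb{I}_{d\times d})$. Substituting $t = e^{-2s}$, so that $ds = -\frac{1}{2t}\,dt$, $e^{-s}=\sqrt{t}$, and $\sqrt{1-e^{-2s}}=\sqrt{1-t}$, transforms $-\int_0^\infty \left(P_s h(\pmb{w}) - \mathbb{E}[h(\pmb{Z})]\right)ds$ exactly into the displayed expression for $f_h$. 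Thus it suffices to work with the semigroup representation $f_h = -\int_0^\infty \left(P_s h - \mathbb{E}[h(\pmb{Z})]\right)ds$.

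First I would verify that $f_h$ solves \eqref{eq:Stein_Equation}. The generator of $(P_s)$ is precisely the Stein operator of Definition \ref{def:Stein_Operator}, namely $\mathcal{L} = T_{\pmb{Z}} = \Delta - \langle \pmb{w}, \nabla\rangle$, so the semigroup satisfies the backward equation $\frac{d}{ds}P_s h = \mathcal{L}P_s h$. Applying $\mathcal{L}$ under the integral and telescoping by the fundamental theorem of calculus gives $T_{\pmb{Z}}f_h = -\int_0^\infty \frac{d}{ds}(P_s h)\,ds = -\left[P_s h\right]_{s=0}^{s=\infty} = h - \mathbb{E}[h(\pmb{Z})]$, where I use the boundary values $P_0 h = h$ and the ergodic limit $P_s h \to \mathbb{E}[h(\pmb{Z})]$ as $s\to\infty$. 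The convergence of the improper integral and the legitimacy of exchanging $\mathcal{L}$ with $\int_0^\infty$ both rest on the spectral gap of the OU semigroup, which forces $\|P_s h - \mathbb{E}[h(\pmb{Z})]\|$ to decay exponentially in $s$; the hypothesis $h \in \mathcal{H}$ (three times differentiable with bounded derivatives) supplies the dominating functions needed for differentiation under the integral and under the expectation.

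For the derivative bound I would differentiate the integral representation in $w_k$ directly. Passing $\partial_{w_k}$ inside the expectation and using the chain rule produces a factor $\sqrt{t}$, so that
\begin{equation*}
    \frac{\partial f_h(\pmb{w})}{\partial w_k} = -\int_0^1 \frac{1}{2\sqrt{t}}\,\mathbb{E}\!\left[\partial_{w_k} h\!\left(\sqrt{t}\pmb{w}+\sqrt{1-t}\,\pmb{Z}\right)\right]dt .
\end{equation*}
The weight $\frac{1}{2\sqrt{t}}$ has total mass $\int_0^1 \frac{1}{2\sqrt{t}}\,dt = 1$ on $[0,1]$, so taking absolute values and bounding the integrand pointwise yields $\left|\partial_{w_k}f_h(\pmb{w})\right| \leq \sup_{\pmb{v}}|\partial_{w_k}h(\pmb{v})|$. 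Since each partial derivative is dominated at every point by $\sup_{1\leq j\leq d}|\partial_{w_j}h| \leq \|\nabla h\| \leq \|h\|_{\mathrm{Lip}} \leq 1$ (the final inequality because $h \in \mathcal{W}_1$ is $1$-Lipschitz), the full chain of inequalities in the statement follows.

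The main obstacle I anticipate is the analytic justification of the two interchanges in the second paragraph --- commuting the second-order operator $\mathcal{L}$ with the improper $s$-integral, and the underlying convergence as $s\to\infty$. This is exactly where the regularity class $\mathcal{H}$ is essential: its uniform bounds on first-, second-, and third-order derivatives, together with the OU spectral gap (equivalently the Gaussian Poincar\'e inequality), guarantee the exponential decay and integrable dominating bounds that make the telescoping rigorous. By contrast, the derivative bound in the third paragraph is elementary once the clean identity $\int_0^1 \frac{1}{2\sqrt{t}}\,dt = 1$ is in hand, and it mirrors the arguments of \citet{gallouet2018regularity} and \citet{goldstein1996multivariate} from which the lemma is adopted.
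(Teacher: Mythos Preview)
Your proposal is correct and follows the standard Ornstein--Uhlenbeck semigroup argument; the paper itself does not prove this lemma but simply cites \cite{gallouet2018regularity} and \cite{goldstein1996multivariate}, whose arguments are exactly the ones you reproduce. The change of variables $t=e^{-2s}$, the telescoping via the backward equation, and the unit-mass observation $\int_0^1 \frac{1}{2\sqrt{t}}\,dt=1$ are all the key ingredients those references use.
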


In fact, recall Definition \ref{def:Stein_Operator}, if $\pmb{W} \sim p$, then $\mathbb{E}\left[T_{\pmb{W}}f(\pmb{W})\right] = 0$ for a large class of functions $f$; And the collection of all real-valued functions $f$ for which $\mathbb{E}\left[T_{\pmb{W}}f(\pmb{W})\right] = 0$ is called the Stein class of $T_{\pmb{W}}$. From Assumption \ref{assumption_0}(a), the density of $\tilde{\pmb{\theta}}(\infty)$ vanishes at the boundary of its support, and we have the following Lemma \ref{expectation_0} from Lemma 4 in \cite{fischer2022normal}.
\begin{lemma} \label{expectation_0}
    Suppose that Assumption \ref{assumption_0} holds, for all $h \in \mathcal{H}$, we have $\mathbb{E} \left[ T_{\tilde{\pmb{\theta}}(\infty)} f_h \left(\tilde{\pmb{\theta}}(\infty)\right) \right] = 0$, for $f_h$ the solution to Stein equation \eqref{eq:Stein_Equation}.
\end{lemma}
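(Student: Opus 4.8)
The plan is to verify that $f_h$ lies in the Stein class of $\tilde{\pmb{\theta}}(\infty)$, i.e., that the stated expectation vanishes, via an integration-by-parts (divergence-theorem) argument. Write $q$ for the density of the standardized posterior $\tilde{\pmb{\theta}}(\infty)$, which by the affine relation \eqref{eq:standardized_posterior} and Assumption \ref{assumption_0}(a) inherits the property that $q(\pmb{w}) \to 0$ as $\pmb{w}$ approaches the boundary of its support. Starting from the Stein density operator in Definition \ref{def:Stein_Operator}, I would expand
\begin{equation*}
    \mathbb{E}\left[ T_{\tilde{\pmb{\theta}}(\infty)} f_h(\tilde{\pmb{\theta}}(\infty)) \right] = \int \left( \Delta f_h(\pmb{w}) + \langle \nabla \log q(\pmb{w}), \nabla f_h(\pmb{w}) \rangle \right) q(\pmb{w}) \, d\pmb{w}.
\end{equation*}

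The central observation is that $\langle \nabla \log q, \nabla f_h \rangle\, q = \langle \nabla q, \nabla f_h \rangle$, since $\nabla \log q = \nabla q / q$, so the integrand collapses to $\Delta f_h \cdot q + \langle \nabla q, \nabla f_h \rangle = \nabla \cdot (q\, \nabla f_h)$, the divergence of the vector field $q\, \nabla f_h$. Applying the divergence theorem then reduces the integral to the boundary flux of $q\, \nabla f_h$ over the boundary of $\mathrm{supp}(q)$ (or, when the support is all of $\mathbb{R}^{N_\theta}$ after the reparameterization of Remark \ref{remark:reparametrization}, to the limit of the flux over spheres of radius $R \to \infty$). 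To argue this flux is zero, I would invoke Lemma \ref{derivative_bounds}, which guarantees that the solution $f_h$ to Stein equation \eqref{eq:Stein_Equation} has uniformly bounded first-order partial derivatives, $|\partial f_h / \partial w_k| \leq 1$; combined with $q \to 0$ at the boundary from Assumption \ref{assumption_0}(a), the product $q\, \nabla f_h$ vanishes there, and the identity reduces to the claimed conclusion.

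The main obstacle is the rigorous control of this boundary term in the unbounded case: one must ensure that $q(\pmb{w}) \|\nabla f_h(\pmb{w})\|$ decays fast enough to dominate the $\mathcal{O}(R^{N_\theta-1})$ growth of the sphere's surface measure, so that the flux over large spheres genuinely tends to zero, and one must justify that $q\, \nabla f_h$ is continuously differentiable and integrable so that the integration by parts is valid. Both follow from the exponential-family tail behavior of the posterior \eqref{posterior_exp} (which decays faster than any polynomial away from the mode) together with the boundedness of $\nabla f_h$; making this decay estimate precise is the crux of the argument and is exactly the regularity content underlying the cited result of \cite{fischer2022normal}.
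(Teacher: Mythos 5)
Your argument is sound, but it is worth noting that the paper does not actually prove Lemma \ref{expectation_0} at all: it observes that Assumption \ref{assumption_0}(a) makes the density of $\tilde{\pmb{\theta}}(\infty)$ vanish at the boundary of its support and then imports the conclusion wholesale from Lemma 4 of \cite{fischer2022normal}. What you have written is, in effect, a reconstruction of the standard proof behind that citation, and its skeleton is correct: the collapse of the integrand via $\langle \nabla \log q, \nabla f_h\rangle\, q = \langle \nabla q, \nabla f_h\rangle$ into the divergence $\nabla \cdot (q\,\nabla f_h)$ is exactly right, the divergence theorem reduces everything to a boundary flux, and Lemma \ref{derivative_bounds} gives the uniform bound $|\partial f_h/\partial w_k| \leq 1$ (hence $\|\nabla f_h\| \leq N_\theta^{1/2}$) needed so that the flux is controlled by the surface integral of $q$ alone. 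You also correctly, and commendably, isolate the real technical burden: in the unbounded case (the relevant one after the reparameterization of Remark \ref{remark:reparametrization}), the pointwise vanishing in Assumption \ref{assumption_0}(a) is strictly weaker than what is needed, since one must beat the $\mathcal{O}(R^{N_\theta - 1})$ growth of the sphere's surface measure and justify integrability of $\nabla \cdot (q\, \nabla f_h)$. The one place where you overstate is the assertion that the exponential-family form \eqref{posterior_exp} automatically yields faster-than-polynomial decay of the posterior in $\pmb{\theta}$: the canonical form there is exponential in the \emph{data} statistics $T_k(\pmb{y})$, and the tail behavior in $\pmb{\theta}$ depends on how $\eta_k(\cdot;\pmb{\theta})$ and $\pi(\mathcal{D}_M^H;\pmb{\theta})$ grow, which Assumption \ref{assumption_0} does not pin down. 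Strictly speaking, closing that step requires either the hypotheses of the cited lemma or an explicit tail/integrability condition; since the paper itself delegates precisely this content to \cite{fischer2022normal}, your proof is more explicit than the paper's treatment, with the decay estimate as the single step that exceeds what the stated assumptions literally provide.
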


We also require the following preparatory Lemma \ref{preparatory lemmas}.
\begin{lemma} \label{preparatory lemmas}
    Let $||\pmb{A}||_\infty = \max_{1\leq i \leq d} \sum_{j=1}^d |A_{ij}|$ denote the maximum absolute row sum norm of the matrix $\pmb{A} \in \mathbb{R}^{d\times d}$. And since $-\nabla_{\pmb{\theta}}^2 \log p\left( \pmb{\theta} | \mathcal{D}_M^H\right) |_{\pmb{\theta} = \bar{\pmb{\theta}}^*}$ is positive definite, its eigenvalues are positive real numbers, with $\lambda_{\min}^\prime > 0$ the smallest eigenvalue of $-\nabla_{\pmb{\theta}}^2 \log p\left( \pmb{\theta} | \mathcal{D}_M^H\right) |_{\pmb{\theta} = \bar{\pmb{\theta}}^*}$. For $1 \leq i,j \leq N_\theta$, let $a_{ij} = ((-\nabla_{\pmb{\theta}}^2 \log p\left( \pmb{\theta} | \mathcal{D}_M^H\right) |_{\pmb{\theta} = \bar{\pmb{\theta}}^*})^{-\frac{1}{2}})_{i,j}$ be the $(i,j)$-th element of the matrix $(-\nabla_{\pmb{\theta}}^2 \log p\left( \pmb{\theta} | \mathcal{D}_M^H\right) |_{\pmb{\theta} = \bar{\pmb{\theta}}^*})^{-\frac{1}{2}}$. Then for each $j=1,2,\ldots,N_\theta$, we have
    \begin{equation}
        \left|\sum_{u=1}^{N_\theta} a_{ju}\right| \leq \sum_{u=1}^{N_\theta} \left|a_{ju}\right| \leq ||\left(-\nabla_{\pmb{\theta}}^2 \log p\left( \pmb{\theta} \big| \mathcal{D}_M^H\right) \big|_{\pmb{\theta} = \bar{\pmb{\theta}}^*}\right)^{-\frac{1}{2}}||_\infty \leq N_\theta^{\frac{1}{2}} \lambda_{\min}^{\prime -\frac{1}{2}}. \label{inequality}
    \end{equation}
\end{lemma}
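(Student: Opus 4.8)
The plan is to establish the three inequalities in \eqref{inequality} from left to right, treating the negative Hessian as a fixed symmetric positive-definite matrix. For brevity write $\pmb{H} := -\nabla_{\pmb{\theta}}^2 \log p(\pmb{\theta}|\mathcal{D}_M^H)|_{\pmb{\theta}=\bar{\pmb{\theta}}^*}$. Since $\bar{\pmb{\theta}}^*$ is an isolated local maximum of the log-posterior (the same property invoked in Lemma~\ref{Hurwitz}), $\pmb{H}$ is symmetric positive definite, so its unique symmetric positive-definite inverse square root $\pmb{H}^{-\frac{1}{2}}$ exists, is symmetric, and has real entries $a_{ij}$.

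First I would dispatch the two elementary bounds. The leftmost inequality $\left|\sum_{u} a_{ju}\right| \le \sum_{u} |a_{ju}|$ is just the triangle inequality. The middle inequality $\sum_{u} |a_{ju}| \le ||\pmb{H}^{-\frac{1}{2}}||_\infty$ follows directly from the definition of the maximum absolute row-sum norm: the quantity $\sum_{u} |a_{ju}|$ is the absolute row sum of the $j$-th row of $\pmb{H}^{-\frac{1}{2}}$, which is bounded above by the maximum of such sums over all rows, i.e.\ by $||\pmb{H}^{-\frac{1}{2}}||_\infty$.

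The crux is the rightmost inequality, bounding $||\pmb{H}^{-\frac{1}{2}}||_\infty$ by $N_\theta^{\frac{1}{2}}\lambda_{\min}^{\prime -\frac{1}{2}}$, which I would route through the spectral norm $||\cdot||$ via the standard equivalence $||\pmb{A}||_\infty \le \sqrt{N_\theta}\,||\pmb{A}||$ valid for any $N_\theta\times N_\theta$ matrix. For each row $i$, Cauchy--Schwarz gives $\sum_{u} |a_{iu}| \le \sqrt{N_\theta}\,(\sum_{u} a_{iu}^2)^{\frac{1}{2}}$, and $(\sum_{u} a_{iu}^2)^{\frac{1}{2}} = ||\pmb{H}^{-\frac{1}{2}}\pmb{e}_i|| \le ||\pmb{H}^{-\frac{1}{2}}||$, using the symmetry of $\pmb{H}^{-\frac{1}{2}}$ (so the $i$-th row equals $\pmb{H}^{-\frac{1}{2}}\pmb{e}_i$) and $||\pmb{e}_i||=1$, where $\pmb{e}_i$ is the $i$-th standard basis vector; taking the maximum over $i$ yields the equivalence. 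Finally, since $\pmb{H}$ is symmetric positive definite with eigenvalues $\lambda_1 \le \cdots \le \lambda_{N_\theta}$ and $\lambda_{\min}^\prime = \lambda_1$, the matrix $\pmb{H}^{-\frac{1}{2}}$ is symmetric with eigenvalues $\lambda_i^{-\frac{1}{2}}$, so its spectral norm equals $\max_i \lambda_i^{-\frac{1}{2}} = \lambda_{\min}^{\prime -\frac{1}{2}}$. Combining these gives $||\pmb{H}^{-\frac{1}{2}}||_\infty \le \sqrt{N_\theta}\,\lambda_{\min}^{\prime -\frac{1}{2}}$, closing the chain.

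Because every step is elementary, there is no genuine obstacle here; the only point requiring care is bookkeeping over which norm is in play at each stage---in particular, correctly identifying $(\sum_{u} a_{iu}^2)^{\frac{1}{2}}$ as the Euclidean norm of $\pmb{H}^{-\frac{1}{2}}\pmb{e}_i$ (rather than conflating row and column norms) and confirming that it is the smallest eigenvalue $\lambda_{\min}^\prime$ of $\pmb{H}$, not of $\pmb{H}^{-\frac{1}{2}}$, that drives the final bound through the reciprocal-square-root map.
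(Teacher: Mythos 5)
Your proof is correct and follows essentially the same route as the paper's: the triangle inequality and the definition of the maximum absolute row-sum norm handle the first two inequalities, and the final bound goes through the spectral norm via the standard equivalence $||\pmb{A}||_\infty \leq \sqrt{N_\theta}\,||\pmb{A}||$ together with the identification of $||\pmb{H}^{-\frac{1}{2}}||$ as $\lambda_{\min}^{\prime -\frac{1}{2}}$. The only difference is that you derive the norm equivalence explicitly via Cauchy--Schwarz (using the symmetry of $\pmb{H}^{-\frac{1}{2}}$ to write each row as $\pmb{H}^{-\frac{1}{2}}\pmb{e}_i$), whereas the paper simply cites it as a standard inequality.
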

\begin{proof}{Proof.}
    It follows from standard results from linear algebra, and we give its short proof as follows. The inequality $|\sum_{u=1}^{N_\theta} a_{ju}| \leq \sum_{u=1}^{N_\theta} |a_{ju}|$ is trivial. And it is clear that $\sum_{u=1}^{N_\theta} |a_{ju}| \leq \max_{1\leq j \leq N_\theta} \sum_{u=1}^{N_\theta} |a_{ju}| = ||(-\nabla_{\pmb{\theta}}^2 \log p\left( \pmb{\theta} | \mathcal{D}_M^H\right) |_{\pmb{\theta} = \bar{\pmb{\theta}}^*})^{-\frac{1}{2}}||_\infty$, so it remains to prove that $||(-\nabla_{\pmb{\theta}}^2 \log p\left( \pmb{\theta} | \mathcal{D}_M^H\right) |_{\pmb{\theta} = \bar{\pmb{\theta}}^*})^{-\frac{1}{2}}||_\infty \leq N_\theta^{\frac{1}{2}} \lambda_{\min}^{\prime -\frac{1}{2}}$ for each $j=1,2,\ldots,N_\theta$. Now, since $(-\nabla_{\pmb{\theta}}^2 \log p\left( \pmb{\theta} | \mathcal{D}_M^H\right) |_{\pmb{\theta} = \bar{\pmb{\theta}}^*})^{-\frac{1}{2}}$ is symmetric and positive definite with positive eigenvalues, it follows that $||(-\nabla_{\pmb{\theta}}^2 \log p\left( \pmb{\theta} | \mathcal{D}_M^H\right) |_{\pmb{\theta} = \bar{\pmb{\theta}}^*})^{-\frac{1}{2}}|| = \lambda_{\max}^\prime$, where $\lambda_{\max}^\prime$ is the largest eigenvalue of $(-\nabla_{\pmb{\theta}}^2 \log p\left( \pmb{\theta} | \mathcal{D}_M^H\right) |_{\pmb{\theta} = \bar{\pmb{\theta}}^*})^{-\frac{1}{2}}$ with $\lambda_{\max}^\prime = \lambda_{\min}^{\prime -\frac{1}{2}}$. Also, we note the standard inequality $||\pmb{A}||_\infty \leq \sqrt{d} ||\pmb{A}||$. Thus, we get the inequality \eqref{inequality}. \hfill \Halmos
\end{proof}

Now we are ready to prove Theorem \ref{main_result_0}.
\begin{proof}{Proof.}
    To analyze $d_{\mathcal{W}_1} \left(\tilde{\pmb{\theta}}(\infty), \pmb{Z}\right) = \sup_{h \in \mathcal{W}_1} | \mathbb{E}[h(\tilde{\pmb{\theta}}(\infty))] - \mathbb{E}[h(\pmb{Z})] |$, from Lemma \ref{wass_1}, it is equivalent to analyze $d_{\mathcal{W}_1} \left(\tilde{\pmb{\theta}}(\infty), \pmb{Z}\right) = \sup_{h \in \mathcal{H}} | \mathbb{E}[h(\tilde{\pmb{\theta}}(\infty))] - \mathbb{E}[h(\pmb{Z})] |$. To do this, we use the comparison of Stein operators. By Definition \ref{def:Stein_Equation}, the Stein equation for $\pmb{Z}$ is $T_{\pmb{Z}}f_h(\pmb{w}) = h(\pmb{w}) - \mathbb{E}\left[h(\pmb{Z})\right]$. Taking the expectation of the Stein equation with respect to $\tilde{\pmb{\theta}}(\infty)$, we have $\mathbb{E} \left[T_{\pmb{Z}}f_h(\tilde{\pmb{\theta}}(\infty))\right] = \mathbb{E} \left[h(\tilde{\pmb{\theta}}(\infty))\right] - \mathbb{E} \left[h(\pmb{Z})\right]$. From Lemma \ref{expectation_0}, we have $\mathbb{E} [ T_{\tilde{\pmb{\theta}}(\infty)} f_h \left(\tilde{\pmb{\theta}}(\infty)\right) ] = 0$. Thus, for all $h \in \mathcal{H}$, and by Definition \ref{def:Stein_Operator}, we have
    \begin{align*}
        \left|\mathbb{E}[h(\tilde{\pmb{\theta}}(\infty))] - \mathbb{E}[h(\pmb{Z})]\right| &= \left|\mathbb{E} \left[T_{\pmb{Z}}f_h(\tilde{\pmb{\theta}}(\infty))\right] - \mathbb{E} \left[ T_{\tilde{\pmb{\theta}}(\infty)} f_h \left(\tilde{\pmb{\theta}}(\infty)\right) \right]\right| \\
        &= \left|\mathbb{E} \left[T_{\pmb{Z}}f_h(\tilde{\pmb{\theta}}(\infty)) - T_{\tilde{\pmb{\theta}}(\infty)} f_h \left(\tilde{\pmb{\theta}}(\infty)\right)\right]\right| \\
        &\leq \mathbb{E} \left|T_{\pmb{Z}}f_h(\tilde{\pmb{\theta}}(\infty)) - T_{\tilde{\pmb{\theta}}(\infty)} f_h \left(\tilde{\pmb{\theta}}(\infty)\right)\right| \\
        &= \mathbb{E} \left|<\tilde{\pmb{\theta}}(\infty) + \nabla \log p_{\tilde{\pmb{\theta}}(\infty)} (\pmb{w})|_{\pmb{w}=\tilde{\pmb{\theta}}(\infty)}, \nabla f_h (\pmb{w})|_{\pmb{w}=\tilde{\pmb{\theta}}(\infty)}>\right|.
    \end{align*}
    And by Equation \eqref{posterior_exp}, for all $\pmb{w}$ such that $(-\nabla_{\pmb{\theta}}^2 \log p\left( \pmb{\theta} | \mathcal{D}_M^H\right) |_{\pmb{\theta} = \bar{\pmb{\theta}}^*})^{-\frac{1}{2}}\pmb{w} + \bar{\pmb{\theta}}^* \in \pmb{\Theta}$, we clearly have the PDF of $\tilde{\pmb{\theta}}(\infty)$ as follows,
    \begin{align*}
        p_{\tilde{\pmb{\theta}}(\infty)}(\pmb{w}) &= \left(\det\left(-\nabla_{\pmb{\theta}}^2 \log p\left( \pmb{\theta} \big| \mathcal{D}_M^H\right) \big|_{\pmb{\theta} = \bar{\pmb{\theta}}^*}\right)\right)^{-\frac{1}{2}} p \left(\left(-\nabla_{\pmb{\theta}}^2 \log p\left( \pmb{\theta} \big| \mathcal{D}_M^H\right) \big|_{\pmb{\theta} = \bar{\pmb{\theta}}^*}\right)^{-\frac{1}{2}}\pmb{w} + \bar{\pmb{\theta}}^* \bigg| \mathcal{D}_M^H\right) \\
        &\propto \exp \left\{\sum_{i=1}^M \sum_{h=0}^H \sum_{k=1}^{K_{h+1}} \eta_k\left(\pmb{y}_{t_1:t_h}^{(i)};\left(-\nabla_{\pmb{\theta}}^2 \log p\left( \pmb{\theta} \big| \mathcal{D}_M^H\right) \big|_{\pmb{\theta} = \bar{\pmb{\theta}}^*}\right)^{-\frac{1}{2}}\pmb{w} + \bar{\pmb{\theta}}^*\right) T_{k}(\pmb{y}_{t_{h+1}}^{(i)}) \right. \\
        &\quad \left. - \pi\left(\mathcal{D}_M^H;\left(-\nabla_{\pmb{\theta}}^2 \log p\left( \pmb{\theta} \big| \mathcal{D}_M^H\right) \big|_{\pmb{\theta} = \bar{\pmb{\theta}}^*}\right)^{-\frac{1}{2}}\pmb{w} + \bar{\pmb{\theta}}^*\right)\right\}.
    \end{align*}
    Then for $j=1,2,\ldots,N_\theta$,
    \begin{align*}
        \frac{\partial \log p_{\tilde{\pmb{\theta}}(\infty)}(\pmb{w})}{\partial w_j} &= \sum_{i=1}^M \sum_{h=0}^H \sum_{k=1}^{K_{h+1}} \frac{\partial \eta_k\left(\pmb{y}_{t_1:t_h}^{(i)};\left(-\nabla_{\pmb{\theta}}^2 \log p\left( \pmb{\theta} \big| \mathcal{D}_M^H\right) \big|_{\pmb{\theta} = \bar{\pmb{\theta}}^*}\right)^{-\frac{1}{2}}\pmb{w} + \bar{\pmb{\theta}}^*\right)}{\partial w_j} T_{k}(\pmb{y}_{t_{h+1}}^{(i)}) \\
        &\quad - \frac{\partial \pi\left(\mathcal{D}_M^H;\left(-\nabla_{\pmb{\theta}}^2 \log p\left( \pmb{\theta} \big| \mathcal{D}_M^H\right) \big|_{\pmb{\theta} = \bar{\pmb{\theta}}^*}\right)^{-\frac{1}{2}}\pmb{w} + \bar{\pmb{\theta}}^*\right)}{\partial w_j} \\
        &= \sum_{i=1}^M \sum_{h=0}^H \sum_{k=1}^{K_{h+1}} \sum_{u=1}^{N_\theta} a_{ju} \partial_u \eta_k\left(\pmb{y}_{t_1:t_h}^{(i)};\left(-\nabla_{\pmb{\theta}}^2 \log p\left( \pmb{\theta} \big| \mathcal{D}_M^H\right) \big|_{\pmb{\theta} = \bar{\pmb{\theta}}^*}\right)^{-\frac{1}{2}}\pmb{w} + \bar{\pmb{\theta}}^*\right) T_{k}(\pmb{y}_{t_{h+1}}^{(i)}) \\
        &\quad - \sum_{u=1}^{N_\theta} a_{ju} \partial_u \pi\left(\mathcal{D}_M^H;\left(-\nabla_{\pmb{\theta}}^2 \log p\left( \pmb{\theta} \big| \mathcal{D}_M^H\right) \big|_{\pmb{\theta} = \bar{\pmb{\theta}}^*}\right)^{-\frac{1}{2}}\pmb{w} + \bar{\pmb{\theta}}^*\right) \\
        &= \sum_{u=1}^{N_\theta} a_{ju} \frac{\partial \log p\left( \pmb{\theta} | \mathcal{D}_M^H\right)}{\partial \theta_u}.
    \end{align*}
    Now, we Taylor expand $\frac{\partial \log p_{\tilde{\pmb{\theta}}(\infty)}(\pmb{w})}{\partial w_j}$ around $\pmb{w} = 0$. The following formulas will be useful,
    \begin{align*}
        &\frac{\partial^2 \log p_{\tilde{\pmb{\theta}}(\infty)}(\pmb{w})}{\partial w_j \partial w_l} = \sum_{i=1}^M \sum_{h=0}^H \sum_{k=1}^{K_{h+1}} \frac{\partial^2 \eta_k\left(\pmb{y}_{t_1:t_h}^{(i)};\left(-\nabla_{\pmb{\theta}}^2 \log p\left( \pmb{\theta} \big| \mathcal{D}_M^H\right) \big|_{\pmb{\theta} = \bar{\pmb{\theta}}^*}\right)^{-\frac{1}{2}}\pmb{w} + \bar{\pmb{\theta}}^*\right)}{\partial w_j \partial w_l} T_{k}(\pmb{y}_{t_{h+1}}^{(i)}) \\
        &\quad - \frac{\partial^2 \pi\left(\mathcal{D}_M^H;\left(-\nabla_{\pmb{\theta}}^2 \log p\left( \pmb{\theta} \big| \mathcal{D}_M^H\right) \big|_{\pmb{\theta} = \bar{\pmb{\theta}}^*}\right)^{-\frac{1}{2}}\pmb{w} + \bar{\pmb{\theta}}^*\right)}{\partial w_j \partial w_l} \\
        &= \sum_{i=1}^M \sum_{h=0}^H \sum_{k=1}^{K_{h+1}} \sum_{u=1}^{N_\theta} \sum_{v=1}^{N_\theta} a_{ju} a_{lv} \partial_{u,v} \eta_k\left(\pmb{y}_{t_1:t_h}^{(i)};\left(-\nabla_{\pmb{\theta}}^2 \log p\left( \pmb{\theta} \big| \mathcal{D}_M^H\right) \big|_{\pmb{\theta} = \bar{\pmb{\theta}}^*}\right)^{-\frac{1}{2}}\pmb{w} + \bar{\pmb{\theta}}^*\right) T_{k}(\pmb{y}_{t_{h+1}}^{(i)}) \\ 
        &\quad - \sum_{u=1}^{N_\theta} \sum_{v=1}^{N_\theta} a_{ju} a_{lv} \partial_{u,v} \pi\left(\mathcal{D}_M^H;\left(-\nabla_{\pmb{\theta}}^2 \log p\left( \pmb{\theta} \big| \mathcal{D}_M^H\right) \big|_{\pmb{\theta} = \bar{\pmb{\theta}}^*}\right)^{-\frac{1}{2}}\pmb{w} + \bar{\pmb{\theta}}^*\right) \\
        &= \sum_{u=1}^{N_\theta} \sum_{v=1}^{N_\theta} a_{ju} a_{lv} \frac{\partial^2 \log p\left( \pmb{\theta} | \mathcal{D}_M^H\right)}{\partial \theta_u \partial \theta_v},
    \end{align*}
    \begin{align*}
        &\frac{\partial^3 \log p_{\tilde{\pmb{\theta}}(\infty)}(\pmb{w})}{\partial w_j \partial w_l \partial w_n} = \sum_{i=1}^M \sum_{h=0}^H \sum_{k=1}^{K_{h+1}} \frac{\partial^3 \eta_k\left(\pmb{y}_{t_1:t_h}^{(i)};\left(-\nabla_{\pmb{\theta}}^2 \log p\left( \pmb{\theta} \big| \mathcal{D}_M^H\right) \big|_{\pmb{\theta} = \bar{\pmb{\theta}}^*}\right)^{-\frac{1}{2}}\pmb{w} + \bar{\pmb{\theta}}^*\right)}{\partial w_j \partial w_l \partial w_n} T_{k}(\pmb{y}_{t_{h+1}}^{(i)}) \\
        &\quad - \frac{\partial^3 \pi\left(\mathcal{D}_M^H;\left(-\nabla_{\pmb{\theta}}^2 \log p\left( \pmb{\theta} \big| \mathcal{D}_M^H\right) \big|_{\pmb{\theta} = \bar{\pmb{\theta}}^*}\right)^{-\frac{1}{2}}\pmb{w} + \bar{\pmb{\theta}}^*\right)}{\partial w_j \partial w_l \partial w_n} \\
        &= \sum_{i=1}^M \sum_{h=0}^H \sum_{k=1}^{K_{h+1}} \sum_{u=1}^{N_\theta} \sum_{v=1}^{N_\theta} \sum_{s=1}^{N_\theta} a_{ju} a_{lv} a_{ns} \partial_{u,v,s} \eta_k\left(\pmb{y}_{t_1:t_h}^{(i)};\left(-\nabla_{\pmb{\theta}}^2 \log p\left( \pmb{\theta} \big| \mathcal{D}_M^H\right) \big|_{\pmb{\theta} = \bar{\pmb{\theta}}^*}\right)^{-\frac{1}{2}}\pmb{w} + \bar{\pmb{\theta}}^*\right) T_{k}(\pmb{y}_{t_{h+1}}^{(i)}) \\ 
        &\quad - \sum_{u=1}^{N_\theta} \sum_{v=1}^{N_\theta} \sum_{s=1}^{N_\theta} a_{ju} a_{lv} a_{ns} \partial_{u,v,s}  \pi\left(\mathcal{D}_M^H;\left(-\nabla_{\pmb{\theta}}^2 \log p\left( \pmb{\theta} \big| \mathcal{D}_M^H\right) \big|_{\pmb{\theta} = \bar{\pmb{\theta}}^*}\right)^{-\frac{1}{2}}\pmb{w} + \bar{\pmb{\theta}}^*\right) \\
        &= \sum_{u=1}^{N_\theta} \sum_{v=1}^{N_\theta} \sum_{s=1}^{N_\theta} a_{ju} a_{lv} a_{ns} \frac{\partial^3 \log p\left( \pmb{\theta} | \mathcal{D}_M^H\right)}{\partial \theta_u \partial \theta_v \partial \theta_s}.
    \end{align*}
    Recall that $\bar{\pmb{\theta}}^* \in$ int$(\pmb{\Theta})$ is an equilibrium point satisfying $\nabla_{\pmb{\theta}} \log p\left( \pmb{\theta} | \mathcal{D}_M^H\right) |_{\pmb{\theta} = \bar{\pmb{\theta}}^*}=0$. That is, from Equation \eqref{posterior_exp}, $\bar{\pmb{\theta}}^*$ solves the systems of equations for $j = 1,2,\ldots,N_\theta$,
    \begin{equation}
        \frac{\partial \log p\left( \pmb{\theta} | \mathcal{D}_M^H\right)}{\partial \theta_j} = \sum_{i=1}^M \sum_{h=0}^H \sum_{k=1}^{K_{h+1}} \frac{\partial \eta_k(\pmb{y}_{t_1:t_h}^{(i)};\pmb{\theta})}{\partial \theta_j} T_{k}(\pmb{y}_{t_{h+1}}^{(i)}) - \frac{\partial \pi(\mathcal{D}_M^H;\pmb{\theta})}{\partial \theta_j} = 0. \label{eq:first_order}
    \end{equation}
    Using Equation \eqref{eq:first_order}, we have $\frac{\partial \log p_{\tilde{\pmb{\theta}}(\infty)}(\pmb{w})}{\partial w_j} \bigg|_{\pmb{w}=0} = \sum_{u=1}^{N_\theta} a_{ju} \frac{\partial \log p\left( \pmb{\theta} | \mathcal{D}_M^H\right)}{\partial \theta_u} \bigg|_{\pmb{\theta}=\bar{\pmb{\theta}}^*} = 0$ for $j=1,2,\ldots,N_\theta$. In addition,
    \begin{align*}
        &\frac{\partial^2 \log p_{\tilde{\pmb{\theta}}(\infty)}(\pmb{w})}{\partial w_j \partial w_l}\bigg|_{\pmb{w}=0} = \sum_{u=1}^{N_\theta} \sum_{v=1}^{N_\theta} a_{ju} a_{lv} \frac{\partial^2 \log p\left( \pmb{\theta} | \mathcal{D}_M^H\right)}{\partial \theta_u \partial \theta_v}\bigg|_{\pmb{\theta}=\bar{\pmb{\theta}}^*} \\
        &\qquad = \left(\left(-\nabla_{\pmb{\theta}}^2 \log p\left( \pmb{\theta} \big| \mathcal{D}_M^H\right) \big|_{\pmb{\theta} = \bar{\pmb{\theta}}^*}\right)^{-\frac{1}{2}} \nabla_{\pmb{\theta}}^2 \log p\left( \pmb{\theta} \big| \mathcal{D}_M^H\right) \big|_{\pmb{\theta} = \bar{\pmb{\theta}}^*} \left(-\nabla_{\pmb{\theta}}^2 \log p\left( \pmb{\theta} \big| \mathcal{D}_M^H\right) \big|_{\pmb{\theta} = \bar{\pmb{\theta}}^*}\right)^{-\frac{1}{2}}\right)_{jl} \\
        &\qquad = -\left(\pmb{I}_{N_\theta \times N_\theta}\right)_{jl} = -\delta_{jl},
    \end{align*}
    where $\delta_{jl}$ is the Kronecker delta. Now we take the first-order Taylor expansion of $\frac{\partial \log p_{\tilde{\pmb{\theta}}(\infty)}(\pmb{w})}{\partial w_j}$ in $\pmb{w}=0$ using the Lagrange form of the remainder,
    \begin{align*}
        &\quad \frac{\partial \log p_{\tilde{\pmb{\theta}}(\infty)}(\pmb{w})}{\partial w_j} \\
        &= \frac{\partial \log p_{\tilde{\pmb{\theta}}(\infty)}(\pmb{w})}{\partial w_j} \bigg|_{\pmb{w}=0} + \sum_{l=1}^{N_\theta} \frac{\partial^2 \log p_{\tilde{\pmb{\theta}}(\infty)}(\pmb{w})}{\partial w_j \partial w_l}\bigg|_{\pmb{w}=0} w_l + \frac{1}{2} \sum_{l=1}^{N_\theta} \sum_{n=1}^{N_\theta} \frac{\partial^3 \log p_{\tilde{\pmb{\theta}}(\infty)}(\pmb{w})}{\partial w_j \partial w_l \partial w_n}\bigg|_{\pmb{w}=\bar{\pmb{w}}} w_l w_n \\
        &= 0 + \sum_{l=1}^{N_\theta} (-\delta_{jl} w_l) + \frac{1}{2} \sum_{l=1}^{N_\theta} \sum_{n=1}^{N_\theta} \frac{\partial^3 \log p_{\tilde{\pmb{\theta}}(\infty)}(\pmb{w})}{\partial w_j \partial w_l \partial w_n}\bigg|_{\pmb{w}=\bar{\pmb{w}}} w_l w_n \\
        &= -w_j + \frac{1}{2} \sum_{l=1}^{N_\theta} \sum_{n=1}^{N_\theta} \frac{\partial^3 \log p_{\tilde{\pmb{\theta}}(\infty)}(\pmb{w})}{\partial w_j \partial w_l \partial w_n}\bigg|_{\pmb{w}=\bar{\pmb{w}}} w_l w_n,
    \end{align*}
    where there exists $\alpha \in [0,1]$ such that $\bar{\pmb{w}} = \alpha \pmb{w}$. Then, we have
    \begin{align}
        &\left|<\pmb{w} + \nabla \log p_{\tilde{\pmb{\theta}}(\infty)} (\pmb{w}), \nabla f_h (\pmb{w})>\right| = \left|\sum_{j=1}^{N_\theta} \left(w_j + \frac{\partial \log p_{\tilde{\pmb{\theta}}(\infty)}(\pmb{w})}{\partial w_j}\right) \frac{\partial f_h(\pmb{w})}{\partial w_j}\right| \nonumber \\
        &= \left|\sum_{j=1}^{N_\theta} \frac{1}{2} \sum_{l=1}^{N_\theta} \sum_{n=1}^{N_\theta} \frac{\partial^3 \log p_{\tilde{\pmb{\theta}}(\infty)}(\pmb{w})}{\partial w_j \partial w_l \partial w_n}\bigg|_{\pmb{w}=\bar{\pmb{w}}} w_l w_n \frac{\partial f_h(\pmb{w})}{\partial w_j}\right| \nonumber \\
        &= \frac{1}{2} \left|\sum_{j=1}^{N_\theta} \sum_{l=1}^{N_\theta} \sum_{n=1}^{N_\theta} \sum_{u=1}^{N_\theta} \sum_{v=1}^{N_\theta} \sum_{s=1}^{N_\theta} a_{ju} a_{lv} a_{ns} \frac{\partial^3 \log p\left( \pmb{\theta} | \mathcal{D}_M^H\right)}{\partial \theta_u \partial \theta_v \partial \theta_s} \bigg|_{\pmb{\theta}=\left(-\nabla_{\pmb{\theta}}^2 \log p\left( \pmb{\theta} \big| \mathcal{D}_M^H\right) \big|_{\pmb{\theta} = \bar{\pmb{\theta}}^*}\right)^{-\frac{1}{2}}\bar{\pmb{w}} + \bar{\pmb{\theta}}^*} w_l w_n \frac{\partial f_h(\pmb{w})}{\partial w_j}\right| \nonumber \\
        &\leq \frac{1}{2} \sum_{j=1}^{N_\theta} \left|\frac{\partial f_h(\pmb{w})}{\partial w_j}\right| \sum_{l=1}^{N_\theta} \sum_{n=1}^{N_\theta} \left|w_l w_n \frac{\partial^3 \log p\left( \pmb{\theta} | \mathcal{D}_M^H\right)}{\partial \theta_j \partial \theta_l \partial \theta_n} \bigg|_{\pmb{\theta}=\left(-\nabla_{\pmb{\theta}}^2 \log p\left( \pmb{\theta} \big| \mathcal{D}_M^H\right) \big|_{\pmb{\theta} = \bar{\pmb{\theta}}^*}\right)^{-\frac{1}{2}}\bar{\pmb{w}} + \bar{\pmb{\theta}}^*}\right| \nonumber \\
        &\quad \cdot \sum_{u=1}^{N_\theta} \left|a_{ju}\right| \sum_{v=1}^{N_\theta} \left|a_{lv}\right| \sum_{s=1}^{N_\theta} \left|a_{ns}\right|. \label{key step}
    \end{align}
    From Lemma \ref{derivative_bounds} and Lemma \ref{preparatory lemmas}, and under Assumption \ref{assumption_0}, we have
    \begin{align*}
        \left|<\pmb{w} + \nabla \log p_{\tilde{\pmb{\theta}}(\infty)} (\pmb{w}), \nabla f_h (\pmb{w})>\right| &\leq \frac{1}{2} \sum_{j=1}^{N_\theta} \sum_{l=1}^{N_\theta} \sum_{n=1}^{N_\theta} \left|w_l w_n\right| M_1 \left(N_\theta^{\frac{1}{2}} \lambda_{\min}^{\prime -\frac{1}{2}}\right)^3 \\
        &= \frac{1}{2} M_1 N_\theta^{\frac{3}{2}} \lambda_{\min}^{\prime -\frac{3}{2}} N_\theta \sum_{l=1}^{N_\theta} \sum_{n=1}^{N_\theta} \left|w_l w_n\right|.
    \end{align*}
    This bound holds for any $f_h$ that solves the Stein equation with $h \in \mathcal{H}$. Using the moment bound in Assumption \ref{assumption_0}, we have
    \begin{equation*}
        \left|\mathbb{E}[h(\tilde{\pmb{\theta}}(\infty))] - \mathbb{E}[h(\pmb{Z})]\right| \leq \frac{1}{2} M_1 N_\theta^{\frac{5}{2}} \lambda_{\min}^{\prime -\frac{3}{2}} \sum_{l=1}^{N_\theta} \sum_{n=1}^{N_\theta} \mathbb{E} \left|\tilde{\theta}_l(\infty) \tilde{\theta}_n(\infty)\right| \leq \frac{1}{2} M_1 M_2 N_\theta^{\frac{9}{2}} \lambda_{\min}^{\prime -\frac{3}{2}},
    \end{equation*}
    which yields the desired result. \hfill \Halmos
\end{proof}

\subsection{Proof of Corollary \ref{cor:finite-sample}} \label{proof:Cor_finite-sample}

\begin{proof}{Proof.}
    As 
    \begin{align*}
        \log p\left( \pmb{\theta} | \mathcal{D}_M^H\right) &= -\log \rho(\mathcal{D}_M^H) \\
        &\quad + \left\{\sum_{i=1}^M \sum_{h=0}^H \sum_{k=1}^{K_{h+1}} \eta_k(\pmb{y}_{t_1:t_h}^{(i)};\pmb{\theta}) T_{k}(\pmb{y}_{t_{h+1}}^{(i)}) - \sum_{i=1}^M \sum_{h=0}^H A(\pmb{\eta}(\pmb{y}_{t_1:t_h}^{(i)};\pmb{\theta})) + \log \left(p(\pmb{\theta})\right)\right\},
    \end{align*}
    $\sum_{j=1}^{N_\theta} \left|a_{ij}\right|$ for $i=1,2,\ldots,N_\theta$, where $a_{ij}$ is the $(i,j)$-th element of the matrix $(-\nabla_{\pmb{\theta}}^2 \log p\left( \pmb{\theta} | \mathcal{D}_M^H\right) |_{\pmb{\theta} = \bar{\pmb{\theta}}^*})^{-\frac{1}{2}}$, are of order $(M\sum_{h=0}^HK_{h+1})^{-\frac{1}{2}}$. From Equation \eqref{key step} in the proof of Theorem \ref{main_result_0}, we infer that the bound of Theorem \ref{main_result_0} is of order $(M\sum_{h=0}^HK_{h+1})^{-\frac{3}{2}}$ (i.e., $(M(H+1))^{-\frac{3}{2}}$ related to the data size). Moreover, we consider the special case with sparse data collection shown in Equation \eqref{measurement error} and the likelihood function derived from the Bayesian updating pKG-LNA metamodel, which means that the conditional distributions $\pmb{y}_{t_1}^{(i)} | \pmb{\theta}$ and $\pmb{y}_{t_{h+1}}^{(i)} | \pmb{y}_{t_1:t_h}^{(i)};\pmb{\theta}$ are Gaussian for $h=1,2,\ldots,H$ and $i=1,2,\ldots,M$. From Example \ref{ex:exponential}, we have $K_{h+1}=|\pmb{J}_{t_{h+1}}|+|\pmb{J}_{t_{h+1}}|^2$, then the bound of Theorem \ref{main_result_0} becomes of order $(M\sum_{h=0}^H(|\pmb{J}_{t_{h+1}}|+|\pmb{J}_{t_{h+1}}|^2))^{-\frac{3}{2}}$. \hfill \Halmos
\end{proof}

\section{Proofs of Lemma and Theorems in Section~\ref{sec:iterative_algorithm}}

This section provides the proofs of Lemma \ref{lemma:error_analysis}, Theorem \ref{Theorem:strong_error}, and Theorem \ref{Theorem:weak_error} of the paper.

\subsection{Proof of Lemma \ref{lemma:error_analysis}} \label{proof:prop_iterative}

\begin{proof}{Proof.}
Through a Taylor expansion of $\nabla_{\pmb{\theta}} \log p\left( \pmb{\theta} \big| \mathcal{D}_M^H\right)$ around $\bar{\pmb{\theta}}_{\Delta \tau}(\tau_{k-1})$ under $\hat{\pmb{\theta}}_{\Delta \tau}(\tau_{k-1}) :=  \bar{\pmb{\theta}}_{\Delta \tau}(\tau_{k-1}) + \pmb{\xi}_{\Delta \tau}(\tau_{k-1})$, Equation \eqref{eq:Ito_2} can be written as
\begin{align}
    &\hat{\pmb{\theta}}_{\Delta \tau}(\tau_k) - \hat{\pmb{\theta}}_{\Delta \tau}(\tau_{k-1}) = \bar{\pmb{\theta}}_{\Delta \tau}(\tau_k) + \pmb{\xi}_{\Delta \tau}(\tau_k) - \left(\bar{\pmb{\theta}}_{\Delta \tau}(\tau_{k-1}) + \pmb{\xi}_{\Delta \tau}(\tau_{k-1})\right) \nonumber \\
    &= \hat{\pmb{a}}(\hat{\pmb{\theta}}_{\Delta \tau}(\tau_{k-1})) \Delta \tau + \sqrt{2} \Delta \pmb{W}(\tau_{k-1}) \nonumber \\
    &= \left\{\nabla_{\pmb{\theta}} \log p\left( \pmb{\theta} \big| \mathcal{D}_M^H\right) \big|_{\pmb{\theta} = \hat{\pmb{\theta}}_{\Delta \tau}(\tau_{k-1})} - \pmb{R}(\pmb{\xi}_{\Delta \tau}(\tau_{k-1}))\right\} \Delta \tau + \sqrt{2} \Delta \pmb{W}(\tau_{k-1}) \nonumber \\
    &= \left\{\nabla_{\pmb{\theta}} \log p\left( \pmb{\theta} \big| \mathcal{D}_M^H\right) \big|_{\pmb{\theta} = \bar{\pmb{\theta}}_{\Delta \tau}(\tau_{k-1})} + \nabla_{\pmb{\theta}}^2 \log p\left( \pmb{\theta} \big| \mathcal{D}_M^H\right) \big|_{\pmb{\theta} = \bar{\pmb{\theta}}_{\Delta \tau}(\tau_{k-1})} \pmb{\xi}_{\Delta \tau}(\tau_{k-1})\right\} \Delta \tau + \sqrt{2} \Delta \pmb{W}(\tau_{k-1}), \label{Taylor_Expansion_discrete}
\end{align}
which can be considered as the discretized version of Equation \eqref{Taylor_Expansion}. Following the idea of splitting Equation \eqref{Taylor_Expansion} into one ODE \eqref{ODE} and one SDE \eqref{SDE}, we split Equation \eqref{Taylor_Expansion_discrete} into Equations \eqref{ODE_discrete_2} and \eqref{SDE_discrete},
\begin{align}
    \bar{\pmb{\theta}}_{\Delta \tau}(\tau_k) &= \bar{\pmb{\theta}}_{\Delta \tau}(\tau_{k-1}) + \nabla_{\pmb{\theta}} \log p\left( \pmb{\theta} \big| \mathcal{D}_M^H\right) \big|_{\pmb{\theta} = \bar{\pmb{\theta}}_{\Delta \tau}(\tau_{k-1})} \Delta \tau, \label{ODE_discrete_2} \\
    \pmb{\xi}_{\Delta \tau}(\tau_k) &= \pmb{\xi}_{\Delta \tau}(\tau_{k-1}) + \nabla_{\pmb{\theta}}^2 \log p\left( \pmb{\theta} \big| \mathcal{D}_M^H\right) \big|_{\pmb{\theta} = \bar{\pmb{\theta}}_{\Delta \tau}(\tau_{k-1})} \pmb{\xi}_{\Delta \tau}(\tau_{k-1}) \Delta \tau + \sqrt{2} \Delta \pmb{W}(\tau_{k-1}). \label{SDE_discrete}
\end{align}
For any Gaussian initial condition on $\pmb{\xi}_{\Delta \tau}(\tau_0)$, $\pmb{\xi}_{\Delta \tau}(\tau_k)$ has a Gaussian distribution for all $k \geq 0$, denoted by $\pmb{\xi}_{\Delta \tau}(\tau_k) \sim \mathcal{N} \left(\pmb{\varphi}_{\Delta \tau}(\tau_k), \pmb{\Psi}_{\Delta \tau}(\tau_k)\right)$. Following the similar derivation of Proposition \ref{xi_mean_cov}, we get the discrete approximation Equations \eqref{SDE_mean_discrete_0} and \eqref{SDE_cov_discrete_0} to solve $\pmb{\varphi}_{\Delta \tau}(\tau_k)$ and $\pmb{\Psi}_{\Delta \tau}(\tau_k)$,
\begin{align}
    \pmb{\varphi}_{\Delta \tau}(\tau_k) &= \pmb{\varphi}_{\Delta \tau}(\tau_{k-1}) + \nabla_{\pmb{\theta}}^2 \log p\left( \pmb{\theta} \big| \mathcal{D}_M^H\right) \big|_{\pmb{\theta} = \bar{\pmb{\theta}}_{\Delta \tau}(\tau_{k-1})} \pmb{\varphi}_{\Delta \tau}(\tau_{k-1}) \Delta \tau, \label{SDE_mean_discrete_0} \\
    \pmb{\Psi}_{\Delta \tau}(\tau_k) &= \pmb{\Psi}_{\Delta \tau}(\tau_{k-1}) + \left( \pmb{\Psi}_{\Delta \tau}(\tau_{k-1}) \left(\nabla_{\pmb{\theta}}^2 \log p\left( \pmb{\theta} \big| \mathcal{D}_M^H\right) \big|_{\pmb{\theta} = \bar{\pmb{\theta}}_{\Delta \tau}(\tau_{k-1})}\right)^\top \right. \nonumber \\
    &\quad \left. + \nabla_{\pmb{\theta}}^2 \log p\left( \pmb{\theta} \big| \mathcal{D}_M^H\right) \big|_{\pmb{\theta} = \bar{\pmb{\theta}}_{\Delta \tau}(\tau_{k-1})} \pmb{\Psi}_{\Delta \tau}(\tau_{k-1}) + 2 \pmb{I}_{N_\theta \times N_\theta} \right) \Delta \tau. \label{SDE_cov_discrete_0}
\end{align}
Suppose the initial condition for Equation \eqref{Taylor_Expansion_discrete} is $\hat{\pmb{\theta}}_{\Delta \tau}(\tau_0) = \bar{\pmb{\theta}}_{\Delta \tau}(\tau_0) + \pmb{\xi}_{\Delta \tau}(\tau_0) \sim \mathcal{N} \left(\pmb{\theta}^*(0), \pmb{\Sigma}^*(0)\right)$. Leaving $\pmb{\varphi}_{\Delta \tau}(\tau_0) := \pmb{0}$, we have $\pmb{\varphi}_{\Delta \tau}(\tau_k) = \pmb{0}$ for all $k \geq 0$ according to Equation \eqref{SDE_mean_discrete_0}. Then we set $\bar{\pmb{\theta}}_{\Delta \tau}(\tau_0) := \pmb{\theta}^*(0)$ and $\pmb{\Psi}_{\Delta \tau}(\tau_0) := \pmb{\Sigma}^*(0)$. Hence, we iteratively solve two equations \eqref{ODE_discrete_2} and \eqref{SDE_cov_discrete_0} simultaneously to get $\hat{\pmb{\theta}}_{\Delta \tau}(\tau_N) \sim \mathcal{N} \left(\bar{\pmb{\theta}}_{\Delta \tau}(\tau_N), \pmb{\Psi}_{\Delta \tau}(\tau_N)\right)$. \hfill \Halmos
\end{proof}

\subsection{Proof of Theorem \ref{Theorem:strong_error}} \label{proof:theorem_19}

We follow the general ideas of the convergence analysis for SDE discretization schemes \citep{sato2014approximation}. First, we review the It\^{o} process \citep{oksendal2003stochastic}, which will be used to analyze the discretization error and perform the convergence analysis for Algorithm \ref{Algr:one-stage}.
\begin{definition}[It\^{o} Process (Multidimensional)] \label{def:ito}
    The multidimensional stochastic process $\{\pmb{x}(\tau): \tau\geq 0\}$ that solves
    \begin{equation}
        \pmb{x}(\tau) = \pmb{x}(0) + \int_0^\tau \pmb{a}(s,\pmb{x}(s)) \, ds + \int_0^\tau \pmb{Q}(s,\pmb{x}(s)) \, d\pmb{W}(s) \label{eq:Ito_process}
    \end{equation}
    is called the $n$-dimensional It\^{o} process, where $\pmb{a}$ is an $n$-dimensional vector of nonanticipating process, $\pmb{Q}$ is an $n$-by-$m$ matrix of It\^{o}-integrable process, which are the drift and diffusion function respectively. And $\pmb{W}(\tau)$ is an $m$-dimensional standard Brownian motion. Furthermore, the SDE form of the It\^{o} process is
    \begin{equation}
        d\pmb{x}(\tau) = \pmb{a}(\tau,\pmb{x}(\tau)) d\tau + \pmb{Q}(\tau,\pmb{x}(\tau)) d\pmb{W}(\tau). \label{eq:Ito_SDE}
    \end{equation}
    And SDE \eqref{eq:Ito_SDE} has a unique solution if $\pmb{a}$ and $\pmb{Q}$ are Lipschitz continuous functions of linear growth.
\end{definition}
The first integral in Equation \eqref{eq:Ito_process} is an ordinary integral along the path $\pmb{a}(s,\pmb{x}(s))$ and the second integral is the It\^{o} stochastic integral defined by $\int_0^\tau \pmb{Q}(s,\pmb{x}(s)) \, d\pmb{W}(s) = \lim_{\Delta \tau_{\max} \to 0} \sum_{k=0}^{n-1} \pmb{Q}(\tau_k,\pmb{x}(\tau_k)) \Delta \pmb{W}_k$, where we divide $[0,\tau]$ into $0 = \tau_0 < \cdots < \tau_n = \tau$, $\Delta \tau_{\max} = \max_{0\leq k\leq n-1} \{\tau_{k+1}-\tau_k\}$ and $\Delta \pmb{W}_k = \pmb{W}(\tau_{k+1})-\pmb{W}(\tau_k)$. And a basic property of the It\^{o} stochastic integral used in this paper is
\begin{equation}
    \mathbb{E} \left[ \int_0^\tau \pmb{Q}(s,\cdot) \, d\pmb{W}(s) \right] = 0, \label{eq:Ito_integral}
\end{equation}
where $\pmb{Q}:\mathbb{R}_+ \times \mathbb{R}^n \to \mathbb{R}^{n\times n}$ is It\^{o}-integrable and independent of increments $\Delta \pmb{W}_k$.

The following It\^{o} formula is one of the most important tools in It\^{o} process.
\begin{theorem}[It\^{o} Formula (Multidimensional)] \label{them:Ito_formula}
    Suppose that the $n$-dimensional It\^{o} process $\pmb{x}(\tau)$ satisfies the SDE $d\pmb{x}(\tau) = \pmb{a}(\tau,\pmb{x}(\tau)) d\tau + \pmb{Q}(\tau,\pmb{x}(\tau)) d\pmb{W}(\tau)$. And let $y:\mathbb{R}_+ \times \mathbb{R}^n \to \mathbb{R}$ be twice continuously differentiable function. Then $y(\tau,\pmb{x}(\tau))$ is also an It\^{o} process given by
    \begin{equation*}
        dy(\tau,\pmb{x}(\tau)) = \frac{\partial y}{\partial \tau}(\tau,\pmb{x}(\tau)) d\tau + \sum_{i=1}^n \frac{\partial y}{\partial x_i}(\tau,\pmb{x}(\tau)) dx_i(\tau) + \frac{1}{2} \sum_{i,j=1}^n \frac{\partial^2 y}{\partial x_i \partial x_j}(\tau,\pmb{x}(\tau)) dx_i(\tau) dx_j(\tau)
    \end{equation*}
    where $x_i(\tau)$ is the $i$-th component of the vector $\pmb{x}(\tau)$, $i=1,2,\ldots,n$, and $dx_i(\tau) dx_j(\tau)$ is computed using the rules $dW_i(\tau)d\tau = d\tau dW_i(\tau) = d\tau d\tau = 0$, $dW_i(\tau) dW_j(\tau) = 0$ for all $i \neq j$, and $(dW_i(\tau))^2 = d\tau$.
\end{theorem}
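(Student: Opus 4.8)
The plan is to prove the multidimensional Itô formula by reducing it to a second-order Taylor expansion along a shrinking partition of $[0,\tau]$ and then identifying the limit of each resulting term. First I would fix a partition $0 = \tau_0 < \tau_1 < \cdots < \tau_N = \tau$ with mesh $\Delta\tau_{\max} \to 0$ and write the telescoping identity
\begin{equation*}
    y(\tau, \pmb{x}(\tau)) - y(0, \pmb{x}(0)) = \sum_{k=0}^{N-1} \left[ y(\tau_{k+1}, \pmb{x}(\tau_{k+1})) - y(\tau_k, \pmb{x}(\tau_k)) \right].
\end{equation*}
Since $y$ is twice continuously differentiable, each summand admits a second-order Taylor expansion in $(\tau, \pmb{x})$ about $(\tau_k, \pmb{x}(\tau_k))$, with increments $\Delta\tau_k = \tau_{k+1} - \tau_k$ and $\Delta\pmb{x}_k = \pmb{x}(\tau_{k+1}) - \pmb{x}(\tau_k)$, plus a remainder of third order in the increments.

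Next I would substitute the discretized dynamics $\Delta\pmb{x}_k \approx \pmb{a}(\tau_k, \pmb{x}(\tau_k)) \Delta\tau_k + \pmb{Q}(\tau_k, \pmb{x}(\tau_k)) \Delta\pmb{W}_k$ into the Taylor expansion and sort the resulting terms by order. The first-order terms $\frac{\partial y}{\partial \tau}\Delta\tau_k$ and $\sum_i \frac{\partial y}{\partial x_i}\Delta x_{i,k}$ converge, as the mesh vanishes, to the ordinary integral and the Itô stochastic integral in the statement. The crucial second-order contribution is $\frac{1}{2}\sum_{i,j}\frac{\partial^2 y}{\partial x_i \partial x_j}\Delta x_{i,k}\Delta x_{j,k}$: exploiting the independence and variance structure of the Brownian increments, the surviving part of $\Delta x_{i,k}\Delta x_{j,k}$ is $(\pmb{Q}\pmb{Q}^\top)_{ij}(\tau_k,\pmb{x}(\tau_k))\,\Delta\tau_k$, which is precisely the content of the formal multiplication rules $dW_i\,dW_j = \delta_{ij}\,d\tau$, $dW_i\,d\tau = 0$, and $(d\tau)^2 = 0$ asserted in the theorem.

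The key technical lemma I would establish is that $\sum_k f(\tau_k, \pmb{x}(\tau_k))\,(\Delta W_{i,k}\,\Delta W_{j,k} - \delta_{ij}\,\Delta\tau_k) \to 0$ in $L^2$ for any adapted square-integrable integrand $f$, which follows from the Itô isometry together with property \eqref{eq:Ito_integral} and the pathwise continuity of $\pmb{x}$; this converts the discrete quadratic-variation sum into the claimed $d\tau$ integral. The main obstacle is controlling the Taylor remainder and the genuinely higher-order terms, namely those involving $(\Delta\tau_k)^2$, $\Delta\tau_k\,\Delta W_{i,k}$, and products of three or more increments. I would bound these using the uniform continuity of the second derivatives of $y$ on compact sets, combined with the Lipschitz and linear-growth conditions on $\pmb{a}$ and $\pmb{Q}$ from Definition~\ref{def:ito}, to show that each vanishes in $L^1$ as $\Delta\tau_{\max}\to 0$; a standard localization via stopping times then extends the argument from compact sets to all of $\mathbb{R}^n$. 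Collecting the surviving terms yields the stated differential form of $y(\tau,\pmb{x}(\tau))$.
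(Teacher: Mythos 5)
Your proposal is correct, but there is no paper proof to compare it against: the paper imports this theorem as a classical tool (stated next to Definition \ref{def:ito}, which cites \cite{oksendal2003stochastic}) and uses it as a black box in the proofs of Theorems \ref{Theorem:strong_error} and \ref{Theorem:weak_error}. What you have written is the standard textbook argument, and its skeleton is sound: telescoping over a partition, second-order Taylor expansion, identification of the first-order sums with the ordinary and It\^{o} integrals, the quadratic-variation lemma, and localization by stopping times. You have also correctly isolated the crux: the convergence $\sum_k f(\tau_k,\pmb{x}(\tau_k))\left(\Delta W_{i,k}\Delta W_{j,k}-\delta_{ij}\Delta\tau_k\right)\to 0$ in $L^2$ is exactly what turns the formal rules $dW_i\,dW_j=\delta_{ij}\,d\tau$ into a theorem. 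One small correction there: this lemma does not follow from the It\^{o} isometry itself but from the orthogonality of the martingale differences $\Delta W_{i,k}\Delta W_{j,k}-\delta_{ij}\Delta\tau_k$ together with the moment bound $\mathbb{E}\bigl[(\Delta W_{i,k}\Delta W_{j,k}-\delta_{ij}\Delta\tau_k)^2\bigr]=\mathcal{O}\bigl((\Delta\tau_k)^2\bigr)$, which makes the $L^2$ norm of the sum $\mathcal{O}(\Delta\tau_{\max})$.

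Two points would need more care in a full write-up. First, $\Delta\pmb{x}_k$ is not the Euler increment $\pmb{a}(\tau_k,\pmb{x}(\tau_k))\Delta\tau_k+\pmb{Q}(\tau_k,\pmb{x}(\tau_k))\Delta\pmb{W}_k$ but the exact increment $\int_{\tau_k}^{\tau_{k+1}}\pmb{a}\,ds+\int_{\tau_k}^{\tau_{k+1}}\pmb{Q}\,d\pmb{W}$; the substitution must be justified, either by first proving the formula for elementary (piecewise-constant) integrands and passing to the $L^2$ limit, as in \cite{oksendal2003stochastic}, or by an explicit estimate on the difference. Your appeal to pathwise continuity is the right idea, but note that the Lipschitz and linear-growth conditions you invoke belong to the well-posedness clause of Definition \ref{def:ito}, not to the hypotheses of the It\^{o} formula, so you should either add them as assumptions or argue via the elementary-integrand approximation, for which they are unnecessary. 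Second, your joint second-order expansion in $(\tau,\pmb{x})$ is consistent with the theorem's stated hypothesis of a twice continuously differentiable $y$, but the classical result only requires $y\in C^{1,2}$; in that sharper setting one expands to first order in time and second order in space, absorbing the $(\Delta\tau_k)^2$ and $\Delta\tau_k\,\Delta x_{i,k}$ terms into the vanishing remainder — harmless here since, as you observe, those terms are higher order in any case.
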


Then, we introduce the following three lemmas for It\^{o} process, i.e., Lemma \ref{lemma:Gronwall} (Gr\"{o}nwall inequality from Lemma 4.5.1 in \cite{kloeden1992numerical}), Lemma \ref{lemma:Kloeden_Platen} (adopted from Theorem 4.5.4 in \cite{kloeden1992numerical}), and Lemma \ref{lemma:Feynman-Kac} (Feynman-Kac formula), which will be used in the proofs of Theorem \ref{Theorem:strong_error} and Theorem \ref{Theorem:weak_error}.

\begin{lemma}[Gr\"{o}nwall Inequality] \label{lemma:Gronwall}
    Let $\alpha, \beta: [\tau_0,T] \to \mathbb{R}$ be integrable with $0 \leq \alpha(\tau) \leq \beta(\tau) + G \int_{\tau_0}^\tau \alpha(s) \, ds$, $\tau_0 \leq \tau \leq T$, where $G>0$. Then $\alpha(\tau) \leq \beta(\tau) + G \int_{\tau_0}^\tau \exp \left\{G(\tau-s)\right\} \beta(s) \, ds$, $\tau_0 \leq \tau \leq T$.
\end{lemma}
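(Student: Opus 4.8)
The plan is to reduce this integral inequality to a first-order linear differential inequality and then resolve it with an integrating factor. First I would introduce the auxiliary function $u(\tau) := \int_{\tau_0}^\tau \alpha(s)\, ds$, which is absolutely continuous because $\alpha$ is integrable, satisfies $u(\tau_0) = 0$, and obeys $u'(\tau) = \alpha(\tau)$ for almost every $\tau \in [\tau_0,T]$ by the Lebesgue differentiation theorem. The hypothesis $\alpha(\tau) \leq \beta(\tau) + G\int_{\tau_0}^\tau \alpha(s)\,ds$ then rewrites as the differential inequality $u'(\tau) - G u(\tau) \leq \beta(\tau)$, valid almost everywhere.

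The key step is to multiply through by the integrating factor $e^{-G\tau}$ and recognize a total derivative. Setting $v(\tau) := u(\tau) e^{-G\tau}$, which is again absolutely continuous, I would observe that $v'(\tau) = \bigl(u'(\tau) - G u(\tau)\bigr) e^{-G\tau} \leq \beta(\tau) e^{-G\tau}$ for almost every $\tau$. Integrating this bound from $\tau_0$ to $\tau$ and using $v(\tau_0) = u(\tau_0)e^{-G\tau_0} = 0$ yields $u(\tau) e^{-G\tau} \leq \int_{\tau_0}^\tau \beta(s) e^{-Gs}\, ds$, and hence $u(\tau) \leq \int_{\tau_0}^\tau e^{G(\tau-s)} \beta(s)\, ds$.

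Finally I would substitute this estimate on $u(\tau) = \int_{\tau_0}^\tau \alpha(s)\,ds$ directly back into the original hypothesis $\alpha(\tau) \leq \beta(\tau) + G u(\tau)$, which gives $\alpha(\tau) \leq \beta(\tau) + G\int_{\tau_0}^\tau e^{G(\tau-s)} \beta(s)\, ds$ for all $\tau_0 \leq \tau \leq T$, exactly the claimed bound.

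The main obstacle is purely a measure-theoretic regularity issue: since $\alpha$ and $\beta$ are only assumed integrable, the differential inequality holds merely almost everywhere and $u$ need not be differentiable in the classical sense. I would handle this by carrying the argument entirely within the class of absolutely continuous functions, so that the fundamental theorem of calculus for Lebesgue integrals legitimizes recovering $v(\tau) - v(\tau_0) = \int_{\tau_0}^\tau v'(s)\,ds$ from the almost-everywhere derivative bound. As a robust backup that avoids differentiation altogether, one can instead iterate the inequality into itself: repeated substitution produces the partial sums $\beta(\tau) + \sum_{k=1}^n G^k \int\cdots\int \beta$ of the exponential series, plus a remainder term $G^{n+1}\int\cdots\int \alpha$ that vanishes as $n \to \infty$ once $\alpha$ is bounded on the compact interval $[\tau_0,T]$. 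I expect the integrating-factor route to be the cleaner of the two.
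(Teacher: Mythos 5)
Your proof is correct. One thing to be aware of: the paper does not prove this lemma at all --- it is imported verbatim as Lemma 4.5.1 of \citet{kloeden1992numerical}, so your proposal supplies a self-contained argument where the paper simply cites a textbook. Your integrating-factor route is the standard and clean one: setting $u(\tau)=\int_{\tau_0}^\tau \alpha(s)\,ds$, the hypothesis becomes $u'-Gu\leq \beta$ almost everywhere, and since $u$ (hence $v(\tau)=u(\tau)e^{-G\tau}$) is absolutely continuous, the fundamental theorem of calculus for Lebesgue integrals legitimately recovers $v(\tau)=\int_{\tau_0}^\tau v'(s)\,ds\leq \int_{\tau_0}^\tau \beta(s)e^{-Gs}\,ds$ from the a.e.\ derivative bound. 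You also correctly finish by substituting the bound on $u$ back into the \emph{pointwise} hypothesis $\alpha(\tau)\leq \beta(\tau)+Gu(\tau)$, which is what upgrades the conclusion from an a.e.\ statement to one valid for every $\tau\in[\tau_0,T]$ --- a step that is easy to fumble and that you handled explicitly. The only blemish is in your backup iteration argument: you claim the remainder term vanishes ``once $\alpha$ is bounded,'' but boundedness is not needed. After $n+1$ substitutions the remainder collapses (by Cauchy's formula for repeated integration) to $G^{n+1}\int_{\tau_0}^\tau \frac{(\tau-s)^n}{n!}\,\alpha(s)\,ds$, which is at most $\frac{G^{n+1}(T-\tau_0)^n}{n!}\,\|\alpha\|_{L^1}\to 0$, so integrability alone suffices and the iteration route works under exactly the stated hypotheses. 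This is worth fixing only because, as stated, your fallback would appear to require a hypothesis the lemma does not grant.
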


\begin{lemma} \label{lemma:Kloeden_Platen}
    Suppose that $\mathbb{E}||\pmb{\theta}(\tau_0)||^{2l} < \infty$ for some integer $l\geq 1$. Then $\pmb{\theta}(\tau)$ satisfies
    \begin{align*}
        &\mathbb{E}||\pmb{\theta}(\tau)||^{2l} \leq \left(1+\mathbb{E}||\pmb{\theta}(\tau_0)||^{2l}\right) \exp \left\{D_1(\tau-\tau_0)\right\}, \\
        &\mathbb{E}||\pmb{\theta}(\tau)-\pmb{\theta}(\tau_0)||^{2l} \leq D_2 \left(1+\mathbb{E}||\pmb{\theta}(\tau_0)||^{2l}\right) (\tau-\tau_0)^l \exp \left\{D_1(\tau-\tau_0)\right\},
    \end{align*}
    for $\tau \in [\tau_0,T]$ where $T < \infty$, $D_1 = 2l(2l+1)C^2$ and $D_2$ is a positive constant depending only on $l$, $C$ and $T-\tau_0$.
\end{lemma}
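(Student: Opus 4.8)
The plan is to treat this as a standard $2l$-th moment estimate for the It\^{o} process defined by \eqref{eq:Ito_1}, whose drift is $\pmb{a}(\pmb{\theta}(\tau)) = \nabla_{\pmb{\theta}}\log p(\pmb{\theta}|\mathcal{D}_M^H)|_{\pmb{\theta}=\pmb{\theta}(\tau)}$ and whose diffusion coefficient is the constant matrix $\sqrt{2}\pmb{I}_{N_\theta\times N_\theta}$, so it is a direct specialization of Theorem 4.5.4 of \cite{kloeden1992numerical} to the constant-diffusion Langevin setting. The central tool is the multidimensional It\^{o} formula (Theorem \ref{them:Ito_formula}) applied to the smooth test function $y(\pmb{\theta}) = (1+||\pmb{\theta}||^2)^l$, which I would use in place of $||\pmb{\theta}||^{2l}$ so that it is twice continuously differentiable everywhere including the origin and so that the additive constant appears naturally; since $||\pmb{\theta}||^{2l}\leq y(\pmb{\theta})$, any bound on $\mathbb{E}[y(\pmb{\theta}(\tau))]$ immediately controls $\mathbb{E}||\pmb{\theta}(\tau)||^{2l}$.

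For the first inequality, I would compute the generator from the It\^{o} expansion, $\mathcal{L}y = 2l(1+||\pmb{\theta}||^2)^{l-1}\pmb{\theta}^\top\pmb{a}(\pmb{\theta}) + \tfrac{1}{2}\,\mathrm{tr}(2\nabla^2 y)$, then combine Young's inequality $\pmb{\theta}^\top\pmb{a}\leq \tfrac{1}{2}(||\pmb{\theta}||^2+||\pmb{a}||^2)$ with the linear growth condition of Assumption \ref{assumption_unique} (which gives $||\pmb{a}||^2+||\sqrt{2}\pmb{I}_{N_\theta\times N_\theta}||^2\leq C^2(1+||\pmb{\theta}||^2)$) to obtain a pointwise estimate of the form $\mathcal{L}y(\pmb{\theta})\leq D_1\,y(\pmb{\theta})$ with $D_1 = 2l(2l+1)C^2$. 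I would then introduce the localizing stopping time $\tau_n = \inf\{\tau\geq \tau_0: ||\pmb{\theta}(\tau)||\geq n\}$ so that the stochastic integral $\int_{\tau_0}^{\tau\wedge\tau_n}\sqrt{2}\,\nabla y(\pmb{\theta}(s))^\top d\pmb{W}(s)$ is a genuine martingale with zero expectation by property \eqref{eq:Ito_integral}. Taking expectations in the integrated It\^{o} formula yields $\mathbb{E}[y(\pmb{\theta}(\tau\wedge\tau_n))]\leq \mathbb{E}[y(\pmb{\theta}(\tau_0))] + D_1\int_{\tau_0}^\tau \mathbb{E}[y(\pmb{\theta}(s\wedge\tau_n))]\,ds$; applying Gr\"{o}nwall's inequality (Lemma \ref{lemma:Gronwall}) and passing $n\to\infty$ by Fatou's lemma delivers $\mathbb{E}||\pmb{\theta}(\tau)||^{2l}\leq (1+\mathbb{E}||\pmb{\theta}(\tau_0)||^{2l})\exp\{D_1(\tau-\tau_0)\}$.

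For the second inequality, I would write the increment in integral form, $\pmb{\theta}(\tau)-\pmb{\theta}(\tau_0)=\int_{\tau_0}^\tau \pmb{a}(\pmb{\theta}(s))\,ds + \int_{\tau_0}^\tau\sqrt{2}\,d\pmb{W}(s)$, and bound its $2l$-th moment by splitting the two terms via $||\pmb{u}+\pmb{v}||^{2l}\leq 2^{2l-1}(||\pmb{u}||^{2l}+||\pmb{v}||^{2l})$. For the drift term I would apply Jensen/H\"{o}lder to pull out a factor $(\tau-\tau_0)^{2l-1}$, giving $\mathbb{E}||\int_{\tau_0}^\tau\pmb{a}\,ds||^{2l}\leq (\tau-\tau_0)^{2l-1}\int_{\tau_0}^\tau\mathbb{E}||\pmb{a}(\pmb{\theta}(s))||^{2l}\,ds$; for the diffusion term I would use the Burkholder--Davis--Gundy (generalized It\^{o}-isometry) moment inequality, which produces a factor $(\tau-\tau_0)^{l-1}$. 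Invoking the linear growth bound once more converts $\mathbb{E}||\pmb{a}(\pmb{\theta}(s))||^{2l}$ into $\mathbb{E}(1+||\pmb{\theta}(s)||^2)^l$, and substituting the already-established first bound to control this expectation uniformly over $s\in[\tau_0,\tau]$ collects the overall factor $(\tau-\tau_0)^l$ together with the exponential, yielding the claimed estimate with $D_2$ depending only on $l$, $C$, and $T-\tau_0$.

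The main obstacle is purely technical rather than conceptual: one must justify a priori that all the moments appearing are finite and that the stochastic integral is a true (not merely local) martingale, which is precisely why the stopping-time localization and the Fatou passage are needed instead of a naive application of \eqref{eq:Ito_integral}. The other nontrivial input is the sharp vector moment inequality furnishing the power $(\tau-\tau_0)^{l-1}$ for the stochastic integral; obtaining the precise exponent $l$ in the second estimate (rather than a cruder power) hinges on this BDG-type bound and not on H\"{o}lder alone. Beyond verifying that Assumption \ref{assumption_unique} supplies the required linear growth with the stated constant, no genuinely new difficulty arises.
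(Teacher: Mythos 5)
The paper does not prove this lemma at all — it imports it verbatim as Theorem 4.5.4 of Kloeden and Platen (1992) — and your argument is exactly the standard proof underlying that citation: It\^{o}'s formula applied to $(1+\|\pmb{\theta}\|^2)^l$ with the linear-growth condition of Assumption \ref{assumption_unique}, stopping-time localization plus Gr\"{o}nwall and Fatou for the first estimate, and the H\"{o}lder/BDG splitting of the increment combined with the first estimate (and $\tau-\tau_0\leq T-\tau_0$, which is where the $T$-dependence of $D_2$ enters) for the second, so it is correct. Two cosmetic remarks only: with the constant diffusion $\sqrt{2}\,\pmb{I}_{N_\theta\times N_\theta}$ the stochastic-integral moment can be computed directly from Gaussian moments rather than via BDG, and reproducing the exact constant $D_1=2l(2l+1)C^2$ requires the Kloeden--Platen bookkeeping of the drift and Hessian-trace terms (your Young-inequality step gives a bound of the same form but not automatically with that precise constant).
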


\begin{lemma}[Feynman-Kac Formula] \label{lemma:Feynman-Kac}
    Suppose that $\pmb{a}:\mathbb{R}_+ \times \mathbb{R}^n \to \mathbb{R}^n$, $\pmb{Q}:\mathbb{R}_+ \times \mathbb{R}^n \to \mathbb{R}^{n\times n}$, and $g:\mathbb{R}^n \to \mathbb{R}$ are smooth and bounded functions. Let $\pmb{x}$ be the solution of the SDE $d\pmb{x}(\tau) = \pmb{a}(\tau,\pmb{x}(\tau)) d\tau + \pmb{Q}(\tau,\pmb{x}(\tau)) d\pmb{W}(\tau)$, and let $u(\tau,\pmb{x}) = \mathbb{E}\left[ g(\pmb{x}(T)) | \pmb{x}(\tau)=\pmb{x} \right]$. Then $u$ is the solution of the Kolmogorov backward equation
    \begin{equation*}
        \begin{cases}
            \frac{\partial u}{\partial \tau} + \sum_{i=1}^n a_i(\tau,\pmb{x}) \frac{\partial u}{\partial x_i} + \frac{1}{2} \sum_{i,j=1}^n V_{ij}(\tau,\pmb{x}) \frac{\partial^2 u}{\partial x_i \partial x_j} = 0, \quad \tau < T\\
            u(T,\pmb{x}) = g(\pmb{x})
        \end{cases},
    \end{equation*}
    where $\pmb{V}(\tau,\pmb{x}) = \pmb{Q}(\tau,\pmb{x})\pmb{Q}^\top(\tau,\pmb{x})$ and $V_{ij}(\tau,\pmb{x})$ is the $(i,j)$-th element of $\pmb{V}(\tau,\pmb{x})$.
\end{lemma}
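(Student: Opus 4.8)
The plan is to prove the identity by applying the multidimensional It\^{o} formula (Theorem \ref{them:Ito_formula}) to the process $u(\tau, \pmb{x}(\tau))$ and then exploiting the defining conditional-expectation structure of $u$ together with the martingale property of the It\^{o} integral. First I would record the terminal condition, which is immediate from the definition: $u(T, \pmb{x}) = \mathbb{E}[g(\pmb{x}(T)) \mid \pmb{x}(T) = \pmb{x}] = g(\pmb{x})$. I would also note that, since $\pmb{a}$, $\pmb{Q}$, and $g$ are smooth and bounded, standard parabolic regularity theory guarantees that $u(\tau, \pmb{x})$ is continuously differentiable in $\tau$ and twice continuously differentiable in $\pmb{x}$, so that Theorem \ref{them:Ito_formula} applies with $y := u$, and that the solution $\pmb{x}$ enjoys the Markov property under these coefficient hypotheses.

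Next I would apply It\^{o}'s formula to $u(\tau, \pmb{x}(\tau))$ along the SDE solution. Substituting $dx_i(\tau) = a_i \, d\tau + \sum_k Q_{ik} \, dW_k(\tau)$ and using the multiplication rules stated in Theorem \ref{them:Ito_formula} ($dW_i dW_j = \delta_{ij} d\tau$, $dW_i d\tau = d\tau d\tau = 0$), the quadratic-variation term collapses to $dx_i(\tau) dx_j(\tau) = \sum_k Q_{ik} Q_{jk} \, d\tau = V_{ij}(\tau, \pmb{x}(\tau)) \, d\tau$, yielding the semimartingale decomposition
\begin{equation*}
du(\tau, \pmb{x}(\tau)) = \left( \frac{\partial u}{\partial \tau} + \sum_{i=1}^n a_i \frac{\partial u}{\partial x_i} + \frac{1}{2} \sum_{i,j=1}^n V_{ij} \frac{\partial^2 u}{\partial x_i \partial x_j} \right) d\tau + \sum_{i=1}^n \sum_{k=1}^n \frac{\partial u}{\partial x_i} Q_{ik} \, dW_k(\tau),
\end{equation*}
with all coefficients evaluated at $(\tau, \pmb{x}(\tau))$. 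Writing $\mathcal{L}u := \sum_i a_i \partial_{x_i} u + \tfrac{1}{2} \sum_{i,j} V_{ij} \partial^2_{x_i x_j} u$ for the drift operator, I would integrate from $\tau$ to $T$ and take the conditional expectation given $\pmb{x}(\tau) = \pmb{x}$. The stochastic integral vanishes by property \eqref{eq:Ito_integral}, since its integrand $\partial_{x_i} u \cdot Q_{ik}$ is bounded and nonanticipating (the derivatives of $u$ and $\pmb{Q}$ being bounded). Using the tower/Markov property to identify $\mathbb{E}[u(T, \pmb{x}(T)) \mid \pmb{x}(\tau) = \pmb{x}] = \mathbb{E}[g(\pmb{x}(T)) \mid \pmb{x}(\tau) = \pmb{x}] = u(\tau, \pmb{x})$, the left-hand side is exactly zero, leaving
\begin{equation*}
0 = \mathbb{E}\left[ \int_\tau^T \left( \frac{\partial u}{\partial s} + \mathcal{L}u \right)(s, \pmb{x}(s)) \, ds \; \Big| \; \pmb{x}(\tau) = \pmb{x} \right]
\end{equation*}
for every $\tau < T$ and every $\pmb{x}$.

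Finally I would convert this integral identity into the pointwise equation. Setting $\Phi(s, \pmb{x}(s)) := (\partial_s u + \mathcal{L}u)(s, \pmb{x}(s))$, which is continuous and bounded, I would divide the identity by $T - \tau$ and let $T \downarrow \tau$ (equivalently, differentiate in $\tau$); by continuity of $\Phi$, sample-path continuity of $\pmb{x}$, and dominated convergence, the averaged conditional expectation converges to $\Phi(\tau, \pmb{x})$, forcing $\partial_\tau u + \mathcal{L}u = 0$ at the arbitrary interior point $(\tau, \pmb{x})$. Combined with the terminal condition this gives the claim. The main obstacle I anticipate is justifying the $C^{1,2}$ regularity of $u$ presupposed by It\^{o}'s formula: $u$ is defined only as a conditional expectation, so a rigorous treatment must either invoke a parabolic-smoothing result under the smooth-and-bounded hypothesis or prove the equation on a dense class and pass to the limit; I would cite the standard regularity result and flag a localization argument (introducing stopping times to confine the process to compacts and to ensure the stochastic integral is a genuine martingale) as the technical safeguard for the expectation-vanishing step.
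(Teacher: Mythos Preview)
The paper does not prove this lemma: it is stated as the classical Feynman--Kac formula and simply invoked as a known tool alongside the Gr\"{o}nwall inequality and the moment bound from Kloeden--Platen. Your proposal is the standard textbook proof (It\^{o}'s formula applied to $u(\tau,\pmb{x}(\tau))$, martingale property from the tower/Markov structure, then a short-time limit to extract the pointwise PDE), and it is correct under the stated smooth-and-bounded hypotheses; you have also correctly identified the $C^{1,2}$ regularity of $u$ as the only genuine technical point, which is exactly what the paper's assumptions are designed to guarantee.
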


Now we are ready to give the proof of Theorem \ref{Theorem:strong_error}.
\begin{proof}{Proof.}
    Consider the SDE of Equation \eqref{eq:Ito_1}, i.e., $d\pmb{\theta}(\tau) = \pmb{a}(\pmb{\theta}(\tau)) d\tau + \sqrt{2}d\pmb{W}(\tau)$, $0\leq \tau \leq T$, and the SDE of Equation \eqref{eq:Ito_2}, i.e., $d\hat{\pmb{\theta}}(\tau) = \hat{\pmb{a}}(\hat{\pmb{\theta}}(\tau)) d\tau + \sqrt{2}d\pmb{W}(\tau)$, $\tau_{k-1}\leq \tau < \tau_k$. For theoretical use, for $\tau_{k-1}\leq \tau < \tau_k$, define $\hat{\pmb{a}}(\hat{\pmb{\theta}}(\tau)) = \hat{\pmb{a}}(\hat{\pmb{\theta}}_{\Delta \tau}(\tau_{k-1}))$. And $\hat{\pmb{\theta}}(0)=\hat{\pmb{\theta}}_{\Delta \tau}(\tau_0)=\pmb{\theta}(0)$. Letting $\pmb{\kappa}(\tau) = \pmb{\theta}(\tau)-\hat{\pmb{\theta}}(\tau)$ for $\tau_{k-1}\leq \tau < \tau_k$, we have $d\pmb{\kappa}(\tau) = \left\{\pmb{a}(\pmb{\theta}(\tau)) - \hat{\pmb{a}}(\hat{\pmb{\theta}}_{\Delta \tau}(\tau_{k-1}))\right\} d\tau$. Let $\kappa_i(\tau)$, $a_i(\pmb{\theta}(\tau))$ and $\hat{a}_i(\hat{\pmb{\theta}}_{\Delta \tau}(\tau_{k-1}))$ be the $i$-th components of $\pmb{\kappa}(\tau)$, $\pmb{a}(\pmb{\theta}(\tau))$ and $\hat{\pmb{a}}(\hat{\pmb{\theta}}_{\Delta \tau}(\tau_{k-1}))$, respectively. The chain rule applied to $\kappa_i^2(\tau)$ shows, for $\tau_{k-1}\leq \tau < \tau_k$, $d\kappa_i^2(\tau) = 2 \kappa_i(\tau) \left\{a_i(\pmb{\theta}(\tau)) - \hat{a}_i(\hat{\pmb{\theta}}_{\Delta \tau}(\tau_{k-1}))\right\} d\tau$. Thus, $\kappa_i^2(\tau_k) - \kappa_i^2(\tau_{k-1}) = \int_{\tau_{k-1}}^{\tau_k} 2 \left\{a_i(\pmb{\theta}(\tau)) - \hat{a}_i(\hat{\pmb{\theta}}_{\Delta \tau}(\tau_{k-1}))\right\} \kappa_i(\tau) \, d\tau$. Taking the expectation of $\kappa_i^2(\tau_k)$, we have $\mathbb{E} \left[\kappa_i^2(\tau_k)\right] - \mathbb{E} \left[\kappa_i^2(\tau_{k-1})\right] = \int_{\tau_{k-1}}^{\tau_k} \mathbb{E} \left[2 \left\{a_i(\pmb{\theta}(\tau)) - \hat{a}_i(\hat{\pmb{\theta}}_{\Delta \tau}(\tau_{k-1}))\right\} \kappa_i(\tau)\right] d\tau$. From the fact that $4xy \leq (x+y)^2 \leq 2(x^2+y^2)$, we have
    \begin{align*}
        \mathbb{E} \left[2 \left\{a_i(\pmb{\theta}(\tau)) - \hat{a}_i(\hat{\pmb{\theta}}_{\Delta \tau}(\tau_{k-1}))\right\} \kappa_i(\tau)\right] &\leq \mathbb{E} \left[\frac{\left(a_i(\pmb{\theta}(\tau)) - \hat{a}_i(\hat{\pmb{\theta}}_{\Delta \tau}(\tau_{k-1})) + \kappa_i(\tau)\right)^2}{2}\right] \\
        &\leq \mathbb{E} \left|a_i(\pmb{\theta}(\tau)) - \hat{a}_i(\hat{\pmb{\theta}}_{\Delta \tau}(\tau_{k-1}))\right|^2 + \mathbb{E} \left|\kappa_i(\tau)\right|^2,
    \end{align*}
    where
    \begin{align*}
        &\left|a_i(\pmb{\theta}(\tau)) - \hat{a}_i(\hat{\pmb{\theta}}_{\Delta \tau}(\tau_{k-1}))\right|^2 = \left|a_i(\pmb{\theta}(\tau)) - a_i(\hat{\pmb{\theta}}_{\Delta \tau}(\tau_{k-1})) + a_i(\hat{\pmb{\theta}}_{\Delta \tau}(\tau_{k-1})) - \hat{a}_i(\hat{\pmb{\theta}}_{\Delta \tau}(\tau_{k-1}))\right|^2 \\
        &\qquad = \left|\partial_{\theta_i} \log p\left( \pmb{\theta} \big| \mathcal{D}_M^H\right) \big|_{\pmb{\theta} = \pmb{\theta}(\tau)} - \partial_{\theta_i} \log p\left( \pmb{\theta} \big| \mathcal{D}_M^H\right) \big|_{\pmb{\theta} = \hat{\pmb{\theta}}_{\Delta \tau}(\tau_{k-1})} + R_i(\pmb{\xi}_{\Delta \tau}(\tau_{k-1}))\right|^2 \\
        &\qquad \leq \left|\partial_{\theta_i} \log p\left( \pmb{\theta} \big| \mathcal{D}_M^H\right) \big|_{\pmb{\theta} = \pmb{\theta}(\tau)} - \partial_{\theta_i} \log p\left( \pmb{\theta} \big| \mathcal{D}_M^H\right) \big|_{\pmb{\theta} = \hat{\pmb{\theta}}_{\Delta \tau}(\tau_{k-1})}\right|^2 + \left|R_i(\pmb{\xi}_{\Delta \tau}(\tau_{k-1}))\right|^2.
    \end{align*}
    By the Lipschitz condition in Assumption \ref{assumption_unique}, we have
    \begin{align*}
        &\left|\partial_{\theta_i} \log p\left( \pmb{\theta} \big| \mathcal{D}_M^H\right) \big|_{\pmb{\theta} = \pmb{\theta}(\tau)} - \partial_{\theta_i} \log p\left( \pmb{\theta} \big| \mathcal{D}_M^H\right) \big|_{\pmb{\theta} = \hat{\pmb{\theta}}_{\Delta \tau}(\tau_{k-1})}\right|^2 = \left|a_i(\pmb{\theta}(\tau)) - a_i(\hat{\pmb{\theta}}_{\Delta \tau}(\tau_{k-1}))\right|^2 \\
        &\qquad = \left|a_i(\pmb{\theta}(\tau)) - a_i(\pmb{\theta}(\tau_{k-1})) + a_i(\pmb{\theta}(\tau_{k-1})) - a_i(\hat{\pmb{\theta}}_{\Delta \tau}(\tau_{k-1}))\right|^2 \\
        &\qquad \leq 2\left(\left|a_i(\pmb{\theta}(\tau)) - a_i(\pmb{\theta}(\tau_{k-1}))\right|^2 + \left|a_i(\pmb{\theta}(\tau_{k-1})) - a_i(\hat{\pmb{\theta}}_{\Delta \tau}(\tau_{k-1}))\right|^2\right) \\
        &\qquad \leq 2C_{i,1}^2\left(||\pmb{\theta}(\tau)-\pmb{\theta}(\tau_{k-1})||^2 + ||\pmb{\theta}(\tau_{k-1})-\hat{\pmb{\theta}}_{\Delta \tau}(\tau_{k-1})||^2\right).
    \end{align*}
    From Lemma \ref{lemma:Kloeden_Platen}, for $\tau_{k-1} \leq \tau < \tau_k$,
    \begin{align*}
        \mathbb{E}||\pmb{\theta}(\tau)-\pmb{\theta}(\tau_{k-1})||^2 &\leq D_2 \left(1+\mathbb{E}||\pmb{\theta}(\tau_0)||^2\right) (\tau-\tau_{k-1}) \exp \left\{D_1(\tau-\tau_{k-1})\right\} \\
        &\leq D_2 \left(1+\mathbb{E}||\pmb{\theta}(\tau_0)||^2\right) (\tau-\tau_{k-1}) \exp \left\{D_1(T-\tau_0)\right\} \\
        &= D_3 (\tau-\tau_{k-1}),
    \end{align*}
    where $D_3 = D_2 \left(1+\mathbb{E}||\pmb{\theta}(\tau_0)||^2\right) \exp \left\{D_1(T-\tau_0)\right\}$. Since $\tau_0 = 0$, the constant $D_3$ depends only on $D_1$, $D_2$, $T$, and $\pmb{\theta}(0)$. And $\mathbb{E} ||\pmb{\theta}(\tau_{k-1})-\hat{\pmb{\theta}}_{\Delta \tau}(\tau_{k-1})||^2 = \mathbb{E} ||\pmb{\kappa}(\tau_{k-1})||^2$. Hence, we have
    \begin{align*}
        &\quad \mathbb{E} \left[\kappa_i^2(\tau_k)\right] - \mathbb{E} \left[\kappa_i^2(\tau_{k-1})\right] \\
        &\leq \int_{\tau_{k-1}}^{\tau_k} 2C_{i,1}^2\left(D_3 (\tau-\tau_{k-1}) + \mathbb{E} ||\pmb{\kappa}(\tau_{k-1})||^2\right) + \mathbb{E} \left|R_i(\pmb{\xi}_{\Delta \tau}(\tau_{k-1}))\right|^2 + \mathbb{E} \left|\kappa_i(\tau)\right|^2 \, d\tau \\
        &= C_{i,1}^2 D_3 \Delta \tau^2 + 2 C_{i,1}^2 \mathbb{E} ||\pmb{\kappa}(\tau_{k-1})||^2 \Delta \tau + \mathbb{E} \left|R_i(\pmb{\xi}_{\Delta \tau}(\tau_{k-1}))\right|^2 \Delta \tau + \int_{\tau_{k-1}}^{\tau_k} \mathbb{E} \left|\kappa_i(\tau)\right|^2 \, d\tau.
    \end{align*}
    Then,
    \begin{align*}
        \mathbb{E} \left[\kappa_i^2(\tau_k)\right] &\leq \mathbb{E} \left|\kappa_i(\tau_{k-1})\right|^2 + 2 C_{i,1}^2 \mathbb{E} ||\pmb{\kappa}(\tau_{k-1})||^2 \Delta \tau + C_{i,1}^2 D_3 \Delta \tau^2 \\
        &\quad + \mathbb{E} \left|R_i(\pmb{\xi}_{\Delta \tau}(\tau_{k-1}))\right|^2 \Delta \tau + \int_{\tau_{k-1}}^{\tau_k} \mathbb{E} \left|\kappa_i(\tau)\right|^2 \, d\tau.
    \end{align*}
    Letting $\beta_i(\tau_{k-1}) = \mathbb{E} \left|\kappa_i(\tau_{k-1})\right|^2 + 2 C_{i,1}^2 \mathbb{E} ||\pmb{\kappa}(\tau_{k-1})||^2 \Delta \tau + C_{i,1}^2 D_3 \Delta \tau^2 + \mathbb{E} \left|R_i(\pmb{\xi}_{\Delta \tau}(\tau_{k-1}))\right|^2 \Delta \tau$, we have $\mathbb{E} \left[\kappa_i^2(\tau_k)\right] \leq \beta_i(\tau_{k-1}) + \int_{\tau_{k-1}}^{\tau_k} \mathbb{E} \left|\kappa_i(\tau)\right|^2 \, d\tau$. Then by the Gr\"{o}nwall inequality in Lemma \ref{lemma:Gronwall}, we obtain
    \begin{equation*}
        \mathbb{E} \left[\kappa_i^2(\tau_k)\right] \leq \beta_i(\tau_{k-1}) + \int_{\tau_{k-1}}^{\tau_k} \exp (\tau_k-\tau)\beta_i(\tau_{k-1}) \, d\tau = \beta_i(\tau_{k-1}) \exp (\Delta \tau).
    \end{equation*}
    Since 
    \begin{align*}
        \sum_{i=1}^{N_\theta} \beta_i(\tau_{k-1}) &= \sum_{i=1}^{N_\theta} \left\{\mathbb{E} \left|\kappa_i(\tau_{k-1})\right|^2 + 2 C_{i,1}^2 \mathbb{E} ||\pmb{\kappa}(\tau_{k-1})||^2 \Delta \tau + C_{i,1}^2 D_3 \Delta \tau^2 + \mathbb{E} \left|R_i(\pmb{\xi}_{\Delta \tau}(\tau_{k-1}))\right|^2 \Delta \tau\right\} \\
        &= \mathbb{E} ||\pmb{\kappa}(\tau_{k-1})||^2 + 2 C_1^2 \mathbb{E} ||\pmb{\kappa}(\tau_{k-1})||^2 \Delta \tau + C_1^2 D_3 \Delta \tau^2 + \mathbb{E} ||\pmb{R}(\pmb{\xi}_{\Delta \tau}(\tau_{k-1}))||^2 \Delta \tau \\
        &\leq \left(1+2C_1^2\Delta \tau\right) \mathbb{E} ||\pmb{\kappa}(\tau_{k-1})||^2 + C_1^2 D_3 \Delta \tau^2 + \max_k \mathbb{E} ||\pmb{R}(\pmb{\xi}_{\Delta \tau}(\tau_{k-1}))||^2 \Delta \tau,
    \end{align*}
    where $C_1 = \left(\sum_{i=1}^{N_\theta} C_{i,1}^2\right)^{\frac{1}{2}}$, we have
    \begin{align}
        &\mathbb{E} ||\pmb{\kappa}(\tau_k)||^2 =  \mathbb{E} \left[\sum_{i=1}^{N_\theta} \left|\kappa_i(\tau_k)\right|^2\right] = \sum_{i=1}^{N_\theta} \mathbb{E} \left|\kappa_i(\tau_k)\right|^2 \leq \sum_{i=1}^{N_\theta} \beta_i(\tau_{k-1}) \exp (\Delta \tau) \nonumber \\
        &\qquad \leq \left(1+2C_1^2\Delta \tau\right) \exp (\Delta \tau) \mathbb{E} ||\pmb{\kappa}(\tau_{k-1})||^2 + \left(C_1^2 D_3 \Delta \tau^2 + \max_k \mathbb{E} ||\pmb{R}(\pmb{\xi}_{\Delta \tau}(\tau_{k-1}))||^2 \Delta \tau\right) \exp (\Delta \tau) \nonumber \\
        &\qquad = \gamma \mathbb{E} ||\pmb{\kappa}(\tau_{k-1})||^2 + \beta(\tau_0) \exp (\Delta \tau), \label{eq:iteration}
    \end{align}
    where $\gamma = \left(1+2C_1^2\Delta \tau\right) \exp (\Delta \tau)$ and $\beta(\tau_0) = C_1^2 D_3 \Delta \tau^2 + \max_k \mathbb{E} ||\pmb{R}(\pmb{\xi}_{\Delta \tau}(\tau_{k-1}))||^2 \Delta \tau$. Iterating Equation \eqref{eq:iteration} from $\mathbb{E} ||\pmb{\kappa}(\tau_0)||^2 = \mathbb{E} ||\pmb{\kappa}(0)||^2 = \mathbb{E} ||\pmb{\theta}(0)-\hat{\pmb{\theta}}(0)||^2 = 0$, we have
    \begin{align*}
        \mathbb{E} ||\pmb{\kappa}(\tau_N)||^2 &\leq \beta(\tau_0) \exp (\Delta \tau) \left(\frac{1-\gamma^N}{1-\gamma}\right) \\
        &= \left(C_1^2 D_3 \Delta \tau + \max_k \mathbb{E} ||\pmb{R}(\pmb{\xi}_{\Delta \tau}(\tau_{k-1}))||^2\right) \frac{\Delta \tau \exp (\Delta \tau)}{1-\gamma}\left(1-\gamma^{\frac{T}{\Delta \tau}}\right).
    \end{align*}
    As $\Delta \tau \to 0$, $\gamma^{\frac{1}{\Delta \tau}} = \left(1+2C_1^2\Delta \tau\right)^{\frac{1}{\Delta \tau}} \exp \left(\Delta \tau \frac{1}{\Delta \tau}\right) \to \exp \left(1+2C_1^2\right)$. By the L'Hospital formula, we have
    \begin{equation*}
        \lim_{\Delta \tau \to 0} \frac{\Delta \tau \exp (\Delta \tau)}{1-\gamma} =  \lim_{\Delta \tau \to 0} \frac{\Delta \tau}{\exp (-\Delta \tau)-(1+2C_1^2\Delta \tau)} = \lim_{\Delta \tau \to 0} \frac{1}{-\exp (-\Delta \tau)-2C_1^2} = -\frac{1}{1+2C_1^2}.
    \end{equation*}
    That is, as $\Delta \tau \to 0$, $\frac{\Delta \tau \exp (\Delta \tau)}{1-\gamma}\left(1-\gamma^{\frac{T}{\Delta \tau}}\right) \to \frac{\exp \left(T+2C_1^2T\right) - 1}{1+2C_1^2}$. Note that $\tau_N = T$, we have $\mathbb{E} ||\pmb{\kappa}(T)||^2 = \mathcal{O} \left(\Delta \tau + \max_k \mathbb{E} ||\pmb{R}(\pmb{\xi}_{\Delta \tau}(\tau_{k-1}))||^2\right)$, which leads to the conclusion $\mathbb{E} ||\pmb{\theta}(T)-\hat{\pmb{\theta}}_{\Delta \tau}(T)|| = \mathcal{O} \left(\sqrt{\Delta \tau + \max_k \mathbb{E} ||\pmb{R}(\pmb{\xi}_{\Delta \tau}(\tau_{k-1}))||^2}\right)$ using Jensen's inequality. \hfill \Halmos
\end{proof}

\subsection{Proof of Theorem \ref{Theorem:weak_error}} \label{proof:theorem_21}

The proof of Theorem \ref{Theorem:weak_error} is given in the following.
\begin{proof}{Proof.}
    Let $u(\tau,\pmb{\phi}) = \mathbb{E}\left[ f(\pmb{\theta}(T)) | \pmb{\theta}(\tau)=\pmb{\phi} \right]$, $\tau \in [0,T]$ (where $0=\tau_0$ and $T=\tau_N$). Note that $\pmb{\theta}(0)=\hat{\pmb{\theta}}(0)$. Thus, we have
    \begin{align*}
        &\mathbb{E}\left[f(\pmb{\theta}(T))\right] = \mathbb{E}\left[\mathbb{E}\left[ f(\pmb{\theta}(T)) | \pmb{\theta}(0)=\pmb{\theta}(0) \right]\right] = \mathbb{E}\left[u(0,\pmb{\theta}(0))\right] = \mathbb{E}\left[u(0,\hat{\pmb{\theta}}(0))\right], \\
        &\mathbb{E}\left[f(\hat{\pmb{\theta}}_{\Delta \tau}(T))\right] = \mathbb{E}\left[\mathbb{E}\left[ f(\pmb{\theta}(T)) | \pmb{\theta}(T)=\hat{\pmb{\theta}}_{\Delta \tau}(T) \right]\right] = \mathbb{E}\left[u(T,\hat{\pmb{\theta}}_{\Delta \tau}(T))\right].
    \end{align*}
    By the Feynman-Kac formula in Lemma \ref{lemma:Feynman-Kac}, $u(\tau,\pmb{\phi})$ satisfies
    \begin{align*}
        &\frac{\partial u}{\partial \tau} + \sum_{i=1}^{N_\theta} a_i \frac{\partial u}{\partial \phi_i} + \sum_{i=1}^{N_\theta} \frac{\partial^2 u}{\partial \phi_i^2} = 0, \quad \tau < T, \\
        &u(T,\pmb{\phi}) = f(\pmb{\phi}).
    \end{align*}
    Then, applying the It\^{o} formula in Theorem \ref{them:Ito_formula} to $u(\tau,\hat{\pmb{\theta}}(\tau))$ for $\tau_{k-1} \leq \tau <\tau_k$, we have
    \begin{align*}
        &du(\tau,\hat{\pmb{\theta}}(\tau)) = \left(\frac{\partial u}{\partial \tau} + \sum_{i=1}^{N_\theta} \hat{a}_i \frac{\partial u}{\partial \phi_i} + \sum_{i=1}^{N_\theta} \frac{\partial^2 u}{\partial \phi_i^2}\right)(\tau,\hat{\pmb{\theta}}(\tau))d\tau + \sum_{i=1}^{N_\theta} \sqrt{2} \frac{\partial u}{\partial \phi_i}(\tau,\hat{\pmb{\theta}}(\tau)) dW_i(\tau) \\
        &= \left(\frac{\partial u}{\partial \tau} + \sum_{i=1}^{N_\theta} \hat{a}_i \frac{\partial u}{\partial \phi_i} + \sum_{i=1}^{N_\theta} \frac{\partial^2 u}{\partial \phi_i^2} - \frac{\partial u}{\partial \tau} - \sum_{i=1}^{N_\theta} a_i \frac{\partial u}{\partial \phi_i} - \sum_{i=1}^{N_\theta} \frac{\partial^2 u}{\partial \phi_i^2}\right)(\tau,\hat{\pmb{\theta}}(\tau))d\tau \\
        &\quad + \sqrt{2} \sum_{i=1}^{N_\theta} \frac{\partial u}{\partial \phi_i}(\tau,\hat{\pmb{\theta}}(\tau)) dW_i(\tau) \\
        &= \sum_{i=1}^{N_\theta} (\hat{a}_i-a_i) \frac{\partial u}{\partial \phi_i}(\tau,\hat{\pmb{\theta}}(\tau))d\tau + \sqrt{2} \sum_{i=1}^{N_\theta} \frac{\partial u}{\partial \phi_i}(\tau,\hat{\pmb{\theta}}(\tau)) dW_i(\tau),
    \end{align*}
    where for $\tau_{k-1} \leq \tau <\tau_k$ and $i=1,2,\ldots,N_\theta$ (recall that for theoretical use, for $\tau_{k-1}\leq \tau < \tau_k$, we define $\hat{\pmb{a}}(\hat{\pmb{\theta}}(\tau)) = \hat{\pmb{a}}(\hat{\pmb{\theta}}_{\Delta \tau}(\tau_{k-1}))$),
    \begin{align*}
        &a_i = a_i(\hat{\pmb{\theta}}(\tau)) = \partial_{\theta_i} \log p\left( \pmb{\theta} \big| \mathcal{D}_M^H\right) \big|_{\pmb{\theta} = \hat{\pmb{\theta}}(\tau)}, \\
        &\hat{a}_i = \hat{a}_i(\hat{\pmb{\theta}}(\tau)) = \hat{a}_i(\hat{\pmb{\theta}}_{\Delta \tau}(\tau_{k-1})) = \partial_{\theta_i} \log p\left( \pmb{\theta} \big| \mathcal{D}_M^H\right) \big|_{\pmb{\theta} = \hat{\pmb{\theta}}_{\Delta \tau}(\tau_{k-1})} - R_i(\pmb{\xi}_{\Delta \tau}(\tau_{k-1})), \\
        &\frac{\partial u}{\partial \phi_i}(\tau,\hat{\pmb{\theta}}(\tau)) = \frac{\partial u(\tau,\pmb{\phi})}{\partial \phi_i} \bigg|_{\pmb{\phi} = \hat{\pmb{\theta}}(\tau)}.
    \end{align*}
    Integrating $du(\tau,\hat{\pmb{\theta}}(\tau))$ from $0$ to $T$, we have
    \begin{equation*}
        u(T,\hat{\pmb{\theta}}_{\Delta \tau}(T)) - u(0,\hat{\pmb{\theta}}(0)) = \sum_{i=1}^{N_\theta} \int_0^T (\hat{a}_i-a_i) \frac{\partial u}{\partial \phi_i}(\tau,\hat{\pmb{\theta}}(\tau)) \, d\tau + \sqrt{2} \sum_{i=1}^{N_\theta} \int_0^T \frac{\partial u}{\partial \phi_i}(\tau,\hat{\pmb{\theta}}(\tau)) \, dW_i(\tau).
    \end{equation*}
    Taking the expectation and considering the fact that the expected value of the It\^{o} integral is zero, i.e., $\mathbb{E} \left[ \int_0^T \frac{\partial u}{\partial \phi_i}(\tau,\hat{\pmb{\theta}}(\tau)) \, dW_i(\tau) \right] = 0$, we have $\mathbb{E} \left[u(T,\hat{\pmb{\theta}}_{\Delta \tau}(T))\right] - \mathbb{E} \left[u(0,\hat{\pmb{\theta}}(0))\right] = \sum_{i=1}^{N_\theta} \int_0^T \mathbb{E} \left[(\hat{a}_i-a_i) \frac{\partial u}{\partial \phi_i}(\tau,\hat{\pmb{\theta}}(\tau))\right] \, d\tau$. Recalling that $\hat{a}_i(\hat{\pmb{\theta}}_{\Delta \tau}(\tau_{k-1})) = a_i(\hat{\pmb{\theta}}_{\Delta \tau}(\tau_{k-1})) - R_i(\pmb{\xi}_{\Delta \tau}(\tau_{k-1}))$, we can obtain
    \begin{align*}
        &\mathbb{E} \left[(\hat{a}_i-a_i) \frac{\partial u}{\partial \phi_i}(\tau,\hat{\pmb{\theta}}(\tau))\right] = \mathbb{E} \left[(\hat{a}_i(\hat{\pmb{\theta}}_{\Delta \tau}(\tau_{k-1}))-a_i(\hat{\pmb{\theta}}(\tau))) \frac{\partial u}{\partial \phi_i}(\tau,\hat{\pmb{\theta}}(\tau))\right] \\
        &\qquad = \mathbb{E} \left[(a_i(\hat{\pmb{\theta}}_{\Delta \tau}(\tau_{k-1})) - R_i(\pmb{\xi}_{\Delta \tau}(\tau_{k-1})) - a_i(\hat{\pmb{\theta}}(\tau))) \frac{\partial u}{\partial \phi_i}(\tau,\hat{\pmb{\theta}}(\tau))\right] \\
        &\qquad = \mathbb{E} \left[(a_i(\hat{\pmb{\theta}}_{\Delta \tau}(\tau_{k-1})) - a_i(\hat{\pmb{\theta}}(\tau))) \frac{\partial u}{\partial \phi_i}(\tau,\hat{\pmb{\theta}}(\tau))\right] -  \mathbb{E} \left[R_i(\pmb{\xi}_{\Delta \tau}(\tau_{k-1})) \frac{\partial u}{\partial \phi_i}(\tau,\hat{\pmb{\theta}}(\tau))\right].
    \end{align*}
    Denote $\rho_i(\tau,\hat{\pmb{\theta}}(\tau)) = (a_i(\hat{\pmb{\theta}}_{\Delta \tau}(\tau_{k-1})) - a_i(\hat{\pmb{\theta}}(\tau))) \frac{\partial u}{\partial \phi_i}(\tau,\hat{\pmb{\theta}}(\tau))$, for $\tau_{k-1} \leq \tau <\tau_k$. Then by the It\^{o} formula in Theorem \ref{them:Ito_formula}, we have
    \begin{equation*}
        d\rho_i(\tau,\hat{\pmb{\theta}}(\tau)) = \left(\frac{\partial \rho_i}{\partial \tau} + \sum_{j=1}^{N_\theta} \hat{a}_j \frac{\partial \rho_i}{\partial \phi_j} + \sum_{j=1}^{N_\theta} \frac{\partial^2 \rho_i}{\partial \phi_j^2}\right)(\tau,\hat{\pmb{\theta}}(\tau))d\tau + \sqrt{2} \sum_{j=1}^{N_\theta} \frac{\partial \rho_i}{\partial \phi_j}(\tau,\hat{\pmb{\theta}}(\tau)) dW_j(\tau).
    \end{equation*}
    Since $\mathbb{E} \left[ \sqrt{2} \sum_{j=1}^{N_\theta} \frac{\partial \rho_i}{\partial \phi_j}(\tau,\hat{\pmb{\theta}}(\tau)) dW_j(\tau) \right] = \sqrt{2} \sum_{j=1}^{N_\theta} \mathbb{E} \left[ \frac{\partial \rho_i}{\partial \phi_j}(\tau,\hat{\pmb{\theta}}(\tau)) \right] \mathbb{E} \left[ dW_j(\tau) \right] = 0$, we have
    \begin{align*}
        \mathbb{E}\left[d\rho_i(\tau,\hat{\pmb{\theta}}(\tau))\right] &= \mathbb{E}\left[\left(\frac{\partial \rho_i}{\partial \tau} + \sum_{j=1}^{N_\theta} \hat{a}_j \frac{\partial \rho_i}{\partial \phi_j} + \sum_{j=1}^{N_\theta} \frac{\partial^2 \rho_i}{\partial \phi_j^2}\right)(\tau,\hat{\pmb{\theta}}(\tau))d\tau\right] \\
        &= \mathbb{E}\left[\left(\frac{\partial \rho_i}{\partial \tau} + \sum_{j=1}^{N_\theta} \hat{a}_j \frac{\partial \rho_i}{\partial \phi_j} + \sum_{j=1}^{N_\theta} \frac{\partial^2 \rho_i}{\partial \phi_j^2}\right)(\tau,\hat{\pmb{\theta}}(\tau))\right]d\tau.
    \end{align*}
    By the boundedness theorem, there exists $M_{i,k} > 0$ such that for $\tau_{k-1} \leq \tau <\tau_k$,
    \begin{equation*}
        \left|\mathbb{E}\left[\left(\frac{\partial \rho_i}{\partial \tau} + \sum_{j=1}^{N_\theta} \hat{a}_j \frac{\partial \rho_i}{\partial \phi_j} + \sum_{j=1}^{N_\theta} \frac{\partial^2 \rho_i}{\partial \phi_j^2}\right)(\tau,\hat{\pmb{\theta}}(\tau))\right]\right| \leq M_{i,k}.
    \end{equation*}
    Thus, we obtain $\mathbb{E}\left[\rho_i(\tau,\hat{\pmb{\theta}}(\tau))\right] \leq M_{i,k} (\tau_k - \tau_{k-1}) = M_{i,k} \Delta \tau$. Similarly, denoting $\beta_i(\tau,\hat{\pmb{\theta}}(\tau)) = \frac{\partial u}{\partial \phi_i}(\tau,\hat{\pmb{\theta}}(\tau))$ for $\tau_{k-1} \leq \tau <\tau_k$, there exists $N_{i,k} > 0$ such that $\mathbb{E} \left[\beta_i(\tau,\hat{\pmb{\theta}}(\tau))\right] \leq N_{i,k} \Delta \tau$. And by Equation \eqref{eq:E_R}, we have $\mathbb{E} \left[R_i(\pmb{\xi}_{\Delta \tau}(\tau_{k-1})) | \pmb{\xi}_{\Delta \tau}(\tau_{k-1}) \right] = \text{tr}\left(\pmb{\varphi}^*_{\Delta \tau}(\tau_{k-1}){\pmb{\varphi}^*_{\Delta \tau}}^\top(\tau_{k-1})\mathcal{O}_i^\top(1) + \pmb{\Psi}^*_{\Delta \tau}(\tau_{k-1})\mathcal{O}_i^\top(1)\right)$. Thus,
    \begin{align*}
        &\mathbb{E} \left[R_i(\pmb{\xi}_{\Delta \tau}(\tau_{k-1})) \frac{\partial u}{\partial \phi_i}(\tau,\hat{\pmb{\theta}}(\tau))\right] = \mathbb{E} \left[\mathbb{E} \left[R_i(\pmb{\xi}_{\Delta \tau}(\tau_{k-1})) \frac{\partial u}{\partial \phi_i}(\tau,\hat{\pmb{\theta}}(\tau)) \bigg| \pmb{\xi}_{\Delta \tau}(\tau_{k-1}) \right]\right] \\
        &\qquad = \mathbb{E} \left[\mathbb{E} \left[R_i(\pmb{\xi}_{\Delta \tau}(\tau_{k-1})) \bigg| \pmb{\xi}_{\Delta \tau}(\tau_{k-1}) \right]\frac{\partial u}{\partial \phi_i}(\tau,\hat{\pmb{\theta}}(\tau))\right] \\
        &\qquad = \text{tr}\left(\pmb{\varphi}^*_{\Delta \tau}(\tau_{k-1}){\pmb{\varphi}^*_{\Delta \tau}}^\top(\tau_{k-1})\mathcal{O}_i^\top(1) + \pmb{\Psi}^*_{\Delta \tau}(\tau_{k-1})\mathcal{O}_i^\top(1)\right) \mathbb{E} \left[\frac{\partial u}{\partial \phi_i}(\tau,\hat{\pmb{\theta}}(\tau))\right] \leq L_{i,k} N_{i,k} \Delta \tau,
    \end{align*}
    where $L_{i,k} = \text{tr}\left(\pmb{\varphi}^*_{\Delta \tau}(\tau_{k-1}){\pmb{\varphi}^*_{\Delta \tau}}^\top(\tau_{k-1})\mathcal{O}_i^\top(1) + \pmb{\Psi}^*_{\Delta \tau}(\tau_{k-1})\mathcal{O}_i^\top(1)\right)$. Using $C_{i,\max} = \max_k \left\{M_{i,k} + L_{i,k} N_{i,k} \right\}$ and $C_{\max} = \sum_{i=1}^{N_\theta} C_{i,\max}$, we get the conclusion
    \begin{align*}
        \left|\mathbb{E}\left[f(\pmb{\theta}(T))\right] - \mathbb{E}\left[f(\hat{\pmb{\theta}}_{\Delta \tau}(T))\right]\right| &= \left| \mathbb{E} \left[u(T,\hat{\pmb{\theta}}_{\Delta \tau}(T))\right] - \mathbb{E} \left[u(0,\hat{\pmb{\theta}}(0))\right] \right| \\
        &\leq \sum_{i=1}^{N_\theta} \int_0^T C_{i,\max} \Delta \tau d\tau = T C_{\max} \Delta \tau = \mathcal{O}(\Delta \tau),
    \end{align*}
    \hfill \Halmos
\end{proof}

\section{RAPTOR-GEN as a Warm Start} \label{subsec:LD-LNA-MALA}

We can employ the Metropolis-adjusted Langevin algorithm (MALA) as a ``corrector'' after executing Algorithm \ref{Algr:one-stage} or Algorithm \ref{Algr:MALA_LNA}, namely, to use RAPTOR-GEN as a warm start for MALA. 

MALA combines discretized LD with a Metropolis-Hastings step to correct the bias in the stationary distribution induced by discretization. Specifically, based on ULA, we consider the update rule \eqref{update rule} and define a proposal distribution to generate a new MCMC posterior sample $\tilde{\pmb{\theta}}_{\Delta \tau}(\tau_{k+1})$,
\begin{equation}
    \tilde{\pmb{\theta}}_{\Delta \tau}(\tau_{k+1}) := \pmb{\theta}_{\Delta \tau}(\tau_k) + \nabla_{\pmb{\theta}} \log p\left( \pmb{\theta} \big| \mathcal{D}_M^H\right) \big|_{\pmb{\theta} = \pmb{\theta}_{\Delta \tau}(\tau_k)} \Delta \tau + \sqrt{2} \Delta\pmb{W}(\tau_k). \label{proposal}
\end{equation}
Then, the candidate samples $\tilde{\pmb{\theta}}_{\Delta \tau}(\tau_{k+1})$ generated from this proposal are accepted with the ratio,
\begin{equation}
    \gamma_{\rm acc} := \min \left\{1, \frac{p\left(\tilde{\pmb{\theta}}_{\Delta \tau}(\tau_{k+1}) | \mathcal{D}_M^H\right)q_{\rm trans}\left(\pmb{\theta}_{\Delta \tau}(\tau_k)\bigg|\tilde{\pmb{\theta}}_{\Delta \tau}(\tau_{k+1})\right)}{p\left(\pmb{\theta}_{\Delta \tau}(\tau_k) | \mathcal{D}_M^H\right)q_{\rm trans}\left(\tilde{\pmb{\theta}}_{\Delta \tau}(\tau_{k+1})\bigg|\pmb{\theta}_{\Delta \tau}(\tau_k)\right)}\right\}, \label{acceptance ratio}
\end{equation}
where the proposal distribution $q_{\rm trans}(\pmb{\theta}^\prime|\pmb{\theta}) \propto \exp \{-\frac{1}{4\Delta \tau} ||\pmb{\theta}^\prime - \pmb{\theta} - \nabla_{\pmb{\theta}} \log p\left(\pmb{\theta} | \mathcal{D}_M^H\right)\Delta \tau||^2\}$ is the transition probability density from $\pmb{\theta}$ to $\pmb{\theta}^\prime$ obtained from Equation \eqref{proposal}.

In MALA, the selection of starting points (a.k.a. warm start) is a critical task. Compared to discretized LD, in addition to computing the gradient of the log-posterior once, each iteration of MALA also requires the evaluation of the posterior distribution once in each Metropolis-Hastings step. Selecting the starting point of MALA cleverly helps improve its mixing, thus saving computational budget. It has been shown in \cite{chewi2021optimal} that the mixing time of MALA for sampling from a standard Gaussian target distribution of parameters $\pmb{\theta}$ with a warm start is at least $\mathcal{O}(N_\theta^{\frac{1}{3}})$. Intuitively, to attain this best achievable mixing time, we need to find an initial distribution of $\pmb{\theta}$ that is close to its target distribution. It is a natural choice to use LD-LNA in RAPTOR-GEN as initialization, as the finite-sample and asymptotic performance of the proposed LD-LNA shown in Section~\ref{subsec:convergence_analysis} demonstrate the closeness between LD-LNA and the target posterior distribution.

We provide a detailed MALA procedure with RAPTOR-GEN as initialization in Algorithm \ref{Algr:MALA} to generate posterior samples for parameters $\pmb{\theta}$ when the parameter dimension is high or the data size is small. In particular, within each iteration of the MALA posterior sampler, given the previous sample $\pmb{\theta}_{\Delta \tau}(\tau_k)$, we compute and generate one proposal sample $\tilde{\pmb{\theta}}_{\Delta \tau}(\tau_{k+1})$ from the discretized LD \eqref{proposal}, and accept it with the Metropolis-Hastings ratio \eqref{acceptance ratio}. Repetition of this procedure can get samples $\pmb{\theta}_{\Delta \tau}(\tau_k)$ for $k = 1,2,\ldots,N_0+(B-1)\delta+1$ (we still remove the first $N_0$ samples and keep one for every $\delta$ samples to reduce initial bias and correlations between consecutive samples). Consequently, we obtain the posterior samples $\{\pmb{\theta}^{(b)}\}_{b=1}^B \sim p(\pmb{\theta}|\mathcal{D}_M^H)$. Still, if the support of parameters is not real space such as positive space, we need to transform the parameters from their original support to real space using the reparameterization mentioned in Remark \ref{remark:reparametrization}, and then perform Algorithm \ref{Algr:MALA} on the transformed parameters.
\begin{algorithm}[th]
\DontPrintSemicolon
\KwIn{The prior $p(\pmb{\theta})$, historical observation dataset $\mathcal{D}_M^H$, step size $\Delta \tau$, posterior sample size $B$, initial warm-up length $N_0$, and an appropriate integer $\delta$ to reduce sample correlation.}
\KwOut{Posterior samples $\{\pmb{\theta}^{(b)}\}_{b=1}^B \sim p(\pmb{\theta}|\mathcal{D}_M^H)$.}
{
\textbf{1.} Run Algorithm \ref{Algr:one-stage} or Algorithm \ref{Algr:MALA_LNA} and set the starting point $\pmb{\theta}_{\Delta \tau}(\tau_0)$ as one of the approximate posterior samples obtained by Algorithm \ref{Algr:one-stage} or Algorithm \ref{Algr:MALA_LNA};\\
\For{$k=0,1,\ldots,N_0+(B-1)\delta$}{
    \textbf{2.} Generate a proposal $\tilde{\pmb{\theta}}_{\Delta \tau}(\tau_{k+1})$ through Equation \eqref{proposal};\\
    \textbf{3.} Calculate the acceptance ratio $\gamma_{\rm acc}$ through Equation \eqref{acceptance ratio};\\
    \textbf{4.} Draw $u$ from the continuous uniform distribution on the interval $[0,1]$;\\
    \If{$u\leq \gamma_{\rm acc}$}{\textbf{5.} The proposal $\tilde{\pmb{\theta}}_{\Delta \tau}(\tau_{k+1})$ is accepted, and set $\pmb{\theta}_{\Delta \tau}(\tau_{k+1}):=\tilde{\pmb{\theta}}_{\Delta \tau}(\tau_{k+1})$;}
    \ElseIf{$u> \gamma_{\rm acc}$}{\textbf{6.} The proposal $\tilde{\pmb{\theta}}_{\Delta \tau}(\tau_{k+1})$ is rejected, and set $\pmb{\theta}_{\Delta \tau}(\tau_{k+1}):=\pmb{\theta}_{\Delta \tau}(\tau_k)$;}}
\textbf{7.} Return $\pmb{\theta}^{(b)} := \pmb{\theta}_{\Delta \tau}(\tau_{N_0+(b-1)\delta+1})$ for $b=1,2,\ldots,B$.
}
\caption{MALA posterior sampler with RAPTOR-GEN as a warm start for parameters of SRN.}
\label{Algr:MALA}   
\end{algorithm}

As a support to the above MALA initialization scheme, suppose that certain conditions are satisfied, such as the localized (rescaled) target posterior to be close to a Gaussian distribution (the proposed LD-LNA satisfies this condition as shown in the finite-sample and asymptotic performance in Section~\ref{subsec:convergence_analysis}), an upper bound for the corresponding warming parameter under the Gaussian initial distribution constrained on a compact set can be provided, which means that the warming parameter can be controlled; see a recent work \cite{tang2024computational} for a more rigorous statement.

Compared to existing theoretical work such as \cite{tang2024computational}, Algorithm \ref{Algr:one-stage} or Algorithm \ref{Algr:MALA_LNA} for the implementation of RAPTOR-GEN provides a warm start that is easy to implement for MALA. Since Algorithm \ref{Algr:one-stage} or Algorithm \ref{Algr:MALA_LNA} helps to select an initial distribution that well approximates the target posterior distribution as shown in Section~\ref{subsec:Convergence Analysis of LD-LNA}, the mixing time of MALA would be short; thus much fewer iterations of MALA have to be performed to obtain better posterior samples than the one without such a warm start.

Recall the results of Theorem \ref{main_result_0}, that is, when the largest eigenvalue of the matrix $(-\nabla_{\pmb{\theta}}^2 \log p( \pmb{\theta} | \mathcal{D}_M^H) |_{\pmb{\theta} = \bar{\pmb{\theta}}^*})^{-1}$ (i.e., $\lambda_{\max}$) is smaller, the LD-LNA better approximates the true posterior distribution. Proposition \ref{prop:eigen_value} provides theoretical guidance on the necessity of an MALA after the execution of Algorithm \ref{Algr:one-stage} or Algorithm \ref{Algr:MALA_LNA}. In practice, for Algorithm \ref{Algr:one-stage} or Algorithm \ref{Algr:MALA_LNA}, the respective largest eigenvalue of $\pmb{\Psi}_{\Delta \tau}(\tau_N)$ or $\pmb{\Psi}^*_{\Delta \tau}(\tau_{N_1+N_2})$ can serve as a criterion to assess its performance, and the supplementary MALA is only required when this criterion is relatively significant.
\begin{proposition} \label{prop:eigen_value}
    Suppose that Assumption \ref{assumption_0} holds. Under the exponential family assumption of the conditional distributions $\pmb{y}_{t_1}^{(i)} | \pmb{\theta}$ and $\pmb{y}_{t_{h+1}}^{(i)} | \pmb{y}_{t_1:t_h}^{(i)};\pmb{\theta}$ for $i=1,\ldots,M$, $h=1,\ldots,H$, fix a tolerance $\epsilon_\mathcal{W} > 0$, then the LD-LNA with covariance $(-\nabla_{\pmb{\theta}}^2 \log p( \pmb{\theta} | \mathcal{D}_M^H) |_{\pmb{\theta} = \bar{\pmb{\theta}}^*})^{-1}$ whose largest eigenvalue $\lambda_{\max} \leq C^{-\frac{2}{3}} N_\theta^{-3} \epsilon_\mathcal{W}^{\frac{2}{3}}$, where $C = \frac{1}{2} M_1 M_2$, has a 1-Wasserstein distance between $\tilde{\pmb{\theta}}(\infty) \in \mathbb{R}^{N_\theta}$ as in Equation \eqref{eq:standardized_posterior} and $N_\theta$-dimensional standard Gaussian random vector $\pmb{Z}$ restricted by $d_{\mathcal{W}_1} \left(\tilde{\pmb{\theta}}(\infty), \pmb{Z}\right) \leq \epsilon_\mathcal{W}$.
\end{proposition}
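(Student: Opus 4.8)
The plan is to treat Proposition~\ref{prop:eigen_value} as a direct quantitative inversion of the finite-sample bound in Theorem~\ref{main_result_0}. Since the hypotheses of the proposition (Assumption~\ref{assumption_0} together with the exponential-family form of the conditionals $\pmb{y}_{t_1}^{(i)} | \pmb{\theta}$ and $\pmb{y}_{t_{h+1}}^{(i)} | \pmb{y}_{t_1:t_h}^{(i)};\pmb{\theta}$) coincide exactly with those of Theorem~\ref{main_result_0}, the first step is simply to invoke that theorem, which furnishes the explicit upper bound
\[
    d_{\mathcal{W}_1} \left(\tilde{\pmb{\theta}}(\infty), \pmb{Z}\right) \leq C N_\theta^{\frac{9}{2}} \lambda_{\max}^{\frac{3}{2}},
\]
with $C = \frac{1}{2} M_1 M_2$ and $\lambda_{\max} > 0$ the largest eigenvalue of $(-\nabla_{\pmb{\theta}}^2 \log p( \pmb{\theta} | \mathcal{D}_M^H) |_{\pmb{\theta} = \bar{\pmb{\theta}}^*})^{-1}$. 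Positivity of $\lambda_{\max}$ is already part of the statement of Theorem~\ref{main_result_0}; it follows from the negative definiteness of the Hessian at the posterior mode (cf.\ Lemma~\ref{Hurwitz}), so the map $\lambda \mapsto \lambda^{3/2}$ is well defined and strictly increasing on the relevant range.

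Next I would exploit the monotonicity of this right-hand side in $\lambda_{\max}$. Because $\lambda \mapsto C N_\theta^{9/2} \lambda^{3/2}$ is increasing for $\lambda > 0$, the hypothesis $\lambda_{\max} \leq C^{-\frac{2}{3}} N_\theta^{-3} \epsilon_\mathcal{W}^{\frac{2}{3}}$ lets me substitute this threshold value directly into the bound while preserving the inequality. A short computation of the exponents then gives
\[
    C N_\theta^{\frac{9}{2}} \left(C^{-\frac{2}{3}} N_\theta^{-3} \epsilon_\mathcal{W}^{\frac{2}{3}}\right)^{\frac{3}{2}} = C \cdot C^{-1} \cdot N_\theta^{\frac{9}{2} - \frac{9}{2}} \cdot \epsilon_\mathcal{W} = \epsilon_\mathcal{W},
\]
where the powers of $N_\theta$ cancel and the constant $C$ cancels against $C^{-1}$. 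Chaining this with the bound from Theorem~\ref{main_result_0} yields $d_{\mathcal{W}_1}(\tilde{\pmb{\theta}}(\infty), \pmb{Z}) \leq \epsilon_\mathcal{W}$, which is the claimed conclusion.

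This argument is essentially a one-line threshold inversion, so I anticipate no serious obstacle: the prescribed cutoff $C^{-2/3} N_\theta^{-3} \epsilon_\mathcal{W}^{2/3}$ is by construction the exact preimage of $\epsilon_\mathcal{W}$ under $\lambda \mapsto C N_\theta^{9/2} \lambda^{3/2}$. The only points needing minor care are confirming that the exponent arithmetic closes, namely $\bigl(\epsilon_\mathcal{W}^{2/3}\bigr)^{3/2} = \epsilon_\mathcal{W}$ and $\bigl(N_\theta^{-3}\bigr)^{3/2} = N_\theta^{-9/2}$, and noting that the substitution is legitimate because the map is monotone (so an upper bound on $\lambda_{\max}$ transfers to an upper bound on the distance), not merely because equality holds at the threshold. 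For the intended use as a decision rule, I would close with the contrapositive reading: whenever the computed largest eigenvalue of $\pmb{\Psi}_{\Delta \tau}(\tau_N)$ or $\pmb{\Psi}^*_{\Delta \tau}(\tau_{N_1+N_2})$ exceeds this cutoff, the LD-LNA approximation may fail to meet the target accuracy $\epsilon_\mathcal{W}$, signalling the need for the supplementary MALA corrector in Algorithm~\ref{Algr:MALA}.
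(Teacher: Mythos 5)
Your proof is correct and is exactly the intended argument: the proposition is a direct inversion of the finite-sample bound in Theorem~\ref{main_result_0}, and your exponent arithmetic $C N_\theta^{9/2}\bigl(C^{-2/3} N_\theta^{-3}\epsilon_{\mathcal{W}}^{2/3}\bigr)^{3/2} = \epsilon_{\mathcal{W}}$, together with the monotonicity of $\lambda \mapsto \lambda^{3/2}$ on $\lambda > 0$, closes the claim. This matches the paper's treatment, which presents the proposition as an immediate threshold consequence of Theorem~\ref{main_result_0}.
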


\section{Supplements to Empirical Study}

Throughout the empirical study, the initial value $\pmb{\theta}_{\Delta \tau}(\tau_0)$ of the ULA and the initial value $\pmb{\theta}^*(0)$ of the one- and two-stage RAPTOR-GEN algorithms (i.e., Algorithms \ref{Algr:one-stage} and \ref{Algr:MALA_LNA}) are sampled from specified priors $p(\pmb{\theta})$.

\subsection{Lotka-Volterra Model} \label{subsec:LV model}

To assess the empirical performance of the proposed RAPTOR-GEN in scenarios with dense data collection, we use a toy example of SRN, i.e., the Lotka-Volterra model \citep{wilkinson2018stochastic}, involving a population of two competing species, i.e., Prey and Predator. It describes predators that die with rate $v_3$ and reproduce with rate $v_2$ by consuming prey that reproduce with rate $v_1$, represented by the following three reaction equations,
\begin{flalign*}
    && & \begin{array}{l}
        \text{Reaction } 1: \ {\rm Prey} \xrightarrow{v_1} 2{\rm Prey} \quad \text{(Prey reproduction)}, \\
        \text{Reaction } 2: \ {\rm Prey} + {\rm Predator} \xrightarrow{v_2} 2{\rm Predator} \quad \text{(Predator reproduction)}, \\
        \text{Reaction } 3: \ {\rm Predator} \xrightarrow{v_3} \emptyset \quad \text{(Predator death)},
    \end{array}
    & ~~~ \mbox{with} ~~~ \pmb{C} = 
    \begin{pmatrix}
        1 & -1 & 0\\
        0 & 1 & -1
    \end{pmatrix}. &&&&
\end{flalign*}
The system state $\pmb{s}_t = (s_t^1, s_t^2)^\top$ includes the respective number of prey and predators at any time $t$. We use mass-action kinetics to model reaction rates as $\pmb{v}(\pmb{s}_t;\pmb{\theta}) = (\theta_1 s_t^1, \theta_2 s_t^1 s_t^2, \theta_3 s_t^2)^\top$. Our goal is to infer the unknown kinetic rate parameters $\pmb{\theta} = \left(\theta_1,\theta_2,\theta_3\right)^\top$.

We simulate one measurement trajectory (i.e., $M=1$) in the time interval $[t_1,t_{H+1}]$ using the Gillespie algorithm with the true parameters $\pmb{\theta}^{\rm true} = \left(0.5,0.0025,0.3\right)^\top$ and the initial state $\pmb{s}_{t_1} = \left(71,79\right)^\top$. Here, we consider dense data collection and use the state transition density \eqref{likelihood - Poisson} for the likelihood function. From Example \ref{Gillespie_appro_error} in Section~\ref{subsec:likelihood}, to make the likelihood approximation error sufficiently small, suppose that we have observations at evenly spaced times $t_1, t_2, \ldots, t_{H+1}$ with $\Delta t = 0.1$. We assess the performance of RAPTOR-GEN under model uncertainty induced by different data sizes, i.e., $H = 10, 20, 30, 40, 50$. We take ULA as a benchmark. To ensure that the discretization error of ULA is sufficiently small so that we can take the samples generated by ULA as real posterior samples of $p\left( \pmb{\theta} | \mathcal{D}_M^H\right)$, we set $c=0.1$ to make the step size sufficiently small. And we set the initial warm-up length $N_0=10000$ to ensure convergence and $\delta=10$ to reduce the sample correlation for ULA; set the tolerances $\varepsilon_1 = \varepsilon_2 = 1\times 10^{-6}$ for two-stage RAPTOR-GEN; the posterior sample size $B=1000$ for both algorithms. The priors of the parameters are set as $\theta_1 \sim U(0,5)$, $\theta_2 \sim U(0,0.5)$, and $\theta_3 \sim U(0,5)$. 

Figure \ref{fig:lv_violin_plot} compares the approximate posterior distributions obtained by the two-stage RAPTOR-GEN and ULA with five data sizes for three log-parameters. From their overall shapes, the posterior distributions obtained by two-stage RAPTOR-GEN well approximate those obtained by ULA, and converge to the true values as the data size increases. In particular, for two-stage RAPTOR-GEN, the uncertainty in the model parameter estimates decreases substantially with increasing data size, which is consistent with the estimates of $\lambda_{\max}$ in Table \ref{table:estimation_performance}, decreasing as the data size increases; for ULA, the long tails of some distributions indicate its instability. Table \ref{table:estimation_performance} also summarizes $N_1$, i.e., the iteration step in which condition $||\bar{\pmb{\theta}}_{\Delta \tau}(\tau_{k+1})-\bar{\pmb{\theta}}_{\Delta \tau}(\tau_k)|| \leq \varepsilon_1$ is satisfied in Stage 1 of the two-stage RAPTOR-GEN, which decreases significantly with increasing data size and is less than $N_0=10000$ used by ULA at all data sizes.

\begin{figure}
    \FIGURE
    {
    \subcaptionbox{$\log(\theta_1)$.}{\includegraphics[width=0.33\textwidth]{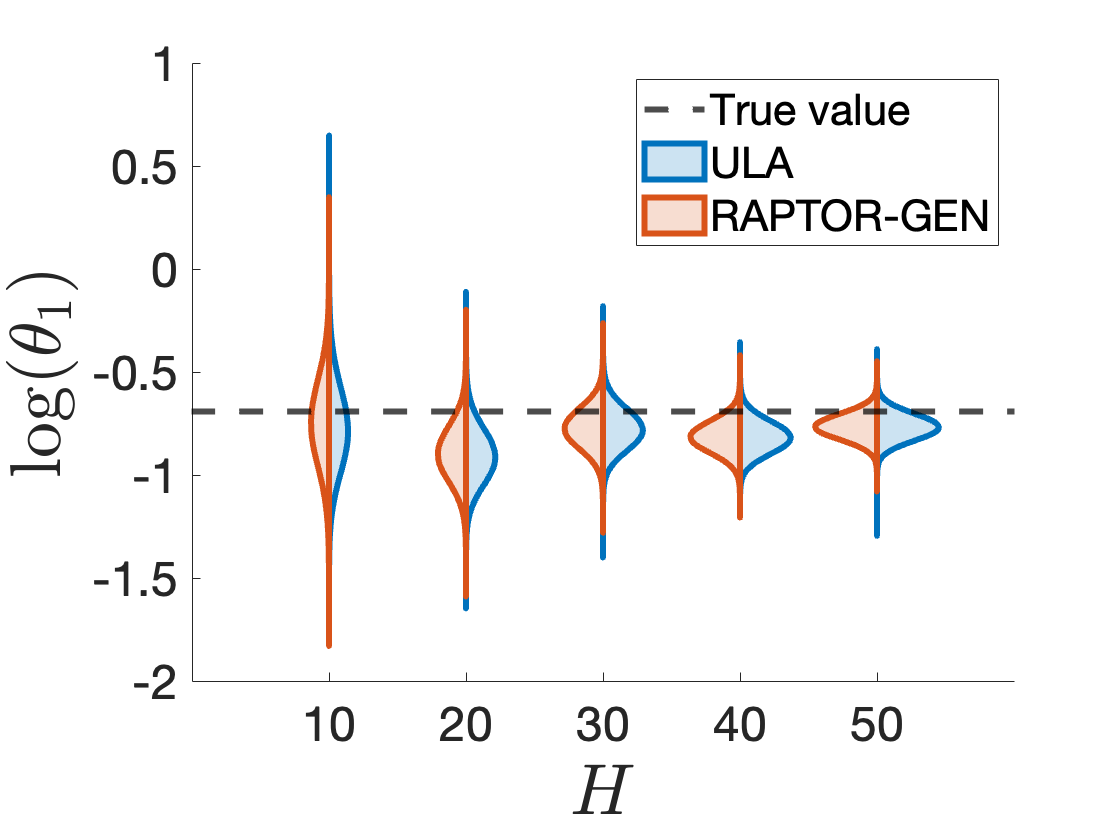}}
    \hfill\subcaptionbox{$\log(\theta_2)$.}{\includegraphics[width=0.33\textwidth]{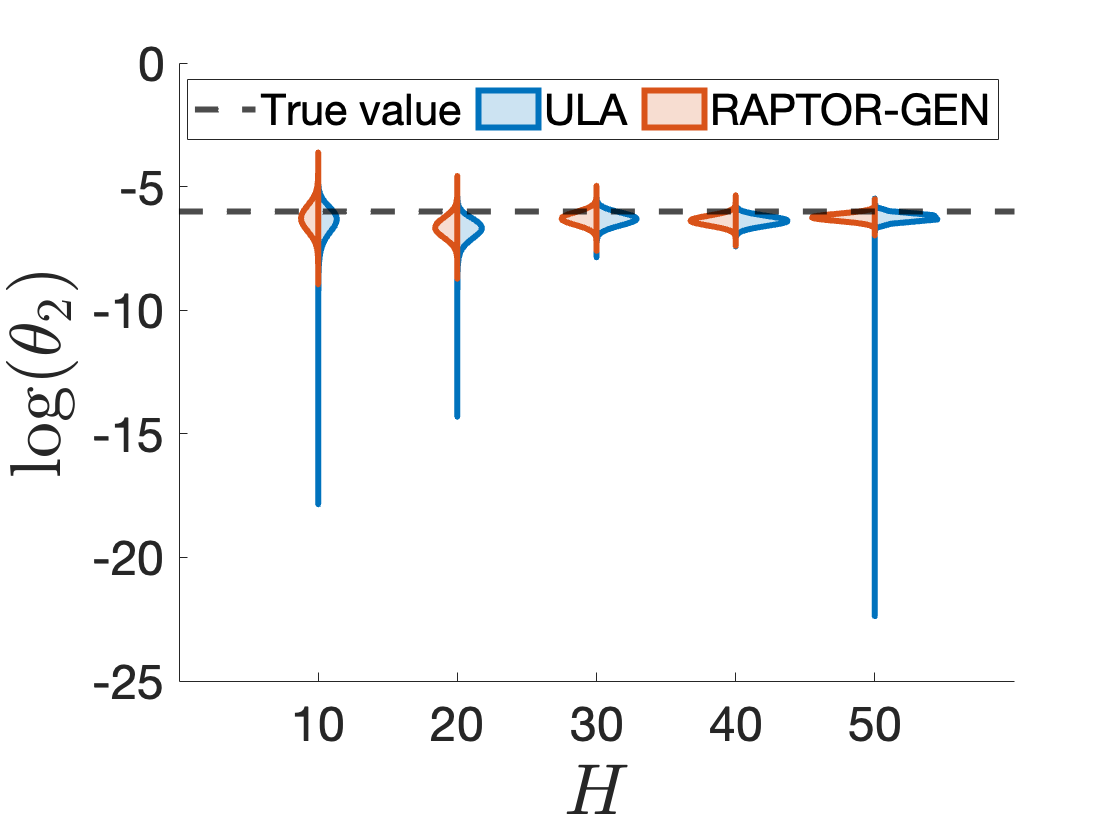}}
    \hfill\subcaptionbox{$\log(\theta_3)$.}{\includegraphics[width=0.33\textwidth]{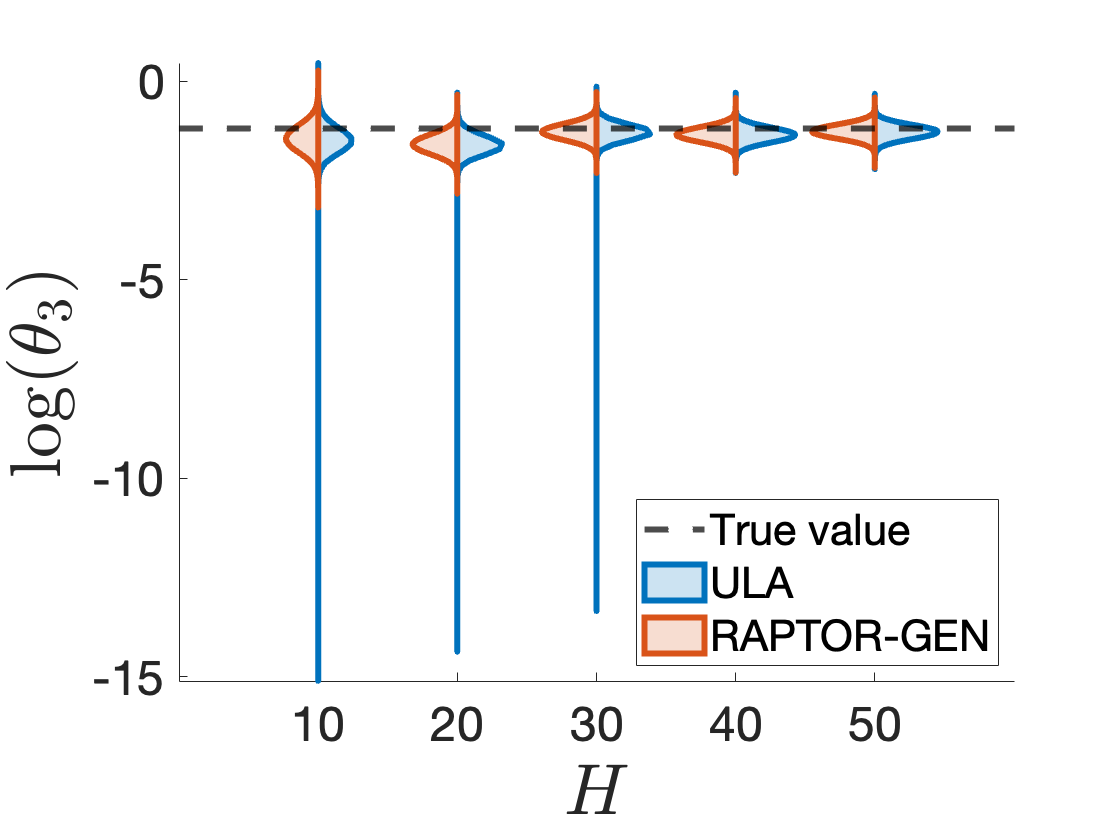}}
    }
    {
    The violin plots of three log-parameters inferred by ULA and two-stage RAPTOR-GEN algorithm with five data sizes. \label{fig:lv_violin_plot}}
    {
    }
\end{figure}

\begin{table}
    \TABLE
    {The estimation performance of two-stage RAPTOR-GEN algorithm with five data sizes. \label{table:estimation_performance}}
    {
    \begin{tabular}{c|ccccc}
        \hline \up\down 
        $H$ & $10$ & $20$ & $30$ & $40$ & $50$ \\
        \hline \up\down 
        $\lambda_{\max}$ & $0.356$ & $0.198$ & $0.094$ & $0.063$ & $0.045$ \\
        \up\down 
        $N_1$ & $4837 \pm 28$ & $2762 \pm 12$ & $1387 \pm 8$ & $906 \pm 8$ & $760\pm 18$ \\
        \hline
    \end{tabular}}
    {(i) Average results across 1000 macro-replications are reported together with 95\% CIs. (ii) The 95\% CIs for $\lambda_{\max}$ are sufficiently small that they are ignored.}
\end{table}

\subsection{ABC-type Samplers in Section~\ref{subsec:MM}} \label{sec:ABC}

This section provides the detailed procedures for two ABC-type samplers, i.e., the ABC rejection sampler and the ABC-SMC sampler \citep{warne2019simulation}, and the results of parameter inference by the ABC-SMC sampler. 

The basis of ABC is a discrepancy metric $d\left(\mathcal{D}_M^H,\mathcal{S}_M^H\right)$ that provides a measure of closeness between observational data $\mathcal{D}_M^H = \{\{\pmb{y}_{t_h}^{(i)}\}_{h=1}^{H+1}\}_{i=1}^M$ collected from experiments and simulated data $\mathcal{S}_M^H = \{\{\pmb{x}_{t_h}^{(i)}\}_{h=1}^{H+1}\}_{i=1}^M$ generated by stochastic simulation of biochemical SRN with simulated measurement error. Thus, it places an acceptance condition $d\left(\mathcal{D}_M^H,\mathcal{S}_M^H\right) \leq \epsilon$, where $\epsilon$ is the discrepancy threshold. This yields an approximation to Bayes' rule,
\begin{equation} \label{eq:ABC posterior}
    p\left(\pmb{\theta}|\mathcal{D}_M^H\right) \approx p\left(\pmb{\theta}|d\left(\mathcal{D}_M^H,\mathcal{S}_M^H\right) \leq \epsilon\right) = \frac{p\left(d\left(\mathcal{D}_M^H,\mathcal{S}_M^H\right) \leq \epsilon|\pmb{\theta}\right)p\left(\pmb{\theta}\right)}{p\left(d\left(\mathcal{D}_M^H,\mathcal{S}_M^H\right) \leq \epsilon\right)}.
\end{equation}
The key insight here is that the ability to produce sample paths of a biochemical SRN model (i.e., the forwards problem) enables an approximate algorithm for inference (i.e., the inverse problem), regardless of the intractability or complexity of the likelihood. That is, the closed form of likelihood needs not even be known. In this paper, the discrepancy metric used is $d\left(\mathcal{D}_M^H,\mathcal{S}_M^H\right) = \{\sum_{i=1}^{M}\sum_{h=1}^{H+1} (\pmb{y}_{t_h}^{(i)}-\pmb{x}_{t_h}^{(i)})^2\}^{1/2}$, where the simulated data $\mathcal{S}_M^H = \{\{\pmb{x}_{t_h}^{(i)}\}_{h=1}^{H+1}\}_{i=1}^M$ are generated using the Gillespie algorithm \citep{gillespie1977exact}, and we denote $p\left(\mathcal{S}_M^H|\pmb{\theta}\right)$ as the process of generating the simulated data given a parameter vector $\pmb{\theta}$ that is identical to the processes used to generate our synthetic observational data. Furthermore, as the dimensionality of the data increases, small values of $\epsilon$ are not computationally feasible. In such cases, the data dimensionality may be reduced by computing lower dimensional summary statistics; however, these must be sufficient statistics to ensure that the ABC posterior still converges to the correct posterior as $\epsilon \to 0$. We refer the readers to \cite{fearnhead2012constructing} for more details on this topic.

\subsubsection{ABC Rejection Sampler} \label{appendix:ABC_rejection}

ABC rejection sampler is an essential computational method for generating $B$ samples $\pmb{\theta}_{\epsilon}^{(1)},\pmb{\theta}_{\epsilon}^{(2)},\ldots,\pmb{\theta}_{\epsilon}^{(B)}$ from the ABC posterior equation \eqref{eq:ABC posterior} with $\epsilon > 0$. Its detailed procedure is summarized as follows.
\begin{itemize}
    \item[1.] Initialize index $b=0$;
    \item[2.] Generate a prior sample $\pmb{\theta}^* \sim p\left(\pmb{\theta}\right)$;
    \item[3.] Generate simulated data $\mathcal{S}_M^{H*} \sim p\left(\mathcal{S}_M^H|\pmb{\theta}^*\right)$;
    \item[4.] If $d\left(\mathcal{D}_M^H,\mathcal{S}_M^{H*}\right) \leq \epsilon$, accept $\pmb{\theta}_{\epsilon}^{(b+1)} := \pmb{\theta}^*$ and set $b := b+1$, otherwise continue;
    \item[5.] If $b=B$, terminate sampling, otherwise go to Step 2. 
\end{itemize}

\subsubsection{ABC-SMC Sampler} \label{appendix:ABCSMC}

For small $\epsilon$, the computational burden of the ABC rejection sampler can be prohibitive since the acceptance rate is very low, which is especially an issue for biochemical SRNs with highly variable dynamics.  \cite{sisson2007sequential} provides a solution via an ABC variant of SMC. The fundamental idea is to use sequential importance resampling to propagate $B$ samples, called particles, through a sequence of $P+1$ ABC posterior distributions defined through a sequence of discrepancy thresholds $\epsilon_0,\epsilon_1,\ldots,\epsilon_P$ with $\epsilon_p>\epsilon_{p+1}$ for $p=0,1,\ldots,P-1$ and $\epsilon_0=\infty$ (that is, $p\left(\pmb{\theta}_{\epsilon_0}|\mathcal{D}_M^H\right)$ is the prior). This results in the ABC-SMC sampler as follows.
\begin{itemize}
    \item[1.] Initialize $p=0$ and weights $\omega_p^{(b)}=1/B$ for $b=1,2,\ldots,B$;
    \item[2.] Generate $B$ particles from the prior $\pmb{\theta}_{\epsilon_p}^{(b)} \sim p\left(\pmb{\theta}\right)$ for $1,2,\ldots,B$;
    \item[3.] Set index $b=0$;
    \item[4.] Randomly select integer $j$ from the set $\{1,2,\ldots,B\}$ with $\mathbb{P}\left(j=1\right)=\omega_p^{(1)},\mathbb{P}\left(j=2\right)=\omega_p^{(2)},\ldots,\mathbb{P}\left(j=B\right)=\omega_p^{(B)}$;
    \item[5.] Generate proposal $\pmb{\theta}^* \sim q(\pmb{\theta}|\pmb{\theta}_{\epsilon_p}^{(j)})$;
    \item[6.] Generate simulated data $\mathcal{S}_M^{H*} \sim p\left(\mathcal{S}_M^H|\pmb{\theta}^*\right)$;
    \item[7.] If $d\left(\mathcal{D}_M^H,\mathcal{S}_M^{H*}\right) > \epsilon_{p+1}$, go to Step 4, otherwise continue;
    \item[8.] Set $\pmb{\theta}_{\epsilon_{p+1}}^{(b+1)*} := \pmb{\theta}^*$, $\omega_{p+1}^{(b+1)*} := p\left(\pmb{\theta}^*\right)/\sum_{j=1}^B\omega_p^{(j)}q(\pmb{\theta}^*|\pmb{\theta}_{\epsilon_p}^{(j)})$, and $b := b+1$;
    \item[9.] If $b<B$, go to Step 4, otherwise continue;
    \item[10.] Set index $b=0$;
    \item[11.] Randomly select integer $j$ from the set $\{1,2,\ldots,B\}$ with $\mathbb{P}\left(j=1\right)=\omega_{p+1}^{(1)*},\mathbb{P}\left(j=2\right)=\omega_{p+1}^{(2)*},\ldots,\mathbb{P}\left(j=B\right)=\omega_{p+1}^{(B)*}$;
    \item[12.] Set $\pmb{\theta}_{\epsilon_{p+1}}^{(b+1)} := \pmb{\theta}_{\epsilon_{p+1}}^{(j)*}$ and $b := b+1$;
    \item[13.] If $b<B$, go to Step 11, otherwise continue;
    \item[14.] Set $\omega_{p+1}^{(b)} := 1/\sum_{j=1}^B\omega_{p+1}^{(j)*}$ for $b=1,2,\ldots,B$ and $p := p+1$;
    \item[15.] If $p<P$, go to Step 3, otherwise terminate sampling.
\end{itemize}
Here, $q(\pmb{\theta}|\pmb{\theta}_{\epsilon_p}^{(j)})$ is the proposal kernel, whose choice is nontrivial. For the ABC-SMC sampler in Section~\ref{subsec:MM}, $\pmb{\theta} := (\theta_3,\sigma)^\top$, and we set the priors $p\left(\theta_3\right) := U(0,1)$ and $p\left(\sigma\right) := U(0,25)$, $q(\pmb{\theta}|\pmb{\theta}_{\epsilon_p}^{(j)}) := \mathcal{N}(\pmb{\theta}_{\epsilon_p}^{(j)},\text{diag}\left((0.05(1-0))^2,(0.05(25-0))^2\right))$, the number of particles $B:=100$, and a sequence of discrepancy acceptance thresholds $\epsilon_1,\epsilon_2,\ldots,\epsilon_P$ with $P:=5$ is $[80,40,20,10,5]$, $[100,50,25,12.5,6.25]$, and $[120,60,30,15,7.5]$ for three data sizes $H= 4, 8, 16$, respectively.

The results of the parameter inference using the ABC-SMC sampler are presented in Table~\ref{table:ABC-SMC}. ABC-type methods rely on exact simulation to generate synthetic data given a set of parameters, and then assess the similarity between the simulated and observed data to determine parameter acceptance—without requiring explicit likelihood evaluation. As a result, ABC-SMC avoids likelihood approximation errors, which leads to lower parameter RMSEs when the dataset is extremely small (e.g., $H=4$), compared to the two-stage RAPTOR-GEN. However, this comes at the cost of significantly higher computational time. A comparison of Tables~\ref{table:rmse} and \ref{table:ABC-SMC} further highlights the advantages of RAPTOR-GEN. With a moderate number of observations (e.g., $H=8$ and $16$), the likelihood approximation error in RAPTOR-GEN diminishes, resulting in lower parameter RMSEs than those of ABC-SMC. By fully exploiting the closed-form likelihood available from a moderately sized dataset, the two-stage RAPTOR-GEN achieves substantially faster convergence. In contrast, ABC-type samplers often require millions of simulated sample paths, making them computationally impractical when the data generation process is expensive.

\begin{table}
    \TABLE
    {Parameter inference by ABC-SMC sampler with three data sizes. \label{table:ABC-SMC}}
    {
    \begin{tabular}{c|cc|c}
        \hline \up \down
        $H$ & RMSE of $\theta_3$ & RMSE of $\sigma$ & $T_{\rm ABC-SMC}$ \\
        \hline \up
        $4$ & $0.0031 \pm 0.00007$ & $9.89 \pm 0.27$ & $1202.8 \pm 78.9$ \\
        $8$ & $0.0032 \pm 0.00005$ & $5.82 \pm 0.29$ & $2669.0 \pm 336.6$ \\
        $16$ & $0.0030 \pm 0.00004$ & $2.99 \pm 0.05$ & $5636.6 \pm 440.3$ \down \\
        \hline
    \end{tabular}}
    {(i) Average results across 100 macro-replications are reported together with 95\% CIs. (ii) $T_{\rm ABC-SMC}$ represents the computational time of ABC-SMC sampler (wall-clock time, unit: seconds).}
\end{table}


\end{document}